\acrodef{marl}[MARL]{\underline{M}ulti-\underline{A}gent \underline{R}einforcement \underline{L}earning}
\acrodef{mdp}[MDP]{\underline{M}arkov \underline{D}ecision \underline{P}rocess}
\acrodef{cnn}[CNN]{\underline{C}onvolutional \underline{N}eural \underline{N}etwork}
\acrodef{mlp}[MLP]{\underline{M}ulti\underline{L}ayer \underline{P}erceptrons}
\acrodef{gcn}[GCN]{\underline{G}raph \underline{C}onvolutional \underline{N}etwork}
\acrodef{nnr}[NNR]{\underline{N}eural \underline{N}onlinear \underline{R}egulator}
\acrodef{mle}[MLE]{\underline{M}aximum \underline{L}ikelihood \underline{E}stimate}
\acrodef{nlp}[NLP]{\underline{N}atural \underline{L}anguage \underline{P}rocessing}
\acrodef{cv}[CV]{\underline{C}omputer \underline{V}ision}
\acrodef{rl}[RL]{\underline{R}einforcement \underline{L}earning}
\acrodef{mpe}[MPE]{\underline{M}ultiple \underline{P}article \underline{E}nvironment}
\newcommand{\Att}{{\rm Att}}
\newcommand{\relu}{{\rm ReLU}}
\newcommand{\SM}{{\rm SM}}
\newcommand{\rff}{{\rm rFF}}
\newcommand{\tf}{{\rm tf}}
\newcommand{\U}{{\rm U}}
\newcommand{\mha}{{\rm MHA}}
\newcommand{\tv}{{\rm TV}}
\title{\LARGE Relational Reasoning via Set Transformers:  \\ Provable Efficiency and Applications to MARL }
\author{\normalsize
Fengzhuo Zhang\thanks{National University of Singapore; \texttt{fzzhang@u.nus.edu}}
\quad Boyi Liu\thanks{Northwestern University; \texttt{boyiliu2018@u.northwestern.edu}}
\quad Kaixin Wang\thanks{National University of Singapore; \texttt{kaixin.wang@u.nus.edu}}\\
\normalsize
\quad Vincent Y. F. Tan\thanks{National University of Singapore; \texttt{vtan@nus.edu.sg}}
\quad Zhuoran Yang\thanks{Yale University; \texttt{zhuoranyang.work@gmail.com}}
\quad Zhaoran Wang\thanks{Northwestern University; \texttt{zhaoranwang@gmail.com}}
}
\date{}
\begin{document}

\maketitle

\begin{abstract}
    The cooperative \ac{marl} with permutation invariant agents framework has achieved tremendous empirical successes in   real-world applications. Unfortunately, the theoretical understanding of this \ac{marl} problem is lacking due to the curse of many agents and the limited exploration of the relational reasoning in existing works. In this paper, we verify that the transformer implements complex relational reasoning, and we propose and analyze model-free and model-based offline \ac{marl} algorithms with the transformer approximators. We prove that the suboptimality gaps of the model-free and model-based algorithms are independent of and logarithmic in the number of agents respectively, which mitigates the curse of many agents. These results are   consequences of a  novel generalization error bound of the transformer and a novel analysis of the \ac{mle} of the system dynamics with the transformer. Our model-based algorithm is the first provably efficient \ac{marl}  algorithm that explicitly exploits the permutation invariance of the agents. Our improved generalization bound may be of independent interest and is applicable  to other regression problems related to the transformer beyond \ac{marl}.
\end{abstract}
\section{Introduction}
Cooperative \ac{marl} algorithms have achieved tremendous successes across a wide range of real-world applications including robotics~\citep{wang2022equivariant,wang2022mathrm}, games~\citep{tang2021sensory,mnih2013playing}, and finance~\citep{xu2021relation}. In most of these works, the {\em permutation invariance} of the agents is embedded into the problem setup, and the successes of these works hinge on leveraging this property.
 However, the theoretical understanding of why  the permutation invariant \ac{marl} has been so successful is lacking due to the following two reasons. First, the size of the state-action space grows exponentially with  the number of agents; this is known as ``the curse of many agents''~\citep{wang2020breaking,menda2018deep}. 
 The exponentially large state-action space prohibits the learning of value functions and policies due to the curse of dimensionality. 
 Second, although the mean-field approximation is widely adopted to mitigate  the curse of many agents~\citep{wang2020breaking,chen2021pessimism}, this approximation fails to capture the complex interplay between the agents. In the mean-field approximation, the influence of all the other agents on a fixed agent is captured only through the empirical distribution of the local states and/or local actions~\citep{wang2020breaking,chen2021pessimism}. This induces a restricted class of function approximators, which nullifies the possibly complicated relational structure of the agents, and thus fails to incorporate the complex interaction between agents. Therefore, designing provably efficient \ac{marl} algorithms that incorporate the efficient relational reasoning and break the curse of many agents remains an interesting and meaningful question.

In this paper, we regard  transformer networks as the representation learning module to incorporate relational reasoning among the agents. In particular, we focus on the offline \ac{marl} problem with the transformer approximators in the cooperative setting. In this setting, all the agents learn policies {\em cooperatively} to maximize a common reward function. More specifically, in the offline setting, the learner only has access to a pre-collected dataset and cannot interact adaptively with the environment. Moreover, we assume that the underlying \ac{mdp} is \emph{homogeneous}, which means that the reward and the transition kernel are permutation invariant functions of the state-action pairs of the agents. Our goal is to learn an {\em optimal} policy that is also permutation invariant.

To design provably efficient offline \ac{marl} algorithms,  we need to overcome three key challenges. (i) To estimate the action-value function and the system dynamics, the approximator function needs to implement   efficient relational reasoning among the agents. However, the theoretically-grounded function structure that incorporates the complex relational reasoning needs to be carefully designed. (ii) To mitigate the curse of many agents, the generalization bound of the transformer should be independent of the number of agents. Existing results  in~\cite{edelman2021inductive} thus require  rethinking and improvements. (iii) In   offline \ac{rl}, the mismatch between the sampling  and  visitation distributions induced by the optimal policy (i.e., ``distribution shift'') greatly restricts the application of the offline \ac{rl} algorithm. Existing works adopt the \emph{``pessimism''} principle to mitigate such a challenge. However,  this requires the quantification of the uncertainty in the value function estimation and the estimation of the dynamics in the model-free and model-based methods respectively. The quantification of the estimation error with the transformer function class is a key open question. 

We organize our work by addressing the abovementioned three challenges.

First, we theoretically identify the function class that can implement complex relational reasoning. We demonstrate the relational reasoning ability of the attention mechanism by showing that approximating the self-attention structure with the permutation invariant fully-connected neural networks (i.e., deep sets \citep{zaheer2017deep}) requires an {\em exponentially large} number of hidden nodes in the input dimension of each channel (Theorem \ref{thm:approx}). This result necessitates the self-attention structure in the set transformer.
    
Second, we design  offline model-free and model-based \ac{rl} algorithms with the transformer approximators. In the former, the transformer is adopted to estimate the action-value function of the policy. The {\em pessimism} is encoded in that we learn the policy according to the {\em minimal} estimate of the action-value function in the set of functions with  bounded empirical Bellman error. In the model-based algorithm, we estimate the system dynamics with the transformer structure. The policy is learned pessimistically according to the estimate of the system dynamics  in the confidence region that induces the conservative value function. 

Finally, we analyze the suboptimality gaps of our proposed algorithms, which indicate that the proposed algorithms mitigate the curse of many agents. For the model-free algorithm, the suboptimality gap  in Theorem~\ref{thm:main_mfree} is independent of the number of agents, which is a consequence of the fact that the generalization bound  of the transformer (Theorem~\ref{thm:main_concen}) is independent of the number of channels. For the model-based algorithm, the bound on the suboptimality gap in Theorem~\ref{thm:main_mbased} is logarithmic in the number of agents;  this follows from the analysis of the \ac{mle} of the system dynamics in Proposition~\ref{prop:mlebound}. We emphasize that our model-based algorithm is the first provably efficient \ac{marl} algorithm that exploits the permutation equivariance when estimating the dynamics.
\vskip4pt
\noindent{\bf Technical Novelties.} In Theorem~\ref{thm:main_concen}, we leverage a PAC-Bayesian framework to derive a generalization error bound of the transformer. Compared to 
\citet[Theorem~4.6]{edelman2021inductive}, the result is a significant improvement in   the dependence on the number of channels $N$ and the depth   of neural network $L$. This result may be of independent interest for enhancing our theoretical understanding of the attention mechanism and is applicable to other regression problems related to the transformer. In Proposition~\ref{prop:mlebound}, we derive the first estimation uncertainty quantification of the system dynamics with the transformer approximators, which can be also be used to  analyze other \ac{rl} algorithms with such approximators.
\vskip4pt
\noindent{\bf More Related Work.} In this paper, we consider the offline \ac{rl} problem, and the insufficient coverage lies at the core of this problem. With the global coverage assumption, a number of works have been proposed from both the model-free~\citep{chen2019information,antos2008learning,nachum2019algaedice,duan2021risk,xie2020q} and model-based~\citep{,chen2019information,ross2012agnostic} perspectives. To weaken the global coverage assumption, we leverage the ``pessimism'' principle  in the algorithms: the model-free algorithms impose additional penalty terms on the estimate of the value function~\citep{jin2021pessimism,rajaraman2020toward} or regard the  function that attains the minimum in the confidence region as the estimate of the value function~\citep{xie2021bellman}; the model-based algorithms estimate the system dynamics by incorporating additional penalty terms~\citep{chang2021mitigating} or minimizing in the region around \ac{mle}~\citep{uehara2021pessimistic}. For the \ac{marl} setting, the offline \ac{marl} with the mean-field approximation has been studied in~\cite{chen2021pessimism,li2021permutation}.

The analysis of the \ac{marl} algorithm with the transformer approximators requires the generalization bound of the transformer. The transformer is an element of the group equi/invariant functions, whose benefit in terms of its generalization capabilities  has attracted extensive recent attention. Generalization bounds have been successively improved by analyzing the cardinality of the ``effective'' input field and Lipschitz constants of functions~\citep{sokolic2017generalization, sannai2021improved}. However, these methods result in loose generalization bounds when applied to deep neural networks~\citep{jakubovitz2019generalization}. \cite{zhu2021understanding} empirically demonstrated the benefits of the invariance in the model by refining the covering number of the function class, but a  unified theoretical understanding is still lacking. The covering number of the norm-bounded transformer was shown by~\cite{edelman2021inductive} to be at most  logarithmic in the number of channels. We show that this can be further improved using a PAC-Bayesian framework. In addition, we refer to the related concurrent work~\citep{Anon2022invariance} for a Rademacher complexity-based generalization bound of the transformer that is independent of the length of the sequence for the tasks such as computer vision.


\section{Preliminaries}\label{sec:prelim}

\textbf{Notation.} Let $[n]=\{1,\ldots,n\}$. The $i^{\rm{th}}$ entry of the vector $\bx$ is denoted as $x_{i}$ or $[\bx]_{i}$.  The $i^{\rm{th}}$ row and the $i^{\rm{th}}$ column of matrix $\bX$ are denoted as $\bX_{i,:}$ and $\bX_{:,i}$  respectively. The  {\em $\ell_{p}$-norm} of the vector $\bx$ is $\|\bx\|_{p}$. The {\em $\ell_{p,q}$-norm} of the matrix $\bX\in\bbR^{m\times n}$ is defined as $\|\bX\|_{p,q}= (\sum_{i=1}^{n}\|\bX_{:,i}\|_{p}^{q})^{1/q}$, and the {\em Frobenius norm} of $\bX$ is defined as $\|\bX\|_{\rmF}= \|\bX\|_{2,2}$. The {\em total variation distance} between two distributions $P$ and $Q$ on $\calA$ is defined as $\tv(P,Q)=\sup_{A\subseteq\calA}|P(A)-Q(A)|$. For a set $\calX$, we use $\Delta(\calX)$ to denote the set of distributions on $\calX$. For two conditional distributions $P,Q:\calX\rightarrow\Delta(\calY)$, the $d_{\infty}$ distance between them is defined as $d_{\infty}(P,Q)= 2\sup_{x\in \calX} \tv(P(\cdot\,|\,x),Q(\cdot\,|\,x)).$ Given a metric space $(\calX,\|\cdot\|)$, for a set $\calA\subseteq\calX$, an {\em $\varepsilon$-cover} of $\calA$ is a finite set $\calC\subseteq\calX$ such that for any $a\in\calA$, there exists $c\in\calC$ and $\|c-a\|\leq \varepsilon$. The {\em $\varepsilon$-covering number} of $\calA$ is the cardinality of the smallest $\varepsilon$-cover, which is denoted as $\calN(\calA,\varepsilon,\|\cdot\|)$.


\vskip4pt
\noindent{\bf Attention Mechanism and Transformers.}
The {\em attention mechanism} is a technique that mimics  cognitive attention to process  multi-channel inputs~\citep{bahdanau2014neural}. Compared with the \ac{cnn}, the transformer has been empirically shown to possess outstanding robustness against occlusions and preserve the global context due to its special relational structure~\citep{naseer2021intriguing}. Assume we have $N$ query vectors that are in $\bbR^{d_{Q}}$. These vectors are stacked to form the matrix $\bQ\in\bbR^{N\times d_{Q}}$. With $N_{V}$ {\em key vectors} in the matrix $\bK\in\bbR^{N_{V}\times d_{Q}}$ and $N_{V}$ {\em value vectors} in the matrix $\bV\in\bbR^{N_{V}\times d_{V}}$, the attention mechanism maps the queries $\bQ$ using the function ${\Att}(\bQ,\bK,\bV)=\SM(\bQ\bK^{\top})\bV$, where $\SM(\cdot)$ is the row-wise softmax operator that normalizes each row using the exponential function, i.e., for $\bx\in\bbR^{d}$, $[\SM(\bx)]_{i}=\exp(\bx_{i})/\sum_{j=1}^{d}\exp(\bx_{j})$ for $i\in[d]$. The product $\bQ\bK^{\top}$ measures the similarity between the queries and the keys, which is then passed through the activation function $\SM(\cdot)$. Thus, $\SM(\bQ\bK^{\top})\bV$ essentially outputs a weighted sum of $\bV$ where a value vector has greater weight if the corresponding query and key are more similar. The {\em self-attention mechanism} is defined as the attention that takes $\bQ=\bX\bW_{Q}$, $\bK=\bX\bW_{K}$ and $\bV=\bX\bW_{V}$ as inputs, where $\bX\in \bbR^{N\times d}$ is the input of the self-attention, and $\bW_{Q},\bW_{K}\in\bbR^{d\times d_{Q}}$ and $\bW_{V}\in\bbR^{d\times d_{V}}$ are the parameters. Intuitively, the self-attention mechanism weighs the inputs with the correlations among the $N$ different channels. This mechanism demonstrates a special pattern of \emph{relational reasoning} among the channels of $\bX$. 

In addition, the self-attention mechanism is {\em permutation invariant} in the channels in $\bX$. This implies that for any row-wise permutation function $\bpsi(\cdot)$, which swaps the rows of the input matrix according to a given permutation of $[N]$, we have ${\Att}(\bpsi(\bX)\bW_{Q},\bpsi(\bX)\bW_{K},\bpsi(\bX)\bW_{V})=\bpsi({\Att}(\bX\bW_{Q},\bX\bW_{K},\bX\bW_{V}))$. The permutation equivariance of the self-attention renders it   suitable for   inference tasks where the output is equivariant with respect to the ordering of inputs. For example, in  image segmentation, the result should be invariant  to the permutation of the objects in the input image~\citep{bronstein2021geometric}. The resultant transformer structure combines  the self-attention with  multi-layer perceptrons and composes them to form   deep neural networks. It remains permutation equi/invariant with respect to the order of the channels and has achieved excellent performance in many tasks \citep{dosovitskiy2020image, yuan2021tokens, lee2019set}.

\vskip4pt
\noindent{\bf Offline Cooperative \ac{marl}.} 
In this paper, we consider the {\em cooperative} \ac{marl} problem, where all agents aim to maximize a {\em common} reward function. The corresponding \ac{mdp} is characterized by the tuple $(\bar{\bS}_{0},\bar{\calS},\bar{\calA},P^{*},r,\gamma)$ and the number of agents is $N$. The state space $\bar{\calS}=\calS^{N}$ is the Cartesian product of the state spaces of each agent $\calS$, and $\bar{\bS}=[\bs_{1},\ldots,\bs_{N}]^{\top}$ is the state, where $\bs_{i}\in\bbR^{d_{\calS}}$ is the state of the $i^{\rm{th}}$ agent. The initial state is $\bar{\bS}_{0}$. The action space $\bar{\calA}=\calA^{N}$ is the Cartesian product of the action spaces $\calA$ of each agent, and $\bar{\bA}=[\ba_{1},\ldots,\ba_{N}]^{\top}$ is the action, where $\ba_{i}\in\bbR^{d_{\calA}}$ is the action of the $i^{\rm{th}}$ agent. The transition kernel is $P^{*}:\calS^{N}\times\calA^{N}\rightarrow\Delta(\calS^{N})$, and $\gamma\in(0,1)$ is the {\em discount factor}. Without loss of generality, we assume that the reward function $r$ is deterministic and bounded, i.e., $r:\calS^{N}\times\calA^{N}\rightarrow[-R_{\max},R_{\max}]$. We define the the {\em state-value function} $V_{P}^{\pi}:\calS^{N}\rightarrow[-V_{\max},V_{\max}]$, where $V_{\max}=R_{\max}/(1-\gamma)$, and the {\em action-value function} $Q_{P}^{\pi}:\calS^{N}\times\calA^{N}\rightarrow[-V_{\max},V_{\max}]$ of a policy $\pi$ and a transition kernel~$P$ as
\begin{align*}
    V_{P}^{\pi}(\bar{\bS})\!=\!\bbE^{\pi}\bigg[\sum_{t=0}^{\infty}\gamma^{t}r(\bar{\bS}_{t},\bar{\bA}_{t})\,\bigg|\,\bar{\bS}_{0}\!=\!\bar{\bS}\bigg],\;\text{  and  }\; Q_{P}^{\pi}(\bar{\bS},\bar{\bA})\!=\!\bbE^{\pi}\bigg[\sum_{t=0}^{\infty}\gamma^{t}r(\bar{\bS}_{t},\bar{\bA}_{t})\,\bigg|\,\bar{\bS}_{0}\!=\!\bar{\bS}, \bar{\bA}_{0}\!=\!\bar{\bA}\bigg],
\end{align*}
respectively. Here, the expectation is taken with respect to\ the Markov process induced by the policy $\bar{\bA}_{t}\sim \pi(\cdot\,|\, \bar{\bS}_{t})$ and the transition kernel $P$. The action-value function $Q_{P^{*}}^{\pi}$ is the unique fixed point of the operator 
$(\calT^{\pi}f)(\bar{\bS},\bar{\bA})= r(\bar{\bS},\bar{\bA})+\gamma\bbE_{\bar{\bS}^{\prime}\sim  P^{*}(\cdot\,|\,\bar{\bS},\bar{\bA})}[f(\bar{\bS}^{\prime},\pi)\,|\,\bar{\bS},\bar{\bA}]$, where the term in the expectation is defined as $f(\bar{\bS},\pi)= \bbE_{\bar{\bA}\sim \pi(\cdot\,|\,\bar{\bS})}[f(\bar{\bS},\bar{\bA})]$. We further define the {\em visitation measure} of the state and action pair induced the policy $\pi$ and transition kernel $P$ as $d^{\pi}_{P}(\bar{\bS},\bar{\bA})= (1-\gamma)\sum_{t=0}^{\infty} \gamma^{t}d^{\pi}_{P,t}$, where $d^{\pi}_{P,t}$ is the distribution of the state and the action at step $t$. 

In   offline \ac{rl}, the learner only has access to a pre-collected dataset and cannot interact with the environment. The dataset $\calD=\{(\bar{\bS}_{i},\bar{\bA}_{i},r_{i},\bar{\bS}_{i}^{\prime})\}_{i=1}^{n}$  is collected in an i.i.d.\ manner, i.e., $(\bar{\bS}_{i},\bar{\bA}_{i})$ is independently sampled from $\nu\in\Delta(\bar{\calS}\times\bar{\calA})$, and $\bar{\bS}^{\prime}_{i}\sim P^{*}(\cdot\, | \,\bar{\bS}_{i},\bar{\bA}_{i})$.  This i.i.d.\ assumption is made to simplify our theoretical results; see Appendix~\ref{app:extnoniid} for extensions to the non i.i.d.\ case.  Given a policy class $\Pi$, our goal is to find an optimal policy that maximizes the state-value function $\pi^{*}=\argmax_{\pi\in\Pi} V_{P^{*}}^{\pi}(\bar{\bS}_{0})$. For any $\pi\in\Pi$, the {\em suboptimality gap} of $\pi$ is defined as $V_{P^{*}}^{\pi^{*}}(\bar{\bS}_{0})-V_{P^{*}}^{\pi}(\bar{\bS}_{0})$.

\begin{figure}[t]
	\centering
	\subfigure[$\rho_{\relu}(\sum_{i=1}^{N} \bphi_{\relu}(\bx_{i}))$ with $\rho_{\relu}$ and $\psi_{\relu}$ as single-hidden layer neural networks.]{
	\begin{minipage}[t]{0.44\linewidth}
	\centering
	\includegraphics[width=2.5in]{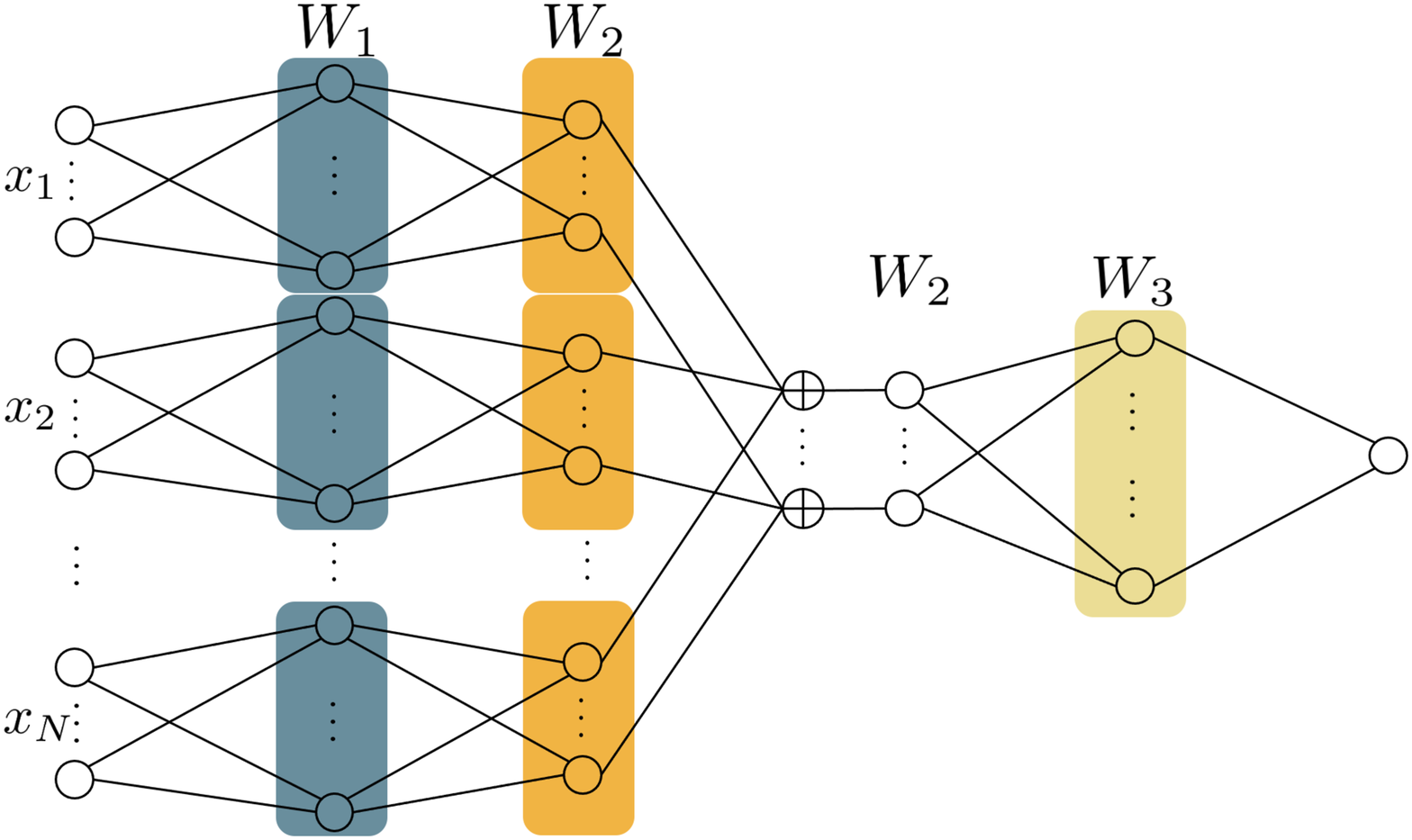}
	\end{minipage}%
	\label{fig:relunet}
	}%
	\hspace{1cm}
	\centering
	\subfigure[Self-attention mechanism $\bbI_{N}^{\top}\rm{Att}(\bX,\bX,\bX)\bw$.]{
	\begin{minipage}[t]{0.44\linewidth}
	\centering
	\includegraphics[width=2.5in]{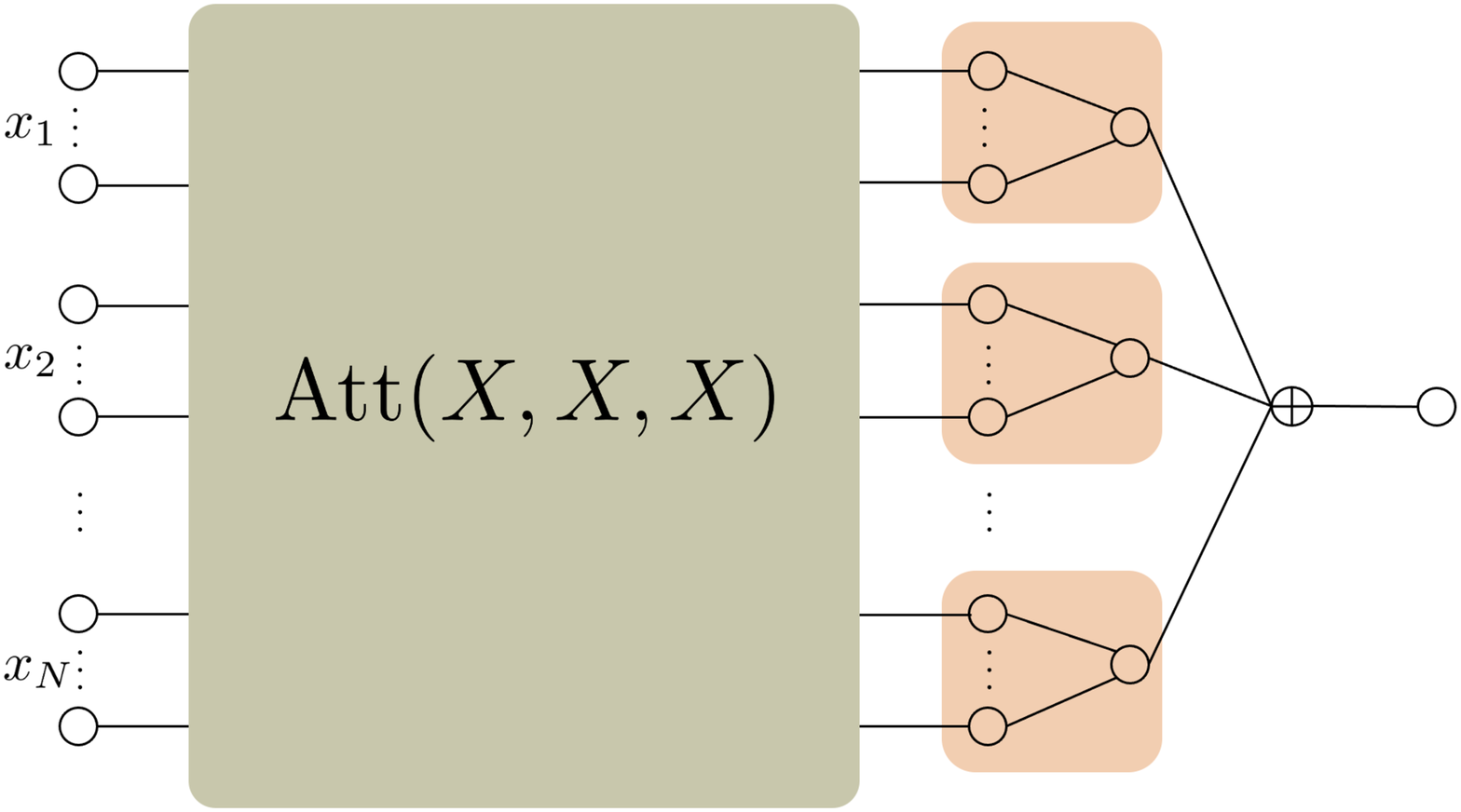}
	\end{minipage}%
	\label{fig:selfatt}
	}%
	\vspace{-0.3cm}
	\centering
	\caption{The blocks with the same color share the same parameters. The left figure shows that $\rho_{\relu}(\sum_{i=1}^{N} \bphi_{\relu}(\bx_{i}))$ first sums the outputs of $\bphi_{\relu}(\bx_{i})$, and it implements the relational reasoning only  through the single-hidden layer network $\rho_{\relu}$. In contrast, the self-attention block in the right figure captures the relationship among channels and then sums the outputs of each channel.}
	\label{fig:netgifse}
	\vspace{-0.4cm}
\end{figure}

\section{Provable Efficiency of Transformer on Relational Reasoning}\label{sec:funcapprox}

In this section, we provide the theoretical understanding of the outstanding relational reasoning ability of transformer. These theoretical results serves as a firm base for adopting set transformer to estimate the value function and system dynamics in \ac{rl} algorithms in the following sections. 

\subsection{Relational Reasoning Superiority of Transformer Over MLP}

The transformer neural network combines the self-attention mechanism and the fully-connected neural network, which includes the \ac{mlp} function class as a subset. On the inverse direction, we show that permutation invariant \ac{mlp} can not approximate transformer unless its width is exponential in the input dimension due to the poor relational reasoning ability of \ac{mlp}.

\citet[Theorem~2]{zaheer2017deep}  showed that all  permutation invariant functions take the form $\rho(\sum_{i=1}^{N} \bphi(\bx_{i}))$ with $\bX=[\bx_{1},\ldots,\bx_{N}]^{\top}\in\bbR^{N\times d}$ as the input. Since the single-hidden layer $\relu$ neural network is an universal approximator for continuous functions~\citep{sonoda2017neural}, we set $\bphi:\bbR^{N\times d}\rightarrow \bbR^{W_{2}}$ and $\rho:\bbR^{W_{2}}\rightarrow\bbR$ to be  single-hidden layer neural networks with $\relu$ activation functions as shown in Figure~\ref{fig:relunet}, where $W_{2}$ is the dimension of the intermediate outputs. The widths of the hidden layers in $\bphi_{\relu}$ and $\rho_{\relu}$ are $W_{1}$ and $W_{3}$ respectively. For the formal definition of $\bphi_{\relu}$ and $\rho_{\relu}$, please refer to Appendix~\ref{app:fcdef}. Then the function class with $\rho_{\relu}$ and $\bphi_{\relu}$ as width-constrained $\relu$ networks is defined as
\begin{align*}
    \calN(W)= \bigg\{f:\bbR^{N\times d}\rightarrow \bbR\ \bigg|\ f(\bX)=\rho_{\relu}\bigg(\sum_{i=1}^{N} \bphi_{\relu}(\bx_{i})\bigg) \text{ with } \max_{i \in [3]} W_{i} \leq W\bigg\}.
\end{align*}
We would like to use   functions in $\calN(W)$ to approximate the self-attention function class
\begin{align}
    \calF=\big\{f:\bbR^{N\times d}\rightarrow\bbR\ \big|\ f(\bX)=\bbI_{N}^{\top}\Att(\bX,\bX,\bX)\bw \text{ for some }\bw\in[0,1]^{d} \big\}.\nonumber
\end{align}
Figure~\ref{fig:relunet} shows that $\rho_{\relu}(\sum_{i=1}^{N} \bphi_{\relu}(\bx_{i}))$  first processes each channel with $\bphi_{\relu}$, and the relationship between channels is only reasoned with   $\rho_{\relu}$. The captured relationship in $\rho_{\relu}(\sum_{i=1}^{N} \bphi_{\relu}(\bx_{i}))$ cannot be too complex due to the simple structure of $\rho_{\relu}$. In contrast, the self-attention structure shown in Figure~\ref{fig:selfatt} first captures the relationship between channels with the self-attention structure and then weighs the results to derive the final output. Consequently, it is difficult to approximate the self-attention structure with $\rho_{\relu}(\sum_{i=1}^{N} \bphi_{\relu}(\bx_{i}))$ due to its poor relational reasoning ability. This observation is formally quantified in the following theorem.
\begin{restatable}{theorem}{propapprox}\label{thm:approx}
Let $W^*(\xi,d,\calF)$ be the smallest width of the neural network such that 
    \begin{align}
        \forall \,  f\in\calF, \ \exists\,  g\in \calN(W)\quad  \text{s.t. }\quad \sup_{\bX\in[0,1]^{N\times d}} \bigl|f(\bX)-g(\bX)\bigr|\leq \xi.\nonumber
    \end{align}
    With sufficient number of channels $N$, it holds that  $W^*(\xi,d,\calF)=\Omega(\exp{(cd)} \xi^{-1/4})$ for some $c>0$.
\end{restatable}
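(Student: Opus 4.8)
The plan is to prove the lower bound by exhibiting a single hard target $f^{\star}\in\calF$ together with a one–dimensional affine slice of $[0,1]^{N\times d}$ on which $f^{\star}$ cannot be $\xi$-approximated by any $g\in\calN(W)$ unless $\max_i W_i=\Omega(e^{cd}\xi^{-1/4})$. Concretely, I would take the ``sufficient number of channels'' to be $N=\lceil e^{d/8}\rceil$, take the weight vector $\bw=\bbI_d$, and set $f^{\star}(\bX)=\bbI_N^{\top}\Att(\bX,\bX,\bX)\bbI_d\in\calF$. For $t\in[0,1]$ let $\bX(t)$ have first channel $\bx_1(t)=t\,\bbI_d$ and all remaining channels $\bx_i=\tfrac12\bbI_d$ for $i=2,\dots,N$. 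The two ingredients are: (i) for every $g\in\calN(W)$ the univariate map $t\mapsto g(\bX(t))$ is continuous piecewise linear with only $O(W_1W_3)=O(W^2)$ linear pieces; and (ii) the univariate map $F(t):=f^{\star}(\bX(t))$ is, up to an additive term of order $d$, a single smoothed step of height $H=\Theta(Nd)$, so that any continuous piecewise linear function $\xi$-close to $F$ on $[0,1]$ has $\Omega(\sqrt{H/\xi})$ pieces. Combining (i) and (ii) gives $W^2=\Omega(\sqrt{Nd/\xi})$, hence $W=\Omega\big(N^{1/4}(d/\xi)^{1/4}\big)=\Omega\big(e^{d/32}\,\xi^{-1/4}\big)$, which is the asserted bound.

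Ingredient (i) is the quantitative form of the heuristic in Figure~\ref{fig:netgifse}: a deep set reasons relationally only through its shallow outer network $\rho_{\relu}$. Since the $N-1$ background channels are identical and constant in $t$, we have $\sum_{i=1}^{N}\bphi_{\relu}(\bx_i(t))=\bphi_{\relu}(t\,\bbI_d)+(N-1)\bphi_{\relu}(\tfrac12\bbI_d)$, whose second term is a constant vector. The argument $t\,\bbI_d$ is affine in $t$, so after the single $\relu$ layer of $\bphi_{\relu}$ (width $W_1$) the map $t\mapsto\bphi_{\relu}(t\,\bbI_d)$ is piecewise linear with at most $W_1$ breakpoints; passing this through the single $\relu$ layer of $\rho_{\relu}$ (width $W_3$) adds at most $2W_1+1$ breakpoints per unit, so $t\mapsto g(\bX(t))$ has at most $2W_1W_3+W_3$ linear pieces. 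I would isolate the elementary ``composition of piecewise linear maps'' counting lemma once and reuse it here.

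Ingredient (ii) is where the exponential in $d$ appears, and the mechanism is simply \emph{cloning} a single key channel $N=e^{\Theta(d)}$ times. Writing $\Att(\bX(t),\bX(t),\bX(t))=\SM(\bX(t)\bX(t)^{\top})\bX(t)$ and using that rows $2,\dots,N$ coincide, the $N$ summands of $f^{\star}(\bX(t))$ split into the $i=1$ term — a single softmax readout bounded by $d$ — and $N-1$ identical copies of $r(t)=\dfrac{(N-1)e^{d/4}(d/2)+e^{dt/2}\,dt}{(N-1)e^{d/4}+e^{dt/2}}$, so $F(t)=(N-1)r(t)+\mathrm{corr}(t)$ with $\|\mathrm{corr}\|_\infty=O(d)$. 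The function $r$ is smooth and increasing; as $e^{dt/2}$ sweeps $[1,e^{d/2}]$ past the competing magnitude $(N-1)e^{d/4}=\Theta(e^{3d/8})$, it transitions from $\approx d/2$ to the line $\approx dt$ around $t^{\star}=\tfrac12+\tfrac{2\log(N-1)}{d}\approx\tfrac34\in(0,1)$ over a $t$-window of width $\Theta(1/d)$, with transition height $\Theta(d)$; near $t^\star$ one has $r(t)=\tfrac d2+\tfrac d4\,\sigma\!\big(\tfrac d2(t-t^{\star})\big)+o(d)$ with $\sigma$ the logistic function, so $(N-1)r$ is within $o(H)$ of a smoothed step of height $H=\tfrac14(N-1)d=\Theta(Nd)$. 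Finally I would invoke the standard piecewise linear approximation lower bound, $P\ge c\,\xi^{-1/2}\!\int_0^1\!\sqrt{|F''|}\,dt$, together with $\int_0^1\!\sqrt{|F''|}\ge\int_0^1\!\sqrt{|(N-1)r''|}-\int_0^1\!\sqrt{|\mathrm{corr}''|}=\Theta(\sqrt{Nd})-\Theta(\sqrt d)=\Theta(\sqrt{Nd})$ (using that $\int\sqrt{|r''|}=\Theta(\sqrt d)$ for a unit-height-scaled step and subadditivity of $\int\sqrt{\,\cdot\,}$), yielding $P=\Omega(\sqrt{Nd/\xi})$ pieces to reach accuracy $\xi$, which contradicts the $O(W^2)$ bound from (i) unless $W=\Omega(e^{cd}\xi^{-1/4})$.

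The conceptual content — that one can create amplitude exponential in $d$ by replicating one key channel, turning the merely $O(d)$-scale sharpness of a single softmax readout into a $\Theta(Nd)$-height feature, while the deep set's relational reasoning (confined to $\rho_{\relu}$) cannot match such a fine profile — is clean, so I expect the main obstacle to be the bookkeeping inside (ii): one must verify that the softmax denominator never degenerates along the slice, that the transition point $t^{\star}$ genuinely lies strictly inside $(0,1)$ for the chosen $N$ (this is exactly why $N$ cannot exceed $e^{d/4}$, hence the factor $e^{cd}$ and not something larger), that the lone non-identical channel and its own readout contribute only $O(d)=o(H)$, and that the subadditivity estimate for $\int\sqrt{|F''|}$ absorbs this $O(d)$ correction cleanly without any hidden dependence on how small $\xi$ is. Establishing (i) is routine once the composition counting lemma is in place; uniform control of the normalization $\SM$ along the curve is the delicate step.
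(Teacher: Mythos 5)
Your overall architecture is the right one and matches the paper's skeleton: restrict to a one-dimensional affine slice of $[0,1]^{N\times d}$, bound the number of linear pieces of $t\mapsto g(\bX(t))$ for $g\in\calN(W)$ by $O(W^2)$ (your ingredient (i) is essentially the paper's invocation of Telgarsky's piece-counting lemma), and then show the target has too much curvature along the slice to be matched by so few pieces. The difficulty is in ingredient (ii), and it is not just bookkeeping: your source of the exponential in $d$ is the \emph{channel count} — you manufacture a step of height $\Theta(Nd)$ by cloning the background channel and then set $N=e^{\Theta(d)}$ — and, as you yourself note, the crossover point $t^{\star}=\tfrac12+\tfrac{2\log(N-1)}{d}$ leaves $(0,1)$ once $N>e^{d/4}$, at which point $r(t)\approx d/2$ is essentially constant on the whole slice and the construction collapses. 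So your argument proves the bound only for $N$ in a window roughly $[e^{\epsilon d},e^{d/4}]$; for $N$ polynomial in $d$ it degrades to $W=\Omega((Nd)^{1/4}\xi^{-1/4})$, and for $N$ above the window it proves nothing. The theorem is stated for all sufficiently large $N$ (and the quantity $W^*(\xi,d,\calF)$ does not get to choose $N$), so this is a genuine gap rather than a removable technicality of your particular slice.

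The paper avoids this by making the exponential come from the \emph{intrinsic curvature of the softmax}, uniformly in $N$. It fixes one channel at $k\bx$ (with $k=1.1$, $\bx=\tfrac23\bbI_d$, $\bw=\bx$) and the other $N-1$ at $\bx$, and varies the \emph{global scale} $a$ of the whole configuration. Each background channel's output then deviates from the mean-field value $a\bx^{\top}\bw$ by $\Theta\bigl(e^{a^{2}(k-1)\bx^{\top}\bx}/N\bigr)$, so summing over the $N-1$ background channels gives a contribution $\Theta\bigl(d\,e^{a^{2}(k-1)\bx^{\top}\bx}\bigr)$ whose second derivative in $a$ is $\Omega(d^{2}e^{cd})+O(1/N)$ — exponential in $d$ with no upper restriction on $N$ (``sufficiently many channels'' is needed only to kill the $O(1/N)$ error). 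A single long linear piece (length $\geq 1/(4W^{2})$ by pigeonhole) of a uniformly convex function with $f''\geq c_1$ then already forces error $\geq c_1/(256W^{4})$, so the free-knot spline integral $\int\sqrt{|F''|}$ machinery you invoke is not needed either. If you want to salvage your route, you would have to either restrict the claim to your window of $N$, or switch the perturbation from ``move one channel far away against $N-1$ clones'' to ``scale the whole configuration,'' which is exactly the paper's move.
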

Theorem \ref{thm:approx} shows that the fully-connected neural network cannot approximate the relational reasoning process in the self-attention mechanism unless the width is exponential in the input dimension. This exponential lower bound of the width of the fully-connected neural network implies that the  relational reasoning process embedded within the self-attention structure is complicated, and it further motivates us to explicitly incorporate the self-attention structure in the neural networks in order to reason the complex relationship among the channels. 

\subsection{Channel Number-independent Generalization Error Bound}\label{sec:gene_err}
\begin{wrapfigure}{r}{6.3cm}
	\centering
	\vspace{-.15in}
	\includegraphics[width=2.5in]{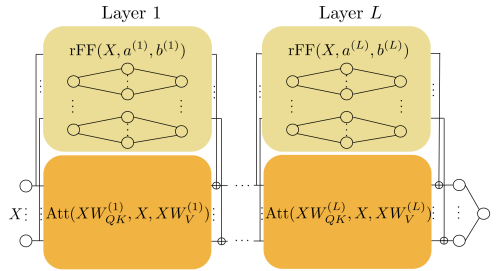}
	\label{fig:tf}
	\vspace{-0.5cm}
	\caption{Structure of the transformer function class, where the row-wise feedforward function is specified as  fully-connected networks.}
	\vspace{-0.6cm}
\end{wrapfigure}
In this section, we derive the generalization error bound of transformer. We take $\bX\in \bbR^{N\times d}$ as the input of the neural network. In the $i^{\rm th}$ layer, as shown in Figure~\ref{fig:tf}, we combine the self-attention mechanism $\Att(\bX\bW_{QK}^{(i)},\bX,\bX\bW_{V}^{(i)})$ with the \underline{r}ow-wise \underline{F}eed\underline{F}orward (rFF) single-hidden layer neural network $\rff(\bX,\ba^{(i)},\bb^{(i)})$ with width $m$. We combine $\bW_{Q}^{(i)}$ and $\bW_{K}^{(i)}$ to $\bW_{QK}^{(i)}$ for ease of calculation, and $\bb^{(i)}$ and $\ba^{(i)}$ are the parameters of the first and second layer of rFF. The output of each layer is normalized by the row-wise normalization function $\Pi_{\rm{norm}}(\cdot)$, which projects each row of the input into the unit $\ell_{p}$-ball  (for some $p\geq1$). For the last layer, we derive the scalar estimate of the action-value function by averaging the outputs of all the channels, and the ``clipping'' function $\Pi_{V}(x)$ is applied to normalize the output to $[-V,V]$. We note that such structures are also known as {\em set transformers} in \citet{lee2019set}. For the formal definition of the transformer, please refer to Appendix~\ref{app:tfdef}.

We consider a transformer with  bounded parameters. For a pair of conjugate numbers $p,q\in\bbR$, i.e., $1/p+1/q=1$ and $p,q\geq 1$, the transformer function class with bounded parameters is defined as
\begin{align}
    &\calF_{\tf}(\bB)=\Big\{g_{\tf}(\bX;\bW_{QK}^{1:L},\bW_{V}^{1:L},\ba^{1:L},\bb^{1:L},\bw)\, \Big| \, \big|a_{kj}^{(i)}\big|< B_{a}, \big\|\bb_{kj}^{(i)}\big\|_{q}< B_{b},\nonumber\\*
    & \big\|\bW_{QK}^{(i)\top}\big\|_{p,q}< B_{QK},\big\|\bW_{V}^{(i)\top}\big\|_{p,q}< B_{V},\|\bw\|_{q}< B_{w} \text{ for } i\in[L],j\in[m],k\in[d]\Big\},\nonumber
\end{align}
where $\bB= [B_{a},B_{b},B_{QK},B_{V},B_{w}]$ are the parameters of the function class, and $\bW_{QK}^{1:L},\bW_{V}^{1:L},\ba^{1:L}$ and $\bb^{1:L}$ are the stacked parameters in each layer. We only consider the non-trivial case where $B_{a},B_{b},B_{QK},B_{V},B_{w}$ are larger than one, otherwise the norms of the outputs decrease exponentially with growing depth. For ease of notation, we denote $\calF_{\tf}(\bB)$ as $\calF_{\tf}$ when the parameters are clear.

Consider the regression problem where we aim to predict the value of the response variable $y\in\bbR$ from the observation matrix $\bX\in\bbR^{N\times d}$, where $(\bX,y)\sim\nu$, and $|y|\leq V$. We derive our estimate $f:\bbR^{N\times d}\rightarrow\bbR$ from i.i.d.\ observations $\calD_{\text{reg}}=\{(\bX_{i},y_{i})\}_{i=1}^{n}$ generated from $\nu$. The {\em risk} of using $f\in \calF_{\tf}(\bB)$ as a regressor on sample $(\bX,y)$ is defined as $(f(\bX)-y)^{2}$. Then the {\em excess risk} of functions in the transformer function class $\calF_{\tf}$ can be bounded as in the following proposition.
\begin{proposition}\label{prop:regress_risk}
    Let $\bar{B}=B_{V}B_{QK}B_{a}B_{b}B_{\bw}$.  For all $f\in\calF_{\tf}$, with probability at least $1-\delta$, we have 
    \begin{align}
        &\Big|\bbE_{\nu}\Big[\big(f(\bX)-y\big)^{2}\Big]-\frac{1}{n}\sum_{i=1}^{n}\big(f(\bX_{i})-y_{i}\big)^{2}\Big|\nonumber\\
        &\leq \!\frac{1}{2}\bbE_{\nu}\!\Big[\big(f(\bX)-y\big)^{2}\Big]\!+\!O\bigg(\frac{V^{2}}{n}\biggl[mL^{2}d^{2}\log\frac{mdL\bar{B} n}{V} +\log\frac{1}{\delta}\biggr]\bigg).\nonumber
    \end{align}
\end{proposition}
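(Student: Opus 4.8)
The plan is to obtain the excess risk bound as a consequence of a uniform deviation bound over the loss class associated with $\calF_{\tf}$, which in turn follows from a covering-number estimate for $\calF_{\tf}$ in the sup-norm over the (bounded) input domain. First I would reduce to the covering number of the transformer class itself: since $f\in\calF_{\tf}$ is clipped to $[-V,V]$ and $|y|\le V$, the loss $\ell_f(\bX,y) = (f(\bX)-y)^2$ is bounded by $4V^2$ and is $4V$-Lipschitz in $f(\bX)$; hence an $\varepsilon$-cover of $\calF_{\tf}$ in $\|\cdot\|_\infty$ yields a $4V\varepsilon$-cover of the loss class $\calL = \{\ell_f : f\in\calF_{\tf}\}$ in $\|\cdot\|_\infty$. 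The second ingredient is a Bernstein-type (one-sided, multiplicative) uniform concentration inequality: for a class of $[0,M]$-valued functions with $\log$-covering number $\log\calN$, with probability at least $1-\delta$, every $\ell\in\calL$ satisfies $|\bbE_\nu[\ell] - \frac1n\sum_i \ell(\bX_i,y_i)| \le \frac12\bbE_\nu[\ell] + O\big(\frac{M}{n}(\log\calN + \log\tfrac1\delta)\big)$; this is the standard localized/"one-sided Bernstein" bound (e.g.\ as used in the offline RL literature, Lemma form of Bernstein with the $ab\le \frac12 a^2 + \frac12 b^2$ trick to absorb the variance term into a multiple of the mean). Applying this with $M = 4V^2$ and $\log\calN = \log\calN(\calL,\varepsilon,\|\cdot\|_\infty)$, and then adding the discretization error $O(V\cdot 4V\varepsilon) = O(V^2\varepsilon)$ incurred by passing from the cover back to the full class, I would choose $\varepsilon = \Theta(1/n)$ so that the discretization term is absorbed into the $O(V^2/n \cdot(\cdots))$ term.

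The substantive step is therefore bounding $\log\calN(\calF_{\tf},\varepsilon,\|\cdot\|_\infty)$ on the domain $\{\bX : \|\bX_{i,:}\|_p \le 1\}$ (the row-normalization $\Pi_{\mathrm{norm}}$ guarantees the inputs to each layer lie in the unit $\ell_p$-ball). Here I would propagate the cover layer by layer, exactly as in the covering-number argument underlying Theorem~\ref{thm:main_concen}/\citet{edelman2021inductive}: discretize each parameter block $\bW_{QK}^{(i)}, \bW_{V}^{(i)}, \ba^{(i)}, \bb^{(i)}, \bw$ at an appropriate granularity, and control how a perturbation in layer $i$'s parameters propagates to the output using (i) Lipschitzness of softmax/self-attention w.r.t.\ its inputs and weights, (ii) Lipschitzness of the $\rff$ block (bounded by $B_a, B_b$ and the $\relu$ being $1$-Lipschitz), and (iii) the fact that $\Pi_{\mathrm{norm}}$ and $\Pi_V$ are $1$-Lipschitz and keep all intermediate quantities bounded (norms stay $O(1)$ per row because of the projections, which is precisely why the bound is $N$-free — the per-row normalization decouples the channel count). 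Counting parameters: $\bW_{QK}^{(i)}, \bW_{V}^{(i)}$ contribute $O(d\,d_Q)$ and $O(d\,d_V)$ entries, and with $d_Q,d_V = O(d)$ this is $O(d^2)$ per layer; $\ba^{(i)}$ is $m\times d$ i.e.\ $O(md)$; $\bb^{(i)}$ has $m$ vectors each of dimension $O(d)$, i.e.\ $O(md)$ entries; and $\bw$ is $O(d)$. Summing over $L$ layers gives $O(mLd^2)$ real parameters total. Each needs to be discretized to accuracy $\varepsilon / (\text{Lipschitz factor})$, where the Lipschitz factor is at most polynomial in $m, d, L$ and the product $\bar B = B_V B_{QK} B_a B_b B_w$ raised to a power $O(L)$ — note the exponential-in-$L$ blow-up appears only inside a logarithm. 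Each parameter is bounded by $\bar B$ (up to polynomial factors), so the log-cardinality of the grid for one parameter is $O(\log(m d L \bar B\, n / V))$, and multiplying by the parameter count $O(mLd^2)$ yields $\log\calN(\calF_{\tf},\varepsilon,\|\cdot\|_\infty) = O\big(mL^2 d^2 \log(m d L \bar B\, n / V)\big)$ at $\varepsilon = \Theta(V/n)$. (The extra factor of $L$ beyond the naive $mLd^2$ parameter count comes from the per-layer Lipschitz constant contributing an $O(L)$ inside the log plus the granularity scaling; this matches the $L^2$ in the statement.)

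Plugging $\log\calN = O(mL^2d^2\log(mdL\bar B n/V))$ and $M = 4V^2$ into the one-sided Bernstein bound, and absorbing the $O(V^2/n)$ discretization error, gives exactly
\[
\Big|\bbE_{\nu}\big[(f(\bX)-y)^2\big] - \tfrac1n\textstyle\sum_{i=1}^n (f(\bX_i)-y_i)^2\Big|
\le \tfrac12 \bbE_\nu\big[(f(\bX)-y)^2\big] + O\Big(\tfrac{V^2}{n}\big[mL^2d^2\log\tfrac{mdL\bar B n}{V} + \log\tfrac1\delta\big]\Big),
\]
uniformly over $f\in\calF_{\tf}$ with probability at least $1-\delta$, as claimed. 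The main obstacle — and the place where care is genuinely needed rather than routine — is the layer-wise Lipschitz propagation for the self-attention block: the softmax is only locally Lipschitz, so one must use the boundedness of the pre-softmax logits (guaranteed by $\|\bX_{i,:}\|_p\le1$, $\|\bW_{QK}^{(i)\top}\|_{p,q} < B_{QK}$, and Hölder's inequality) to get a clean Lipschitz constant, and one must verify that the row-normalization after every layer keeps the inputs in the unit ball so this bound is reusable at every depth without accumulating $N$-dependence. Everything else (the Bernstein inequality, the Lipschitz reduction from $\calF_{\tf}$-cover to $\calL$-cover, the final $\varepsilon = \Theta(V/n)$ bookkeeping) is standard.
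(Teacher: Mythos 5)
Your proposal is correct, and it arrives at the right complexity term, but it takes a different route from the paper. The paper proves Proposition~\ref{prop:regress_risk} as the special case $\tilde{f}=0$ of Theorem~\ref{thm:main_concen}, whose proof is PAC-Bayesian: it applies a PAC-Bayes Bernstein inequality (Proposition~\ref{prop:pacbayes}, via the Donsker--Varadhan representation), takes the prior $P_{0}$ uniform on the whole parameter space $\Theta$ and the posterior $Q$ uniform on a small ball around a given $\btheta$, computes $\kl(Q\,\|\,P_{0})$ as (parameter dimension)\,$\times\log(\text{radius ratio})$, and controls the fluctuation induced by averaging over $Q$ using the parameter-Lipschitz bound of Proposition~\ref{prop:nnerror}. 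You instead build a finite grid on the parameter space, take a union bound over the grid with a one-sided Bernstein inequality (using the same $\mathrm{Var}(\ell)\le M\,\bbE[\ell]$ localization), and pay the discretization error via the same layer-wise Lipschitz propagation. These are close cousins --- the KL term for uniform distributions on nested balls is essentially the log-cardinality of your grid, and both proofs hinge on the identical ingredient that $\Pi_{\rm norm}$ keeps every row in the unit $\ell_{p}$-ball so the Lipschitz constants (and hence the complexity) carry no $N$-dependence; your diagnosis of why the bound is $N$-free, in contrast to the Maurey-type empirical covering of \citet{edelman2021inductive}, matches the paper's. What each buys: your covering argument is more elementary and self-contained; the paper's PAC-Bayes formulation is reused essentially verbatim for the MLE analysis in Proposition~\ref{prop:mlebound} (where the loss is a log-likelihood ratio and a sup-norm cover of the loss class would be clumsier), so the paper gets two results from one concentration template. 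One small bookkeeping point: when you pass from the cover back to the full class you must also perturb the $\tfrac12\bbE_{\nu}[\ell_{f}]$ term on the right-hand side, which costs another $O(V^{2}\varepsilon)$; at $\varepsilon=\Theta(V/n)$ this is absorbed exactly as you say, so the argument goes through.
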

Proposition~\ref{prop:regress_risk} is a corollary of Theorem~\ref{thm:main_concen}. We state it here since the generalization error bound of transformer may be interesting for other regression problems. We compare our generalization error bound in Proposition~\ref{prop:regress_risk} with   \citet[Theorem~4.6]{edelman2021inductive}. For the dependence on the number of agents $N$,  the result in \citet[Theorem~4.6]{edelman2021inductive} shows that the logarithm of the covering number of the transformer function class is logarithmic in $N$. Combined with the use of the Dudley integral \citep{mohri2018foundations}, \citet[Theorem~4.6]{edelman2021inductive} implies that the generalization error bound is logarithmic in $N$. In contrast, our result is independent of $N$. This superiority is attributed to our use of the PAC-Bayesian framework, in which  we measure the distance between functions using the KL divergence of the distributions on the function parameter space. For the transformer structure, the size of the parameter space is independent of the number of agents $N$, which helps us to remove the dependence on $N$. 

Concerning the dependence on the depth $L$ of the neural network, \citet[Theorem~4.6]{edelman2021inductive}  shows that the logarithm of the covering number of the transformer function class scales exponentially in $L$. In contrast, Proposition~\ref{prop:regress_risk} shows that the generalization bound is {\em polynomial} in $L$. We note that  Proposition~\ref{prop:regress_risk} does not contradict  the  exponential dependence shown in \citet{bartlett2017spectrally,neyshabur2017pac}, since we implement the layer normalization to restrict the range of the output. As a byproduct, Proposition~\ref{prop:regress_risk} shows that the invariant of the layer normalization adopted in our paper can greatly reduce the dependence of the generalization error on the depth of the neural network $L$. We note that our results can be generalized to the multi-head attention structure, and the extensions are provided in Appendix~\ref{app:ext}.

\section{Offline MARL with Set Transformers}\label{sec:apptomarl}
In this section, we apply the results in Section~\ref{sec:funcapprox} to \ac{marl}. We implement efficient relational reasoning via the set transformer to obtain  improved suboptimality bounds of the \ac{marl} problem.
In particular, we consider the \emph{homogeneous} \ac{mdp}, where the transition kernel and the reward function are  invariant to permutations of the agents, i.e., for any row-wise permutation function $\bpsi(\cdot)$, we have
\begin{align}
    P^{*}(\bar{\bS}^{\prime}\, |\, \bar{\bS},\bar{\bA})=P^{*}\big(\bpsi(\bar{\bS}^{\prime})\, \big|\, \bpsi(\bar{\bS}),\bpsi(\bar{\bA})\big)\quad\text{ and }\quad r(\bar{\bS},\bar{\bA})=r\big(\bpsi(\bar{\bS}),\bpsi(\bar{\bA})\big) \nonumber
\end{align}
for all $\bar{\bS},\bar{\bS}^{\prime}\in \calS^{N}$ and $\bar{\bA}\in\calA^{N}$. A key property of the homogeneous \ac{mdp} is that there exists a permutation invariant optimal policy, and the corresponding state-value function and the action-value function are also permutation invariant~\cite{li2021permutation}. 
\begin{restatable}{proposition}{piproperty}\label{prop:piproperty}
    For the cooperative homogeneous \ac{mdp}, there exists an optimal policy that is permutation invariant. Also, for any permutation invariant policy $\pi$, the corresponding value function $V_{P^{*}}^{\pi}$ and action-value function $Q_{P^{*}}^{\pi}$ are permutation invariant.
\end{restatable}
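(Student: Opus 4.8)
The plan is to reduce the proposition to a single structural fact: both the Bellman evaluation operator $\calT^{\pi}$ and the Bellman optimality operator $\calT^{*}$ map permutation invariant bounded functions to permutation invariant bounded functions. Granting this, the second assertion follows because $Q^{\pi}_{P^{*}}$ is the unique fixed point of the $\gamma$-contraction $\calT^{\pi}$ and the space of functions valued in $[-V_{\max},V_{\max}]$ that are permutation invariant is closed under the supremum norm, so Banach's fixed point theorem places the fixed point inside it; the existence of a permutation invariant optimal policy then follows by combining the analogous statement for $Q^{*}_{P^{*}}$ with a symmetric greedy construction.

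First I would record the three equivariance facts that homogeneity supplies for every row-wise permutation $\bpsi$: (a) $r(\bpsi(\bar{\bS}),\bpsi(\bar{\bA}))=r(\bar{\bS},\bar{\bA})$; (b) $P^{*}(\cdot\,|\,\bpsi(\bar{\bS}),\bpsi(\bar{\bA}))$ is the pushforward of $P^{*}(\cdot\,|\,\bar{\bS},\bar{\bA})$ under $\bpsi$; and (c) for permutation invariant $\pi$, $\pi(\cdot\,|\,\bpsi(\bar{\bS}))$ is the pushforward of $\pi(\cdot\,|\,\bar{\bS})$ under $\bpsi$. Using (a)--(c) together with the substitutions $\bar{\bA}'\mapsto\bpsi(\bar{\bA}')$ and $\bar{\bS}'\mapsto\bpsi(\bar{\bS}')$ inside the expectations, a short computation shows that whenever $f$ is permutation invariant, so are $\bar{\bS}\mapsto f(\bar{\bS},\pi)$, $\bar{\bS}\mapsto\max_{\bar{\bA}}f(\bar{\bS},\bar{\bA})$, and hence $\calT^{\pi}f$ and $\calT^{*}f$; for $\calT^{*}$ the one extra ingredient is that $\bpsi$ is a bijection of $\bar{\calA}$, so the inner maximum is unchanged by relabeling the agents.

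For the second assertion this already suffices: $\calT^{\pi}$ is a $\gamma$-contraction on the complete metric space of permutation invariant functions valued in $[-V_{\max},V_{\max}]$, so its unique fixed point $Q^{\pi}_{P^{*}}$ is permutation invariant, and $V^{\pi}_{P^{*}}(\bar{\bS})=Q^{\pi}_{P^{*}}(\bar{\bS},\pi)$ is obtained from it by the invariance-preserving map $f\mapsto f(\cdot,\pi)$, hence is permutation invariant as well. (Equivalently, one can couple the trajectory started from $\bar{\bS}$ with the one started from $\bpsi(\bar{\bS})$: by (b)--(c) the process $\bpsi(\bar{\bS}_{t})$ is distributed as the chain started from $\bpsi(\bar{\bS})$, and by (a) it accrues the same discounted return, which gives $V^{\pi}_{P^{*}}\!\circ\bpsi=V^{\pi}_{P^{*}}$ and permutation invariance of $Q^{\pi}_{P^{*}}$ directly.)

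For the first assertion, the same argument applied to $\calT^{*}$ shows that $Q^{*}_{P^{*}}$, and therefore $V^{*}_{P^{*}}$, are permutation invariant; it remains to produce an optimal policy that is invariant. Any policy greedy with respect to $Q^{*}_{P^{*}}$ is optimal, and the maximizer set $A^{*}(\bar{\bS}):=\argmax_{\bar{\bA}}Q^{*}_{P^{*}}(\bar{\bS},\bar{\bA})$ transforms equivariantly, $A^{*}(\bpsi(\bar{\bS}))=\bpsi(A^{*}(\bar{\bS}))$. When $\bar{\calA}$ is finite one can simply take $\pi^{*}(\cdot\,|\,\bar{\bS})$ uniform on $A^{*}(\bar{\bS})$; since the uniform law on $\bpsi(A^{*}(\bar{\bS}))$ is the pushforward of the uniform law on $A^{*}(\bar{\bS})$, this $\pi^{*}$ is optimal and permutation invariant. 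I expect this last step to be the only genuine obstacle: an arbitrary measurable selection from the equivariant correspondence $A^{*}$ need not respect the symmetry, so one must tie-break symmetrically---for continuous $\bar{\calA}$ this means picking an orbit representative of each state together with a maximizer for it, averaging over the stabilizer of that representative to define $\pi^{*}$, and verifying measurability of the resulting selection. Everything else is the routine contraction/fixed-point bookkeeping above.
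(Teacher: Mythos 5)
Your proposal is correct and follows essentially the same route as the paper: homogeneity makes the Bellman (optimality and evaluation) operators preserve permutation invariance, so the unique fixed points $Q^{*}_{P^{*}}$ and $Q^{\pi}_{P^{*}}$ are permutation invariant, and a greedy policy with respect to $Q^{*}_{P^{*}}$ gives the invariant optimal policy. Your additional care about tie-breaking in the $\argmax$ (symmetric selection from the equivariant maximizer set) addresses a point the paper's proof silently glosses over, but it does not change the argument's structure.
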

Thus, we restrict our attention to   the class of permutation invariant policies $\Pi$, where $\pi(\bar{\bA}\,|\, \bar{\bS})=\pi(\bpsi(\bar{\bA})\,|\,\bpsi(\bar{\bS}))$ for all $\bar{\bA}\in\bar{\calA}$, $\bar{\bS}\in\bar{\calS}$, $\pi\in\Pi$ and all permutations $\bpsi$. For example, if $\pi(\bar{\bA}\, |\, \bar{\bS})=\prod_{i=1}^{N}\mu(\ba_{i}\, |\,\bs_{i})$ for some $\mu$, then $\pi$ is permutation invariant. An optimal policy is any  $\pi^{*}\in \argmax_{\pi\in\Pi} V_{P^{*}}^{\pi}(\bar{\bS}_{0})$. 

\subsection{Pessimistic Model-Free Offline Reinforcement Learning}\label{sec:mfreeRL}
In this subsection, we present a model-free algorithm, in which we adopt the transformer to estimate the action-value function. We also learn a policy based on such an estimate.

\subsubsection{Algorithm}
We modify the single-agent offline \ac{rl} algorithm in \citet{xie2021bellman} to be applicable to the multi-agent case with the transformer approximators, but the analysis is rather different from that in~\citet{xie2021bellman}. Given the dataset $\calD=\{(\bar{\bS}_{i},\bar{\bA}_{i},r_{i},\bar{\bS}_{i}^{\prime})\}_{i=1}^{n}$, we define the mismatch between two functions $f$ and $\tilde{f}$ on $\calD$ for a fixed policy $\pi$ as $\calL(f,\tilde{f},\pi;\calD)= \frac{1}{n}\sum_{(\bar{\bS},\bar{\bA},\barr,\bar{\bS}^{\prime})\in\calD}(f(\bar{\bS},\bar{\bA})-\barr-\gamma \tilde{f}(\bar{\bS}^{\prime},\pi))^{2}$. We adopt the transformer function class $\calF_{\tf}(\bB)$ in Section~\ref{sec:gene_err} to estimate the action-value function and regard $\bX=[\bar{\bS},\bar{\bA}]\in \bbR^{N\times d}$ as the input of the neural network. The dimension  $d=d_{\calS}+d_{\calA}$ and each agent corresponds to a channel in $\bX$. The {\em Bellman error} of a function $f$ with respect to  the policy $\pi$ is defined as $\calE(f,\pi;\calD)=\calL(f,f,\pi;\calD)-\inf_{\tilde{f}\in\calF_{\tf}}\calL(\tilde{f},f,\pi;\calD)$. 

For a fixed policy $\pi$, we construct the confidence region of the action-value function of $\pi$ by selecting the functions in $\calF_{\tf}$ with the $\varepsilon$-controlled Bellman error. We regard the   function attaining the minimum in the confidence region as the estimate of the action-value function of the policy; this reflects the terminology ``pessimism''. Then the optimal policy is learned by maximizing the action-value function estimate. The algorithm can be written formally as
\begin{align}
    \hpi=\argmax_{\pi\in\Pi}\min_{f\in \calF(\pi,\varepsilon)} f(\bar{\bS}_{0},\pi),\quad \text{ where }\quad \calF(\pi,\varepsilon)= \big\{f\in\calF_{\tf}(B) \, \big|\, \calE(f,\pi;\calD)\leq \varepsilon \big\}.\label{algo:mfree}
\end{align}
The motivation for the pessimism originates from the \emph{distribution shift}, where the induced distribution of the learned policy is different from the sampling distribution $\nu$. Such  an issue is severe when there is no guarantee that the sampling distribution $\nu$ supports the visitation distribution $d_{P^{*}}^{\pi^{*}}$ induced by the optimal policy $\pi^{*}$. In fact, the algorithm in Eqn.~\eqref{algo:mfree} does not require the \emph{global} coverage of the sampling distribution $\nu$, where the global coverage means that $d_{P^{*}}^{\pi}(\bar{\bS},\bar{\bA})/\nu(\bar{\bS},\bar{\bA})$ is upper bounded by some constant for all $(\bar{\bS},\bar{\bA})\in\bar{\calS}\times\bar{\calA}$ and all $\pi\in\Pi$. Instead, it only requires {\em partial} coverage, and the mismatch between the distribution induced by the optimal policy $d_{P^{*}}^{\pi^{*}}$ and the sampling distribution $\nu$ is captured by
\begin{align}
    C_{\calF_{\tf}}=\max_{f\in\calF_{\tf}}\bbE_{d^{\pi^{*}}_{P^{*}}}\big[\big(f(\bar{\bS},\bar{\bA})-\calT^{\pi^{*}}f(\bar{\bS},\bar{\bA})\big)^{2}\big]\big/\bbE_{\nu}\big[\big(f(\bar{\bS},\bar{\bA})-\calT^{\pi^{*}}f(\bar{\bS},\bar{\bA})\big)^{2}\big].\label{eqn:CFtf}
\end{align}
We note that $C_{\calF_{\tf}}\leq \max_{(\bar{\bS},\bar{\bA})\in\bar{\calS}\times\bar{\calA}} d_{P^{*}}^{\pi^{*}}(\bar{\bS},\bar{\bA})/\nu(\bar{\bS},\bar{\bA})$, so the suboptimality bound involving $C_{\calF_{\tf}}$ in Theorem~\ref{thm:main_mfree} is tighter than the bound requiring   global convergence \citep{uehara2020minimax}. Similar coefficients also appear in many existing works such as~\cite{xie2021bellman} and \cite{yin2022near}.

\subsubsection{Bound on the Suboptimality Gap}

Before stating the suboptimality bound, We require two assumptions on $\calF_{\tf}$ and the sampling distribution $\nu$. We first state the standard regularity assumption of the transformer function class.
\begin{assumption}\label{assump:mfree1}
    For any $\pi\in\Pi$, we have $\inf_{f\in\calF_{\tf}}\sup_{\mu\in d_{\Pi}} \bbE_{\mu}[(f(\bar{\bS},\bar{\bA})-\calT^{\pi}f(\bar{\bS},\bar{\bA}))^{2}]\leq \varepsilon_{\calF}$ and $\sup_{f\in\calF_{\tf}}\inf_{\tilde{f}\in\calF_{\tf}} \bbE_{\nu}[(\tilde{f}(\bar{\bS},\bar{\bA})-\calT^{\pi}f(\bar{\bS},\bar{\bA}))^{2}]\leq \varepsilon_{\calF,\calF}$, where $d_{\Pi}=\{\mu\ | \ \exists\,  \pi\in \Pi \text{ s.t. }\mu=d^{\pi}_{P^{*}}\}$ is the set of distributions of the state and the action pair induced by any policy $\pi\in\Pi$.
\end{assumption}
This assumption, including the {\em realizability} and the {\em completeness}, states that for any policy $\pi\in\Pi$ there is a function in the transformer function class $\calF_{\tf}$ such that the Bellman error is controlled by $\varepsilon_{\calF}$, and the transformer function class is approximately closed under the Bellman operator $\calT^{\pi}$ for any $\pi\in\Pi$. In addition, we require that the mismatch between the sampling distribution and the visitation distribution of the optimal policy is bounded.
\begin{assumption}\label{assump:mfree2}
    For the sampling distribution $\nu$, the coefficient $C_{\calF_{\tf}}$ defined in Eqn.~\eqref{eqn:CFtf} is finite. 
\end{assumption}
We note that similar assumptions also appear in many existing works~\citep{xie2021bellman,yin2022near}. 

In the analysis of the algorithm in Eqn.~\eqref{algo:mfree}, we first derive a generalization error bound of the estimate of the Bellman error using the PAC-Bayesian framework~\cite{mcallester1999some, mcallester2003simplified}.
\begin{restatable}{theorem}{mainconcen}\label{thm:main_concen}
    Let $\bar{B}=B_{V}B_{QK}B_{a}B_{b}B_{\bw}$.  For all $f,\tilde{f}\in\calF_{\tf}(\bB)$ and all policies $\pi\in\Pi$, with probability at least $1-\delta$, we have 
    \begin{align}
        &\Big|\bbE_{\nu}\Big[\big(f(\bar{\bS},\bar{\bA})-\calT^{\pi}\tilde{f}(\bar{\bS},\bar{\bA})\big)^{2}\Big]-\calL(f,\tilde{f},\pi;\calD)+\calL(\calT^{\pi}\tilde{f},\tilde{f},\pi;\calD)\Big|\nonumber\\
        &\leq \!\frac{1}{2}\bbE_{\nu}\!\Big[\big(f(\bar{\bS},\bar{\bA})\!-\!\calT^{\pi}\tilde{f}(\bar{\bS},\bar{\bA})\big)^{2}\Big]\!+\!O\bigg(\frac{V_{\max}^{2}}{n}\biggl[mL^{2}d^{2}\log\frac{mdL\bar{B} n}{V_{\max}} +\log\frac{\calN(\Pi,1/n,d_{\infty})}{\delta}\biggr]\bigg).\nonumber
    \end{align}
\end{restatable}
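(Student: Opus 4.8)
The plan is to recognize the left-hand side as the deviation of an empirical mean from its expectation for a carefully chosen summand, establish a Bernstein-type concentration for a fixed triple $(f,\tilde f,\pi)$, and then lift it to a bound that is uniform over $\calF_{\tf}\times\calF_{\tf}\times\Pi$ via a PAC-Bayesian argument on the (channel-number-independent) parameter space of the transformer, combined with a covering-number union bound over the policy class $\Pi$.

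First, for a sample $(\bar{\bS},\bar{\bA},\barr,\bar{\bS}^{\prime})\in\calD$ write $y=\barr+\gamma\tilde f(\bar{\bS}^{\prime},\pi)$ and set $Z=(f(\bar{\bS},\bar{\bA})-y)^{2}-(\calT^{\pi}\tilde f(\bar{\bS},\bar{\bA})-y)^{2}$, so that $\calL(f,\tilde f,\pi;\calD)-\calL(\calT^{\pi}\tilde f,\tilde f,\pi;\calD)=\frac1n\sum_{i}Z_{i}$. Since the reward is deterministic and $\bar{\bS}^{\prime}\sim P^{*}(\cdot\,|\,\bar{\bS},\bar{\bA})$, the definition of $\calT^{\pi}$ gives $\bbE[y\,|\,\bar{\bS},\bar{\bA}]=\calT^{\pi}\tilde f(\bar{\bS},\bar{\bA})$, and a bias--variance decomposition then yields $\bbE[Z\,|\,\bar{\bS},\bar{\bA}]=(f-\calT^{\pi}\tilde f)^{2}(\bar{\bS},\bar{\bA})$, hence $\bbE[Z]=\bbE_{\nu}[(f-\calT^{\pi}\tilde f)^{2}]$; the left-hand side of the theorem is therefore exactly $|\bbE[Z]-\frac1n\sum_{i}Z_{i}|$. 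Factoring the difference of squares, $Z=(f-\calT^{\pi}\tilde f)(\bar{\bS},\bar{\bA})\cdot(f+\calT^{\pi}\tilde f-2y)(\bar{\bS},\bar{\bA},\bar{\bS}^{\prime})$; since $|f|,|\calT^{\pi}\tilde f|,|y|\leq V_{\max}$, the second factor is bounded by $4V_{\max}$, so $|Z|\leq 4V_{\max}|f-\calT^{\pi}\tilde f|$ and $\bbE[Z^{2}]\leq 16V_{\max}^{2}\bbE_{\nu}[(f-\calT^{\pi}\tilde f)^{2}]$. Bernstein's inequality for the i.i.d.\ sum $\frac1n\sum_{i}Z_{i}$, followed by the inequality $\sqrt{ab}\leq\tfrac12 a+\tfrac12 b$ applied with $a=\bbE_{\nu}[(f-\calT^{\pi}\tilde f)^{2}]$, converts the variance term into $\tfrac12\bbE_{\nu}[(f-\calT^{\pi}\tilde f)^{2}]$ plus a remainder of order $V_{\max}^{2}\log(1/\delta)/n$, which already has the claimed shape for a single triple.

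Second, to obtain uniformity I would invoke a Bernstein-type PAC-Bayesian inequality (following \cite{mcallester1999some} and its refinements). Parametrize $f=g_{\tf}(\cdot\,;\theta)$ and $\tilde f=g_{\tf}(\cdot\,;\tilde\theta)$ by points in the bounded box defining $\calF_{\tf}(\bB)$, take the prior to be a product of zero-mean Gaussians over this box and the posterior an isotropic Gaussian of small variance $\sigma^{2}$ centered at $(\theta,\tilde\theta)$. The crucial ingredient is a \emph{deterministic} perturbation bound: for every input $\bX$ in the normalized domain, $|g_{\tf}(\bX;\theta)-g_{\tf}(\bX;\theta^{\prime})|\leq\Lambda\|\theta-\theta^{\prime}\|$ with $\Lambda=\mathrm{poly}(L,m,d,\bar B)$, obtained by propagating a parameter perturbation through the $L$ layers and using that the softmax and $\relu$ maps are Lipschitz and that the row-wise normalization $\Pi_{\mathrm{norm}}$ is non-expansive and confines every intermediate representation to the unit $\ell_{p}$-ball --- the latter is precisely what rules out the exponential-in-$L$ growth and gives the polynomial dependence on $L$. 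This makes $Z$ itself Lipschitz in $(\theta,\tilde\theta)$, recalling that $\calT^{\pi}\tilde f$ and $\tilde f(\bar{\bS}^{\prime},\pi)$ depend on $\tilde\theta$ only through $g_{\tf}$. Choosing $\sigma$ of order $V_{\max}/(n\Lambda\sqrt{D})$, where $D$ is the finite, $N$-independent number of transformer parameters, makes the posterior-averaging error $O(V_{\max}^{2}/n)$ while keeping $\mathrm{KL}(Q\|P)=O(D\log(n\Lambda\bar B/V_{\max}))$; a careful accounting of $D$ together with the depth factors in $\Lambda$ reproduces the $mL^{2}d^{2}\log(mdL\bar B n/V_{\max})$ term, and since $D$ does not depend on the channel count $N$, this is exactly where the $N$-independence over \citet[Theorem~4.6]{edelman2021inductive} is gained. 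The policy dependence is handled separately: on a minimal $(1/n)$-cover of $\Pi$ in $d_{\infty}$ I take a union bound, contributing $\log\calN(\Pi,1/n,d_{\infty})$, and since $|\tilde f(\bar{\bS},\pi)-\tilde f(\bar{\bS},\pi^{\prime})|\leq V_{\max}\,d_{\infty}(\pi,\pi^{\prime})$, the resulting discretization error is $O(V_{\max}^{2}/n)$ and is absorbed into the remainder.

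The main obstacle is the deterministic Lipschitz-in-parameters lemma for the layer-normalized set transformer: one must show that composing $L$ blocks of self-attention, a row-wise feedforward map, and the normalization projection yields a map whose sensitivity to its parameters grows only polynomially in $L$ and $m$, which requires carefully tracking how the softmax nonlinearity and the normalization interact through the composition; this is the step that delivers the improvement over \citet{edelman2021inductive}. A secondary technical point is to tune the PAC-Bayes prior, posterior, and radius $\sigma$ so that the $\mathrm{KL}$ term and the averaging error both land at the stated rate simultaneously for the \emph{pair} $(f,\tilde f)$; checking that the Bernstein-type PAC-Bayes bound applies is then routine, since $|Z|\leq 8V_{\max}^{2}$ and $Z$ is sub-exponential.
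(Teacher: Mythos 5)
Your proposal follows essentially the same route as the paper's proof: the same summand $Z$ (the paper's $l(f,\tilde f,\pi;\bX)$) with the same mean identity and the self-bounding variance estimate $\mathrm{Var}(Z)\leq 16V_{\max}^{2}\,\bbE[Z]$, a Bernstein-type PAC-Bayesian bound over the ($N$-independent) transformer parameter space localized by a deterministic parameter-perturbation lemma, and a $(1/n)$-cover of $\Pi$ in $d_{\infty}$ with a union bound. The only cosmetic difference is that the paper takes the prior uniform on the parameter box and the posterior uniform on small balls around $(\btheta,\btheta')$, so the KL term becomes a sum of log-volume ratios rather than a Gaussian KL; nothing substantive changes.

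One point to correct before executing the plan: the perturbation constant $\Lambda$ is \emph{not} polynomial in $L$. What the row-wise normalization buys is that each block's sensitivity to its \emph{input} is a fixed constant of order $B_{V}(1+4c_{p,q}B_{QK})+d^{1/p}mB_{a}B_{b}>1$, independent of activation magnitudes; the sensitivity of the output to a layer-$i$ parameter therefore still compounds multiplicatively to this constant raised to the power $L-i$, so $\Lambda$ is exponential in $L$. The final bound is nonetheless polynomial in $L$ because $\Lambda$ enters only through $\log\Lambda=O(L\log(md\bar B))$ inside the KL term --- and indeed this is the only way your own accounting can produce the $L^{2}$ (rather than $L$) factor, since the parameter count is $D=O(mLd^{2})$ and $mL^{2}d^{2}=D\cdot O(L)$. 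In the paper this manifests as posterior radii for layer $i$ scaled by the inverse of that exponential factor, whose log-volume ratios sum to $O(mL^{2}d^{2}\log(md\bar B n))$. If you instead tried to prove $\Lambda=\mathrm{poly}(L)$ you would get stuck, and if it were true the theorem would improve to $mLd^{2}$.
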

For ease of notation, we define $e(\calF_{\tf},\Pi,\delta,n)$ to be $n$ times the second term of the generalization error bound. We note that the generalization error bound in Theorem~\ref{thm:main_concen} is independent of the number of agents, which will help us to remove the dependence on the number of agents in the suboptimality of the learned policy. The suboptimality gap of the learned policy $\hpi$ can be upper bounded as the following.
\begin{restatable}{theorem}{mfreethm}\label{thm:main_mfree}
If Assumptions~\ref{assump:mfree1} and~\ref{assump:mfree2} hold, and we take $\varepsilon=3\varepsilon_{\calF}/2+2e(\calF_{\tf},\Pi,\delta,n)/n$, then with probability at least $1-\delta$, the suboptimality gap of the policy derived in the algorithm shown in Eqn.~\eqref{algo:mfree} is upper bounded as 
\begin{align}
    \!V_{P^{*}}^{\pi^{*}}(\bar{\bS}_{0})\!-\!V_{P^{*}}^{\hpi}(\bar{\bS}_{0})\! \leq \!  O\Bigg(\frac{\sqrt{C_{\calF_{\tf}}\tilde{\varepsilon}  }}{1-\gamma}\!+\!\frac{V_{\max}\sqrt{ C_{\calF_{\tf}}  }}{(1-\gamma)\sqrt{n} }\sqrt{mL^{2}d^{2}\log\frac{mdL\bar{B}n}{V_{\max}}\!+\!\log \frac{2\calN(\Pi,1/n,d_{\infty})}{\delta} }\Bigg),\nonumber
\end{align}
    where $d=d_{\calS}+d_{\calA}$, $\tilde{\varepsilon} = \varepsilon_\calF + \varepsilon_{\calF,\calF}$, and $\bar{B}$ is defined in Proposition \ref{thm:main_concen}.
\end{restatable}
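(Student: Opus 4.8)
The plan is to follow the pessimism-based template for offline value-function learning (as in \citet{xie2021bellman}), but run with the transformer class $\calF_{\tf}$ and, crucially, with the agent-number-independent uniform concentration of Theorem~\ref{thm:main_concen} plugged in, so that the final rate inherits this independence. Because the \ac{mdp} is homogeneous, Proposition~\ref{prop:piproperty} ensures $\pi^{*}\in\Pi$ exists and that the functions $Q_{P^{*}}^{\pi}$ for $\pi\in\Pi$ are permutation invariant, which is what makes the permutation-invariant class $\calF_{\tf}$ an appropriate approximator and Assumption~\ref{assump:mfree1} plausible. I would work on the event of probability at least $1-\delta$ on which the inequality of Theorem~\ref{thm:main_concen} holds simultaneously for all $f,\tilde{f}\in\calF_{\tf}$ and all $\pi\in\Pi$; this event already absorbs the union bound over $\Pi$ through the $\calN(\Pi,1/n,d_{\infty})$ factor, so everything below is deterministic and uniform in $\pi$. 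Write $e=e(\calF_{\tf},\Pi,\delta,n)$ and let $\widehat{V}(\pi)=\min_{f\in\calF(\pi,\varepsilon)}f(\bar{\bS}_{0},\pi)$ be the pessimistic value estimate, so $\hpi=\argmax_{\pi\in\Pi}\widehat{V}(\pi)$.

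First I would establish \emph{feasibility}: for $\varepsilon=3\varepsilon_{\calF}/2+2e/n$ and every $\pi\in\Pi$, a near-realizable predictor $f^{\pi}\in\calF_{\tf}$ supplied by Assumption~\ref{assump:mfree1} satisfies $f^{\pi}\in\calF(\pi,\varepsilon)$. This follows by applying Theorem~\ref{thm:main_concen} with $(f,\tilde{f})=(f^{\pi},f^{\pi})$ to bound $\calL(f^{\pi},f^{\pi},\pi;\calD)-\calL(\calT^{\pi}f^{\pi},f^{\pi},\pi;\calD)$ by $\tfrac{3}{2}\bbE_{\nu}[(f^{\pi}-\calT^{\pi}f^{\pi})^{2}]+e/n\le \tfrac{3}{2}\varepsilon_{\calF}+e/n$, together with the lower bound $\calL(\tilde{f},f^{\pi},\pi;\calD)-\calL(\calT^{\pi}f^{\pi},f^{\pi},\pi;\calD)\ge -e/n$ valid for every $\tilde{f}\in\calF_{\tf}$ (again from Theorem~\ref{thm:main_concen}); hence $\calE(f^{\pi},\pi;\calD)\le\varepsilon$, and in particular $\calF(\pi^{*},\varepsilon)$ and $\calF(\hpi,\varepsilon)$ are non-empty. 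Next I would show that \emph{every} member of a confidence set has small population Bellman error: for $f\in\calF(\pi,\varepsilon)$, inserting a completeness approximant $g_{f}\in\calF_{\tf}$ of $\calT^{\pi}f$ from Assumption~\ref{assump:mfree1} gives $\calE(f,\pi;\calD)\ge \calL(f,f,\pi;\calD)-\calL(g_{f},f,\pi;\calD)$, and two more applications of Theorem~\ref{thm:main_concen} centered at $\calL(\calT^{\pi}f,f,\pi;\calD)$ yield $\tfrac{1}{2}\bbE_{\nu}[(f-\calT^{\pi}f)^{2}]\le \varepsilon+\tfrac{3}{2}\varepsilon_{\calF,\calF}+2e/n$, i.e.\ $\bbE_{\nu}[(f-\calT^{\pi}f)^{2}]=O(\tilde{\varepsilon}+e/n)$ with $\tilde{\varepsilon}=\varepsilon_{\calF}+\varepsilon_{\calF,\calF}$.

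The core of the argument is the decomposition $V_{P^{*}}^{\pi^{*}}(\bar{\bS}_{0})-V_{P^{*}}^{\hpi}(\bar{\bS}_{0})=\bigl(V_{P^{*}}^{\pi^{*}}(\bar{\bS}_{0})-\widehat{V}(\pi^{*})\bigr)+\bigl(\widehat{V}(\pi^{*})-\widehat{V}(\hpi)\bigr)+\bigl(\widehat{V}(\hpi)-V_{P^{*}}^{\hpi}(\bar{\bS}_{0})\bigr)$, combined with the telescoping identity $f(\bar{\bS}_{0},\pi)-V_{P^{*}}^{\pi}(\bar{\bS}_{0})=\tfrac{1}{1-\gamma}\bbE_{(\bar{\bS},\bar{\bA})\sim d_{P^{*}}^{\pi}}[f(\bar{\bS},\bar{\bA})-(\calT^{\pi}f)(\bar{\bS},\bar{\bA})]$, which is $f-Q_{P^{*}}^{\pi}=(I-\gamma P^{\pi})^{-1}(f-\calT^{\pi}f)$ integrated against the discounted occupancy ($P^{\pi}$ the transition operator under $\pi$). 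The middle term is $\le 0$ by definition of $\hpi$. For the last term I would use $\widehat{V}(\hpi)\le f^{\hpi}(\bar{\bS}_{0},\hpi)$ (feasibility), the telescoping identity, Cauchy--Schwarz, and the realizability bound applied at $d_{P^{*}}^{\hpi}$ --- legitimate because $d_{P^{*}}^{\hpi}\in d_{\Pi}$ even though $\hpi$ is data-dependent, which is exactly why Assumption~\ref{assump:mfree1} supremizes over $d_{\Pi}$ --- to get $\widehat{V}(\hpi)-V_{P^{*}}^{\hpi}(\bar{\bS}_{0})\le \sqrt{\varepsilon_{\calF}}/(1-\gamma)$. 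For the first term, any $f\in\calF(\pi^{*},\varepsilon)$ has $\bbE_{\nu}[(f-\calT^{\pi^{*}}f)^{2}]=O(\tilde{\varepsilon}+e/n)$ by the previous step; the concentrability coefficient $C_{\calF_{\tf}}$ upgrades this to $\bbE_{d_{P^{*}}^{\pi^{*}}}[(f-\calT^{\pi^{*}}f)^{2}]\le C_{\calF_{\tf}}\cdot O(\tilde{\varepsilon}+e/n)$, and the telescoping identity with Cauchy--Schwarz gives $|f(\bar{\bS}_{0},\pi^{*})-V_{P^{*}}^{\pi^{*}}(\bar{\bS}_{0})|\le \tfrac{\sqrt{C_{\calF_{\tf}}}}{1-\gamma}\cdot O(\sqrt{\tilde{\varepsilon}+e/n})$ uniformly over the (non-empty) set $\calF(\pi^{*},\varepsilon)$, hence the same bound for $V_{P^{*}}^{\pi^{*}}(\bar{\bS}_{0})-\widehat{V}(\pi^{*})$. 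Summing the three contributions and substituting $e=O(V_{\max}^{2}[mL^{2}d^{2}\log(mdL\bar{B}n/V_{\max})+\log(\calN(\Pi,1/n,d_{\infty})/\delta)])$ yields the stated bound, and since Theorem~\ref{thm:main_concen} carries no dependence on the number of agents, neither does the result.

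I expect the main obstacle to be the bookkeeping around the debiased empirical Bellman error $\calE(f,\pi;\calD)=\calL(f,f,\pi;\calD)-\inf_{\tilde{f}\in\calF_{\tf}}\calL(\tilde{f},f,\pi;\calD)$: since $\calT^{\pi}f$ need not lie in $\calF_{\tf}$, one must insert the completeness approximant and track that the cross terms between the Bellman residual and the Bellman noise cancel, which is precisely why Theorem~\ref{thm:main_concen} is stated centered at $\calL(\calT^{\pi}\tilde{f},\tilde{f},\pi;\calD)$ rather than at an unconditional empirical mean. A secondary subtlety is ensuring the concentration holds simultaneously over the continuous class $\Pi$ --- this needs $d_{\infty}$-Lipschitz continuity of $\pi\mapsto\calL(f,\tilde{f},\pi;\calD)$ and $\pi\mapsto f(\bar{\bS}_{0},\pi)$ to discretize $\Pi$ at scale $1/n$, already folded into $e$ via the $\calN(\Pi,1/n,d_{\infty})$ term --- together with calibrating the constants so that the single choice $\varepsilon=3\varepsilon_{\calF}/2+2e/n$ makes both the feasibility step and the population-Bellman-error step go through.
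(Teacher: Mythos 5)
Your proposal is correct and follows essentially the same route as the paper's proof: feasibility of the realizability approximant in the confidence set, uniform population Bellman-error control for all members of the confidence set via Theorem~\ref{thm:main_concen} with the completeness approximant inserted, and then the pessimism decomposition combined with the telescoping identity (the paper's Lemma~\ref{lem:regretdecomp}), Cauchy--Schwarz, and the concentrability coefficient $C_{\calF_{\tf}}$. The only cosmetic difference is that you organize the final step as a three-term decomposition through $\widehat{V}(\pi^{*})$ and $\widehat{V}(\hpi)$, whereas the paper bounds the gap by $\hat{f}_{\pi^{*}}(\bar{\bS}_{0},\pi^{*})-\breve{f}_{\pi^{*}}(\bar{\bS}_{0},\pi^{*})$ plus the $2\sqrt{\varepsilon_{\calF}}/(1-\gamma)$ realizability slack; these are equivalent.
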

Theorem~\ref{thm:main_mfree} shows that the upper bound of the suboptimality gap does not scale with the number of agents $N$, which demonstrates that the proposed model-free algorithm breaks the curse of many agents. We note that the model-free offline/batch \ac{marl} with homogeneous agents has been studied in \cite{chen2021pessimism} and \cite{li2021permutation}, and the suboptimality upper bounds in \citet[Theorem 1]{chen2021pessimism} and \citet[Theorem~4.1]{li2021permutation} are also independent of $N$. However, these works adopt the mean-field approximation of the original \ac{mdp}, in which the influence of all the other agents on a specific agent is only coarsely considered through the distribution of the state. The approximation error between the action-value function of the mean-field \ac{mdp} and that of the original \ac{mdp} is not analyzed therein. Thus, the independence of $N$ in their works comes with the cost of the poor relational reasoning ability and the unspecified approximation error. In contrast, we analyze the suboptimality gap of the learned policy in the original \ac{mdp}, and the interaction among agents is captured by the transformer network.

\subsection{Pessimistic Model-based Offline Reinforcement Learning}\label{sec:mbasedRL}
In this subsection, we present the model-based algorithm, where we adopt the transformer to estimate the system dynamics and learn the policy based on such an estimate.
\subsubsection{Neural Nonlinear Regulator}
In this section, we consider the \ac{nnr}, in which we use the transformer to estimate the system dynamics. The ground truth transition $P^{*}(\bar{\bS}^{\prime}\, |\,\bar{\bS},\bar{\bA})$ is defined as $\bar{\bS}^{\prime}=\bF^{*}(\bar{\bS},\bar{\bA})+\bar{\bvarepsilon}$, where $\bF^{*}$ is a nonlinear function, $\bar{\bvarepsilon}=[\bvarepsilon_{1},\ldots,\bvarepsilon_{N}]^{\top}$is the noise, and $\bvarepsilon_{i}\sim \calN(0,\sigma^{2}\bI_{d\times d})$ for $i\in[N]$ are independent random vectors. We note that the function $\bF^{*}$ and the transition kernel $P^{*}$ are equivalent, and we denote the transition kernel corresponding to the function $\bF$ as $P_{\bF}$. Since the transition kernel $P^{*}(\bar{\bS}^{\prime}\, |\,\bar{\bS},\bar{\bA})$ is permutation invariant, $\bF^{*}$ should be  permutation equivariant, i.e., $\bF^{*}(\bpsi(\bar{\bS}),\bpsi(\bar{\bA}))=\bpsi(\bF^{*}(\bar{\bS},\bar{\bA}))$ for all row-wise permutation functions $\bpsi(\cdot)$. 

We take $\bX=[\bar{\bS},\bar{\bA}]\in \bbR^{N\times d}$ as the input of the network and adopt a similar network structure as the transformer specified in Section~\ref{sec:gene_err}. However, to predict the next state instead of the action-value function with the transformer, we remove the average aggregation module in the final layer of the structure in Section~\ref{sec:gene_err}. Please refer to Appendix~\ref{app:tfdef} for the formal definition. The permutation equivariance of the proposed transformer structure can be easily proved with the permutation equivariance of the self-attention mechanism. We consider the transformer function class with bounded parameters, which is defined as
\begin{align}
    \calM_{\tf}(\bB^{\prime})=\Big\{&\bF_{\tf}(\bX;\bW_{QK}^{1:L},\bW_{V}^{1:L},\ba^{1:L},\bb^{1:L})\, \Big| \, \big|a_{kj}^{(i)}\big|< B_{a}, \big\|\bb_{kj}^{(i)}\big\|_{2}< B_{b}, \nonumber\\
    &\big\|\bW_{QK}^{(i)\top}\big\|_{\rmF}< B_{QK},\big\|\bW_{V}^{(i)\top}\big\|_{\rmF}< B_{V} \text{ for } i\in[L],j\in[m],k\in[d]\Big\},\nonumber
\end{align}
where $\bB^{\prime}=[B_{a},B_{b},B_{QK},B_{V}]$ is  the vector of parameters of the function class. We denote $\calM_{\tf}(\bB^{\prime})$ as $\calM_{\tf}$ when the parameters are clear from the context.

\subsubsection{Algorithm}
Given the offline dataset $\calD=\{(\bar{\bS}_{i},\bar{\bA}_{i},r_{i},\bar{\bS}_{i}^{\prime})\}_{i=1}^{n}$, we first derive the \ac{mle} of the system dynamics. Next, we learn the optimal policy according to the confidence region of the dynamics that are constructed around the \ac{mle}. The term  ``pessimism'' is reflected in the procedure that we choose the system dynamics that induce the {\em smallest} value function, i.e.,  
\begin{align}
    \hat{\bF}_{\rm MLE}=\argmin_{\bF\in\calM_{\tf}} \frac{1}{n}\sum_{i=1}^{n}\big\|\bar{\bS}_{i}^{\prime}-\bF(\bar{\bS}_{i},\bar{\bA}_{i})\big\|_{\rmF}^{2} \quad\mbox{and}\quad \hpi=\argmax_{\pi\in\Pi} \min_{\bF\in\calM_{\rm MLE}(\zeta)} V_{P_{\bF}}^{\pi}(\bar{\bS}_{0}),\label{algo:mpolicy}
\end{align}
where $\calM_{\rm MLE}(\zeta)=\{\bF\in \calM_{\tf}(\bB^{\prime})\, |\, 1/n \cdot \sum_{i=1}^{n}\tv(P_{\bF}(\cdot\, |\, \bar{\bS}_{i},\bar{\bA}_{i}),\hat{P}_{\rm MLE}(\cdot\, |\, \bar{\bS}_{i},\bar{\bA}_{i}))^{2}\leq \zeta\}$ is the confidence region, which has a closed-form expression in terms of the difference between $\bF$ and $\hat{\bF}_{\rm MLE}$ as stated in Appendix~\ref{app:equialgo}. The transition kernel induced by $\hat{\bF}_{\rm MLE}$ is denoted as $\hat{P}_{\rm MLE}$. The parameter $\zeta$ is used to measure the tolerance of estimation error of the system dynamics, and it is set to according to the parameters of the function class such that $\bF^{*}$ belongs to $\calM_{\rm MLE}(\zeta)$ with high probability. 

Similar to the model-free algorithm, the model-based algorithm specified in Eqn.~\eqref{algo:mpolicy} does not require   global coverage. Instead, the mismatch between the distribution induced by the optimal policy $d_{P^{*}}^{\pi^{*}}$ and the sampling distribution $\nu$ is captured by the constant
\begin{align}
    \! C_{\calM_{\tf}}\!=\! \max_{F\in\calM_{\tf}}\! \bbE_{d_{P^{*}}^{\pi^{*}}}\big[\tv\big(P_{F}(\cdot\,|\,\bar{\bS},\bar{\bA}), P^{*}(\cdot\,|\,\bar{\bS},\bar{\bA})\big)^{2}\big]\Big/\bbE_{\nu}\big[\tv\big(P_{F}(\cdot\,|\,\bar{\bS},\bar{\bA}), P^{*}(\cdot\,|\,\bar{\bS},\bar{\bA})\big)^{2}\big]. \!\label{eqn:CMtf}
\end{align}
We note that $C_{\calM_{\tf}}\leq \max_{(\bar{\bS},\bar{\bA})\in\bar{\calS}\times\bar{\calA}} d_{P^{*}}^{\pi^{*}}(\bar{\bS},\bar{\bA})/\nu(\bar{\bS},\bar{\bA})$, so the suboptimality bound involving $C_{\calP_{\calF_{\tf}}}$ in Theorem~\ref{thm:main_mbased} is tighter than the bound requiring  global convergence. Similar coefficients also appear in many existing works such as \cite{sun2019model} and \cite{chang2021mitigating}.

\subsubsection{Analysis of the Maximum Likelihood Estimate}
Every $\bF \in \calM_{\rm MLE}(\zeta)$ is near to the \ac{mle} in the total variation sense and thus well approximates the ground truth system dynamics. Therefore, to derive an upper bound of the suboptimality gap of the learned policy, we first analyze the convergence rate of the \ac{mle} $\hat{P}_{\rm MLE}$ to $P^*$.
\begin{restatable}{proposition}{mlebound}\label{prop:mlebound}
    Let $\tilde{B}=B_{V}B_{QK}B_{a}B_{b}$. For the maximum likelihood estimate $\hat{P}_{\rm MLE}$ in Eqn.~\eqref{algo:mpolicy}, the following inequality holds with probability at least $1-\delta$,
    \begin{align}
        \bbE_{\nu}\Big[\tv\big(P^{*}(\cdot\,|\,\bar{\bS},\bar{\bA}), \hat{P}_{\rm MLE}(\cdot\,|\,\bar{\bS},\bar{\bA})\big)^{2}\Big]\leq O\bigg(\frac{1}{n}mL^{2}d^{2}\log\big(NLmd\tilde{B} n\big)+\frac{1}{n}\log\frac{1}{\delta}\bigg).\nonumber
    \end{align}
\end{restatable}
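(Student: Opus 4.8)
The plan is to recognize \eqref{algo:mpolicy} as conditional density estimation --- equivalently, nonlinear least squares --- over the permutation-equivariant transformer class, and then to invoke the standard deviation analysis of the maximum likelihood estimator with the statistical complexity controlled by a covering number of $\calM_{\tf}$. Under realizability, $\bF^{*}\in\calM_{\tf}(\bB')$, so $P^{*}=P_{\bF^{*}}$ lies in the model class. The first step is to pass from total variation to Hellinger distance: since $\tv(P,Q)\le H(P,Q)$, it suffices to bound $\bbE_{\nu}[H(P^{*}(\cdot\,|\,\bar{\bS},\bar{\bA}),\hat{P}_{\rm MLE}(\cdot\,|\,\bar{\bS},\bar{\bA}))^{2}]$. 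I would then apply the standard MLE deviation bound for conditional density estimation (as used, e.g., in \citet{uehara2021pessimistic}): with probability at least $1-\delta$,
\begin{align}
    \bbE_{\nu}\big[H\big(P^{*}(\cdot\,|\,\bar{\bS},\bar{\bA}),\hat{P}_{\rm MLE}(\cdot\,|\,\bar{\bS},\bar{\bA})\big)^{2}\big]\;\le\;\frac{c}{n}\Big(\log\calN\big(\calM_{\tf},1/n,d_{\infty}\big)+\log\tfrac1\delta\Big).\nonumber
\end{align}
The proof of this inequality is the usual ghost-sample/symmetrization argument applied to the log-likelihood ratios $\log(p_{\bF}/p_{\bF^{*}})$, discretized onto a $(1/n)$-net of $\calM_{\tf}$ in the $d_{\infty}$ metric, with a union bound over the net; the $1/n$ resolution renders the discretization error lower order. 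Since the noise is Gaussian, the log-density ratios are not uniformly bounded, so I would route this step through Hellinger bracketing (which sidesteps boundedness) rather than working directly with the KL divergence, or else use a mild truncation of $\bar{\bvarepsilon}$.

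The second step is to bound the $d_{\infty}$-covering number of $\calM_{\tf}$. Every $P_{\bF}\in\calM_{\tf}$ is a Gaussian with fixed isotropic covariance $\sigma^{2}I$ and mean $\bF(\bar{\bS},\bar{\bA})$, so $\tv(P_{\bF_{1}}(\cdot\,|\,x),P_{\bF_{2}}(\cdot\,|\,x))\le\|\bF_{1}(x)-\bF_{2}(x)\|_{\rmF}/(\sigma\sqrt{2\pi})$; hence an $\varepsilon$-cover of the mean-function class $\{\bF_{\tf}\}$ in the metric $\sup_{x}\|\cdot\|_{\rmF}$ induces an $O(\varepsilon/\sigma)$-cover of $\calM_{\tf}$ in $d_{\infty}$. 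The covering number of $\{\bF_{\tf}\}$ is then obtained by covering the finite-dimensional, norm-bounded parameter tuple $(\bW_{QK}^{1:L},\bW_{V}^{1:L},\ba^{1:L},\bb^{1:L})$ --- roughly $Lmd^{2}$ coordinates --- to resolution $\varepsilon'$ and using the Lipschitz dependence of the layer-by-layer transformer map on its parameters. Here the row-wise normalization $\Pi_{\rm norm}$ is essential: it pins the per-row activations to the unit $\ell_{p}$-ball at every layer, so the output never blows up and the parameter-Lipschitz constant, though exponential in $L$, enters the covering bound only through its logarithm, costing a single extra factor of $L$; the softmax over the $N$ channels contributes a factor polynomial in $N$ to that Lipschitz constant, hence the benign $\log N$. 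This yields $\log\calN(\calM_{\tf},\varepsilon,\sup_{x}\|\cdot\|_{\rmF})\lesssim mL^{2}d^{2}\log(NLmd\tilde{B}/\varepsilon)$, and taking $\varepsilon\asymp1/n$ (absorbing $\sigma$- and $V_{\max}$-type constants into the logarithm) and combining with the first step gives the claimed bound.

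The main obstacle I anticipate is the transformer covering-number/parameter-Lipschitz estimate in the second step: one must propagate a parameter perturbation through $L$ alternations of self-attention, row-wise feedforward, and $\ell_{p}$-normalization, tracking simultaneously that the softmax (normalized over the $N$ channels) injects only a $\log N$ factor and that the normalization keeps the depth dependence polynomial rather than exponential. A secondary but genuine technical point is the unboundedness of the Gaussian likelihood ratios in the MLE deviation step, which is why I would carry that argument out in Hellinger distance. Finally, I note that --- unlike Theorem~\ref{thm:main_concen} --- the bound here must depend on $N$, if only logarithmically: the estimator predicts the entire next-state matrix $\bar{\bS}'\in\bbR^{N\times d_{\calS}}$, whose ambient dimension grows with the number of agents, so a PAC-Bayesian argument over a fixed-size parameter space no longer removes $N$ altogether.
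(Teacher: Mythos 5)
Your proposal is correct in outline and would deliver the stated bound, but it takes a genuinely different route from the paper. You run the classical covering-number machinery: reduce TV to Hellinger, invoke the standard MLE deviation bound with complexity $\log\calN(\calM_{\tf},1/n,d_{\infty})$, handle the unbounded Gaussian log-likelihood ratios via Hellinger bracketing, and bound the cover by discretizing the parameter tuple and propagating perturbations through the layers. The paper instead stays PAC-Bayesian throughout: it bounds $\bbE_{\nu}[\tv^{2}]$ by an exponential-moment expression (Pinsker-type, Lemma~\ref{lem:tvbound}), applies the Donsker--Varadhan/Zhang change-of-measure lemma (Lemma~\ref{lem:bayesexpect}) with $L(\btheta;\calD)=-\tfrac14\sum_i\log(P^{*}/P_{\btheta})$, takes $Q$ uniform on a small parameter neighborhood of $\btheta$ and $Q_{0}$ uniform on all of $\Theta$, and controls the resulting fluctuation by explicitly bounding the moment generating function of the folded-normal noise $\|\bar{\bvarepsilon}\|_{\rmF}$ rather than by bracketing. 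The two routes are essentially interchangeable here because for a finite-dimensional norm-bounded parameter space the log-covering number and $\kl(Q\,\|\,Q_{0})$ for uniform balls are the same quantity (parameter count times log of a radius ratio), and both arguments bottom out in the same parameter-Lipschitz estimate for the transformer (Proposition~\ref{prop:nnerror}). Your approach buys reliance on off-the-shelf MLE theory at the cost of a separate bracketing construction for the Gaussian tails; the paper's buys a self-contained argument whose noise-tail control is an explicit one-line MGF computation, and which reuses the exact machinery already built for Theorem~\ref{thm:main_concen}.

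One small correction on mechanism: you attribute the $\log N$ to the softmax over $N$ channels inflating the parameter-Lipschitz constant. In the paper's accounting the layerwise Lipschitz constants are $N$-free (they are stated in the row-wise $\ell_{p,\infty}$ norm, and the row normalization keeps them so); the $\log N$ arises because the relevant output metric is the Frobenius norm over all $N$ rows, so the output and the noise scale like $\sqrt{N}$ and $Nd\sigma$ respectively, forcing the neighborhood/cover resolution $\Delta$ down to $O(1/(nN^{3/2}dB^{*}))$ and thereby placing $N$ inside the logarithm of the KL (or covering) term. Your closing remark about the ambient dimension of $\bar{\bS}'$ growing with $N$ is the right intuition; the softmax is not the culprit. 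This does not affect the validity of your argument, since either attribution still yields only a logarithmic dependence on $N$.
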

We define $e^{\prime}(\calM_{\tf},n)$ to be $n$ times the total variation bound. Proposition~\ref{prop:mlebound} shows that the total variation estimation error is polynomial in the depth of the neural network $L$. However, different from the model-free \ac{rl} results in Section~\ref{sec:mfreeRL}, the estimation error of \ac{mle} $\hat{P}_{\rm MLE}$ is logarithmic in the number of agents $N$. We note that this  logarithm dependency on $N$ comes from the fact that $\tv(P^{*}(\cdot\,|\,\bar{\bS},\bar{\bA}), \hat{P}_{\rm MLE}(\cdot\,|\,\bar{\bS},\bar{\bA}))$ measures the distance between two transition kernels that involves the states of $N$ agents, different from the scalar estimate of the value function in Section~\ref{sec:mfreeRL}. To prove the result, we adopt a  PAC-Bayesian framework to analyze the convergence rate of \ac{mle}, which is inspired by the analysis of  density estimation \citep{zhang2006}; more details are presented in Appendix~\ref{app:propmlebound}.

\subsubsection{Bound on the Suboptimality Gap}
To analyze the error of the learned model, we make the following realizability assumption.
\begin{assumption}\label{assump:mbased1}
    The nominal system dynamics belongs to the function class $\calM_{\tf}$, i.e., $\bF^{*}\in \calM_{\tf}(\bB^{\prime})$.
\end{assumption}
In addition, we require that the mismatch between the sampling distribution and the visitation distribution of the optimal policy is bounded.
\begin{assumption}\label{assump:mbased2}
    For the sampling distribution $\nu$, the coefficient $C_{\calM_{\tf}}$ defined in \eqref{eqn:CMtf} is finite. 
\end{assumption}
We note that these two assumptions are  also made  in many existing works, e.g., \cite{chang2021mitigating,uehara2021pessimistic}. 
\begin{restatable}{theorem}{mbasedmain}\label{thm:main_mbased}
    If Assumptions~\ref{assump:mbased1} and~\ref{assump:mbased2} hold, and we take $\zeta=c_{1}e^{\prime}(\calM_{\tf},n)/n$ for some constant $c_{1}>0$, then with probability at least $1-\delta$, the suboptimality gap of the policy learned in the algorithm in Eqn.~\eqref{algo:mpolicy} is upper bounded as
    \begin{align}
        V_{P^{*}}^{\pi^{*}}(\bar{\bS}_{0})-V_{P^{*}}^{\hpi}(\bar{\bS}_{0}) 
        \leq O\Bigg(\frac{V_{\max}}{(1-\gamma)^{2}} \sqrt{C_{\calM_{\tf}}\bigg(\frac{1}{n}mL^{2}d^{2}\log\big(NLmd\tilde{B} n\big)+\frac{1}{n}\log\frac{1}{\delta}\bigg)}\Bigg),\nonumber
    \end{align}
    where $d=d_{\calS}+d_{\calA}$, and $\tilde{B}$ is defined in Proposition \ref{prop:mlebound}.
\end{restatable}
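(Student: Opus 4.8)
The plan is to run the ``pessimism in the face of uncertainty'' argument for model-based offline RL, instantiated with the transformer model class $\calM_{\tf}$ and the MLE analysis of Proposition~\ref{prop:mlebound}. There are three conceptual steps: (i) the true dynamics lies in the confidence region $\calM_{\rm MLE}(\zeta)$ with high probability; (ii) pessimism reduces the suboptimality gap to the model-mismatch error of $\pi^{*}$ along the optimal visitation measure; (iii) a value-difference (simulation) lemma together with a change of measure to $\nu$ turns this into the in-distribution estimation error controlled by $\zeta$. For step (i), note that since $\bar{\bvarepsilon}$ is Gaussian, $P_{\bF}(\cdot\,|\,\bar{\bS},\bar{\bA})=\calN(\bF(\bar{\bS},\bar{\bA}),\sigma^{2}\bI_{d\times d})$, so $\tv(P_{\bF_{1}},P_{\bF_{2}})$ is a fixed increasing function of $\|\bF_{1}-\bF_{2}\|_{\rmF}/\sigma$, and $\calM_{\rm MLE}(\zeta)$ is, up to constants, a ball around $\hat{\bF}_{\rm MLE}$ in the empirical $L^{2}$ distance $\big(\frac1n\sum_{i}\|\,\cdot\,\|_{\rmF}^{2}\big)^{1/2}$ (the closed form of Appendix~\ref{app:equialgo}). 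Under Assumption~\ref{assump:mbased1}, $\bF^{*}\in\calM_{\tf}$, and combining Proposition~\ref{prop:mlebound} with a Bernstein-type uniform-concentration bound over $\calM_{\tf}$ (to pass between $\bbE_{\nu}$ and the empirical average) shows that, with $\zeta=c_{1}e^{\prime}(\calM_{\tf},n)/n$ for a suitable $c_{1}$, the event $\{\bF^{*}\in\calM_{\rm MLE}(\zeta)\}$ has probability at least $1-\delta$.

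On that event, because $\bF^{*}\in\calM_{\rm MLE}(\zeta)$ and $\hpi$ is the argmax in Eqn.~\eqref{algo:mpolicy},
\[
V_{P^{*}}^{\hpi}(\bar{\bS}_{0})\ \ge\ \min_{\bF\in\calM_{\rm MLE}(\zeta)}V_{P_{\bF}}^{\hpi}(\bar{\bS}_{0})\ \ge\ \min_{\bF\in\calM_{\rm MLE}(\zeta)}V_{P_{\bF}}^{\pi^{*}}(\bar{\bS}_{0}),
\]
so that $V_{P^{*}}^{\pi^{*}}(\bar{\bS}_{0})-V_{P^{*}}^{\hpi}(\bar{\bS}_{0})\le\max_{\bF\in\calM_{\rm MLE}(\zeta)}\big(V_{P^{*}}^{\pi^{*}}(\bar{\bS}_{0})-V_{P_{\bF}}^{\pi^{*}}(\bar{\bS}_{0})\big)$. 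This is the reduction of step (ii).

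For step (iii), fix $\bF\in\calM_{\rm MLE}(\zeta)$ and expand $V_{P^{*}}^{\pi^{*}}-V_{P_{\bF}}^{\pi^{*}}$ around $P^{*}$ with the value-difference lemma; the resulting telescoping sum is bounded in absolute value by $O\big(V_{\max}/(1-\gamma)\big)\,\bbE_{(\bar{\bS},\bar{\bA})\sim d^{\pi^{*}}_{P^{*}}}\big[\tv\big(P^{*}(\cdot\,|\,\bar{\bS},\bar{\bA}),P_{\bF}(\cdot\,|\,\bar{\bS},\bar{\bA})\big)\big]$. Jensen's inequality and the definition of $C_{\calM_{\tf}}$ (Assumption~\ref{assump:mbased2}) upgrade this to $O\big(V_{\max}/(1-\gamma)\big)\sqrt{C_{\calM_{\tf}}\,\bbE_{\nu}[\tv(P^{*},P_{\bF})^{2}]}$. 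It remains to bound $\bbE_{\nu}[\tv(P^{*},P_{\bF})^{2}]$: the triangle inequality together with the constraint defining $\calM_{\rm MLE}(\zeta)$ bounds its empirical counterpart by $O(\zeta)$ plus the empirical squared-TV error of $\hat{P}_{\rm MLE}$ against $P^{*}$, and one further application of the uniform concentration of step (i) together with Proposition~\ref{prop:mlebound} gives $\bbE_{\nu}[\tv(P^{*},P_{\bF})^{2}]=O\big(e^{\prime}(\calM_{\tf},n)/n\big)=O\big(\frac1n mL^{2}d^{2}\log(NLmd\tilde{B}n)+\frac1n\log\frac1\delta\big)$. Substituting and collecting the $(1-\gamma)$ factors (the model-based rate carries one more factor of $(1-\gamma)^{-1}$ than the model-free rate of Theorem~\ref{thm:main_mfree} because the model error is propagated through $V_{P_{\bF}}^{\pi^{*}}$) yields the stated bound.

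I expect the main obstacle to be the last part of step (iii): converting the \emph{empirical} squared-TV ball that defines $\calM_{\rm MLE}(\zeta)$ into a \emph{population} bound on $\bbE_{\nu}[\tv(P^{*},P_{\bF})^{2}]$ uniformly over $\bF\in\calM_{\rm MLE}(\zeta)$, and, dually, calibrating $\zeta$ in step (i) so that $\bF^{*}$ is actually captured. Both require a localized, ratio-type uniform-concentration inequality for the transformer model class, which is precisely where the PAC-Bayesian density-estimation machinery underlying Proposition~\ref{prop:mlebound} must be reused with care---both to handle the squared-TV functional and to keep the dependence on the number of agents $N$ merely logarithmic. Given Proposition~\ref{prop:mlebound} and the standard simulation-lemma calculus, the remaining manipulations are routine.
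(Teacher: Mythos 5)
Your proposal follows essentially the same route as the paper's proof: Step (i) matches the paper's Step 1 (combining Proposition~\ref{prop:mlebound} with the uniform squared-TV concentration bound, which the paper supplies as Proposition~\ref{prop:tvconcentrate} via the same PAC-Bayesian machinery you anticipate), Step (ii) is the paper's pessimism reduction, and Step (iii) reproduces the paper's Step 2 and the remainder of Step 3 (simulation lemma, Jensen, change of measure via $C_{\calM_{\tf}}$, and the triangle-inequality argument converting the empirical confidence-region constraint into a population bound). The obstacle you flag at the end is exactly the one the paper addresses with Proposition~\ref{prop:tvconcentrate}, so the argument is complete and correct as outlined.
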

Theorem~\ref{thm:main_mbased} presents an upper bound on the  suboptimality gap of the offline model-based \ac{rl} with the transformer approximators. The suboptimality gap depends on the number of agents only as $O(\sqrt{\log N})$, which shows that the proposed model-based \ac{marl} algorithm mitigates the curse of many agents. This weak dependence on $N$ originates from   measuring the distance between two system dynamics of $N$ agents in the learning of the dynamics. To the best of our knowledge,  there is no prior work on  analyzing the model-based algorithm for the homogeneous \ac{marl}, even from the mean-field approximation perspective. The proof of Theorem~\ref{thm:main_mbased} leverages novel analysis of the \ac{mle} in Proposition~\ref{prop:mlebound}. For more details, please refer to Appendix~\ref{app:thmmain_mbased}. 
 
\section{Experimental Results}
We evaluate the performance of the algorithms on the \ac{mpe}~\citep{lowe2017multi,mordatch2018emergence}. We focus on the \emph{cooperative navigation} task, where $N$ agents move cooperatively to cover $L$ landmarks in an environment. Given the positions of the $N$  agents $\bx_{i}\in\bbR^{2}$ (for $i\in[N]$) and the positions of the  $L$ landmarks  $\by_{j}\in\bbR^{2}$ (for $j\in[L]$), the agents receive  reward $r=-\sum_{j=1}^{L}\min_{i\in[N]}\|\by_{j}-\bx_{i}\|_{2}.$ This reward encourages the agents to move closer to the landmarks. We set the number of agents as $N=3,6,15,30$ and the number of landmarks as $L=N$. Here, we only present the result for $N=3, 30$. Please refer to Appendix~\ref{app:exp} for more numerical results. To collect an offline dataset, we learn a policy in the online setting. Then  the offline dataset is collected from the induced stationary distribution of such a policy. 

\begin{figure}[t]
    \centering
    \begin{tabular}{cc}
        \hspace{-.1in}  \includegraphics[width=2.9in]{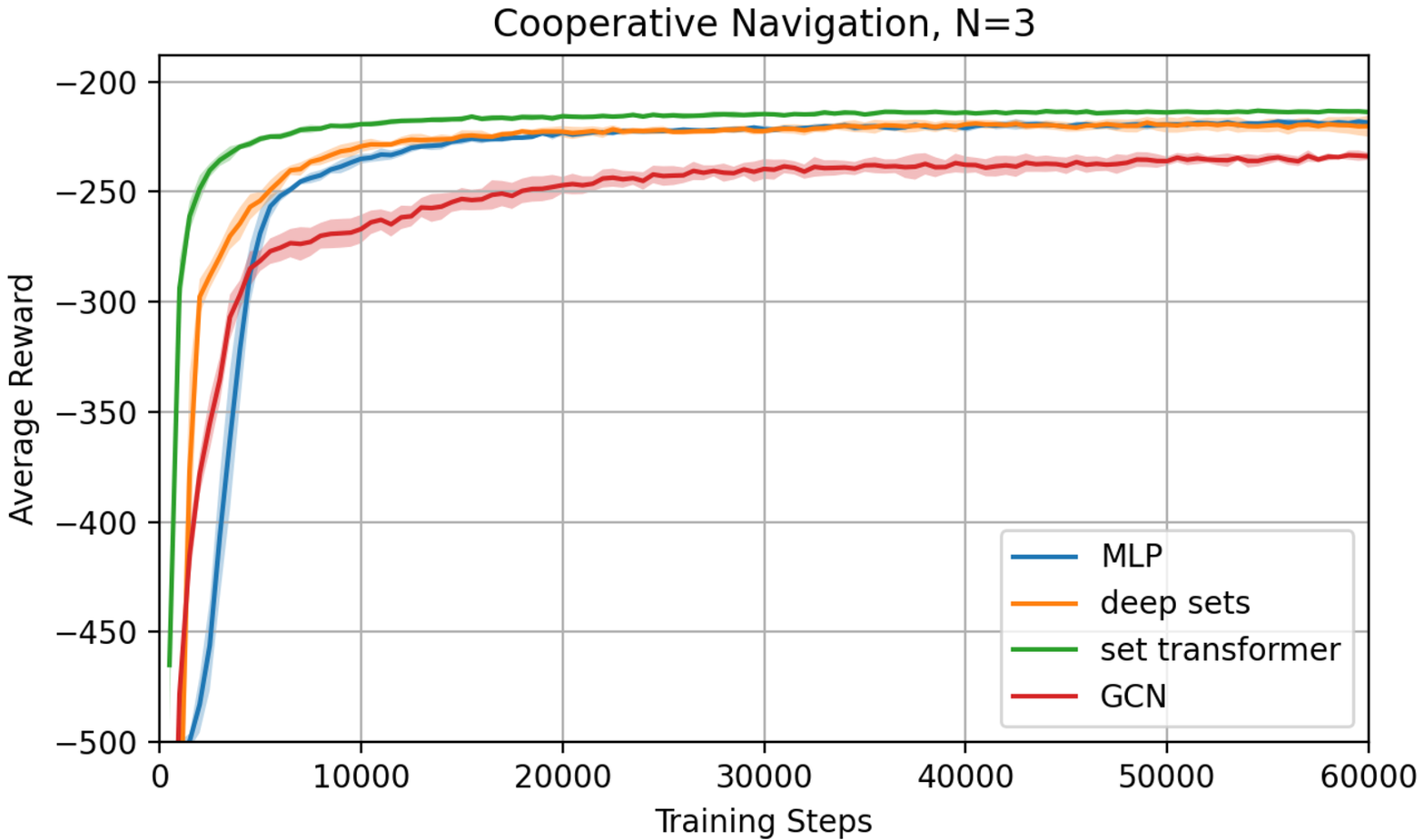} \hspace{-.3in} &   
          \includegraphics[width=2.9in]{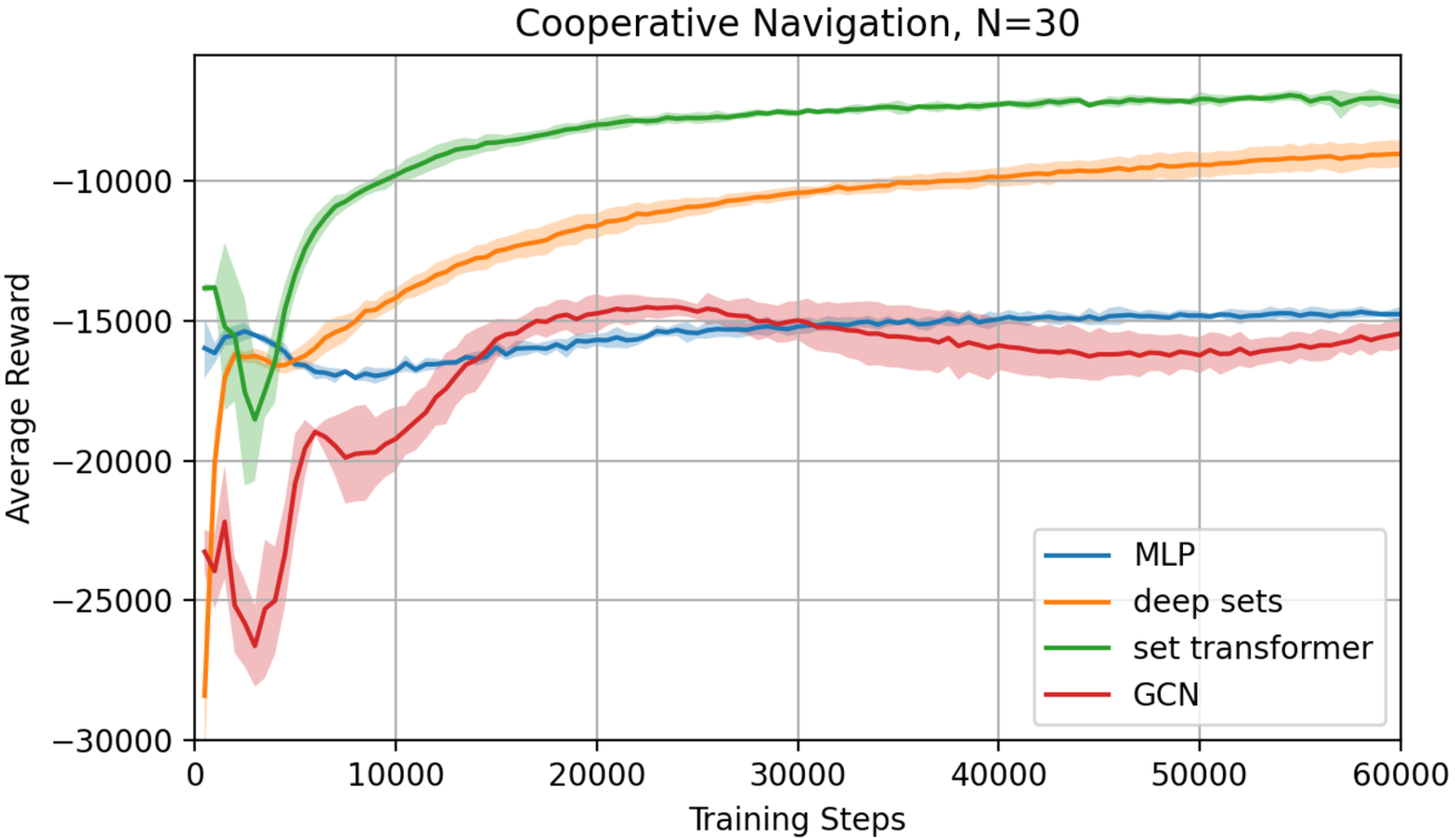}
    \end{tabular}
    \caption{Average rewards of   model-free \ac{rl} algorithms with their standard deviations for $N=3,30$.}
    \label{fig:free}
\end{figure}

    
    
We use   \ac{mlp}, deep sets, \ac{gcn}~\citep{liu2020pic}, and set transformer to estimate the value function. We note that the deep sets, \ac{gcn}, and set transformer are   permutation invariant functions. We use the code   in~\cite{zaheer2017deep} for the  implementation of the deep sets and set transformer. For other   implementation details, please refer to Appendix~\ref{app:exp}.

Figure~\ref{fig:free} shows that the performances of the \ac{mlp} and deep sets are worse than that of the set transformer. This is due to  the poor relational reasoning abilities of \ac{mlp} and deep sets, which corroborates   Theorem~\ref{thm:approx}. Figure~\ref{fig:free} indicates that  when the number of agents $N$ increases, the superiority of the algorithm with set transformer becomes more pronounced, which is strongly aligned with  our theoretical result in Theorem~\ref{thm:main_mfree}.

\section{Concluding Remarks}\label{app:conclude}
In view of  the tremendous empirical successes of  cooperative \ac{marl} with permutation invariant agents, it is imperative to develop a firm theoretical understanding of this \ac{marl} problem   because it will inspire the design of even more efficient algorithms. In this work, we   design and analyze   algorithms that break the curse of many agents and, at the same time, implement efficient relational reasoning. Our algorithms and analyses serve as a first step towards developing provably efficient \ac{marl} algorithms with permutation invariant approximators. We leave the extension of our results of the transformer to  {\em general permutation invariant approximators} as future works.


\paragraph{Acknowledgements}
Fengzhuo Zhang and Vincent Tan  acknowledge funding from a Singapore National Research Foundation (NRF) Fellowship (A-0005077-01-00)  and Singapore Ministry of Education (MOE) AcRF Tier 1 Grants (A-0009042-01-00 and A-8000189-01-00). Zhaoran Wang acknowledges the National Science Foundation (Awards 2048075, 2008827, 2015568, 1934931), Simons Institute (Theory of Reinforcement Learning), Amazon, J.~P.~Morgan, and Two Sigma for their support.

\newpage

\bibliographystyle{abbrvnat}
\bibliography{TGroup}

\begin{thebibliography}{54}
\providecommand{\natexlab}[1]{#1}
\providecommand{\url}[1]{\texttt{#1}}
\expandafter\ifx\csname urlstyle\endcsname\relax
  \providecommand{\doi}[1]{doi: #1}\else
  \providecommand{\doi}{doi: \begingroup \urlstyle{rm}\Url}\fi

\bibitem[Agarwal et~al.(2020)Agarwal, Kakade, Krishnamurthy, and
  Sun]{agarwal2020flambe}
A.~Agarwal, S.~Kakade, A.~Krishnamurthy, and W.~Sun.
\newblock Flambe: Structural complexity and representation learning of low rank
  {MDPs}.
\newblock \emph{Advances in Neural Information Processing Systems},
  33:\penalty0 20095--20107, 2020.

\bibitem[Anonymous(2022)]{Anon2022invariance}
Anonymous.
\newblock An analysis of attention via the lens of invariance: Approximation,
  generalization, and optimization.
\newblock \emph{Technical Report}, 2022.

\bibitem[Antos et~al.(2008)Antos, Szepesv{\'a}ri, and Munos]{antos2008learning}
A.~Antos, C.~Szepesv{\'a}ri, and R.~Munos.
\newblock Learning near-optimal policies with bellman-residual minimization
  based fitted policy iteration and a single sample path.
\newblock \emph{Machine Learning}, 71\penalty0 (1):\penalty0 89--129, 2008.

\bibitem[Bahdanau et~al.(2014)Bahdanau, Cho, and Bengio]{bahdanau2014neural}
D.~Bahdanau, K.~Cho, and Y.~Bengio.
\newblock Neural machine translation by jointly learning to align and
  translate.
\newblock \emph{arXiv preprint arXiv:1409.0473}, 2014.

\bibitem[Bartlett et~al.(2017)Bartlett, Foster, and
  Telgarsky]{bartlett2017spectrally}
P.~L. Bartlett, D.~J. Foster, and M.~J. Telgarsky.
\newblock Spectrally-normalized margin bounds for neural networks.
\newblock \emph{Advances in Neural Information Processing Systems}, 30, 2017.

\bibitem[Belghazi et~al.(2018)Belghazi, Baratin, Rajeshwar, Ozair, Bengio,
  Courville, and Hjelm]{belghazi2018mutual}
M.~I. Belghazi, A.~Baratin, S.~Rajeshwar, S.~Ozair, Y.~Bengio, A.~Courville,
  and D.~Hjelm.
\newblock Mutual information neural estimation.
\newblock In \emph{International Conference on Machine Learning}, pages
  531--540. PMLR, 2018.

\bibitem[Bronstein et~al.(2021)Bronstein, Bruna, Cohen, and
  Veli{\v{c}}kovi{\'c}]{bronstein2021geometric}
M.~M. Bronstein, J.~Bruna, T.~Cohen, and P.~Veli{\v{c}}kovi{\'c}.
\newblock Geometric deep learning: Grids, groups, graphs, geodesics, and
  gauges.
\newblock \emph{arXiv preprint arXiv:2104.13478}, 2021.

\bibitem[Chang et~al.(2021)Chang, Uehara, Sreenivas, Kidambi, and
  Sun]{chang2021mitigating}
J.~Chang, M.~Uehara, D.~Sreenivas, R.~Kidambi, and W.~Sun.
\newblock Mitigating covariate shift in imitation learning via offline data
  with partial coverage.
\newblock \emph{Advances in Neural Information Processing Systems}, 34, 2021.

\bibitem[Chen and Jiang(2019)]{chen2019information}
J.~Chen and N.~Jiang.
\newblock Information-theoretic considerations in batch reinforcement learning.
\newblock In \emph{International Conference on Machine Learning}, pages
  1042--1051. PMLR, 2019.

\bibitem[Chen et~al.(2021)Chen, Li, Wang, Yang, Wang, and
  Zhao]{chen2021pessimism}
M.~Chen, Y.~Li, E.~Wang, Z.~Yang, Z.~Wang, and T.~Zhao.
\newblock Pessimism meets invariance: Provably efficient offline mean-field
  multi-agent {RL}.
\newblock \emph{Advances in Neural Information Processing Systems}, 34, 2021.

\bibitem[Devroye et~al.(2018)Devroye, Mehrabian, and Reddad]{devroye2018total}
L.~Devroye, A.~Mehrabian, and T.~Reddad.
\newblock The total variation distance between high-dimensional {Gaussians}.
\newblock \emph{arXiv preprint arXiv:1810.08693}, 2018.

\bibitem[Dosovitskiy et~al.(2021)Dosovitskiy, Beyer, Kolesnikov, Weissenborn,
  Zhai, Unterthiner, Dehghani, Minderer, Heigold, Gelly,
  et~al.]{dosovitskiy2020image}
A.~Dosovitskiy, L.~Beyer, A.~Kolesnikov, D.~Weissenborn, X.~Zhai,
  T.~Unterthiner, M.~Dehghani, M.~Minderer, G.~Heigold, S.~Gelly, et~al.
\newblock An image is worth 16x16 words: Transformers for image recognition at
  scale.
\newblock \emph{In International Conference on Learning Representations}, 2021.

\bibitem[Duan et~al.(2021)Duan, Jin, and Li]{duan2021risk}
Y.~Duan, C.~Jin, and Z.~Li.
\newblock Risk bounds and rademacher complexity in batch reinforcement
  learning.
\newblock In \emph{International Conference on Machine Learning}, pages
  2892--2902. PMLR, 2021.

\bibitem[Edelman et~al.(2021)Edelman, Goel, Kakade, and
  Zhang]{edelman2021inductive}
B.~Edelman, S.~Goel, S.~Kakade, and C.~Zhang.
\newblock Inductive biases and variable creation in self-attention mechanisms.
\newblock \emph{arXiv preprint arXiv:2110.10090}, 2021.

\bibitem[Jakubovitz et~al.(2019)Jakubovitz, Giryes, and
  Rodrigues]{jakubovitz2019generalization}
D.~Jakubovitz, R.~Giryes, and M.~R.~D. Rodrigues.
\newblock Generalization error in deep learning.
\newblock In \emph{Compressed {S}ensing and {I}ts {A}pplications}, pages
  153--193. Springer, 2019.

\bibitem[Jiang et~al.(2018)Jiang, Sun, and Fan]{jiang2018bernstein}
B.~Jiang, Q.~Sun, and J.~Fan.
\newblock Bernstein's inequality for general {Markov} chains.
\newblock \emph{arXiv preprint arXiv:1805.10721}, 2018.

\bibitem[Jin et~al.(2021)Jin, Yang, and Wang]{jin2021pessimism}
Y.~Jin, Z.~Yang, and Z.~Wang.
\newblock Is pessimism provably efficient for offline {RL}?
\newblock In \emph{International Conference on Machine Learning}, pages
  5084--5096. PMLR, 2021.

\bibitem[Lee et~al.(2019)Lee, Lee, Kim, Kosiorek, Choi, and Teh]{lee2019set}
J.~Lee, Y.~Lee, J.~Kim, A.~Kosiorek, S.~Choi, and Y.~W. Teh.
\newblock Set transformer: A framework for attention-based
  permutation-invariant neural networks.
\newblock In \emph{International Conference on Machine Learning}, pages
  3744--3753. PMLR, 2019.

\bibitem[Li et~al.(2021)Li, Wang, Yang, Wang, Wang, Zhao, and
  Zha]{li2021permutation}
Y.~Li, L.~Wang, J.~Yang, E.~Wang, Z.~Wang, T.~Zhao, and H.~Zha.
\newblock Permutation invariant policy optimization for mean-field multi-agent
  reinforcement learning: A principled approach.
\newblock \emph{arXiv preprint arXiv:2105.08268}, 2021.

\bibitem[Liu et~al.(2020)Liu, Yeh, and Schwing]{liu2020pic}
I.~Liu, R.~A. Yeh, and A.~G. Schwing.
\newblock {PIC}: permutation invariant critic for multi-agent deep
  reinforcement learning.
\newblock In \emph{Conference on Robot Learning}, pages 590--602. PMLR, 2020.

\bibitem[Lowe et~al.(2017)Lowe, Wu, Tamar, Harb, Pieter~A., and
  Mordatch]{lowe2017multi}
R.~Lowe, Y.~I. Wu, A.~Tamar, J.~Harb, O.~Pieter~A., and I.~Mordatch.
\newblock Multi-agent actor-critic for mixed cooperative-competitive
  environments.
\newblock \emph{Advances in neural information processing systems}, 30, 2017.

\bibitem[McAllester(1999)]{mcallester1999some}
D.~A. McAllester.
\newblock Some {PAC-Bayesian} theorems.
\newblock \emph{Machine Learning}, 37\penalty0 (3):\penalty0 355--363, 1999.

\bibitem[McAllester(2003)]{mcallester2003simplified}
D.~A. McAllester.
\newblock Simplified {PAC-Bayesian} margin bounds.
\newblock In \emph{Learning Theory and Kernel Machines}, pages 203--215.
  Springer, 2003.

\bibitem[Menda et~al.(2018)Menda, Chen, Grana, Bono, Tracey, Kochenderfer, and
  Wolpert]{menda2018deep}
K.~Menda, Y.~Chen, J.~Grana, J.~W. Bono, B.~D. Tracey, M.~J. Kochenderfer, and
  D.~Wolpert.
\newblock Deep reinforcement learning for event-driven multi-agent decision
  processes.
\newblock \emph{IEEE Transactions on Intelligent Transportation Systems},
  20\penalty0 (4):\penalty0 1259--1268, 2018.

\bibitem[Mnih et~al.(2013)Mnih, Kavukcuoglu, Silver, Graves, Antonoglou,
  Wierstra, and Riedmiller]{mnih2013playing}
V.~Mnih, K.~Kavukcuoglu, D.~Silver, A.~Graves, I.~Antonoglou, D.~Wierstra, and
  M.~Riedmiller.
\newblock Playing atari with deep reinforcement learning.
\newblock \emph{arXiv preprint arXiv:1312.5602}, 2013.

\bibitem[Mohri et~al.(2018)Mohri, Rostamizadeh, and
  Talwalkar]{mohri2018foundations}
M.~Mohri, A.~Rostamizadeh, and A.~Talwalkar.
\newblock \emph{Foundations of Machine Learning}.
\newblock MIT press, 2018.

\bibitem[Mordatch and Abbeel(2018)]{mordatch2018emergence}
I.~Mordatch and P.~Abbeel.
\newblock Emergence of grounded compositional language in multi-agent
  populations.
\newblock In \emph{Proceedings of the AAAI Conference on Artificial
  Intelligence}, 2018.

\bibitem[Nachum et~al.(2019)Nachum, Dai, Kostrikov, Chow, Li, and
  Schuurmans]{nachum2019algaedice}
O.~Nachum, B.~Dai, I.~Kostrikov, Y.~Chow, L.~Li, and D.~Schuurmans.
\newblock Algaedice: Policy gradient from arbitrary experience.
\newblock \emph{arXiv preprint arXiv:1912.02074}, 2019.

\bibitem[Naseer et~al.(2021)Naseer, Ranasinghe, Khan, Hayat, Shahbaz~Khan, and
  Yang]{naseer2021intriguing}
M.~M. Naseer, K.~Ranasinghe, S.~H. Khan, M.~Hayat, F.~Shahbaz~Khan, and
  M.~Yang.
\newblock Intriguing properties of vision transformers.
\newblock \emph{Advances in Neural Information Processing Systems}, 34, 2021.

\bibitem[Neyshabur et~al.(2017)Neyshabur, Bhojanapalli, and
  Srebro]{neyshabur2017pac}
B.~Neyshabur, S.~Bhojanapalli, and N.~Srebro.
\newblock A {PAC-Bayesian} approach to spectrally-normalized margin bounds for
  neural networks.
\newblock \emph{arXiv preprint arXiv:1707.09564}, 2017.

\bibitem[Paulin(2015)]{paulin2015concentration}
D.~Paulin.
\newblock Concentration inequalities for {Markov} chains by {Marton} couplings
  and spectral methods.
\newblock \emph{Electronic Journal of Probability}, 20:\penalty0 1--32, 2015.

\bibitem[Rajaraman et~al.(2020)Rajaraman, Yang, Jiao, and
  Ramchandran]{rajaraman2020toward}
N.~Rajaraman, L.~Yang, J.~Jiao, and K.~Ramchandran.
\newblock Toward the fundamental limits of imitation learning.
\newblock \emph{Advances in Neural Information Processing Systems},
  33:\penalty0 2914--2924, 2020.

\bibitem[Ross and Bagnell(2012)]{ross2012agnostic}
S.~Ross and J.~A. Bagnell.
\newblock Agnostic system identification for model-based reinforcement
  learning.
\newblock \emph{arXiv preprint arXiv:1203.1007}, 2012.

\bibitem[Sannai et~al.(2021)Sannai, Imaizumi, and Kawano]{sannai2021improved}
A.~Sannai, M.~Imaizumi, and M.~Kawano.
\newblock Improved generalization bounds of group invariant/equivariant deep
  networks via quotient feature spaces.
\newblock In \emph{Uncertainty in Artificial Intelligence}, pages 771--780.
  PMLR, 2021.

\bibitem[Sokolic et~al.(2017)Sokolic, Giryes, Sapiro, and
  Rodrigues]{sokolic2017generalization}
J.~Sokolic, R.~Giryes, G.~Sapiro, and M.~Rodrigues.
\newblock Generalization error of invariant classifiers.
\newblock In \emph{Artificial Intelligence and Statistics}, pages 1094--1103.
  PMLR, 2017.

\bibitem[Sonoda and Murata(2017)]{sonoda2017neural}
S.~Sonoda and N.~Murata.
\newblock Neural network with unbounded activation functions is universal
  approximator.
\newblock \emph{Applied and Computational Harmonic Analysis}, 43\penalty0
  (2):\penalty0 233--268, 2017.

\bibitem[Sun et~al.(2019)Sun, Jiang, Krishnamurthy, Agarwal, and
  Langford]{sun2019model}
W.~Sun, N.~Jiang, A.~Krishnamurthy, A.~Agarwal, and J.~Langford.
\newblock Model-based {RL} in contextual decision processes: {PAC} bounds and
  exponential improvements over model-free approaches.
\newblock In \emph{Conference on Learning Theory}, pages 2898--2933. PMLR,
  2019.

\bibitem[Tang and Ha(2021)]{tang2021sensory}
Y.~Tang and D.~Ha.
\newblock The sensory neuron as a transformer: Permutation-invariant neural
  networks for reinforcement learning.
\newblock \emph{Advances in Neural Information Processing Systems}, 34, 2021.

\bibitem[Telgarsky(2015)]{telgarsky2015representation}
M.~Telgarsky.
\newblock Representation benefits of deep feedforward networks.
\newblock \emph{arXiv preprint arXiv:1509.08101}, 2015.

\bibitem[Tsagris et~al.(2014)Tsagris, Beneki, and Hassani]{tsagris2014folded}
M.~Tsagris, C.~Beneki, and H.~Hassani.
\newblock On the folded normal distribution.
\newblock \emph{Mathematics}, 2\penalty0 (1):\penalty0 12--28, 2014.

\bibitem[Uehara and Sun(2021)]{uehara2021pessimistic}
M.~Uehara and W.~Sun.
\newblock Pessimistic model-based offline reinforcement learning under partial
  coverage.
\newblock \emph{arXiv preprint arXiv:2107.06226}, 2021.

\bibitem[Uehara et~al.(2020)Uehara, Huang, and Jiang]{uehara2020minimax}
M.~Uehara, J.~Huang, and N.~Jiang.
\newblock Minimax weight and {Q}-function learning for off-policy evaluation.
\newblock In \emph{International Conference on Machine Learning}, pages
  9659--9668. PMLR, 2020.

\bibitem[Wainwright(2019)]{wainwright2019high}
M.~J. Wainwright.
\newblock \emph{High-Dimensional Statistics: A Non-Asymptotic Viewpoint},
  volume~48.
\newblock Cambridge University Press, 2019.

\bibitem[Wang et~al.(2022{\natexlab{a}})Wang, Walters, and
  Platt]{wang2022mathrm}
D.~Wang, R.~Walters, and R.~Platt.
\newblock $\mathrm{SO}$(2) -equivariant reinforcement learning.
\newblock \emph{arXiv preprint arXiv:2203.04439}, 2022{\natexlab{a}}.

\bibitem[Wang et~al.(2022{\natexlab{b}})Wang, Walters, Zhu, and
  Platt]{wang2022equivariant}
D.~Wang, R.~Walters, X.~Zhu, and R.~Platt.
\newblock {Equivariant Q Learning in Spatial Action Spaces}.
\newblock In \emph{Conference on Robot Learning}, pages 1713--1723. PMLR,
  2022{\natexlab{b}}.

\bibitem[Wang et~al.(2020)Wang, Yang, and Wang]{wang2020breaking}
L.~Wang, Z.~Yang, and Z.~Wang.
\newblock Breaking the curse of many agents: Provable mean embedding
  q-iteration for mean-field reinforcement learning.
\newblock In \emph{International Conference on Machine Learning}, pages
  10092--10103. PMLR, 2020.

\bibitem[Xie and Jiang(2020)]{xie2020q}
T.~Xie and N.~Jiang.
\newblock Q* approximation schemes for batch reinforcement learning: A
  theoretical comparison.
\newblock In \emph{Conference on Uncertainty in Artificial Intelligence}, pages
  550--559. PMLR, 2020.

\bibitem[Xie et~al.(2021)Xie, Cheng, Jiang, Mineiro, and
  Agarwal]{xie2021bellman}
T.~Xie, C.~Cheng, N.~Jiang, P.~Mineiro, and A.~Agarwal.
\newblock Bellman-consistent pessimism for offline reinforcement learning.
\newblock \emph{Advances in Neural Information Processing Systems}, 34, 2021.

\bibitem[Xu et~al.(2021)Xu, Zhang, Ye, Zhao, and Tan]{xu2021relation}
K.~Xu, Y.~Zhang, D.~Ye, P.~Zhao, and M.~Tan.
\newblock Relation-aware transformer for portfolio policy learning.
\newblock In \emph{Proceedings of the Twenty-Ninth International Conference on
  International Joint Conferences on Artificial Intelligence}, pages
  4647--4653, 2021.

\bibitem[Yin et~al.(2022)Yin, Duan, Wang, and Wang]{yin2022near}
M.~Yin, Y.~Duan, M.~Wang, and Y.-X. Wang.
\newblock Near-optimal offline reinforcement learning with linear
  representation: Leveraging variance information with pessimism.
\newblock \emph{arXiv preprint arXiv:2203.05804}, 2022.

\bibitem[Yuan et~al.(2021)Yuan, Chen, Wang, Yu, Shi, Jiang, Tay, Feng, and
  Yan]{yuan2021tokens}
L.~Yuan, Y.~Chen, T.~Wang, W.~Yu, Y.~Shi, Z.~Jiang, F.~E.~H. Tay, J.~Feng, and
  S.~Yan.
\newblock Tokens-to-token vit: Training vision transformers from scratch on
  {Imagenet}.
\newblock In \emph{Proceedings of the IEEE/CVF International Conference on
  Computer Vision}, pages 558--567, 2021.

\bibitem[Zaheer et~al.(2017)Zaheer, Kottur, Ravanbakhsh, Poczos, Salakhutdinov,
  and Smola]{zaheer2017deep}
M.~Zaheer, S.~Kottur, S.~Ravanbakhsh, B.~Poczos, R.~R. Salakhutdinov, and A.~J.
  Smola.
\newblock Deep sets.
\newblock \emph{Advances in Neural Information Processing Systems}, 30, 2017.

\bibitem[Zhang(2006)]{zhang2006}
T.~Zhang.
\newblock From $\varepsilon$-entropy to {KL}-entropy: Analysis of minimum
  information complexity density estimation.
\newblock \emph{The Annals of Statistics}, 34\penalty0 (5):\penalty0
  2180--2210, 2006.

\bibitem[Zhu et~al.(2021)Zhu, An, and Huang]{zhu2021understanding}
S.~Zhu, B.~An, and F.~Huang.
\newblock Understanding the generalization benefit of model invariance from a
  data perspective.
\newblock \emph{Advances in Neural Information Processing Systems}, 34, 2021.

\end{thebibliography}

\clearpage
\appendix

\begin{center}
{\Large {\bf Supplementary Materials for\\ ``Relational Reasoning via Set Transformers:\\ Provable Efficiency and Applications to MARL''}}
\end{center}

\setcounter{equation}{0}
\counterwithin*{equation}{section}

\renewcommand{\theequation}{\thesection.\arabic{equation}}

\section{Formal Definition of the Fully-Connected Networks in Section~\ref{sec:funcapprox}}\label{app:fcdef}
For a multi-channel input, the output is the sum of the output of each channel, i.e., $$[\bphi_{\relu}]_{i}(\bX)=\sum_{k=1}^{N}[\bphi_{\relu}]_{i}(\bX_{k,:})\quad \mbox{for}\quad  i\in[W_{2}],$$  where $\bX_{k,:}$ is the $k^{\rm th}$ row of $\bX$. The fully-connected neural network for each channel is defined as
\begin{align}
 [\bphi_{\relu}]_{i}(\bx)&=\sum_{j=1}^{W_{1}} c_{ij} \relu\big(\ba_{j}^{\top}\bx+b_{j}\big)+d_{i}\quad  \text{ for }\quad  i\in[W_{2}],\nonumber
\end{align}
where $\ba_{j}\in \bbR^{d}$ and $b_{j},c_{ij},d_{i}\in\bbR$ for $i\in[W_{2}],j\in[W_{1}]$ are the parameters of $\bphi_{\relu}$. The network $\rho_{\relu}$ is defined as
\begin{align}
    \rho_{\relu}(\by)&=\sum_{k=1}^{W_{3}} g_{k} \relu\big(\be_{k}^{\top}\by+f_{k}\big)+h,\nonumber
\end{align}
where $\be_{i}\in\bbR^{W_{2}}$ and $f_{k},g_{k},h\in\bbR$ for $k\in[W_{3}]$ are the parameters of $\rho$.

\section{Formal Definition of the Transformer Structures in Sections~\ref{sec:mfreeRL} and~\ref{sec:mbasedRL}}\label{app:tfdef}
\textbf{The transformer structure in Section~\ref{sec:mfreeRL}.} In each layer, we combine the self-attention mechanism with the \underline{R}ow-wise \underline{F}eed\underline{F}orward (rFF) single-hidden layer neural network. rFF takes $\bX\in\bbR^{N\times d}$ as the input and outputs a matrix in the same dimension. It applies a single-hidden layer network in a row-wise manner. For the entry in the $i^{\rm th}$ row and the $k^{\rm th}$ column of the output, we have  
\begin{align}
    \big[\rff(\bX,\ba,\bb)\big]_{i,k}= \big[\rff(\bX_{i,:},\ba,\bb)\big]_{k} \text{ for }k\in[d],i\in[N],\nonumber
\end{align}
where $\bX_{i,:}\in\bbR^{d}$ is the $i^{\rm th}$ row of $\bX$. For a $d$-dimensional vector input, the single-hidden layer outputs a vector in the same dimension as
\begin{align}
    \big[\rff(\bx,\ba,\bb)\big]_{k}=\sum_{j=1}^{m}a_{kj}\relu(\bb_{kj}^{\top}\bx) \text{ for }k\in[d],\nonumber
\end{align}
where $\bx\in\bbR^{d}$ is the input, $m$ is the width of the network, and $\ba=[a_{11},a_{12},\ldots,a_{dm}]\in\bbR^{dm}$ and $\bb=[\bb_{11},\bb_{12},\ldots,\bb_{dm}]\in\bbR^{d\times dm}$ are the parameters of rFF. 

Then for any layer $i\in[L-1]$, the layer output is
\begin{align}
    \bG_{\tf}^{(i+1)}= \Pi_{\rm{norm}} \Big[\Att\big(\bG_{\tf}^{(i)}\bW_{QK}^{(i+1)},\bG_{\tf}^{(i)},\bG_{\tf}^{(i)}\bW_{V}^{(i+1)}\big)+\rff\big(\bG_{\tf}^{(i)},\ba^{(i+1)},\bb^{(i+1)}\big)\Big],\label{eq:net1}
\end{align}
where
\begin{align}
    \ba^{1:i}&= [\ba^{(1)},\ldots,\ba^{(i)}],\nonumber\\
    \bb^{1:i}&= [\bb^{(1)},\ldots,\bb^{(i)}],\nonumber\\
    \bW_{QK}^{1:i}&= [\bW_{QK}^{(1)},\ldots,\bW_{QK}^{(i)}],\nonumber\\*
    \bW_{V}^{1:i}&= [\bW_{V}^{(1)},\ldots,\bW_{V}^{(i)}],\nonumber
\end{align}
are the stacked parameters of the first $i$ layers of the network, and $\bG_{\tf}^{(i)}$ is a shorthand for $\bG_{\tf}^{(i)}(\bX;\bW_{QK}^{1:i},\bW_{V}^{1:i},\ba^{1:i},\bb^{1:i})$. $\Pi_{\rm{norm}}(\bX)$ is the row-wise normalization function, which projects each row of $\bX$ into the $\ell_p$-ball ( where $p\geq1$). We take $\bG_{\tf}^{(0)}(\bX)= \Pi_{\rm{norm}}\big(\bX\big)$ as the input of the first layer. For the last layer $L$, we derive the scalar estimate of the action-value function with the average aggregation among all the channels, i.e.,
\begin{align}
    g_{\tf}(\bX;\bW_{QK}^{1:L},\bW_{V}^{1:L},\ba^{1:L},\bb^{1:L},\bw)&=\Pi_{V_{\max}}\bigg(\frac{1}{N}\bbI_{N}\bG_{\tf}^{(L)}(\bX;\bW_{QK}^{1:L},\bW_{V}^{1:L},\ba^{1:L},\bb^{1:L})\bw\bigg),\nonumber
\end{align}
 where $\Pi_{V_{\max}}(x)$ is the ``clipping'' function, which is defined as $\Pi_{V_{\max}}(x)= x$ if $|x|\leq V_{\max}$ and $\Pi_{V_{\max}}(x)= V_{\max}\mathrm{sign}(x)$ otherwise.
 
\textbf{The transformer structure in Section~\ref{sec:mbasedRL}.}
 For the layer $i\in [L-2]$, we adopt the same neural network structure in Eqn.~\eqref{eq:net1}. For the final layer, we implement the structure that
\begin{align}
    &\bF_{\tf}(\bX;\bW_{QK}^{1:L},\bW_{V}^{1:L},\ba^{1:L},\bb^{1:L})\nonumber\\
    &\quad= \SM\big(\bG_{\tf}^{(L-1)}\bW_{QK}^{(L)}\bG_{\tf}^{(L-1)\top}\big)\bG_{\tf}^{(L-1)}\bW_{V}^{(L)}+\rff\big(\bG_{\tf}^{(L-1)},\ba^{(L)},\bb^{(L)}\big).\nonumber
\end{align}

\section{Equivalent Expression for the Model-based RL algorithm in Section~\ref{sec:mbasedRL}}\label{app:equialgo}
The algorithm in Eqn.~\eqref{algo:mpolicy} can be equivalently expressed in two forms.

\textbf{Transition Function.} The algorithm in Eqn.~\eqref{algo:mpolicy} can be expressed with the transition function $F$ as
\begin{align*}
    \hat{\bF}_{\rm MLE}=\argmin_{\bF\in\calM_{\tf}} \frac{1}{n}\sum_{i=1}^{n}\big\|\bar{\bS}_{i}^{\prime}-\bF(\bar{\bS}_{i},\bar{\bA}_{i})\big\|_{\rmF}^{2} \quad \mbox{and}\quad\hpi=\argmax_{\pi\in\Pi} \min_{\bF\in\calM_{\rm MLE}(\zeta)} V_{P_{\bF}}^{\pi}(\bar{\bS}_{0}),
\end{align*}
where the ``confidence region'' $\calM_{\rm MLE}(\zeta)$ is the set of all $\bF\in \calM_{\tf}$ such that  \citep{devroye2018total}
$$
\frac{1}{n}\sum_{i=1}^n \bigg(2\Phi\bigg(\sqrt{\frac{\bigl\|\bF(\bar{\bS}_{i},\bar{\bA}_{i})-\hat{\bF}_{\rm MLE}(\bar{\bS}_{i},\bar{\bA}_{i})\bigr\|_{\rmF}^{2}}{2\sigma^{2}}}\bigg)-1\bigg)^2\le\zeta,
$$ 
and $\Phi(\cdot)$ is the cumulative distribution function of the standard normal distribution.

\textbf{Transition Probability.} The algorithm in Eqn.~\eqref{algo:mpolicy} can also be expressed with the transition probability $P$. Since the function $\bF$ is equivalent to the transition kernel $P_{\bF}$, the transition kernel class can be correspondingly defined as
\begin{align}
    \calP_{\tf}(\bB^{\prime})=\big\{P \, \big|\, \exists\,  \bF\in \calM_{\tf}(\bB^{\prime})\text{ s.t. }P=P_{\bF}\big\}.\nonumber
\end{align}
Then the algorithm can be expressed as
\begin{align*}
    \hat{\bP}_{\rm MLE}=\argmax_{P\in\calP_{\tf}} \sum_{i=1}^{n}\log P(\bar{\bS}_{i}^{\prime}\,|\,\bar{\bS}_{i},\bar{\bA}_{i}) \quad \mbox{and}\quad\hpi=\argmax_{\pi\in\Pi} \min_{P\in\calP_{\rm MLE}(\zeta)} V_{P}^{\pi}(\bar{\bS}_{0}),
\end{align*}
where the confidence region $\calP_{\rm MLE}(\zeta)$ is defined as
\begin{align}
    \calP_{\rm MLE}(\zeta)=\bigg\{P\in \calP_{\tf}\, \bigg|\, \frac{1}{n}\sum_{i=1}^{n}\tv\bigl(P(\cdot\, |\, \bar{\bS}_{i},\bar{\bA}_{i}),\hat{P}_{\rm MLE}(\cdot\, |\, \bar{\bS}_{i},\bar{\bA}_{i})\bigr)^{2}\leq \zeta\bigg\}.\nonumber
\end{align}

\section{Proof of Propositions~\ref{prop:piproperty}}
\begin{proof}[Proof of Proposition~\ref{prop:piproperty}]
    We denote any optimal policy as $\pi^{*}=\argmax_{\pi} V_{P^{*}}^{\pi}(\bar{\bS}_{0})$. Note that the optimal policy may be not unique, and any policy that achieves the maximal value function is called an optimal policy. The corresponding action-value function is denoted as $Q_{P^{*}}^{*}$, which is defined as
    \begin{align}
        Q_{P^{*}}^{*}(\bar{\bS},\bar{\bA})=\bbE_{\bar{\bS}^{\prime}\sim P^{*}(\cdot\,|\,\bar{\bS},\bar{\bA})}\big[r(\bar{\bS},\bar{\bA})+\max_{\bar{\bA}^{\prime}}Q_{P^{*}}^{*}(\bar{\bS}^{\prime},\bar{\bA}^{\prime})\big].\label{eq:2}
    \end{align}
    For any row-wise permutation function $\bpsi(\cdot)$, we have
    \begin{align}
        Q_{P^{*}}^{*}\big(\bpsi(\bar{\bS}),\bpsi(\bar{\bA})\big)&= \bbE_{\bpsi(\bar{\bS}^{\prime})\sim P^{*}(\cdot\,|\,\bpsi(\bar{\bS}),\bpsi(\bar{\bA}))}\Big[r\big(\bpsi(\bar{\bS}),\bpsi(\bar{\bA})\big)+\max_{\bar{\bA}^{\prime}}Q_{P^{*}}^{*}\big(\bpsi(\bar{\bS}^{\prime}),\bpsi(\bar{\bA}^{\prime})\big)\Big]\nonumber\\*
        &=\bbE_{\bar{\bS}^{\prime}\sim P^{*}(\cdot\,|\,\bar{\bS},\bar{\bA})}\Big[r(\bar{\bS},\bar{\bA})+\max_{\bar{\bA}^{\prime}}Q_{P^{*}}^{*}\big(\bpsi(\bar{\bS}^{\prime}),\bpsi(\bar{\bA}^{\prime})\big)\Big]\label{eq:3},
    \end{align}
    where Eqn.~\eqref{eq:3} follows from the homogeneity of the \ac{mdp}. Since $Q_{P^{*}}^{*}$ is the unique solution of Eqn.~\eqref{eq:2}, we have  $Q_{P^{*}}^{*}(\bar{\bS},\bar{\bA})=Q_{P^{*}}^{*}(\bpsi(\bar{\bS}),\bpsi(\bar{\bA}))$ for all $\bpsi(\cdot)$. Thus, the permutation invariant policy $\pi(\bar{\bS})=\argmax_{\bar{\bA}} Q_{P^{*}}^{*}(\bar{\bS},\bar{\bA})$ is the optimal policy.
    
    When the policy $\pi$ is permutation invariant, we can show that the corresponding action-value function and the value function are permutation invariant following the similar argument as above. Therefore, we conclude the proof of Proposition \ref{prop:piproperty}.
\end{proof}

\section{Proof of Proposition~\ref{prop:regress_risk}}
\begin{proof}[Proof of Proposition~\ref{prop:regress_risk}]
    We note that Proposition~\ref{prop:regress_risk} is a corollary of Theorem~\ref{thm:main_concen}. Take $\tilde{f}=0$ in Theorem~\ref{thm:main_concen}, then we recover the result of Proposition~\ref{prop:regress_risk}. Thus, we only provide the proof of Theorem~\ref{thm:main_concen} in Appendix~\ref{app:propmain_concern}.
\end{proof}

\section{Proof of Theorem~\ref{thm:approx}}\label{app:thmapprox}
\begin{proof}[Proof of Theorem~\ref{thm:approx}]
    The functions in $\calN(W)$ are the fully-connected networks with the $\relu$ activation, so they are piece-wise linear functions on $[0,1]^{N\times d}$, where the number of the linear pieces are polynomial in the width of the network. In contrast, the self-attention function is convex on some subset of $[0,1]^{N\times d}$. In the following proof procedures, we specify a line in $[0,1]^{N\times d}$ where the second derivative of the self-attention function is high enough such that $\calN(W)$ should be exponentially wide to approximate the self-attention function on the longest linear piece of that line.
    
    To specify a line in $[0,1]^{N\times d}$, we set the inputs of all but the first channels to be $\bx$, and set the input of the first channel to be a scaled version of $\bx$. Fix any $\bx\in [0,1]^{d}$ and $k\in\bbR$, we set $\bx_{1}=k\bx$ and $\bx_{i}=\bx$ for all $i\in\{2,\ldots,N\}$. For $\bX=[\bx_{1},\ldots,\bx_{N}]^{\top}$, $\bw\in [0,1]^{d}$ and $a\in\bbR$, we define 
    \begin{align}
        f(a,\bX,\bw)&= \bbI_{N}^{\top}\Att(a\bX,a\bX,a\bX)\bw\nonumber\\
        &=ak\bx^{\top}\bw \bigg[\frac{e^{a^{2}k^{2}\bx^{\top}\bx}}{e^{a^{2}k^{2}\bx^{\top}\bx}+(N-1)e^{a^{2}k\bx^{\top}\bx}}+\frac{(N-1)e^{a^{2}k\bx^{\top}\bx}}{e^{a^{2}k\bx^{\top}\bx}+(N-1)e^{a^{2}\bx^{\top}\bx}}\bigg]\nonumber\\*
        &\qquad+ak\bx^{\top}\bw(N-1)\bigg[\frac{e^{a^{2}k\bx^{\top}\bx}}{e^{a^{2}k^{2}\bx^{\top}\bx}+(N-1)e^{a^{2}k\bx^{\top}\bx}}+\frac{(N-1)e^{a^{2}\bx^{\top}\bx}}{e^{a^{2}k\bx^{\top}\bx}+(N-1)e^{a^{2}\bx^{\top}\bx}}\bigg],\nonumber
    \end{align}
    where $\bbI_{N}\in\bbR^{N}$ is the vector with all entries  being equal to  1. The partial derivatives of $f(a,\bX,\bw)$ with respect to $a$ can be derived as
    \begin{align}
        \frac{\partial f(a,\bX,\bw)}{\partial a}&=\bigl[2a^{2}(k-1)\bx^{\top}\bx+1\bigr]k\bx^{\top}\bw e^{a^{2}(k-1)\bx^{\top}\bx}+N\bx^{\top}\bw+O\Big(\frac{1}{N}\Big), \label{eq:firstderi}\\
        \frac{\partial^{2} f(a,\bX,\bw)}{\partial a^{2}}&=2\bx^{\top}\bw(k-1)^{2}\bx^{\top}\bx a e^{a^{2}(k-1)\bx^{\top}\bx}\bigl[2a^{2}(k-1)\bx^{\top}\bx+3\bigr]+O\Big(\frac{1}{N}\Big). \label{eq:secderi}
    \end{align}
    We set $\bx=2/3 \cdot\bbI_{d}$, $k=1.1$, $\bw=\bx$, and define the function $g(a)= f(1,\bX+a\bX/3,\bx)$. Then Eqn.~\eqref{eq:firstderi} and \eqref{eq:secderi} show that $g(a)$ is a increasing convex function on $[-1,1]$.
    
     We can rearrange the weights in the first layer of $\bphi_{\relu}$ such that the input of the resultant network is a scalar $a\in[-1,1]$; the width of the resultant network is same as the width of $\rho_{\relu}(\sum_{i=1}^{N} \bphi_{\relu}(\bx_{i}))$; the resultant network represents the same function as 
     \begin{align}
         h(a)=\rho_{\relu}\bigg(\sum_{i=1}^{N}\bphi_{\relu}\biggl(\bx_{i}+\frac{a}{3}\bx_{i}\biggr)\bigg).\nonumber
     \end{align}
     Since $\rho_{\relu}(\sum_{i=1}^{N} \bphi_{\relu}(\bx_{i}))$ can approximate $\bbI_{N}^{\top}\Att(\bX,\bX,\bX)\bw$, the modified network can approximate $g(a)$ in terms of the sup-norm on $[-1,1]$. 
     
     Since $\relu$ is a 2-piece-wise linear function, $h(a)$ is also a piece-wise linear function, 
    whose number of pieces is denoted as $M$. Lemma 2.1 of \cite{telgarsky2015representation} shows that $M\leq 2(2W)^{2}=8W^{2}$, where $(2W)^{2}$ follows from two $\relu$ layers, and the additional factor of 2 follows from that $\bx_{1}$ and $\bx_{i}$ for $i\in\{2,\ldots,N\}$ take different values. 
    
    The pigeonhole principle implies that there is a piece-wise linear segment $[u,v]\subseteq[-1,1]$ whose length is at 
    least $2/M$. On this linear segment, the linear function $h(a)$  approximates $g(a)$ with error at most $\xi$. Eqn.~\eqref{eq:secderi}  then implies that
    \begin{align}
        \inf_{a\in[-1,1]} h^{\prime\prime}(a)\geq c_{1}>0, \nonumber
    \end{align}
    where $c_{1}=\Omega(d^{2}e^{cd})$ for some $c>0$. Denote the linear function on a linear piece $[u,v]$ and the approximation error as $\hath:\bbR\rightarrow\bbR$ and $e=h-\hath$, respectively. Since $\hath$ is a linear function, we have
    \begin{align}
        \max\big\{e(u),e(v)\big\} \geq e\Big(\frac{u+v}{2}\Big)+\frac{c_{1}}{2}\Big(\frac{v-u}{2}\Big)^{2}\label{ieq:78} 
        \end{align}
        and
        \begin{align}
        \xi \geq \frac{1}{2}\bigg(\max\bigl\{e(u),e(v)\bigr\}-e\biggl(\frac{u+v}{2}\biggr)\bigg)\label{ieq:79}.
    \end{align}
    Combining inequalities \eqref{ieq:78} and \eqref{ieq:79}, we have
    \begin{align}
        W\geq \biggl(\frac{c_{1}}{256\xi}\biggr)^{\frac{1}{4}}.\nonumber
    \end{align}
    Thus, we have $W=\Omega(\exp(cd) \xi^{-{1}/{4}})$ for some constant $c>0$, and this concludes the proof of Theorem \ref{thm:approx}.
\end{proof}

\section{Proof of Theorem~\ref{thm:main_mfree}}\label{app:thmmain_mfree}
\begin{proof}[Proof of Theorem~\ref{thm:main_mfree}]
    Recall the definition below Theorem~\ref{thm:main_concen}
    \begin{align}
        e(\calF_{\tf},\Pi,\delta,n)&=32V_{\max}^{2}\bigg[2+\gamma+2(m+1)L^{2}d^{2}\log\biggl(\frac{16mdLB_{V}B_{QK}B_{a}B_{b}n}{V_{\max}}\biggr)\nonumber\\
        &\qquad+2(m+1)Ld^{2}\log B_{w}+\log\biggl(\frac{2\calN(\Pi,1/n,d_{\infty})}{\delta}\biggr)\bigg].\nonumber
    \end{align}
    To simplify the proof, we define 
    \begin{align}
        f_{\pi^{*}}^{*}&=\arginf_{f\in\calF_{\tf}}\sup_{\mu\in d_{\Pi}} \bbE_{\mu}\Bigl[(f(\bar{\bS},\bar{\bA})-\calT^{\pi^{*}}f\bigl(\bar{\bS},\bar{\bA})\bigr)^{2}\Bigr],\nonumber\\
        \varepsilon&=\frac{3}{2}\varepsilon_{\calF}+\frac{2}{n}e(\calF_{\tf},\Pi,\delta,n).\nonumber
    \end{align}
    Our proof can be decomposed into three main procedures.
    \begin{itemize}
        \item Since $f_{\pi^{*}}^{*}$ is the best approximation of action-value function of the optimal policy $\pi^{*}$, we expect that it should belong to the confidence region of the action-value functions $\calF(\pi^{*},\varepsilon)$ with high probability.
        \item For any $\pi\in\Pi$ and any $f\in\calF(\pi,\varepsilon)$, since the empirical Bellman error is bounded $\calE(f,\pi;\calD)\leq \varepsilon$, we expect that the population Bellman error $\bbE_{\nu}[(f(\bar{\bS},\bar{\bA})-\calT^{\pi}f(\bar{\bS},\bar{\bA}))^{2}]$ can be controlled with high probability, which implies that $f$ is a reliable estimate of the action-value function of $\pi$. 
        \item The suboptimality gap of the learned policy according to the reliable action-value function estimate can be bounded using the estimation error bound.
    \end{itemize}
    
    We lay out the proof by the three steps as stated in the proof sketch.

    \textbf{Step 1: Show that $f_{\pi^{*}}^{*}\in\calF(\pi^{*},\varepsilon)$ with high probability.}
    
    From the definition of $f_{\pi^{*}}^{*}$ and Assumption~\ref{assump:mfree1}, we note that the population Bellman error of $f_{\pi^{*}}^{*}$ with respect to $\pi^{*}$ is bounded by $\varepsilon_{\calF}$. To bound the empirical Bellman error $\calE(f_{\pi^{*}}^{*},\pi^{*};\calD)$ of $f_{\pi^{*}}^{*}$, we need the generalization error bound of the action-value function with the transformer function class.
    \mainconcen*
    \begin{proof}
        See Appendix~\ref{app:propmain_concern} for a detailed proof.
    \end{proof}
    We can decompose the empirical Bellman error $\calE(f_{\pi^{*}}^{*},\pi^{*};\calD)$ as the sum of the population Bellman error and the generalization error, where the population Bellman error can be controlled with $\varepsilon_{\calF}$ according to Assumption~\ref{assump:mfree1}, and the generalization error can be controlled with Theorem~\ref{thm:main_concen}. Thus, we have the following lemma.
    \begin{lemma}\label{lem:errcontrol1}
        For any $\pi\in\Pi$, let $f_{\pi}^{*}=\arginf_{f\in\calF_{\tf}}\sup_{\mu\in d_{\Pi}} \bbE_{\mu}[(f(\bar{\bS},\bar{\bA})-\calT^{\pi}f(\bar{\bS},\bar{\bA}))^{2}]$. If Assumption~\ref{assump:mfree1} holds, the following inequality holds with probability at least $1-\delta$,
        \begin{align}
            \calE(f_{\pi}^{*},\pi;\calD)\leq\frac{3}{2}\varepsilon_{\calF}+\frac{2e(\calF_{\tf},\Pi,\delta,n)}{n}.\nonumber
        \end{align}
    \end{lemma}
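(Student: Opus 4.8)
The plan is to reduce $\calE(f_{\pi}^{*},\pi;\calD)$ to two applications of the uniform deviation bound of Theorem~\ref{thm:main_concen}, the one point requiring thought being an algebraic rewriting that cancels the irreducible‑noise terms. First I would rewrite the empirical Bellman error by subtracting and re‑adding the loss $\calL(\calT^{\pi}f_{\pi}^{*},f_{\pi}^{*},\pi;\calD)$; since this term does not depend on $\tilde{f}$, it can be pulled inside the infimum, giving
\begin{align}
\calE(f_{\pi}^{*},\pi;\calD)
&=\bigl[\calL(f_{\pi}^{*},f_{\pi}^{*},\pi;\calD)-\calL(\calT^{\pi}f_{\pi}^{*},f_{\pi}^{*},\pi;\calD)\bigr]\nonumber\\
&\qquad-\inf_{\tilde{f}\in\calF_{\tf}}\bigl[\calL(\tilde{f},f_{\pi}^{*},\pi;\calD)-\calL(\calT^{\pi}f_{\pi}^{*},f_{\pi}^{*},\pi;\calD)\bigr].\nonumber
\end{align}
Each bracketed expression now has exactly the shape $\calL(\cdot,\tilde{f},\pi;\calD)-\calL(\calT^{\pi}\tilde{f},\tilde{f},\pi;\calD)$ that Theorem~\ref{thm:main_concen} controls — the first with both arguments equal to $f_{\pi}^{*}$, the second with first argument $\tilde{f}$ (to be minimized) and second argument $f_{\pi}^{*}$ — so the only quantities that survive are squared‑bias terms $\bbE_{\nu}[(\cdot-\calT^{\pi}\cdot)^{2}]$ rather than the magnitudes of the Bellman targets.

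Next I would invoke Theorem~\ref{thm:main_concen} on the single event of probability at least $1-\delta$ on which its conclusion holds simultaneously for all $f,\tilde{f}\in\calF_{\tf}(\bB)$ and all $\pi\in\Pi$; this also covers the policy‑indexed choice $f_{\pi}^{*}$, and the $\log\calN(\Pi,1/n,d_{\infty})$ term in the bound is precisely what pays for uniformity over $\pi$. Rearranging its conclusion yields the two‑sided estimate
\begin{align}
&\tfrac{1}{2}\bbE_{\nu}\bigl[(f-\calT^{\pi}\tilde{f})^{2}\bigr]-\frac{e(\calF_{\tf},\Pi,\delta,n)}{n}\nonumber\\
&\qquad\le\ \calL(f,\tilde{f},\pi;\calD)-\calL(\calT^{\pi}\tilde{f},\tilde{f},\pi;\calD)\ \le\ \tfrac{3}{2}\bbE_{\nu}\bigl[(f-\calT^{\pi}\tilde{f})^{2}\bigr]+\frac{e(\calF_{\tf},\Pi,\delta,n)}{n}.\nonumber
\end{align}
Applying the left inequality with second argument $f_{\pi}^{*}$, minimizing over the first argument, and using nonnegativity of the squared term shows $\inf_{\tilde{f}\in\calF_{\tf}}[\calL(\tilde{f},f_{\pi}^{*},\pi;\calD)-\calL(\calT^{\pi}f_{\pi}^{*},f_{\pi}^{*},\pi;\calD)]\ge -e(\calF_{\tf},\Pi,\delta,n)/n$. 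Applying the right inequality with $f=\tilde{f}=f_{\pi}^{*}$ bounds the first bracket above by $\tfrac{3}{2}\bbE_{\nu}[(f_{\pi}^{*}-\calT^{\pi}f_{\pi}^{*})^{2}]+e(\calF_{\tf},\Pi,\delta,n)/n$.

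To finish, I would bound $\bbE_{\nu}[(f_{\pi}^{*}-\calT^{\pi}f_{\pi}^{*})^{2}]\le\varepsilon_{\calF}$ using the realizability half of Assumption~\ref{assump:mfree1} together with the definition of $f_{\pi}^{*}$ as the $\arginf$ of $\sup_{\mu\in d_{\Pi}}\bbE_{\mu}[(\cdot-\calT^{\pi}\cdot)^{2}]$ (the sampling distribution $\nu$ lying in, or being dominated by an element of, $d_{\Pi}$, since the offline data is generated by a permutation‑invariant behavior policy). Substituting the two estimates into the identity for $\calE(f_{\pi}^{*},\pi;\calD)$ gives $\calE(f_{\pi}^{*},\pi;\calD)\le\tfrac{3}{2}\varepsilon_{\calF}+2e(\calF_{\tf},\Pi,\delta,n)/n$, the claim. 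The main obstacle is the conceptual step at the start — seeing the rewriting that makes the noise‑variance losses telescope so that the bound inherits only $\varepsilon_{\calF}$ and the generalization term — while the rest is bookkeeping with Theorem~\ref{thm:main_concen} on a common event; a secondary subtlety worth flagging is that the completeness half of Assumption~\ref{assump:mfree1} is \emph{not} needed for this lemma (it enters later steps of Theorem~\ref{thm:main_mfree}), because the pessimistic estimator uses an $\inf$ over $\calF_{\tf}$ and here we only need to lower‑bound that infimum.
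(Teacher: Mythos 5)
Your proposal is correct and follows essentially the same route as the paper's proof: the identical decomposition obtained by adding and subtracting $\calL(\calT^{\pi}f_{\pi}^{*},f_{\pi}^{*},\pi;\calD)$, the same two applications of Theorem~\ref{thm:main_concen} (upper bound on the first bracket, nonnegativity of the squared population term to lower-bound the infimum), and the same invocation of the realizability part of Assumption~\ref{assump:mfree1}. Your explicit caveat that $\nu$ must lie in (or be dominated by) $d_{\Pi}$ for the final step is a point the paper also glosses over, and your observation that completeness is not needed here is accurate.
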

    \begin{proof}
        See Appendix~\ref{app:lemerrcontrol1} for a detailed proof.
    \end{proof}

    \textbf{Step 2: For any policy $\pi\in\Pi$ and $f\in\calF(\pi,\varepsilon)$, show $\bbE_{\nu}[(f(\bar{\bS},\bar{\bA})-\calT^{\pi}f(\bar{\bS},\bar{\bA}))^{2}]\leq 2\varepsilon+3\varepsilon_{\calF,\calF}+4e(\calF_{\tf},\Pi,\delta,n)/n$ holds with high probability.}
    
    To prove the desired result, we relate the population Bellman error $\bbE_{\nu}[(f(\bar{\bS},\bar{\bA})-\calT^{\pi}f(\bar{\bS},\bar{\bA}))^{2}]$ with $\calE(f,\pi;\calD)$ through Theorem~\ref{thm:main_concen}, where we bound the population Bellman error as the difference between the empirical Bellman error and the generalization error. Thus, we have the following lemma.
    \begin{lemma}\label{lem:errcontrol2}
        For any $\pi\in\Pi$ and $f\in\calF_{\tf}$, if $\calE(f,\pi;\calD)\leq \varepsilon$ for some $\varepsilon>0$, and Assumption~\ref{assump:mfree1} holds, the following inequality holds with probability at least $1-\delta$,
        \begin{align}
            \bbE_{\nu}\Big[\big(f(\bar{\bS},\bar{\bA})-\calT^{\pi}f(\bar{\bS},\bar{\bA})\big)^{2}\Big]\leq 2\varepsilon+3\varepsilon_{\calF,\calF}+\frac{4e(\calF_{\tf},\Pi,\delta,n)}{n}.\nonumber
        \end{align}
    \end{lemma}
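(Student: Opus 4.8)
The plan is to turn the empirical hypothesis $\calE(f,\pi;\calD)\le\varepsilon$ into the desired population bound by two applications of the uniform deviation inequality of Theorem~\ref{thm:main_concen}, together with the completeness half of Assumption~\ref{assump:mfree1}. First I would unfold the definition of the empirical Bellman error to get the identity $\calL(f,f,\pi;\calD)=\calE(f,\pi;\calD)+\inf_{\tilde f\in\calF_{\tf}}\calL(\tilde f,f,\pi;\calD)$, so that the hypothesis yields $\calL(f,f,\pi;\calD)\le\varepsilon+\inf_{\tilde f\in\calF_{\tf}}\calL(\tilde f,f,\pi;\calD)$. Then, applying Theorem~\ref{thm:main_concen} to the pair $(f,f)$ and rearranging the one-sided inequality it provides, on an event of probability at least $1-\delta$ one obtains
\[
\frac{1}{2}\,\bbE_{\nu}\big[(f(\bar{\bS},\bar{\bA})-\calT^{\pi}f(\bar{\bS},\bar{\bA}))^{2}\big]\le \calL(f,f,\pi;\calD)-\calL(\calT^{\pi}f,f,\pi;\calD)+\frac{e(\calF_{\tf},\Pi,\delta,n)}{n}.
\]
Combining the last two displays reduces the problem to upper bounding the quantity $\inf_{\tilde f\in\calF_{\tf}}\calL(\tilde f,f,\pi;\calD)-\calL(\calT^{\pi}f,f,\pi;\calD)$.

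For that quantity I would use completeness: Assumption~\ref{assump:mfree1} furnishes a surrogate $g_{f}\in\calF_{\tf}$ with $\bbE_{\nu}[(g_{f}(\bar{\bS},\bar{\bA})-\calT^{\pi}f(\bar{\bS},\bar{\bA}))^{2}]\le\varepsilon_{\calF,\calF}$, and since $g_{f}\in\calF_{\tf}$ we have $\inf_{\tilde f\in\calF_{\tf}}\calL(\tilde f,f,\pi;\calD)\le\calL(g_{f},f,\pi;\calD)$. It then remains to bound $\calL(g_{f},f,\pi;\calD)-\calL(\calT^{\pi}f,f,\pi;\calD)$, for which I would invoke Theorem~\ref{thm:main_concen} a second time, now with the pair $(g_{f},f)$; rearranging gives $\calL(g_{f},f,\pi;\calD)-\calL(\calT^{\pi}f,f,\pi;\calD)\le\frac{3}{2}\bbE_{\nu}[(g_{f}-\calT^{\pi}f)^{2}]+e(\calF_{\tf},\Pi,\delta,n)/n\le\frac{3}{2}\varepsilon_{\calF,\calF}+e(\calF_{\tf},\Pi,\delta,n)/n$. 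Chaining the three estimates and multiplying by $2$ produces exactly $\bbE_{\nu}[(f-\calT^{\pi}f)^{2}]\le 2\varepsilon+3\varepsilon_{\calF,\calF}+4e(\calF_{\tf},\Pi,\delta,n)/n$, as claimed.

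I do not expect a serious obstacle, but two points need care. The first is that $\calT^{\pi}f$ generically does not belong to $\calF_{\tf}$, so one cannot bound the infimum by $\calL(\calT^{\pi}f,f,\pi;\calD)$ directly; the completeness assumption and the surrogate $g_{f}$ are exactly what repair this. The second is that Theorem~\ref{thm:main_concen} is a \emph{uniform} statement over all pairs in $\calF_{\tf}\times\calF_{\tf}$ and all $\pi\in\Pi$, so both invocations---at $(f,f)$ and at the $f$-dependent pair $(g_{f},f)$---hold simultaneously on a single $1-\delta$ event, with no extra union bound; and because that theorem is stated in the multiplicative ``$\frac{1}{2}\bbE_{\nu}[\cdot]$-slack'' form, each use must be rearranged to isolate the population Bellman error, which is what generates the constants $2,3,4$ in the final bound. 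The $a^{2}-b^{2}$ algebra underlying these deviation estimates is already packaged inside Theorem~\ref{thm:main_concen}, so it need not be reproduced here.
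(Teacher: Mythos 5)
Your proposal is correct and follows essentially the same route as the paper: the same decomposition of $\calE(f,\pi;\calD)$ through the intermediate term $\calL(\calT^{\pi}f,f,\pi;\calD)$, the same two invocations of Theorem~\ref{thm:main_concen} (at $(f,f)$ and at the completeness surrogate paired with $f$, which the paper calls $h_{\pi}^{*}$ and you call $g_{f}$), and the same bookkeeping yielding the constants $2$, $3$, and $4$. Your remark that both applications hold on a single $1-\delta$ event by uniformity is a correct and slightly more careful reading than the paper's phrasing.
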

    \begin{proof}
        See Appendix~\ref{app:lemerrcontrol2} for a detailed proof.
    \end{proof}

    \textbf{Step 3: Bound the suboptimality gap of the learned policy with the population Bellman error bound in Step 2.}
    
    We define 
    \begin{align}
        \hat{f}_{\pi^{*}} =\argmax_{f\in\calF(\pi^{*},\varepsilon)}f(\bar{\bS}_{0},\pi^{*})  \quad \mbox{and}\quad 
        \breve{f}_{\pi^{*}} =\argmin_{f\in\calF(\pi^{*},\varepsilon)}f(\bar{\bS}_{0},\pi^{*}),\nonumber
    \end{align}
    where $\hat{f}_{\pi^{*}}$ and $\breve{f}_{\pi^{*}}$ are the maximal and minimal value functions in $\calF(\pi^{*},\varepsilon)$, respectively. Intuitively, since $f_{\pi^{*}}^{*}\in\calF(\pi^{*},\varepsilon)$ and that we learn the policy according to the pessimistic estimation of the action-value function in $\calF(\hpi,\varepsilon)$, we can upper bound the suboptimality gap by the difference between $\hat{f}_{\pi^{*}}$ and $\breve{f}_{\pi^{*}}$.
    
    Step 1 shows that with probability at least $1-\delta$, $f_{\pi^{*}}^{*}\in\calF(\pi^{*},\varepsilon)$. Then we have
    \begin{align}
        \max_{f\in\calF(\pi^{*},\varepsilon)}f(\bar{\bS}_{0},\pi)&\geq f_{\pi^{*}}^{*}f(\bar{\bS}_{0},\pi)=V_{P^{*}}^{\pi^{*}}(\bar{\bS}_{0})+\frac{\bbE_{d^{\pi^{*}}_{P^{*}}}\big[f_{\pi^{*}}^{*}-\calT^{\pi^{*}}f(\bar{\bS},\bar{\bA})\big]}{1-\gamma}\geq V_{P^{*}}^{\pi^{*}}(\bar{\bS}_{0})-\frac{\sqrt{\varepsilon_{\calF}}}{1-\gamma},\label{ieq:103}
    \end{align}
    where the equality follows from Lemma~\ref{lem:regretdecomp}, and the last inequality follows from Assumption~\ref{assump:mfree1}. Similarly, we can prove that 
    \begin{align}
        \min_{f\in\calF(\hpi,\varepsilon)}f(\bar{\bS}_{0},\hpi)\leq V_{P^{*}}^{\hpi}(\bar{\bS}_{0})+\frac{\sqrt{\varepsilon_{\calF}}}{1-\gamma}.\label{ieq:104}
    \end{align}
    Combining inequalities \eqref{ieq:103} and \eqref{ieq:104}, we have 
    \begin{align}
        &V_{P^{*}}^{\pi^{*}}(\bar{\bS}_{0})-V_{P^{*}}^{\hpi}(\bar{\bS}_{0})\nonumber\\
        &\quad\leq \max_{f\in\calF(\pi^{*},\varepsilon)}f(\bar{\bS}_{0},\pi)-\min_{f\in\calF(\hpi,\varepsilon)}f(\bar{\bS}_{0},\hpi)+\frac{2\sqrt{\varepsilon_{\calF}}}{1-\gamma}\nonumber\\ 
        &\quad\leq \hat{f}_{\pi^{*}}(\bar{\bS}_{0},\pi^{*})-\breve{f}_{\pi^{*}}(\bar{\bS}_{0},\pi^{*})+\frac{2\sqrt{\varepsilon_{\calF}}}{1-\gamma}\nonumber \\
        &\quad=\hat{f}_{\pi^{*}}(\bar{\bS}_{0},\pi^{*})-V_{P^{*}}^{\pi^{*}}(\bar{\bS}_{0})+V_{P^{*}}^{\pi^{*}}(\bar{\bS}_{0})-\breve{f}_{\pi^{*}}(\bar{\bS}_{0},\pi^{*})+\frac{2\sqrt{\varepsilon_{\calF}}}{1-\gamma},\label{ieq:127}
    \end{align}
    where the first inequality follows from inequalities \eqref{ieq:103} and \eqref{ieq:104}, the second inequality follows from Eqn.~\eqref{algo:mfree}. Applying the suboptimality gap decomposition in Lemma~\ref{lem:regretdecomp} to inequality \eqref{ieq:127}, we have 
    \begin{align}
        &V_{P^{*}}^{\pi^{*}}(\bar{\bS}_{0})-V_{P^{*}}^{\hpi}(\bar{\bS}_{0})\nonumber\\
        &\quad\leq\frac{1}{1-\gamma}\Big\{\bbE_{d^{\pi^{*}}_{P^{*}}}\big[\hat{f}_{\pi^{*}}(\bar{\bS},\bar{\bA})-\calT^{\pi^{*}}\hat{f}_{\pi^{*}}(\bar{\bS},\bar{\bA})\big]\nonumber\\
        &\quad\qquad-\bbE_{d^{\pi^{*}}_{P^{*}}}\big[\breve{f}_{\pi^{*}}(\bar{\bS},\bar{\bA})-\calT^{\pi^{*}}\breve{f}_{\pi^{*}}(\bar{\bS},\bar{\bA})\big]\Big\}+\frac{2\sqrt{\varepsilon_{\calF}}}{1-\gamma}\nonumber\\
        &\quad\leq \frac{1}{1-\gamma}\bigg\{\sqrt{C_{\calF_{\tf}}\bbE_{\nu}\Big[\big(\hat{f}_{\pi^{*}}(\bar{\bS},\bar{\bA})-\calT^{\pi^{*}}\hat{f}_{\pi^{*}}(\bar{\bS},\bar{\bA})\big)^{2}\Big]}\nonumber\\
        &\quad\qquad+\sqrt{C_{\calF_{\tf}}\bbE_{\nu}\Big[\big(\breve{f}_{\pi^{*}}(\bar{\bS},\bar{\bA})-\calT^{\pi^{*}}\breve{f}_{\pi^{*}}(\bar{\bS},\bar{\bA})\big)^{2}\Big]}\bigg\}+\frac{2\sqrt{\varepsilon_{\calF}}}{1-\gamma}\nonumber,
    \end{align}
    where the first inequality follows from Lemma~\ref{lem:regretdecomp}, and the second inequality follows from Jensen's inequality and the definition of $C_{\calF_{\tf}}$. Combined with the result in step 2, we have 
    \begin{align}
        &V_{P^{*}}^{\pi^{*}}(\bar{\bS}_{0})-V_{P^{*}}^{\hpi}(\bar{\bS}_{0})\nonumber\\
        &\quad\leq \frac{2\sqrt{C_{\calF_{\tf}}}}{1-\gamma}\sqrt{2\varepsilon+3\varepsilon_{\calF,\calF}+\frac{4e(\calF_{\tf},\Pi,\delta,n)}{n}}+\frac{2\sqrt{\varepsilon_{\calF}}}{1-\gamma}\nonumber\\ 
        &\quad\leq O\biggl(\frac{\sqrt{C_{\calF_{\tf}}(\varepsilon_{\calF}+\varepsilon_{\calF,\calF})}}{1-\gamma}+\frac{\sqrt{C_{\calF_{\tf}}}}{1-\gamma}\sqrt{\frac{e(\calF_{\tf},\Pi,\delta,n)}{n}}\biggr).\nonumber
    \end{align}
    Therefore, we conclude the proof of Theorem \ref{thm:main_mfree}.
\end{proof}

\section{Proof of Theorem~\ref{thm:main_mbased}}\label{app:thmmain_mbased}
For ease of notation, we denote the parameters of the neural network as 
\begin{align}
    \btheta= [\bW_{QK}^{1:L},\bW_{V}^{1:L},\ba^{1:L},\bb^{1:L}]\nonumber.
\end{align}
The parameter space is 
\begin{align}
    \Theta(B_{a},B_{b},B_{QK},B_{V})&=\Big\{\btheta \, \Big| \, \big|a_{kj}^{(i)}\big|< B_{a}, \big\|\bb_{kj}^{(i)}\big\|_{2}< B_{b}, \big\|\bW_{QK}^{(i)\top}\big\|_{\rmF}< B_{QK},\nonumber\\
    &\qquad \big\|\bW_{V}^{(i)\top}\big\|_{\rmF}< B_{V} \text{ for } i\in[L],j\in[m],k\in[d]\Big\}\nonumber.
\end{align}
Then we can denote the functions in  $\calM_{\tf}(B_{a},B_{b},B_{QK},B_{V})$   as $\bF_{\btheta}$ and the corresponding transition kernel in $\calP_{\tf}(B_{a},B_{b},B_{QK},B_{V})$ as $P_{\btheta}$, where $\btheta\in\Theta$ is the parameter of the function.

From the perspective of the parameter space $\Theta$, the algorithm in Eqn.~\eqref{algo:mpolicy} can be equivalently stated as
\begin{align}
    P_{\hat{\btheta}_{\rm MLE}}&=\argmax_{P\in\calP_{\tf}} \frac{1}{n}\sum_{i=1}^{n} \log P(\bar{\bS}_{i}^{\prime}\, |\,  \bar{\bS}_{i},\bar{\bA}_{i}),\nonumber\\
    \hpi&=\argmax_{\pi\in\Pi} \min_{P\in\calP(\zeta)} V_{P}^{\pi}(\bar{\bS}_{0}),\nonumber
\end{align}
where the confidence region of the dynamics is defined as
\begin{align}
    \calP(\zeta)=\bigg\{P\in\calP_{\tf}\, \bigg| \, \frac{1}{n}\sum_{i=1}^{n}\tv\bigl(P(\cdot\,|\, \bar{\bS}_{i},\bar{\bA}_{i}),P_{\hat{\btheta}_{\rm MLE}}(\cdot\,|\, \bar{\bS}_{i},\bar{\bA}_{i})\bigr)^{2}\leq \zeta\bigg\}.\nonumber
\end{align}

\begin{proof}[Proof of Theorem~\ref{thm:main_mbased}]
    For some constant $c_{1}>0$, we take
    $$
    \zeta=c_{1}\bigg(\frac{1}{n}(m+1)L^{2}d^{2}\log\Big(4NLmdB_{V}B_{QK}B_{a}B_{b}n\Big)+\frac{1}{n}\log\frac{1}{\delta}\bigg).
    $$
    Our proof can be decomposed into three main parts.
    \begin{itemize}
        \item Intuitively, the nominal transition kernel $P^{*}$ should belong to the confidence region of the system dynamics set $\calP(\zeta)$ with high probability.
        \item For any $P\in\calP(\zeta)$, we expect that the population  squared total variation between $P$ and $P^{*}$, i.e.,  $\bbE_{\nu}[\tv(P(\cdot \, |\, \bar{\bS},\bar{\bA}), P^{*}(\cdot\, |\,\bar{\bS},\bar{\bA}))^{2}]$, can be controlled with high probability, which implies that any $P\in\calP(\zeta)$ is a reliable estimate of the system dynamics.
        \item The suboptimality gap of the learned policy according to the reliable dynamic estimate can be bounded in terms of  the total variation.
    \end{itemize}

    We lay out the proof by the three steps as stated in the proof sketch.
    
    \textbf{Step 1: Show that $P^{*}\in\calP(\zeta)$ with probability at least $1-\delta$.}
    
    From the definition of $\calP(\zeta)$, we need to bound the empirical total variation  between the nominal transition kernel and the \ac{mle} estimate. Thus, we need an upper bound of the population total variation  between $P^{*}$ and $\hat{P}_{\rm MLE}$ and an accompanying generalization error bound. For the population error, we state the following proposition.
    \mlebound*
    \begin{proof}
        See Appendix~\ref{app:propmlebound} for a detailed proof.
    \end{proof}
    Similar to Theorem~\ref{thm:main_concen}, we can derive the generalization error bound in terms of the total variation distance.
    \begin{proposition}\label{prop:tvconcentrate}
        For any $\btheta\in\Theta$, with probability at least $1-\delta$, we have
        \begin{align}
            &\biggl|\bbE_{\calD}\Bigl[\tv\big(P^{*}(\cdot\,|\, \bar{\bS},\bar{\bA}),P_{\btheta}(\cdot\,|\, \bar{\bS},\bar{\bA})\big)^{2}\Bigr]-\frac{1}{n}\sum_{i=1}^{n}\tv\big(P^{*}(\cdot\,|\, \bar{\bS}_{i},\bar{\bA}_{i}),P_{\btheta}(\cdot\,|\, \bar{\bS}_{i},\bar{\bA}_{i})\big)^{2} \biggr|\nonumber\\
            &\quad\leq \frac{1}{2} \bbE_{\calD}\Bigl[\tv\big(P^{*}(\cdot\,|\, \bar{\bS},\bar{\bA}),P_{\btheta}(\cdot\,|\, \bar{\bS},\bar{\bA})\big)^{2}\Bigr]\nonumber\\
            &\quad\qquad+O\biggl(\frac{1}{n}mL^{2}d^{2}\log(NLmdB_{V}B_{QK}B_{a}B_{b}n)+\frac{1}{n}\log\frac{1}{\delta}\biggr).\nonumber
        \end{align}
    \end{proposition}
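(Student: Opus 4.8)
The plan is to reuse the PAC‑Bayesian argument behind Theorem~\ref{thm:main_concen}, now with the squared total‑variation distance in place of the squared Bellman error and with no policy class to cover. Writing $Z_{\btheta}(\bar{\bS},\bar{\bA})=\tv(P^{*}(\cdot\,|\,\bar{\bS},\bar{\bA}),P_{\btheta}(\cdot\,|\,\bar{\bS},\bar{\bA}))^{2}$, this quantity lies in $[0,1]$ and does not depend on $\bar{\bS}^{\prime}$, so since $(\bar{\bS}_{i},\bar{\bA}_{i})\sim\nu$ i.i.d.\ the task is to bound $|\bbE_{\nu}[Z_{\btheta}]-\tfrac1n\sum_{i}Z_{\btheta}(\bar{\bS}_{i},\bar{\bA}_{i})|$ uniformly over $\btheta\in\Theta$. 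The decisive structural fact is $\mathrm{Var}_{\nu}(Z_{\btheta})\le\bbE_{\nu}[Z_{\btheta}^{2}]\le\bbE_{\nu}[Z_{\btheta}]$, valid because $Z_{\btheta}\in[0,1]$; feeding this variance‑versus‑mean relation into a Bernstein/sub‑Gamma moment generating function bound is exactly what yields the multiplicative $\tfrac12\bbE_{\nu}[Z_{\btheta}]$ term, just as in the Bellman‑error version, and here the noise covariance $\sigma^{2}\bI$ is treated as a fixed constant.

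Concretely, I would fix the uniform prior $\mathcal{P}$ on the parameter box $\Theta(B_{a},B_{b},B_{QK},B_{V})$ and, for each target $\btheta^{\circ}$, take the posterior $\mathcal{Q}_{\btheta^{\circ}}$ to be uniform on a sub‑box of half‑width $\tau$ around $\btheta^{\circ}$. The PAC‑Bayes inequality of \citet{mcallester1999some,mcallester2003simplified}, applied with the Bernstein MGF control above, then gives, with probability at least $1-\delta$ and simultaneously for all $\btheta^{\circ}$,
\begin{align}
\Big|\bbE_{\btheta\sim\mathcal{Q}_{\btheta^{\circ}}}\bbE_{\nu}[Z_{\btheta}]-\tfrac1n\textstyle\sum_{i}\bbE_{\btheta\sim\mathcal{Q}_{\btheta^{\circ}}}Z_{\btheta}(\bar{\bS}_{i},\bar{\bA}_{i})\Big|\le\tfrac12\,\bbE_{\btheta\sim\mathcal{Q}_{\btheta^{\circ}}}\bbE_{\nu}[Z_{\btheta}]+O\!\Big(\tfrac{\mathrm{KL}(\mathcal{Q}_{\btheta^{\circ}}\|\mathcal{P})+\log(1/\delta)}{n}\Big).\nonumber
\end{align}
Since $\Theta$ has $O(mLd^{2})$ coordinates ($d\times d$‑type matrices $\bW_{QK}^{(i)},\bW_{V}^{(i)}$ and the $O(md^{2})$ $\rff$ parameters per layer, over $L$ layers), the KL term is $O\big(mLd^{2}\log(R/\tau)\big)$, with $R$ bounding the box side.

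To pass from the $\mathcal{Q}_{\btheta^{\circ}}$‑average back to $\btheta^{\circ}$ itself I would use a Lipschitz bound for $Z_{\btheta}$ in $\btheta$. The reverse triangle inequality for $\tv$, the Gaussian estimate $\tv(\calN(\mu_{1},\sigma^{2}\bI),\calN(\mu_{2},\sigma^{2}\bI))\le\|\mu_{1}-\mu_{2}\|_{2}/(\sqrt{2\pi}\sigma)$, and $|a^{2}-b^{2}|\le 2|a-b|$ (valid since $\tv\le1$) give $|Z_{\btheta}-Z_{\btheta^{\circ}}|\le(c/\sigma)\,\|\bF_{\btheta}(\bar{\bS},\bar{\bA})-\bF_{\btheta^{\circ}}(\bar{\bS},\bar{\bA})\|_{\rmF}$. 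I then invoke the set‑transformer perturbation lemma already underlying Theorem~\ref{thm:main_concen}, propagating a parameter perturbation layer‑by‑layer through the attention block, $\rff$, and the row‑wise normalization, to get $\|\bF_{\btheta}-\bF_{\btheta^{\circ}}\|_{\rmF}\le\sqrt{N}\,(mdL\tilde B)^{O(L)}\|\btheta-\btheta^{\circ}\|$: the $\sqrt{N}$ arises because the Frobenius norm aggregates the per‑channel output perturbations of the $N$ channels, each bounded by an $N$‑independent quantity (the softmax weights being a convex combination and the rows being kept bounded by the normalization), and the $(\cdot)^{O(L)}$ because each of the $L$ layers contributes a Jacobian factor larger than one. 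Choosing $\tau$ so that this perturbation is at most $1/n$ forces $\log(R/\tau)=O\big(L\log(mdL\tilde B)+\log(\sqrt N n)\big)$, so $\mathrm{KL}(\mathcal{Q}_{\btheta^{\circ}}\|\mathcal{P})=O\big(mL^{2}d^{2}\log(NLmd\tilde B n)\big)$; this is simultaneously the source of the extra factor of $L$ (hence $L^{2}$) and of the $\log N$.

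Combining the PAC‑Bayes display with the Lipschitz transfer (whose contribution is $O(1/n)$ by the choice of $\tau$) yields, for every $\btheta^{\circ}\in\Theta$ with probability at least $1-\delta$,
\begin{align}
\Big|\bbE_{\nu}[Z_{\btheta^{\circ}}]-\tfrac1n\textstyle\sum_{i}Z_{\btheta^{\circ}}(\bar{\bS}_{i},\bar{\bA}_{i})\Big|\le\tfrac12\bbE_{\nu}[Z_{\btheta^{\circ}}]+O\!\Big(\tfrac1n\,mL^{2}d^{2}\log(NLmd\tilde B n)+\tfrac1n\log\tfrac1\delta\Big),\nonumber
\end{align}
which is the claim after noting that $\bbE_{\mathcal{D}}=\bbE_{\nu}$ here since $Z_{\btheta}$ depends only on $(\bar{\bS},\bar{\bA})$. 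I expect the main obstacle to be the transformer perturbation lemma in the third step: one must control the softmax's sensitivity and push a parameter perturbation through $L$ composed attention/$\rff$/normalization blocks while keeping the $N$‑dependence sharp (only $\sqrt N$, surviving only inside a logarithm) and confirming that the row‑wise normalization is precisely what prevents a blow‑up exponential in $L$, so that $L$ enters only through log‑factors and the parameter count, giving the stated $mL^{2}d^{2}$. The remaining pieces — the Bernstein MGF step, the KL computation for box priors and posteriors, and the Gaussian TV estimate — are routine.
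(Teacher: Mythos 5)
Your proposal is correct and follows essentially the same route as the paper: the paper also verifies that $Z_{\btheta}\in[0,1]$ gives $\mathrm{Var}(Z_{\btheta})\le\bbE[Z_{\btheta}]$, applies its PAC-Bayesian lemma (Proposition~\ref{prop:pacbayes}) with uniform prior on $\Theta$ and uniform posterior on a small neighborhood of $\btheta$, transfers back to the point $\btheta$ via a Lipschitz bound on $Z_{\btheta}$ (using Pinsker plus the Gaussian KL formula where you use the direct Gaussian TV estimate — an equivalent step), and invokes the same transformer perturbation lemma (Proposition~\ref{prop:nnerror}) with a radius of order $1/(LnN^{1/2})$ to get the $mL^{2}d^{2}\log(NLmd\tilde{B}n)$ KL term. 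No substantive differences.
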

    \begin{proof}
        See Appendix~\ref{app:proptvconcentrate} for a detailed proof.
    \end{proof}
    With Propositions~\ref{prop:mlebound} and~\ref{prop:tvconcentrate}, we have
    \begin{align}
        &\frac{1}{n}\sum_{i=1}^{n}\tv\bigl(P^{*}(\cdot\,|\, \bar{\bS}_{i},\bar{\bA}_{i}),P_{\hat{\btheta}_{\rm MLE}}(\cdot\,|\, \bar{\bS}_{i},\bar{\bA}_{i})\bigr)^{2}\nonumber\\
        &\quad=\bigg\{\frac{1}{n}\sum_{i=1}^{n}\tv\Big(P^{*}(\cdot\,|\, \bar{\bS}_{i},\bar{\bA}_{i}),P_{\hat{\btheta}_{\rm MLE}}(\cdot\,|\, \bar{\bS}_{i},\bar{\bA}_{i})\Big)^{2}-\frac{3}{2}\bbE_{\nu}\Big[\tv\big(P^{*}(\cdot\,|\,\bar{\bS},\bar{\bA}), P_{\hat{\btheta}_{\rm MLE}}(\cdot\,|\,\bar{\bS},\bar{\bA})\big)^{2}\Big]\bigg\}\nonumber\\
        &\quad\qquad+\frac{3}{2}\bbE_{\nu}\Big[\tv\big(P^{*}(\cdot\,|\,\bar{\bS},\bar{\bA}), P_{\hat{\btheta}_{\rm MLE}}(\cdot\,|\,\bar{\bS},\bar{\bA})\big)^{2}\Big]\label{ieq:121}\\
        &\quad\leq O\Big(\frac{1}{n}(m+1)L^{2}d^{2}\log(4NLmdB_{V}B_{QK}B_{a}B_{b}n)+\frac{1}{n}\log\frac{1}{\delta}\Big),\label{ieq:68}
    \end{align}
    where the first term in Eqn.~\eqref{ieq:121} is bounded with Proposition~\ref{prop:tvconcentrate}, and the second term in Eqn.~\eqref{ieq:121} is bounded with Proposition~\ref{prop:mlebound}.
    
    \textbf{Step 2: Show that for any $P\in\calP(\zeta)$, the population total variation  between $P$ and $P^{*}$ is bounded.}
    
    For the population total variation  between $P$ and $P^{*}$, we have 
    \begin{align}
        &\bbE_{\nu}\Big[\tv\big(P(\cdot\,|\,\bar{\bS},\bar{\bA}), P^{*}(\cdot\,|\,\bar{\bS},\bar{\bA})\big)^{2}\Big]\nonumber\\
        &\quad=\bigg\{\bbE_{\nu}\Big[\tv\big(P(\cdot\,|\,\bar{\bS},\bar{\bA}), P^{*}(\cdot\, |\,\bar{\bS},\bar{\bA})\big)^{2}\Big]-\frac{2}{n}\sum_{i=1}^{n}\tv\Big(P(\cdot\,|\, \bar{\bS}_{i},\bar{\bA}_{i}),P^{*}(\cdot\,|\, \bar{\bS}_{i},\bar{\bA}_{i})\Big)^{2}\bigg\}\nonumber\\
        &\quad\qquad+\frac{2}{n}\sum_{i=1}^{n}\tv\big(P(\cdot\,|\, \bar{\bS}_{i},\bar{\bA}_{i}),P^{*}(\cdot\,|\, \bar{\bS}_{i},\bar{\bA}_{i})\big)^{2}\nonumber\\
        &\quad\leq  O\Big(\frac{1}{n}(m+1)L^{2}d^{2}\log\Big(4NLmdB_{V}B_{QK}B_{a}B_{b}n\Big)+\frac{1}{n}\log\frac{1}{\delta}\Big)\nonumber\\
        &\quad\qquad +\frac{4}{n}\sum_{i=1}^{n}\tv\Big(P(\cdot\,|\, \bar{\bS}_{i},\bar{\bA}_{i}),P_{\hat{\btheta}_{\rm MLE}}(\cdot\,|\, \bar{\bS}_{i},\bar{\bA}_{i})\Big)^{2}\nonumber\\
        &\quad\qquad+\frac{4}{n}\sum_{i=1}^{n}\tv\Big(P_{\hat{\btheta}_{\rm MLE}}(\cdot\,|\, \bar{\bS}_{i},\bar{\bA}_{i}),P^{*}(\cdot\,|\, \bar{\bS}_{i},\bar{\bA}_{i})\Big)^{2}\nonumber\\ 
        &\quad\leq O(\zeta),\label{ieq:70}
    \end{align}
    where the first inequality follows from Proposition~\ref{prop:tvconcentrate} and triangle inequality, and the last inequality follows from inequality \eqref{ieq:68} and the fact that $P\in\calP(\zeta)$.
    
    \textbf{Step 3: Bound the suboptimality gap of the learned policy with the total variation bound.}
    
    With the results in Step 1 and 2, we have that with probability at least $1-\delta$
    \begin{align}
        V_{P^{*}}^{\pi^{*}}(\bar{\bS}_{0})-V_{P^{*}}^{\hpi}(\bar{\bS}_{0})&= V_{P^{*}}^{\pi^{*}}(\bar{\bS}_{0})-\min_{P\in\calP(\zeta)}V_{P}^{\pi^{*}}(\bar{\bS}_{0})+\min_{P\in\calP(\zeta)}V_{P}^{\pi^{*}}(\bar{\bS}_{0})-V_{P^{*}}^{\hpi}(\bar{\bS}_{0})\nonumber\\
        &\leq V_{P^{*}}^{\pi^{*}}(\bar{\bS}_{0})-\min_{P\in\calP(\zeta)}V_{P}^{\pi^{*}}(\bar{\bS}_{0})+\min_{P\in\calP(\zeta)}V_{P}^{\hpi}(\bar{\bS}_{0})-V_{P^{*}}^{\hpi}(\bar{\bS}_{0})\nonumber\\ 
        &\leq V_{P^{*}}^{\pi^{*}}(\bar{\bS}_{0})-\min_{P\in\calP(\zeta)}V_{P}^{\pi^{*}}(\bar{\bS}_{0}),\nonumber 
    \end{align}
    where the first inequality follows from the fact that $\hpi$ maximizes $\min_{P\in\calP(\zeta)}V_{P}^{\pi}(\bar{\bS}_{0})$, and the last inequality follows from the fact that $P^{*}\in\calP(\zeta)$. Define $\breve{P}=\argmin_{P\in\calP(\zeta)}V_{P}^{\pi^{*}}(\bar{\bS}_{0})$. Then we have
    \begin{align*}
        V_{P^{*}}^{\pi^{*}}(\bar{\bS}_{0})-V_{P^{*}}^{\hpi}(\bar{\bS}_{0})&\leq V_{P^{*}}^{\pi^{*}}(\bar{\bS}_{0})-V_{\breve{P}}^{\pi^{*}}(\bar{\bS}_{0})\nonumber\\
        &\leq \frac{V_{\max}}{(1-\gamma)^{2}} \bbE_{(\bar{\bS},\bar{\bA})\sim d_{P^{*}}^{\pi^{*}}}\Big[\tv\big(\breve{P}(\cdot\, |\, \bar{\bS},\bar{\bA}), P^{*}(\cdot\, |\,\bar{\bS},\bar{\bA})\big)\Big],\nonumber
    \end{align*}
    where the second inequality follows from Lemma~\ref{lem:simu}. By the Jensen's inequality, it can be further bounded as
    \begin{align*}
        V_{P^{*}}^{\pi^{*}}(\bar{\bS}_{0})-V_{P^{*}}^{\hpi}(\bar{\bS}_{0})&\leq \frac{V_{\max}}{(1-\gamma)^{2}} \sqrt{C_{\calM_{\tf}}\bbE_{(\bar{\bS},\bar{\bA})\sim \nu}\Big[\tv\big(\breve{P}(\cdot\, |\,\bar{\bS},\bar{\bA}), P^{*}(\cdot\, |\,\bar{\bS},\bar{\bA})\big)^{2}\Big]}\nonumber\\ 
        &\leq O\bigg(\frac{V_{\max}}{(1-\gamma)^{2}} \sqrt{C_{\calM_{\tf}}\zeta}\bigg),\nonumber 
    \end{align*}
    where the first inequality follows Jensen's inequality, and the last inequality follows from inequality \eqref{ieq:70}. Therefore, we conclude the proof of Theorem \ref{thm:main_mbased}.
\end{proof}

\section{Proof of Theorem~\ref{thm:main_concen}}\label{app:propmain_concern}
\begin{proof}[Proof of Theorem~\ref{thm:main_concen}]
    We adopt a PAC-Bayesian framework to derive the generalization error bound of the Bellman error of the transformer functions, in which the generalization error is bounded by the Kullback--Leibler divergence between the distributions of functions. Recall that the KL divergence between $P$ and $Q$ is defined as $\kl(P\,\|\,Q)= \int_{\calA} \log(\mathrm{d}P/\mathrm{d}Q)\,\mathrm{d}P$ if $P\ll Q$, and $+\infty$ otherwise. We start with preliminary result. 
    \begin{restatable}{proposition}{pacbayes}\label{prop:pacbayes}
        Let $\calF$ be the collection of functions of $f:\bbR^{n}\rightarrow\bbR$. For any $f\in\calF$, we define
        \begin{align}
            \mu(f)= \bbE_{X}\big[f(X)\big], \quad \sigma^{2}(f)= \bbE_{X}\big[ (f(X)-\bbE_{X} [f(X)] )^{2}\big],\nonumber
        \end{align}
        where the expectation is taken with respect to a random variable $X\sim\nu$ on $(\bbR^{n},\calB(\bbR^{n}))$. 
        Assume that $|f(X)-\mu(f)|\leq b$ a.s. for some constant $b\in\bbR$ for all $f\in\calF$. Then for any $0<\lambda\leq 1/(2b)$, given a distribution $P_{0}$ on $\calF$, with probability at least $1-\delta$, we have
        \begin{align}
            \biggl|\bbE_{Q}\biggl[\bbE_{X}[f(X)]-\frac{1}{n}\sum_{i=1}^{n}f(X_{i})\biggr]\biggr|\leq \lambda \bbE_{Q}\big[\sigma^{2}(f)\big]+\frac{1}{n\lambda}\biggl[\kl(Q\,\|\,P_{0})+\log\frac{2}{\delta}\biggr],\nonumber
        \end{align}
        for any distribution $Q$ on $\calF$, where $X_{i}$ are i.i.d.\ samples of $\nu$. If the function class $\calF$ further satisfies $\sigma^{2}(f)\leq c \mu(f)$ for some constant $c\in\bbR$ for all $f\in\calF$, we have 
        \begin{align}
            \biggl|\bbE_{Q}\biggl[\bbE_{X}\bigl[f(X)\bigr]-\frac{1}{n}\sum_{i=1}^{n}f(X_{i})\biggr]\biggr|\leq \lambda c\bbE_{Q}\big[\mu(f)\big]+\frac{1}{n\lambda}\biggl[\kl(Q\,\|\,P_{0})+\log\frac{2}{\delta}\biggr], \label{ieq:128}
        \end{align}
        with probability at least $1-\delta$.
    \end{restatable}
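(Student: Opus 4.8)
The plan is to prove Proposition~\ref{prop:pacbayes} by the classical change-of-measure (Donsker--Varadhan) argument, combined with a Bernstein-type bound on the moment generating function of bounded centered random variables. Throughout, fix $f\in\calF$ and write $Z_{i}(f)=\mu(f)-f(X_{i})$, so that the $Z_{i}(f)$ are i.i.d., centered, and satisfy $|Z_{i}(f)|\leq b$ almost surely by the boundedness hypothesis.

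\textbf{Step 1: MGF estimate.} I would first show that for every $\lambda$ with $0\leq\lambda\leq 1/(2b)$ and every $f\in\calF$, one has $\bbE_{X}[e^{\lambda Z_{1}(f)}]\leq\exp(\lambda^{2}\sigma^{2}(f))$, and likewise with $Z_{1}(f)$ replaced by $-Z_{1}(f)$. This follows by expanding the exponential, using $\bbE_{X}[Z_{1}(f)]=0$ and the crude bound $|\bbE_{X}[Z_{1}(f)^{k}]|\leq b^{k-2}\sigma^{2}(f)$ for $k\geq 2$, which gives $\bbE_{X}[e^{\lambda Z_{1}(f)}]\leq 1+\lambda^{2}\sigma^{2}(f)\sum_{j\geq 0}(\lambda b)^{j}/(j+2)!$; since $\lambda b\leq 1/2$ the series is at most $1$, and $1+u\leq e^{u}$.

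\textbf{Step 2: an exponential moment identity.} Set $\beta=n\lambda$ and define, for a realization of $X_{1},\ldots,X_{n}$, the function $g(f)=\beta\bigl(\mu(f)-\tfrac{1}{n}\sum_{i=1}^{n}f(X_{i})\bigr)-n\lambda^{2}\sigma^{2}(f)=\lambda\sum_{i=1}^{n}Z_{i}(f)-n\lambda^{2}\sigma^{2}(f)$. Using independence of the $X_{i}$ together with Step 1, $\bbE_{X_{1:n}}[e^{\lambda\sum_{i}Z_{i}(f)}]=\prod_{i}\bbE_{X}[e^{\lambda Z_{i}(f)}]\leq e^{n\lambda^{2}\sigma^{2}(f)}$, hence $\bbE_{X_{1:n}}[e^{g(f)}]\leq 1$ for every $f$. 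Integrating over $f\sim P_{0}$ and applying Fubini gives $\bbE_{X_{1:n}}\bbE_{P_{0}}[e^{g(f)}]\leq 1$, so by Markov's inequality, with probability at least $1-\delta/2$ we have $\bbE_{P_{0}}[e^{g(f)}]\leq 2/\delta$.

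\textbf{Step 3: change of measure and conclusion.} On that event, apply the Donsker--Varadhan variational inequality $\bbE_{Q}[h]\leq\kl(Q\,\|\,P_{0})+\log\bbE_{P_{0}}[e^{h}]$ with $h=g$ (valid for every distribution $Q$ on $\calF$, both sides being $+\infty$ when $Q\not\ll P_{0}$): this yields $\bbE_{Q}[g(f)]\leq\kl(Q\,\|\,P_{0})+\log(2/\delta)$, i.e.\ $n\lambda\bbE_{Q}[\mu(f)-\tfrac{1}{n}\sum_{i}f(X_{i})]\leq n\lambda^{2}\bbE_{Q}[\sigma^{2}(f)]+\kl(Q\,\|\,P_{0})+\log(2/\delta)$; dividing by $n\lambda$ gives the one-sided bound. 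Running the identical argument of Step 2 with $-Z_{i}(f)$ in place of $Z_{i}(f)$ produces the reverse one-sided bound on a second event of probability at least $1-\delta/2$; a union bound over the two events yields the two-sided inequality with probability at least $1-\delta$. Finally, under the extra hypothesis $\sigma^{2}(f)\leq c\,\mu(f)$ for all $f$, linearity of expectation gives $\bbE_{Q}[\sigma^{2}(f)]\leq c\,\bbE_{Q}[\mu(f)]$, and substituting this into the bound produces~\eqref{ieq:128}. The only genuinely analytic ingredient, and the main point to get right, is the MGF estimate of Step 1 — in particular checking that the constant multiplying $\lambda^{2}\sigma^{2}(f)$ can be taken to be exactly $1$ on the range $\lambda b\leq 1/2$, which is what forces the hypothesis $\lambda\leq 1/(2b)$ and makes the cancellation $\bbE_{X_{1:n}}[e^{g(f)}]\leq 1$ exact rather than merely bounded; the rest (Fubini, Markov, the variational formula, the union bound, and noting that the two ``bad events'' are measurable functions of $X_{1},\ldots,X_{n}$ alone) is routine bookkeeping.
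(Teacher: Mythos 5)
Your proposal is correct and follows essentially the same route as the paper's proof: a Bernstein-type MGF bound $\bbE_{X}[e^{\lambda Z}]\leq e^{\lambda^{2}\sigma^{2}(f)}$ for $\lambda b\leq 1/2$ (which the paper simply cites from a standard reference rather than deriving by series expansion as you do), followed by Fubini and Markov to control $\bbE_{P_{0}}[e^{n\varepsilon_{n}}]$, the Donsker--Varadhan change of measure to pass to an arbitrary posterior $Q$, a union bound over the two signs, and finally substitution of $\sigma^{2}(f)\leq c\,\mu(f)$. No gaps; the only difference is that you make the MGF step explicit.
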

    \begin{proof}
        See Appendix~\ref{app:proppacbayes} for a detailed proof.
    \end{proof}
    
    Our proof can be decomposed into four main parts.
    \begin{itemize}
        \item We verify that the Bellman error satisfies the conditions in Proposition~\ref{prop:pacbayes} and apply it to the Bellman error.
        \item Since the desired result is a point-wise generalization error bound, we need to control he fluctuation of   both sides of inequality \eqref{ieq:128} with respect to any pair of functions  $(f,\tilde{f})\in\calF_{\tf} \times \calF_{\tf}.$
        \item We specify two distributions $Q$ and $P_{0}$ and calculate $\kl(Q\,\|\,P_{0})$.
        \item We implement a standard covering argument to prove the result that holds for all the policies in $\Pi$.
    \end{itemize}
    
    \textbf{Step 1: Verify the conditions in Proposition~\ref{prop:pacbayes}}
    
    Let $\bX=(\bar{\bS},\bar{\bA},\bar{\bS}^{\prime})$ for all $f,\tilde{f}\in\calF_{\tf}(B_{a},B_{b},B_{QK},B_{V},B_{w})$. We define
    \begin{align}
        l(f,\tilde{f},\pi;\bX)=\big(f(\bar{\bS},\bar{\bA})-\barr(\bar{\bS},\bar{\bA})-\gamma \tilde{f}(\bar{\bS}^{\prime},\pi)\big)^{2}-\big(\calT^{\pi}\tilde{f}(\bar{\bS},\bar{\bA})-\barr(\bar{\bS},\bar{\bA})-\gamma \tilde{f}(\bar{\bS}^{\prime},\pi)\big)^{2}.\nonumber
    \end{align}
    Then the term we consider in Theorem~\ref{thm:main_concen} can be expressed as
    \begin{align}
        \calL(f,\tilde{f},\pi;\calD)-\calL(\calT^{\pi}\tilde{f},\tilde{f},\pi;\calD)=\frac{1}{n}\sum_{i=1}^{n}l(f,\tilde{f},\pi;\bX_{i})\text{ and }\big|l(f,\tilde{f},\pi;\bX)\big|\leq 4V_{\max}^{2}.\nonumber
    \end{align}
    Since $(\bar{\bS}_{i},\bar{\bA}_{i})$ is sampled from $\nu$, and $\bar{\bS}_{i}^{\prime}\sim\bar{P^{*}}(\cdot \,| \,\bar{\bS}_{i},\bar{\bA}_{i})$, we have $(\bar{\bS}_{i},\bar{\bA}_{i},\bar{\bS}_{i}^{\prime})\sim \nu\times \bar{P^{*}}$, i.e., $\bX_{i}\sim \nu\times \bar{P^{*}}$ for $i\in[N]$. Then the expectation of $l(f,\tilde{f},\pi;\bX)$ is
    \begin{align}
        &\bbE_{\nu\times \bar{P^{*}}}\big[l(f,\tilde{f},\pi;\bX)\big]\nonumber\\
        &\quad=\bbE_{\nu\times \bar{P^{*}}}\Big[\big(f(\bar{\bS},\bar{\bA})-\calT^{\pi}\tilde{f}(\bar{\bS},\bar{\bA})\big)\big(f(\bar{\bS},\bar{\bA})+\calT^{\pi}\tilde{f}(\bar{\bS},\bar{\bA})-2\barr-2\gamma \tilde{f}(\bar{\bS}^{\prime},\pi)\big)\Big]\nonumber\\
        &\quad=\bbE_{\nu}\bigg[\bbE_{P^{*}}\Big[\big(f(\bar{\bS},\bar{\bA})-\calT^{\pi}\tilde{f}(\bar{\bS},\bar{\bA})\big)\big(f(\bar{\bS},\bar{\bA})+\calT^{\pi}\tilde{f}(\bar{\bS},\bar{\bA})-2\barr-2\gamma \tilde{f}(\bar{\bS}^{\prime},\pi)\big)\,\Big|\,\bar{\bS},\bar{\bA}\Big]\bigg]\nonumber\\
        &\quad=\bbE_{\nu}\Big[\big(f(\bar{\bS},\bar{\bA})-\calT^{\pi}\tilde{f}(\bar{\bS},\bar{\bA})\big)^{2}\Big],\nonumber
    \end{align}
    where the last equality follows from the definition of the Bellman operator. As a consequence, the variance of $l(f,\tilde{f},\pi;\bX)$ can be bounded by its expectation as
    \begin{align}
        &{\rm Var}\big(l(f,\tilde{f},\pi;\bX)\big)\nonumber\\*
        &\quad\leq \bbE_{\nu\times \bar{P^{*}}}\Big[\big(l(f,\tilde{f},\pi;\bX)\big)^{2}\Big]\nonumber\\
        &\quad=\bbE_{\nu}\bigg[\bbE_{P^{*}}\Big[\big(f(\bar{\bS},\bar{\bA})-\calT^{\pi}\tilde{f}(\bar{\bS},\bar{\bA})\big)^{2}\big(f(\bar{\bS},\bar{\bA})+\calT^{\pi}\tilde{f}(\bar{\bS},\bar{\bA})-2\barr-2\gamma \tilde{f}(\bar{\bS}^{\prime},\pi)\big)^{2}\,\Big|\,\bar{\bS},\bar{\bA}\Big]\bigg]\nonumber\\
        &\quad\leq 16V_{\max}^{2}\bbE_{\nu}\bigg[\bbE_{P^{*}}\Big[\big(f(\bar{\bS},\bar{\bA})-\calT^{\pi}\tilde{f}(\bar{\bS},\bar{\bA})\big)^{2}\,\Big|\,\bar{\bS},\bar{\bA}\Big]\bigg]\nonumber\\
        &\quad=16V_{\max}^{2}\bbE_{\nu\times \bar{P^{*}}}\big[l(f,\tilde{f},\pi;\bX)\big],\label{ieq:90}
    \end{align}
    where the last inequality follows from the fact that $f$ and $\tilde{f}$ is bounded by $V_{\max}$. Inequality \eqref{ieq:90} shows that $l(f,\tilde{f},\pi;\bX)$ satisfies the condition in Proposition~\ref{prop:pacbayes} with $b=4V_{\max}^{2}$ and $c=16V_{\max}^{2}$. In the following, we apply Proposition~\ref{prop:pacbayes} to $l(f,\tilde{f},\pi;\bX)$.
    
    For ease of notation, we denote the parameters of the neural network as \begin{align}
        \btheta= [\bW_{QK}^{1:L},\bW_{V}^{1:L},\ba^{1:L},\bb^{1:L},\bw].\nonumber
    \end{align}
    The parameter space is 
    \begin{align}
        \Theta(B_{a},B_{b},B_{QK},B_{V},B_{w})&=\Big\{\btheta \, \Big| \, \big|a_{kj}^{(i)}\big|< B_{a}, \big\|\bb_{kj}^{(i)}\big\|_{q}< B_{b}, \big\|\bW_{QK}^{(i)\top}\big\|_{p,q}< B_{QK},\nonumber\\
        &\qquad\big\|\bW_{V}^{(i)\top}\big\|_{p,q}< B_{V},\|\bw\|_{q}< B_{w} \text{ for } i\in[L],j\in[m],k\in[d]\Big\}. \nonumber
    \end{align}
     We denote the functions in  $\calF_{\tf}(B_{a},B_{b},B_{QK},B_{V},B_{w})$ equivalently as $f_{\btheta}$, where $\btheta\in\Theta$ is the parameter of the function.
    
    For a finite policy class $\tilde{\Pi}$ (which is set to be a cover of the original policy class $\Pi$ in Step 4), Proposition~\ref{prop:pacbayes} shows that: Given a distribution $P_{0}$ of $(\btheta,\btheta^{\prime})$ on $\Theta\times\Theta$, for all distribution $Q$ on $\Theta\times\Theta$ and any policy $\pi\in\tilde{\Pi}$, with probability at least $1-\delta$, we have
    \begin{align}
        &\biggl|\bbE_{Q}\biggl[\bbE_{\nu\times\bar{P^{*}}}\bigl[l(f_{\btheta},f_{\btheta^{\prime}},\pi;\bX)\bigr]-\frac{1}{n}\sum_{i=1}^{n}l(f_{\btheta},f_{\btheta^{\prime}},\pi;\bX_{i})\biggr]\biggr|\nonumber\\
        &\quad\leq 16V_{\max}^{2}\lambda \cdot\bbE_{Q,\nu\times\bar{P^{*}}}\big[l(f_{\btheta},f_{\btheta^{\prime}},\pi;\bX)\big]+\frac{1}{n\lambda}\bigg[\kl(Q\,\|\,P_{0})+\log\frac{2|\tilde{\Pi}|}{\delta}\bigg],\label{ieq:37}
    \end{align}
    where $\lambda\leq 1/(8V_{\max}^{2})$.
    
    \textbf{Step 2: Control the fluctuation of both sides of inequality \eqref{ieq:37} introduced by $Q$.}
    
    To derive a generalization error bound for any function pair $(\btheta,\btheta^{\prime})$ in $\calF_{\tf}\times\calF_{\tf}$, we set $Q$ as the uniform distribution on a neighborhood area of $(\btheta,\btheta^{\prime})$ , $P_{0}$ as the uniform distribution $\Theta\times\Theta$, and control the fluctuation of the left-hand side of inequality \eqref{ieq:37} due to the averaging according to $Q$.
    
    We define the difference between the functions of different parameter pairs $(\tilde{\btheta},\tilde{\btheta}^{\prime})$ and $(\btheta,\btheta^{\prime})$ as
    \begin{align}
        e(\tilde{\btheta},\tilde{\btheta}^{\prime},\btheta,\btheta^{\prime},\bX)&= l(f_{\tilde{\btheta}},f_{\tilde{\btheta}^{\prime}},\pi;\bX)-l(f_{\btheta},f_{\btheta^{\prime}},\pi;\bX).\nonumber
    \end{align}
    To control the fluctuation of the left-hand side of inequality \eqref{ieq:37} due to the average according to $Q$, we need to upper bound $e(\tilde{\btheta},\tilde{\btheta}^{\prime},\btheta,\btheta^{\prime},\bX)$ for all $\bX\in\bbR^{N\times d}$, which can be achieved by the following result.
    \begin{proposition}\label{prop:nnerror}
        For any input $\bX\in\bbR^{N\times d}$, any functions  $g_{\tf}(\bX;\bW_{QK}^{1:L},\bW_{V}^{1:L},\ba^{1:L},\bb^{1:L},\bw)$ and $g_{\tf}(\bX;\tilde{\bW}_{QK}^{1:L},\tilde{\bW}_{V}^{1:L},\tilde{\ba}^{1:L},\tilde{\bb}^{1:L},\tilde{\bw})\in \calF_{\tf}(B_{a},B_{b},B_{QK},B_{V},B_{w})$, and two positive conjugate numbers $p,q\in\bbR$, we have
        \begin{align}
            &\big|g_{\tf}(\bX;\bW_{QK}^{1:L},\bW_{V}^{1:L},\ba^{1:L},\bb^{1:L},\bw)-g_{\tf}(\bX;\tilde{\bW}_{QK}^{1:L},\tilde{\bW}_{V}^{1:L},\tilde{\ba}^{1:L},\tilde{\bb}^{1:L},\tilde{\bw})\big|\nonumber\\
            &\quad\leq \sum_{i=1}^{L}\alpha_{i}\cdot(\beta_{i}+\iota_{i}+\kappa_{i}+\rho_{i})+\|\bw-\tilde{\bw}\|_{q},\nonumber
        \end{align}
        where
        \begin{align}
            \alpha_{i}&=B_{w}\big[B_{V}(1+4c_{p,q}B_{QK})+d^{\frac{1}{p}}mB_{a}B_{b}\big]^{L-i},\nonumber\\
            \beta_{i}&=2c_{p,q}B_{V}\|\bW_{QK}^{(i)\top}-\tilde{\bW}_{QK}^{(i)\top}\|_{p,q},\nonumber\\
            \iota_{i}&=\|\bW_{V}^{(i)\top}-\tilde{\bW}_{V}^{(i)\top}\|_{p,q},\nonumber\\
            \kappa_{i}&=B_{b}\bigg[\sum_{k=1}^{d}\bigg(\sum_{j=1}^{m}\big|a_{kj}^{(i)}-\tilde{a}_{kj}^{(i)}\big|\bigg)^{p}\bigg]^{\frac{1}{p}},\nonumber\\
            \rho_{i}&=B_{a}\bigg[\sum_{k=1}^{d}\bigg(\sum_{j=1}^{m}\big\|\bb_{kj}^{(i)}-\tilde{\bb}_{kj}^{(i)}\big\|_{q}\bigg)^{p}\bigg]^{\frac{1}{p}},\nonumber
        \end{align}
        for $i\in[L]$.
    \end{proposition}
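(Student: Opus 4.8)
\emph{Proof strategy.} Write $\btheta=(\bW_{QK}^{1:L},\bW_{V}^{1:L},\ba^{1:L},\bb^{1:L},\bw)$ and $\tilde{\btheta}=(\tilde{\bW}_{QK}^{1:L},\ldots,\tilde{\bw})$ for the two parameter tuples, both lying in $\calF_{\tf}(B_a,B_b,B_{QK},B_V,B_w)$. The plan is a layerwise telescoping (hybrid) argument: change the parameters of one transformer layer at a time, bound how much that layer's output moves, propagate this through the remaining layers using a uniform per-layer input-Lipschitz constant, and close with the read-out $\tfrac1N\sum_{n=1}^N\langle[\bG_{\tf}^{(L)}]_{n,:},\bw\rangle$ followed by the clipping $\Pi_{V_{\max}}$.

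\emph{Elementary ingredients.} First I would record: $\relu$, the clipping $\Pi_{V_{\max}}$, and the row-wise normalization $\Pi_{\rm{norm}}$ are $1$-Lipschitz in the relevant norms, and, since they are outputs of $\Pi_{\rm{norm}}$, every row of $\bG_{\tf}^{(i)}$ and of its $\tilde{\btheta}$-counterpart has $\ell_p$-norm at most $1$ for all $i\ge 0$; the row-wise softmax obeys $\|\SM(\bx)-\SM(\by)\|_1\le 2\|\bx-\by\|_\infty$; and Hölder's inequality with the conjugate pair $p,q$ together with the definition of $\|\cdot\|_{p,q}$ yields the bilinear and linear estimates $|\bx^{\top}M\by|\le c_{p,q}\|M^{\top}\|_{p,q}\|\bx\|_p\|\by\|_p$ and $\|\bx^{\top}M\|_p\le c_{p,q}\|M^{\top}\|_{p,q}\|\bx\|_p$, where $c_{p,q}$ absorbs the constants relating the $\ell_p$-, $\ell_q$- and $\ell_{p,q}$-norms.

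\emph{Two per-layer estimates.} I would then prove, for a single layer $\bG\mapsto\Pi_{\rm{norm}}[\Att(\bG\bW_{QK},\bG,\bG\bW_{V})+\rff(\bG,\ba,\bb)]$ with parameters in the prescribed balls and inputs whose rows lie in the unit $\ell_p$-ball: (i) \emph{input-Lipschitzness} --- changing only the input from $\bG$ to $\bG'$ moves the output rows by at most $L_{\max}\max_n\|[\bG-\bG']_{n,:}\|_p$, where $L_{\max}=B_V(1+4c_{p,q}B_{QK})+d^{1/p}mB_aB_b$; and (ii) \emph{parameter sensitivity} --- changing only this layer's parameters moves the output rows by at most $\beta_i+\iota_i+\kappa_i+\rho_i$. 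For (i), I decompose the difference of two attention output rows as a softmax-weighted average of value-row differences plus a difference of softmax-weight vectors tested against the values: the first summand is at most $B_V\max_m\|[\bG-\bG']_{m,:}\|_p$; for the second, the bilinearity of $\bG\mapsto\bG\bW_{QK}\bG^{\top}$ produces \emph{two} copies of the perturbation, so the softmax logits move by at most $2c_{p,q}B_{QK}\max_n\|[\bG-\bG']_{n,:}\|_p$ in $\ell_\infty$, and multiplying by the softmax constant $2$ and the value-row bound $B_V$ gives the $4c_{p,q}B_{QK}B_V$ contribution. The $\rff$ part contributes $d^{1/p}mB_aB_b$ since $|[\rff(\bx,\ba,\bb)]_k-[\rff(\bx',\ba,\bb)]_k|\le\sum_j|a_{kj}|\,|\bb_{kj}^{\top}(\bx-\bx')|\le mB_aB_b\|\bx-\bx'\|_p$ for each $k\in[d]$; nonexpansiveness of $\Pi_{\rm{norm}}$ lets me add the attention and $\rff$ pieces. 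For (ii), with the input fixed: $\iota_i=\|\bW_{V}^{(i)\top}-\tilde{\bW}_{V}^{(i)\top}\|_{p,q}$ is exactly the value-row perturbation (softmax weights unchanged and summing to one); $\beta_i$ is the softmax-logit perturbation $c_{p,q}\|\bW_{QK}^{(i)\top}-\tilde{\bW}_{QK}^{(i)\top}\|_{p,q}$ (now only \emph{one} copy of the perturbation) amplified by the softmax constant $2$ and the value bound $B_V$; and $\kappa_i,\rho_i$ come from $\rff$ via $|\relu(\bb_{kj}^{\top}\bx)|\le B_b$ and $|\relu(\bb_{kj}^{\top}\bx)-\relu(\tilde{\bb}_{kj}^{\top}\bx)|\le\|\bb_{kj}-\tilde{\bb}_{kj}\|_q$, summed in the $\ell_1$-over-$j$ then $\ell_p$-over-$k$ pattern displayed in the statement.

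\emph{Telescoping and the main obstacle.} Let $\bG_{\tf}^{(i)}$ and $\tilde{\bG}_{\tf}^{(i)}$ be the $i$-th layer outputs under $\btheta$ and $\tilde{\btheta}$, and $\Delta^{(i)}=\max_n\|[\bG_{\tf}^{(i)}-\tilde{\bG}_{\tf}^{(i)}]_{n,:}\|_p$, so $\Delta^{(0)}=0$ because both equal $\Pi_{\rm{norm}}(\bX)$. Estimates (i) and (ii) with the triangle inequality give $\Delta^{(i)}\le L_{\max}\Delta^{(i-1)}+(\beta_i+\iota_i+\kappa_i+\rho_i)$, whence $\Delta^{(L)}\le\sum_{i=1}^{L}L_{\max}^{L-i}(\beta_i+\iota_i+\kappa_i+\rho_i)$ by unrolling. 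Then $1$-Lipschitzness of $\Pi_{V_{\max}}$, the split $\langle[\bG_{\tf}^{(L)}]_{n,:},\bw\rangle-\langle[\tilde{\bG}_{\tf}^{(L)}]_{n,:},\tilde{\bw}\rangle=\langle[\bG_{\tf}^{(L)}-\tilde{\bG}_{\tf}^{(L)}]_{n,:},\bw\rangle+\langle[\tilde{\bG}_{\tf}^{(L)}]_{n,:},\bw-\tilde{\bw}\rangle$, $\|\bw\|_q<B_w$, and $\|[\tilde{\bG}_{\tf}^{(L)}]_{n,:}\|_p\le 1$ give $|g_{\tf}(\bX;\btheta)-g_{\tf}(\bX;\tilde{\btheta})|\le B_w\Delta^{(L)}+\|\bw-\tilde{\bw}\|_q$, which is the claim once $\alpha_i=B_wL_{\max}^{L-i}$. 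The main obstacle is the attention module in (i) and (ii): making the softmax Lipschitz bound interact correctly with the \emph{quadratic} map $\bG\mapsto\bG\bW_{QK}\bG^{\top}$ --- this is precisely what distinguishes the factor $4$ in $L_{\max}$ (two perturbed copies of $\bG$) from the factor $2$ in $\beta_i$ (one perturbed copy) --- and verifying that the various Hölder conversions between $\ell_p$, $\ell_q$ and $\ell_{p,q}$ collapse into the single constant $c_{p,q}$ and the exact coefficients stated. The $\rff$ bookkeeping and the telescoping recursion are then routine.
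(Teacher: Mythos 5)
Your proposal is correct and follows essentially the same route as the paper's proof: the paper likewise splits each layer's perturbation into an input-Lipschitz part (its Proposition on the attention map, giving the constant $B_V(1+4c_{p,q}B_{QK})+d^{1/p}mB_aB_b$ via the softmax Lipschitz bound and the two perturbed copies of $\bG$ in the quadratic logit map) and a parameter-sensitivity part (the $\beta_i,\iota_i,\kappa_i,\rho_i$ terms, with only one perturbed copy in the logits), then unrolls the resulting recursion using the unit $\ell_p$-row bound from the normalization, and closes with the readout split $\langle \bG^{(L)}_{n,:},\bw\rangle-\langle\tilde\bG^{(L)}_{n,:},\tilde\bw\rangle$ and the nonexpansiveness of $\Pi_{V_{\max}}$. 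Your accounting of the factor $4$ versus $2$ and of the $c_{p,q}$ conversions matches the paper's supporting lemmas exactly.
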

    \begin{proof}
        See Appendix~\ref{app:propnnerror} for a detailed proof.
    \end{proof}
    Motivated by Proposition~\ref{prop:nnerror}, we define the upper bound of the difference of functions in $\calF_{\tf}$ with different parameters $\btheta$ and $\tilde{\btheta}$ as
    \begin{align}
        \Delta(\btheta,\tilde{\btheta})&=\sum_{i=1}^{L}B_{w}\big[B_{V}(1+4c_{p,q}B_{QK})+d^{\frac{1}{p}}mB_{a}B_{b}\big]^{L-i}\bigg\{2c_{p,q}B_{V}\|\bW_{QK}^{(i)\top}-\tilde{\bW}_{QK}^{(i)\top}\|_{p,q}\nonumber\\
        &\qquad+\|\bW_{V}^{(i)\top}-\tilde{\bW}_{V}^{(i)\top}\|_{p,q}+B_{b}\bigg[\sum_{k=1}^{d}\bigg(\sum_{j=1}^{m}\big|a_{kj}^{(i)}-\tilde{a}_{kj}^{(i)}\big|\bigg)^{p}\bigg]^{\frac{1}{p}}\nonumber\\
        &\qquad+B_{a}\bigg[\sum_{k=1}^{d}\bigg(\sum_{j=1}^{m}\big\|\bb_{kj}^{(i)}-\tilde{\bb}_{kj}^{(i)}\big\|_{q}\bigg)^{p}\bigg]^{\frac{1}{p}}\bigg\}+\|\bw-\tilde{\bw}\|_{q}.\nonumber
    \end{align}
    Then we can upper bound the absolute value of $e(\tilde{\btheta},\tilde{\btheta}^{\prime},\btheta,\btheta^{\prime},\bX)$ as
    \begin{align}
        &\big|e(\tilde{\btheta},\tilde{\btheta}^{\prime},\btheta,\btheta^{\prime},\bX)\big|\nonumber\\
        &\quad\leq\Big|\big(f_{\tilde{\btheta}}(\bar{\bS},\bar{\bA})-\barr(\bar{\bS},\bar{\bA})-\gamma f_{\tilde{\btheta}^{\prime}}(\bar{\bS}^{\prime},\pi)\big)^{2}-\big(f_{\btheta}(\bar{\bS},\bar{\bA})-\barr(\bar{\bS},\bar{\bA})-\gamma f_{\btheta^{\prime}}(\bar{\bS}^{\prime},\pi)\big)^{2}\Big|\nonumber\\ 
        &\quad\qquad+\Big|\big(\calT^{\pi}f_{\tilde{\btheta}^{\prime}}(\bar{\bS},\bar{\bA})-\barr(\bar{\bS},\bar{\bA})-\gamma f_{\tilde{\btheta}^{\prime}}(\bar{\bS}^{\prime},\pi)\big)^{2}-\big(\calT^{\pi}f_{\btheta^{\prime}}(\bar{\bS},\bar{\bA})-\barr(\bar{\bS},\bar{\bA})-\gamma f_{\btheta^{\prime}}(\bar{\bS}^{\prime},\pi)\big)^{2}\Big|\nonumber\\
        &\quad\leq 4V_{\max}\big(\Delta(\tilde{\btheta},\btheta)+3\gamma\Delta(\tilde{\btheta}^{\prime},\btheta^{\prime})\big),\label{ieq:33}
    \end{align}
    where the first inequality follows from the triangle inequality, and the second inequality follows from that $f_{\btheta}\in[-V_{\max},V_{\max}]$ and $r\in[-R_{\max},R_{\max}]$.
    For any fixed pair of parameters $(\btheta,\btheta^{\prime})$, using inequality \eqref{ieq:33}, we can upper bound the generalization error for a fixed parameter pair $(\btheta,\btheta^{\prime})$ by the left-hand side of  inequality \eqref{ieq:37} as
    \begin{align}
        &\biggl|\bbE_{\nu\times\bar{P^{*}}}\bigl[l(f_{\btheta},f_{\btheta^{\prime}},\pi;\bX)\bigr]-\frac{1}{n}\sum_{i=1}^{n}l(f_{\btheta},f_{\btheta^{\prime}},\pi;\bX_{i})\biggr|\nonumber\\
        &\quad\leq \biggl|\bbE_{(\tilde{\btheta},\tilde{\btheta}^{\prime})\sim Q}\biggl[\bbE_{\nu\times\bar{P^{*}}}\bigl[l(f_{\tilde{\btheta}},f_{\tilde{\btheta}^{\prime}},\pi;\bX)\bigr]-\frac{1}{n}\sum_{i=1}^{n}l(f_{\tilde{\btheta}},f_{\tilde{\btheta}^{\prime}},\pi;\bX_{i})\biggr]\biggr|\nonumber\\
        &\quad\qquad +\bigg|\bbE_{(\tilde{\btheta},\tilde{\btheta}^{\prime})\sim Q}\biggl[\bbE_{\nu\times\bar{P^{*}}}\bigl[e(\tilde{\btheta},\tilde{\btheta}^{\prime},\btheta,\btheta^{\prime},\bX)\bigr]-\frac{1}{n}\sum_{i=1}^{n}e(\tilde{\btheta},\tilde{\btheta}^{\prime},\btheta,\btheta^{\prime},\bX_{i})\biggr]\bigg|\nonumber\\
        &\quad\leq\biggl|\bbE_{(\tilde{\btheta},\tilde{\btheta}^{\prime})\sim Q}\biggl[\bbE_{\nu\times\bar{P^{*}}}\bigl[l(f_{\tilde{\btheta}},f_{\tilde{\btheta}^{\prime}},\pi;\bX)\bigr]-\frac{1}{n}\sum_{i=1}^{n}l(f_{\tilde{\btheta}},f_{\tilde{\btheta}^{\prime}},\pi;\bX_{i})\biggr]\biggr|\nonumber\\
        &\quad\qquad+ 8V_{\max}\bbE_{(\tilde{\btheta},\tilde{\btheta}^{\prime})\sim Q}\big[\Delta(\tilde{\btheta},\btheta)+3\gamma\Delta(\tilde{\btheta}^{\prime},\btheta^{\prime})\big]\label{ieq:35}.
    \end{align}
    Similarly, for a fixed parameter pair of parameters$(\btheta,\btheta^{\prime})$, the first term in the right-hand side of inequality \eqref{ieq:37} can be upper bounded as
    \begin{align}
        &\bbE_{(\tilde{\btheta},\tilde{\btheta}^{\prime})\sim Q}\Big[\bbE_{\nu\times\bar{P^{*}}}\big[l(f_{\tilde{\btheta}},f_{\tilde{\btheta}^{\prime}},\pi;\bX)\big]\Big]\nonumber\\
        &\quad\leq \bbE_{\nu\times\bar{P^{*}}}[l(f_{\btheta},f_{\btheta^{\prime}},\pi;\bX)]+4V_{\max}\bbE_{(\tilde{\btheta},\tilde{\btheta}^{\prime})\sim Q}\big[\Delta(\tilde{\btheta},\btheta)+3\gamma\Delta(\tilde{\btheta}^{\prime},\btheta^{\prime})\big]\label{ieq:36}.
    \end{align}
    Substituting Eqn.~\eqref{ieq:35} and \eqref{ieq:36} into Eqn.~\eqref{ieq:37}, we derive that : Given a distribution $P_{0}$ of $(\btheta,\btheta^{\prime})$ on $\Theta\times\Theta$, for all distribution $Q$ on $\Theta\times\Theta$, any policy $\pi\in\tilde{\Pi}$ and any $(\btheta,\btheta^{\prime})\in\Theta\times\Theta$, with probability at least $1-\delta$, we have, 
    \begin{align}
        &\biggl|\bbE_{\nu\times\bar{P^{*}}}\bigl[l(f_{\btheta},f_{\btheta^{\prime}},\pi;\bX)\bigr]-\frac{1}{n}\sum_{i=1}^{n}l(f_{\btheta},f_{\btheta^{\prime}},\pi;\bX_{i})\biggr|\nonumber\\
        &\quad\leq V_{\max}(64V_{\max}^{2}\lambda+8)\bbE_{(\tilde{\btheta},\tilde{\btheta}^{\prime})\sim Q}\big[\Delta(\tilde{\btheta},\btheta)+3\gamma\Delta(\tilde{\btheta}^{\prime},\btheta^{\prime})\big]+16V_{\max}^{2}\lambda\bbE_{\nu\times\bar{P^{*}}}\bigl[l(f_{\btheta},f_{\btheta^{\prime}},\pi;\bX)\bigr]\nonumber\\
        &\quad\qquad+\frac{1}{n\lambda}\biggl[\kl(Q\,\|\,P_{0})+\log\frac{2|\tilde{\Pi}|}{\delta}\biggr],\nonumber
    \end{align}
    where $\lambda\leq 1/(8V_{\max}^{2})$. We take $\lambda=1/(32V_{\max}^{2})$, then
    \begin{align}
        &\biggl|\bbE_{\nu\times\bar{P^{*}}}\bigl[l(f_{\btheta},f_{\btheta^{\prime}},\pi;\bX)\bigr]-\frac{1}{n}\sum_{i=1}^{n}l(f_{\btheta},f_{\btheta^{\prime}},\pi;\bX_{i})\biggr|\nonumber\\
        &\quad\leq 10V_{\max}\bbE_{(\tilde{\btheta},\tilde{\btheta}^{\prime})\sim Q}\big[\Delta(\tilde{\btheta},\btheta)+3\gamma\Delta(\tilde{\btheta}^{\prime},\btheta^{\prime})\big]+\frac{1}{2}\bbE_{\nu\times\bar{P^{*}}}\bigl[l(f_{\btheta},f_{\btheta^{\prime}},\pi;\bX)\bigr]\nonumber\\
        &\quad\qquad+\frac{32V_{\max}^{2}}{n}\bigg[\kl(Q\,\|\,P_{0})+\log\frac{2|\tilde{\Pi}|}{\delta}\bigg].\label{ieq:38}
    \end{align}
    
    \textbf{Step 3: Specify the distributions $P_{0}$ and $Q$ on the function class $\calF_{\tf}$.}
    
    For a fixed parameters pair $(\btheta,\btheta^{\prime})$, we set $P_{0}$ as the product of the uniform distribution of each parameter on the whole space and $Q$ as the product of the uniform distribution of each parameter on the neighborhood around $(\btheta,\btheta^{\prime})$, i.e.,
    \begin{align}
        P_{0}=&\bigg\{\U\big(\bbB(0,B_{w},\|\cdot\|_{q})\big)\cdot\prod_{i=1}^{L}\bigg[\U\big(\bbB(0,B_{QK},\|\cdot\|_{p,q})\big)\cdot\U\big(\bbB(0,B_{V},\|\cdot\|_{p,q})\big)\nonumber\\
        &\qquad\cdot\Big(\U\big(\bbB(0,B_{a},|\cdot|)\big)\cdot\U\big(\bbB(0,B_{b},\|\cdot\|_{q})\big)\Big)^{md}\bigg]\bigg\}^{2},\quad \text{ and}\nonumber\\
        Q=&\bigg\{\U\big(\bbB(\bw,\varepsilon_{w},\|\cdot\|_{q})\big)\cdot\prod_{i=1}^{L}\bigg[\U\big(\bbB(\bW_{QK}^{(i)\top},\varepsilon_{QK}^{(i)},\|\cdot\|_{p,q})\big)\cdot\U\big(\bbB(\bW_{V}^{(i)\top},\varepsilon_{V}^{(i)},\|\cdot\|_{p,q})\big)\nonumber\\
        &\qquad\cdot\prod_{j\in[m],k\in[d]}\Big(\U\big(\bbB(a_{kj}^{(i)},\varepsilon_{a,kj}^{(i)},|\cdot|)\big)\cdot\U\big(\bbB(\bb_{kj}^{(i)},\varepsilon_{b,kj}^{(i)},\|\cdot\|_{q})\big)\Big)\bigg]\bigg\}\nonumber\\
        &\qquad\cdot\bigg\{\U\big(\bbB(\bw^{\prime},\varepsilon_{w},\|\cdot\|_{q})\big)\cdot\prod_{i=1}^{L}\bigg[\U\big(\bbB(\bW_{QK}^{\prime(i)\top},\varepsilon_{QK}^{(i)},\|\cdot\|_{p,q})\big)\cdot\U\big(\bbB(\bW_{V}^{\prime(i)\top},\varepsilon_{V}^{(i)},\|\cdot\|_{p,q})\big)\nonumber\\
        &\qquad\cdot\prod_{j\in[m],k\in[d]}\Big(\U\big(\bbB(a_{kj}^{(i)\prime},\varepsilon_{a,kj}^{(i)},|\cdot|)\big)\cdot\U\big(\bbB(\bb_{kj}^{(i)\prime},\varepsilon_{b,kj}^{(i)},\|\cdot\|_{q})\big)\Big)\bigg]\bigg\}\nonumber
    \end{align}
    where $\bbB(\bx,r,\|\cdot\|)$ denotes the ball $\{\by\, |\, \|\by-\bx\|<r\}$ in some metric space $(\calX,\|\cdot\|)$, and $\U(\cdot)$ denotes the uniform distribution on some set.
    For a constant $C>0$, we define $\varepsilon_{w} = \Delta= C/[(1+3\gamma)(4L+1)n]$. For $i\in[L]$, $j\in[m]$ and $k\in[d]$, we set 
    \begin{align}
        \varepsilon_{QK}^{(i)}&=\big(2c_{p,q}B_{V}B_{w}\big)^{-1}\big[B_{V}(1+4c_{p,q}B_{QK})+d^{\frac{1}{p}}mB_{a}B_{b}\big]^{-L+i}\Delta,\nonumber\\
        \varepsilon_{V}^{(i)}&=B_{w}^{-1}\big[B_{V}(1+4c_{p,q}B_{QK})+d^{\frac{1}{p}}mB_{a}B_{b}\big]^{-L+i}\Delta,\nonumber\\
        \varepsilon_{a,kj}^{(i)}&=d^{-\frac{1}{p}}\big(mB_{b}B_{w}\big)^{-1}\big[B_{V}(1+4c_{p,q}B_{QK})+d^{\frac{1}{p}}mB_{a}B_{b}\big]^{-L+i}\Delta ,\nonumber\\
        \varepsilon_{b,kj}^{(i)}&=d^{-\frac{1}{p}}\big(mB_{a}B_{w}\big)^{-1}\big[B_{V}(1+4c_{p,q}B_{QK})+d^{\frac{1}{p}}mB_{a}B_{b}\big]^{-L+i}\Delta.\nonumber
    \end{align}
    By Proposition~\ref{prop:nnerror}, we then have
    \begin{align}
        \bbE_{(\tilde{\btheta},\tilde{\btheta}^{\prime})\sim Q}\big[\Delta(\tilde{\btheta},\btheta)+3\gamma\Delta(\tilde{\btheta}^{\prime},\btheta^{\prime})\big]\leq \frac{C}{n}.\label{ieq:39}
    \end{align}
    Since the distributions $P_{0}$ and $Q$ are the products of the distributions of each parameters, $\kl(Q\,\|\,P_{0})$ is the sum of the KL-divergences between the distributions of each parameters. For $i\in[L]$, the KL divergence between the distributions of $\bW_{QK}$ can be upper bounded as
    \begin{align}
        &\kl\Big(\U\big(\bbB(\bW_{QK}^{(i)\top},\varepsilon_{QK}^{(i)},\|\cdot\|_{p,q})\big)\Big\|\U\big(\bbB(0,B_{QK},\|\cdot\|_{p,q})\big)\Big)\nonumber\\
        &\quad=d^{2}\log\biggl(\frac{B_{QK}}{\varepsilon_{QK}^{(i)}}\biggr)\nonumber\\
        &\quad\leq 2(L-i)d^{2}\log\biggl(\frac{4mdB_{V}B_{QK}B_{a}B_{b}}{\Delta}\biggr)+d^{2}\log B_{w},\nonumber
    \end{align}
    where the equality follows from the fact that $\bW_{QK}^{(i)}\in\bbR^{d\times d}$ for all $i\in[L]$, in which the logarithm of the ratio between two $\ell_{p,q}$-norm balls is equal to $d^{2}$ times the logarithm of the ratio between the radiuses.
    
    We note that the product $B_{V}B_{QK}B_{a}B_{b}B_{w}$ is defined as $\bar{B}$ in Theorem~\ref{thm:main_concen}, which is adopted to simplify the result.
    Similar bounds for the KL divergence of the distributions of parameters $\bW_{V}^{(i)}$, $a_{kj}^{(i)}$, $\bb_{kj}^{(i)}$ and $\bw$ for $i\in[L]$, $k\in[d]$ and $j\in[m]$ can be derived by replacing $d^{2}$ by the dimension of the parameter. Thus, we have
    \begin{align}
        \kl(Q\,\|\,P_{0})\leq 2(m+1)L^{2}d^{2}\log\biggl(\frac{4mdB_{V}B_{QK}B_{a}B_{b}}{\Delta}\biggr)+2(m+1)Ld^{2}\log B_{w}.\label{ieq:40}
    \end{align}
    Substituting inequalities \eqref{ieq:39} and \eqref{ieq:40} into inequality \eqref{ieq:38}, we derive that for any $(\btheta,\btheta^{\prime})\in\Theta^2$, with probability at least $1-\delta$
    \begin{align}
        &\biggl|\bbE_{\nu\times\bar{P^{*}}}\bigl[l(f_{\btheta},f_{\btheta^{\prime}},\pi;\bX)\bigr]-\frac{1}{n}\sum_{i=1}^{n}l(f_{\btheta},f_{\btheta^{\prime}},\pi;\bX_{i})\biggr|\nonumber\\
        &\quad\leq 10V_{\max}\frac{C}{n}+\frac{1}{2}\bbE_{\nu\times\bar{P^{*}}}\bigl[l(f_{\btheta},f_{\btheta^{\prime}},\pi;\bX)\bigr]+\frac{32V_{\max}^{2}}{n}\bigg[2(m+1)L^{2}d^{2}\log\biggl(\frac{4mdB_{V}B_{QK}B_{a}B_{b}}{\Delta}\biggr)\nonumber\\
        &\quad\qquad+2(m+1)Ld^{2}\log B_{w}+\log\frac{2|\tilde{\Pi}|}{\delta}\bigg].\label{ieq:41}
    \end{align}
    
    \textbf{Step 4: Cover the policy class $\Pi$.}
    
    Note that inequality \eqref{ieq:41} only applies to the situation where the policy class is finite. When the policy class is infinite, we consider the covering of the policy class with respect to $d_{\infty}(\cdot,\cdot)$. The $\varepsilon$-covering number of the policy class with respect to $d_{\infty}(\cdot,\cdot)$ is denoted as $\calN(\Pi,\varepsilon,d_{\infty})$, and the corresponding $\varepsilon$-cover is $\calC(\Pi,\varepsilon,d_{\infty})$, which is defined in Section~\ref{sec:prelim}. From the definition of $d_{\infty}(\cdot,\cdot)$, we have 
    \begin{align}
        d_{\infty}(\pi,\pi^{\prime})&= \sup_{\bar{\bS}\in \bar{\calS}} \sum_{\bar{\bA}\in\bar{\calA}} \big|\pi(\bar{\bA}\,|\,\bar{\bS})-\pi^{\prime}(\bar{\bA}\,|\,\bar{\bS})\big|\nonumber\\
        \bigl|f(\bar{\bS},\pi)-f(\bar{\bS},\pi^{\prime})\bigr|&=\biggl|\sum_{\bar{\bA}\in\bar{\calA}}\bigl[\pi(\bar{\bA}\,|\,\bar{\bS})-\pi^{\prime}(\bar{\bA}\,|\,\bar{\bS})\bigr]f(\bar{\bS},\bar{\bA})\biggr|\nonumber\\
        &\leq \sum_{\bar{\bA}\in\bar{\calA}}\bigl|\pi(\bar{\bA}\,|\,\bar{\bS})-\pi^{\prime}(\bar{\bA}\,|\,\bar{\bS})\bigr|\cdot\bigl|f(\bar{\bS},\bar{\bA})\bigr|\nonumber\\
        &\leq V_{\max}d_{\infty}(\pi,\pi^{\prime})\label{ieq:91}\\
        |\calT^{\pi}f(\bar{\bS},\bar{\bA})-\calT^{\pi^{\prime}}f(\bar{\bS},\bar{\bA})|&=\gamma\Bigl|\bbE_{(\bar{\bS}^{\prime})\sim \hat{\bar{P^{*}}}(\cdot\,|\,\bar{\bS},\bar{\bA})}\bigl[f(\bar{\bS}^{\prime},\pi)-f(\bar{\bS}^{\prime},\pi^{\prime})\,\big|\,\bar{\bS},\bar{\bA}\bigr]\Bigr|\nonumber\\
        &\leq \gamma V_{\max}d_{\infty}(\pi,\pi^{\prime}).\label{ieq:92}
    \end{align}
    Thus, we can upper bound the difference between $l(f,\tilde{f},\pi;\bX)$ and $l(f,\tilde{f},\pi^{\prime};\bX)$ by $d_{\infty}(\pi,\pi^{\prime})$ as 
    \begin{align}
        &\bigl|l(f,\tilde{f},\pi;\bX)-l(f,\tilde{f},\pi^{\prime};\bX)\bigr|\nonumber\\
        &\quad\leq \Big|\big(f(\bar{\bS},\bar{\bA})-\barr-\gamma \tilde{f}(\bar{\bS},\pi)\big)^{2}-\big(f(\bar{\bS},\bar{\bA})-\barr-\gamma \tilde{f}(\bar{\bS},\pi^{\prime})\big)^{2}\Big|\nonumber\\*
        &\quad\qquad+\Big|\big(\calT^{\pi}\tilde{f}(\bar{\bS},\bar{\bA})-\barr-\gamma \tilde{f}(\bar{\bS},\pi)\big)^{2}-\big(\calT^{\pi^{\prime}}\tilde{f}(\bar{\bS},\bar{\bA})-\barr-\gamma \tilde{f}(\bar{\bS},\pi^{\prime})\big)^{2}\Big|\nonumber\\
        &\quad=\Big|\big(\gamma \tilde{f}(\bar{\bS},\pi^{\prime})-\gamma \tilde{f}(\bar{\bS},\pi)\big)\big(2f(\bar{\bS},\bar{\bA})-2\barr-\gamma \tilde{f}(\bar{\bS},\pi^{\prime})-\gamma \tilde{f}(\bar{\bS},\pi)\big)\Big|,\nonumber\\
        &\quad\qquad+\biggl|\big(\calT^{\pi}\tilde{f}(\bar{\bS},\bar{\bA})-\calT^{\pi^{\prime}}\tilde{f}(\bar{\bS},\bar{\bA})-\gamma \tilde{f}(\bar{\bS},\pi)+\gamma \tilde{f}(\bar{\bS},\pi^{\prime})\big)\nonumber\\
        &\quad\qquad \cdot\Bigl(\calT^{\pi^{\prime}}\tilde{f}(\bar{\bS},\bar{\bA})-2\barr-\gamma \tilde{f}\bigl(\bar{\bS},\pi^{\prime}-\gamma \tilde{f}(\bar{\bS},\pi)\bigr)\Bigr)^{2}\biggr|\nonumber,
    \end{align}
    where the inequality follows from the triangle inequality. Combined with inequalities \eqref{ieq:91} and \eqref{ieq:92}, it can be further upper bounded as
    \begin{align}
        &\bigl|l(f,\tilde{f},\pi;\bX)-l(f,\tilde{f},\pi^{\prime};\bX)\bigr|\nonumber\\
        &\quad\leq \gamma V_{\max}d_{\infty}(\pi,\pi^{\prime})\cdot 4V_{\max}+2\gamma V_{\max}d_{\infty}(\pi,\pi^{\prime})\cdot 4V_{\max}\nonumber\\
        &\quad=12\gamma V_{\max}^{2}d_{\infty}(\pi,\pi^{\prime}).\label{ieq:89}
    \end{align}
    From the definition of the $\varepsilon$-cover and inequality \eqref{ieq:89}, for any $\pi\in\Pi$, there exist a policy $\pi^{\prime}\in\calC(\Pi,\varepsilon,d_{\infty})$ such that for any $f,\tilde{f}\in\calF_{\tf}$,
    \begin{align}
        \big|l(f,\tilde{f},\pi;\bX)-l(f,\tilde{f},\pi^{\prime};\bX)\big|\leq 12\gamma\varepsilon V_{\max}^{2}.\label{ieq:42}
    \end{align}
    Substituting inequality \eqref{ieq:42} into the term involving $l(f,\tilde{f},\pi;\bX)$ in inequality \eqref{ieq:41}, we have that for all $f_{\btheta},f_{\btheta^{\prime}}\in\calF_{\tf}$ and all policy $\pi\in\Pi$, with probability at least $1-\delta$, 
    \begin{align}
        &\Big|\bbE_{\nu\times\bar{P^{*}}}[l(f_{\btheta},f_{\btheta^{\prime}},\pi;\bX)]-\frac{1}{n}\sum_{i=1}^{n}l(f_{\btheta},f_{\btheta^{\prime}},\pi;\bX_{i})\Big|\nonumber\\
        &\quad\leq 30\gamma V_{\max}^{2}\varepsilon+10V_{\max}\frac{C}{n}+\frac{1}{2}\bbE_{\nu\times\bar{P^{*}}}\bigl[l(f_{\btheta},f_{\btheta^{\prime}},\pi;\bX)\bigr]\nonumber\\
        &\quad\qquad+\frac{32V_{\max}^{2}}{n}\bigg[2(m+1)L^{2}d^{2}\log\biggl(\frac{4mdB_{V}B_{QK}B_{a}B_{b}}{\Delta}\biggr)\nonumber\\
        &\quad\qquad+2(m+1)Ld^{2}\log B_{w}+\log\frac{2\calN(\Pi,\varepsilon,d_{\infty})}{\delta}\bigg].\nonumber
    \end{align}
    Setting $\varepsilon=1/n$ and $C=5V_{\max}$, we obtain the desired result. Therefore, we conclude the proof of Proposition \ref{thm:main_concen}.
\end{proof}

\section{Proof of Proposition~\ref{prop:mlebound}}\label{app:propmlebound}

\begin{proof}[Proof of Proposition~\ref{prop:mlebound}]
    We adopt a Bayesian framework to prove the desired result. The total variation is first upper bounded through Pinsker's inequality. Then the derived upper bounded is further relaxed by the bounds related to the KL divergence. For ease of notation, we denote the parameters of the neural network as 
    \begin{align}
        \btheta= [\bW_{QK}^{1:L},\bW_{V}^{1:L},\ba^{1:L},\bb^{1:L}]\nonumber.
    \end{align}
    
    \textbf{Step 1: Bound the total variation distance with Pinsker's inequality.}
    
    From Pinsker's inequality, the total variation between two conditional distribution can be bounded as
    \begin{lemma}[Lemma 25 in \cite{agarwal2020flambe}]\label{lem:tvbound}
        For any two conditional probability densities $P(\cdot\,|\,\bar{\bS},\bar{\bA}), P^{\prime}(\cdot\,|\,\bar{\bS},\bar{\bA})$ and any distribution $\nu\in\Delta(\bar{\calS}\times\bar{\calA})$,we have
        \begin{align}
            \bbE_{\nu}\Big[\tv\big(P(\cdot\,|\,\bar{\bS},\bar{\bA}), P^{\prime}(\cdot\,|\,\bar{\bS},\bar{\bA})\big)^{2}\Big]\!\leq\! -2\log\bigg(\bbE_{(\bar{\bS},\bar{\bA})\sim\nu,\bar{\bS}^{\prime}\sim P(\cdot\,|\,\bar{\bS},\bar{\bA})}\bigg[\exp\bigg(\!-\!\frac{1}{2}\log\frac{P(\bar{\bS}^{\prime}\,|\,\bar{\bS},\bar{\bA})}{P^{\prime}(\bar{\bS}^{\prime}\,|\,\bar{\bS},\bar{\bA})}\bigg)\bigg]\bigg).\nonumber
        \end{align}
    \end{lemma}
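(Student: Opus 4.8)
The plan is to recognize the right-hand side as an expected Bhattacharyya affinity and then relate that affinity to total variation through the Hellinger distance. First I would simplify the exponent: since $\exp(-\frac12\log(P/P^{\prime}))=\sqrt{P^{\prime}/P}$, for each fixed $(\bar{\bS},\bar{\bA})$ the inner expectation over $\bar{\bS}^{\prime}\sim P(\cdot\,|\,\bar{\bS},\bar{\bA})$ becomes
\[
\bbE_{\bar{\bS}^{\prime}\sim P(\cdot\,|\,\bar{\bS},\bar{\bA})}\Big[\sqrt{P^{\prime}(\bar{\bS}^{\prime}\,|\,\bar{\bS},\bar{\bA})/P(\bar{\bS}^{\prime}\,|\,\bar{\bS},\bar{\bA})}\Big]=\int\sqrt{P(\bar{\bS}^{\prime}\,|\,\bar{\bS},\bar{\bA})\,P^{\prime}(\bar{\bS}^{\prime}\,|\,\bar{\bS},\bar{\bA})}\,\mathrm{d}\bar{\bS}^{\prime}=:\rho(\bar{\bS},\bar{\bA}),
\]
the affinity between the two conditional laws. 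Hence the quantity inside the outer logarithm equals $\bbE_{\nu}[\rho(\bar{\bS},\bar{\bA})]$, so the right-hand side of the lemma is precisely $-2\log\bbE_{\nu}[\rho]$.

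Second, I would bound the pointwise total variation by the affinity. Writing $p,q$ for the two conditional densities at a fixed $(\bar{\bS},\bar{\bA})$, Cauchy--Schwarz gives
\[
\int|p-q|=\int|\sqrt p-\sqrt q|\,|\sqrt p+\sqrt q|\le\Big(\int(\sqrt p-\sqrt q)^2\Big)^{1/2}\Big(\int(\sqrt p+\sqrt q)^2\Big)^{1/2}.
\]
Since $\int(\sqrt p-\sqrt q)^2=2(1-\rho)$ and $\int(\sqrt p+\sqrt q)^2=2(1+\rho)\le4$, this yields $\tv(P,P^{\prime})^2=\frac14(\int|p-q|)^2\le2(1-\rho(\bar{\bS},\bar{\bA}))$; that is, squared total variation is dominated pointwise by twice the affinity deficit (equivalently, by the squared Hellinger distance).

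Third, I take expectations over $\nu$ and close with an elementary scalar inequality. Averaging the pointwise bound gives $\bbE_{\nu}[\tv^2]\le2(1-\bbE_{\nu}[\rho])$. Setting $x=\bbE_{\nu}[\rho]\in(0,1]$, it remains to show $2(1-x)\le-2\log x$, which is the standard inequality $\log x\le x-1$. Combining the two displays then gives $\bbE_{\nu}[\tv^2]\le2(1-x)\le-2\log x=-2\log\bbE_{\nu}[\rho]$, which is exactly the claim.

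Each step is short, so the main thing to get right is the bookkeeping of constants: one must ensure that it is the affinity $\rho$ (not the squared Hellinger distance) that appears inside the logarithm, and that the Cauchy--Schwarz step produces the factor $2(1-\rho)$ rather than $1-\rho$, so that the closing inequality aligns exactly with $\log x\le x-1$. A secondary, routine subtlety is the measure-theoretic justification for interchanging the $\nu$-expectation with the inner integral via Tonelli and for taking all densities with respect to a common dominating measure so that $\int\sqrt{PP^{\prime}}$ is well defined; both are immediate once $P,P^{\prime}$ are genuine conditional densities.
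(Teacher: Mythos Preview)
Your proof is correct. The paper does not actually prove this lemma; it merely cites it as Lemma~25 of \cite{agarwal2020flambe} (the lead-in phrase ``From Pinsker's inequality'' is imprecise, since the bound is really a Hellinger/Bhattacharyya-affinity inequality, exactly as you identify). Your argument---rewriting the inner expectation as the affinity $\rho=\int\sqrt{PP'}$, bounding $\tv^2\le 2(1-\rho)$ via Cauchy--Schwarz, and finishing with $1-x\le-\log x$---is the standard route and matches the proof in the cited reference.
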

    Thus, we only need to upper bound the right-hand side of the inequality in Lemma~\ref{lem:tvbound}. We adopt a Bayesian framework to relax this upper bound.
    \begin{lemma}[Lemma 2.1 in \cite{zhang2006}]\label{lem:bayesexpect}
        Given a distribution $P$ on $\Theta$, for all $Q\gg P$ on $\Theta$ and all measurable real-valued function $L(\btheta;\calD):\Theta\times (\bar{\calS}\times\bar{\calA})^{n}\rightarrow\bbR$, we have
        \begin{align}
            \bbE_{\calD}\bigg[\exp\Big\{\bbE_{Q}\big[L(\btheta;\calD)-\log\bbE_{\calD}[ e^{L(\theta;\calD)}]\big]-\kl(Q\,\|\,P)\Big\}\bigg]\leq 1,\nonumber
        \end{align}
        where $\bbE_{\calD}[\,\cdot\,]$ is the expectation with respect to the underlying distribution of $\{(\bar{\bS}_{i},\bar{\bA}_{i},\bar{\bS}_{i}^{\prime}\}_{i=1}^{n}$, i.e., $(\nu\times P^{*})^{n}$.
    \end{lemma}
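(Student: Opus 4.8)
The plan is to establish this inequality — the ``change of measure'' bound at the heart of the PAC-Bayesian and KL-entropy analysis — by combining Jensen's inequality applied for each fixed realization of the data $\calD$, Tonelli's theorem, and the self-normalizing structure built into the definition of $L$. First I would introduce the shorthand $\Lambda(\btheta;\calD)= L(\btheta;\calD)-\log\bbE_{\calD}[e^{L(\btheta;\calD)}]$, noting that the inner expectation is a deterministic constant for each fixed $\btheta$ (so that $\bbE_Q[L(\btheta;\calD)-\log\bbE_{\calD}[e^{L(\btheta;\calD)}]]=\bbE_Q[\Lambda(\btheta;\calD)]$), and that this definition is well posed precisely when $0<\bbE_{\calD}[e^{L(\btheta;\calD)}]<\infty$. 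The key consequence of this construction is the identity $\bbE_{\calD}[e^{\Lambda(\btheta;\calD)}]=\bbE_{\calD}[e^{L(\btheta;\calD)}]/\bbE_{\calD}[e^{L(\btheta;\calD)}]=1$ for every fixed $\btheta$, which I will use at the very end.

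Next I would fix $\calD$ and bound the exponent $\bbE_Q[\Lambda(\btheta;\calD)]-\kl(Q\,\|\,P)$ from above. If $\kl(Q\,\|\,P)=\infty$ the exponent is $-\infty$ and the integrand vanishes, so the claim is trivial; hence I assume $\kl(Q\,\|\,P)<\infty$, which forces absolute continuity of $Q$ with respect to $P$, with density $\rho=\mathrm{d}Q/\mathrm{d}P$. Writing $\kl(Q\,\|\,P)=\bbE_Q[\log\rho]$ turns the exponent into $\bbE_Q[\log(e^{\Lambda}/\rho)]$, and Jensen's inequality for the convex map $x\mapsto e^{x}$ gives $\exp\{\bbE_Q[\log(e^{\Lambda}/\rho)]\}\le \bbE_Q[e^{\Lambda}/\rho]$. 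Unfolding this last expectation against the density $\rho$ yields $\bbE_Q[e^{\Lambda}/\rho]=\int e^{\Lambda}\mathbf{1}\{\rho>0\}\,\mathrm{d}P\le \bbE_{\btheta\sim P}[e^{\Lambda(\btheta;\calD)}]$, where the final step uses $e^{\Lambda}\ge 0$. This produces the pointwise-in-$\calD$ estimate $\exp\{\bbE_Q[\Lambda(\btheta;\calD)]-\kl(Q\,\|\,P)\}\le \bbE_{\btheta\sim P}[e^{\Lambda(\btheta;\calD)}]$.

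Finally I would take $\bbE_{\calD}$ of both sides of this pointwise estimate and swap the order of the two expectations by Tonelli's theorem — legitimate since the integrand is nonnegative — to obtain $\bbE_{\calD}\bbE_{\btheta\sim P}[e^{\Lambda(\btheta;\calD)}]=\bbE_{\btheta\sim P}\bbE_{\calD}[e^{\Lambda(\btheta;\calD)}]=\bbE_{\btheta\sim P}[1]=1$, the last equality being exactly the self-normalization identity recorded in the first step. Chaining the pointwise bound with this value yields the desired conclusion. The step I expect to require the most care is not any single inequality but the measurability and integrability bookkeeping around the interchange of the data-expectation and the parameter-expectation, together with the mild nondegeneracy condition $0<\bbE_{\calD}[e^{L(\btheta;\calD)}]<\infty$ that is needed for $\Lambda$ — and hence every expression above — to be well defined; once these are in place, the Jensen step and the KL rewriting are entirely routine.
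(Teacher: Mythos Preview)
Your argument is correct and is precisely the standard proof of this PAC-Bayesian change-of-measure lemma: rewrite the exponent using the density $\rho=\mathrm{d}Q/\mathrm{d}P$, apply Jensen for the convex map $x\mapsto e^x$ to pass from $\exp\bbE_Q[\cdot]$ to $\bbE_Q[\exp(\cdot)]$, collapse the density to get $\bbE_P[e^{\Lambda}]$, and then use Tonelli together with the self-normalization $\bbE_{\calD}[e^{\Lambda(\btheta;\calD)}]=1$. The paper itself does not prove this lemma at all---it simply imports it as Lemma~2.1 of \cite{zhang2006}---so there is no in-paper proof to compare against; your derivation is essentially the one found in that reference (and is equivalent to the Donsker--Varadhan variational bound the paper states separately as Theorem~\ref{thm:DV}). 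One minor remark: the hypothesis ``$Q\gg P$'' in the paper's statement appears to be a typo for $Q\ll P$ (or simply superfluous), and your case split on whether $\kl(Q\,\|\,P)$ is finite handles this cleanly regardless.
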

    
    By Lemma~\ref{lem:bayesexpect} and the Chernoff inequality, we have that with probability at least $1-\delta/2$, 
    \begin{align}
        -\bbE_{Q}\big[\log  \bbE_{\calD}[ e^{L(\btheta;\calD)}]\big]\leq-\bbE_{Q}\big[L(\btheta;\calD)\big]+\kl(Q\,\|\,Q_{0})+\log\frac{2}{\delta},\label{ieq:52}
    \end{align}
    where $\bbE_{\calD}[\,\cdot\,]$ is the expectation with respect to the underlying distribution of $\{(\bar{\bS}_{i},\bar{\bA}_{i},\bar{\bS}_{i}^{\prime})\}_{i=1}^{n}$, i.e., $(\nu\times P^{*})^{n}$, and $Q$ and $Q_{0}$ are two distributions on $\Theta$. 
    
    Take $L(\btheta;\calD)=-\frac{1}{4}   \sum_{i=1}^{n}\log(P^{*}(\bar{\bS}_{i}^{\prime}\,|\,\bar{\bS}_{i},\bar{\bA}_{i})/P_{\btheta}(\bar{\bS}_{i}^{\prime}\,|\,\bar{\bS}_{i},\bar{\bA}_{i}))$, where $\calD=\{(\bar{\bS}_{i},\bar{\bA}_{i},r_{i},\bar{\bS}_{i}^{\prime})\}_{i=1}^{n}$. Then the left-hand side of inequality \eqref{ieq:52} becomes
    \begin{align}
        -\bbE_{Q}\big[\log\bbE_{\calD}[ e^{L(\btheta;\calD)}]\big]&=-\bbE_{Q}\Bigg[\log\bbE_{\calD}\bigg[\exp\Big(-\frac{1}{4}\sum_{i=1}^{n}\log\frac{P^{*}(\bar{\bS}_{i}^{\prime}\,|\,\bar{\bS}_{i},\bar{\bA}_{i})}{P_{\btheta}(\bar{\bS}_{i}^{\prime}\,|\,\bar{\bS}_{i},\bar{\bA}_{i})}\Big)\bigg]\Bigg]\nonumber\\
        &=-\bbE_{Q}\Bigg[n\log\bbE_{(\bar{\bS},\bar{\bA},\bar{\bS}^{\prime})\sim\nu\times P^{*}}\bigg[\exp\Big(-\frac{1}{4}\log\frac{P^{*}(\bar{\bS}^{\prime}\,|\,\bar{\bS},\bar{\bA})}{P_{\btheta}(\bar{\bS}^{\prime}\,|\,\bar{\bS},\bar{\bA})}\Big)\bigg]\Bigg].\nonumber
    \end{align}
    
    \textbf{Step 2: Control the fluctuation of the both sides of inequality \eqref{ieq:52} introduced by $Q$.}
    
    Since $\hat{P}_{\rm MLE}$ is a random variable, we want to derive an uniform bound for all $\btheta\in\Theta$. Because the left-hand side of inequality \eqref{ieq:52} takes the expectation with respect to the distribution $Q$ on $\Theta$, which is chosen as the uniform distribution on the neighborhood around a fixed parameter $\btheta\in\Theta$, we need to control the fluctuation of the left-hand side of inequality \eqref{ieq:52} due to the distribution $Q$ around $\btheta$. For any two parameters $\btheta$ and $\tilde{\btheta}$, we define the logarithm of the ratio between the transition kernels induced by them as
    \begin{align}
        e(\btheta,\tilde{\btheta};\bar{\bS}^{\prime},\bar{\bS},\bar{\bA})&=\log\frac{P_{\tilde{\btheta}}(\bar{\bS}^{\prime}\,|\,\bar{\bS},\bar{\bA})}{P_{\btheta}(\bar{\bS}^{\prime}\,|\,\bar{\bS},\bar{\bA})}\nonumber\\
        &=\log \bigg(\exp\biggl(-\frac{\bigl\|\bar{\bS}^{\prime}-\bF_{\tilde{\btheta}}(\bar{\bS},\bar{\bA})\bigr\|_{\rmF}^{2}}{2\sigma^{2}}\biggr)\bigg/\exp\biggl(-\frac{\bigl\|\bar{\bS}^{\prime}-\bF_{\btheta}(\bar{\bS},\bar{\bA})\bigr\|_{\rmF}^{2}}{2\sigma^{2}}\biggr)\biggr)\nonumber\\
        &=\frac{\bigl\|\bar{\bS}^{\prime}-\bF_{\btheta}(\bar{\bS},\bar{\bA})\bigr\|_{\rmF}^{2}-\bigl\|\bar{\bS}^{\prime}-\bF_{\tilde{\btheta}}(\bar{\bS},\bar{\bA})\bigr\|_{\rmF}^{2}}{2\sigma^{2}}.\nonumber
    \end{align}
    
    To upper bound the absolute value of $e(\btheta,\tilde{\btheta};\bar{\bS}^{\prime},\bar{\bS},\bar{\bA})$, we need to bound the norm of the output of the neural network.
    \begin{proposition}\label{prop:normbound}
        For any $\bX\in\bbR^{N\times d}$, any $\bW_{QK},\bW_{V}\in\bbR^{d\times d}$, $\ba\in\bbR^{dm}$, $\bb\in\bbR^{d\times dm}$ and two positive conjugate numbers $p,q\in\bbR$, we have
        \begin{align}
            &\Bigl\|\bigl(\SM(\bX\bW_{QK}\bX^{\top})\bX\bW_{V}+\rff(\bX,\ba,\bb)\bigr)^{\top}\Bigr\|_{p,\infty}\nonumber\\
            &\quad\leq \|\bW_{V}^{\top}\|_{p,q}\|\bX^{\top}\|_{p,\infty}+\biggl[\sum_{k=1}^{d}\biggl(\sum_{j=1}^{m}|a_{kj}|\|\bb_{kj}\|_{q}\|\bX^{\top}\|_{p,\infty}\biggr)^{p}\biggr]^{1/p}.\nonumber
        \end{align}
    \end{proposition}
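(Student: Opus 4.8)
The plan is to bound the largest $\ell_{p}$-norm among the rows of the layer output, using the elementary fact that for any matrix the $(p,\infty)$-norm of its transpose equals the maximum over its rows of their $\ell_{p}$-norms. Thus the quantity to bound equals $\max_{i\in[N]}\bigl\|\bigl(\SM(\bX\bW_{QK}\bX^{\top})\bX\bW_{V}+\rff(\bX,\ba,\bb)\bigr)_{i,:}\bigr\|_{p}$, and by the triangle inequality it suffices to control, for each channel $i$, the $\ell_{p}$-norm of the $i$-th row of the self-attention term $\SM(\bX\bW_{QK}\bX^{\top})\bX\bW_{V}$ and of the feedforward term $\rff(\bX,\ba,\bb)$ separately, then add the two bounds.

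For the self-attention term, the first observation is that every row of $\SM(\bX\bW_{QK}\bX^{\top})$ is a probability vector, since the row-wise softmax yields nonnegative entries summing to one. Hence the $i$-th row of $\SM(\bX\bW_{QK}\bX^{\top})\bX\bW_{V}$ is a convex combination $\sum_{j=1}^{N}\lambda_{ij}\,(\bX\bW_{V})_{j,:}$ with $\lambda_{ij}\ge 0$ and $\sum_{j}\lambda_{ij}=1$, so by the triangle inequality and convexity its $\ell_{p}$-norm is at most $\max_{j\in[N]}\|(\bX\bW_{V})_{j,:}\|_{p}$. It then remains to bound $\|(\bX\bW_{V})_{j,:}\|_{p}=\|\bW_{V}^{\top}\bX_{j,:}^{\top}\|_{p}$ for a fixed channel $j$. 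The only point that needs care with mixed norms is to expand $\bW_{V}^{\top}\bX_{j,:}^{\top}=\sum_{l=1}^{d}(\bX_{j,:})_{l}\,(\bW_{V}^{\top})_{:,l}$ as a linear combination of the columns of $\bW_{V}^{\top}$, apply the triangle inequality, and then apply H\"older's inequality with the conjugate exponents $p,q$ to the scalar sum $\sum_{l}|(\bX_{j,:})_{l}|\,\|(\bW_{V}^{\top})_{:,l}\|_{p}$, which gives $\|(\bX\bW_{V})_{j,:}\|_{p}\le\|\bX_{j,:}\|_{p}\bigl(\sum_{l}\|(\bW_{V}^{\top})_{:,l}\|_{p}^{q}\bigr)^{1/q}=\|\bX_{j,:}\|_{p}\,\|\bW_{V}^{\top}\|_{p,q}$. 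Maximizing over $j$ and then over $i$, and bounding $\|\bX_{j,:}\|_{p}\le\|\bX^{\top}\|_{p,\infty}$, shows the self-attention contribution is at most $\|\bW_{V}^{\top}\|_{p,q}\,\|\bX^{\top}\|_{p,\infty}$.

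For the feedforward term, fix a channel $i$. By definition $[\rff(\bX,\ba,\bb)]_{i,k}=\sum_{j=1}^{m}a_{kj}\,\relu(\bb_{kj}^{\top}\bX_{i,:})$, and since $\relu$ is $1$-Lipschitz with $\relu(0)=0$ we have $|\relu(\bb_{kj}^{\top}\bX_{i,:})|\le|\bb_{kj}^{\top}\bX_{i,:}|\le\|\bb_{kj}\|_{q}\|\bX_{i,:}\|_{p}$ by H\"older. Therefore $|[\rff(\bX,\ba,\bb)]_{i,k}|\le\sum_{j=1}^{m}|a_{kj}|\,\|\bb_{kj}\|_{q}\,\|\bX_{i,:}\|_{p}$; raising this to the $p$-th power, summing over $k\in[d]$, taking the $p$-th root, using $\|\bX_{i,:}\|_{p}\le\|\bX^{\top}\|_{p,\infty}$, and maximizing over $i$ yields precisely the second term $\bigl[\sum_{k=1}^{d}\bigl(\sum_{j=1}^{m}|a_{kj}|\|\bb_{kj}\|_{q}\|\bX^{\top}\|_{p,\infty}\bigr)^{p}\bigr]^{1/p}$ in the claimed bound. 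Adding the two contributions via one more application of the triangle inequality completes the proof.

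The main obstacle is merely bookkeeping with the family of mixed $\ell_{p,q}$-norms: a naive entrywise estimate of $\bW_{V}^{\top}\bX_{j,:}^{\top}$ produces a mixed norm of $\bW_{V}$ that is not comparable to $\|\bW_{V}^{\top}\|_{p,q}$, so one must expand along the columns of $\bW_{V}^{\top}$ before invoking H\"older in order to recover exactly $\|\bW_{V}^{\top}\|_{p,q}$; once this is arranged, the remaining ingredients are just the triangle inequality, H\"older's inequality, and the fact that the rows of the softmax matrix are probability vectors.
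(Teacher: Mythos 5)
Your proof is correct and follows essentially the same route as the paper's: split off the rFF term by the triangle inequality, use the fact that the softmax rows have unit $\ell_{1}$-norm (the paper's Lemma~\ref{lem:matvec} with $u=\infty$, $v=1$, together with Lemma~\ref{lem:matmat}) to reduce the attention term to $\|\bW_{V}^{\top}\|_{p,q}\|\bX^{\top}\|_{p,\infty}$, and bound the rFF term via the $1$-Lipschitzness of $\relu$ and H\"older's inequality. Your inline derivation of $\|\bW_{V}^{\top}\bX_{j,:}^{\top}\|_{p}\le\|\bX_{j,:}\|_{p}\|\bW_{V}^{\top}\|_{p,q}$ by expanding over columns and applying H\"older is exactly the content of the paper's Lemma~\ref{lem:matvec}, so the two arguments coincide up to how the auxiliary inequalities are packaged.
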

    \begin{proof}
        See Appendix~\ref{app:propnormbound} for a detailed proof.
    \end{proof}
    Proposition~\ref{prop:normbound} shows that $\|\bF_{\tilde{\btheta}}(\bar{\bS},\bar{\bA})\|_{\rmF}\leq \sqrt{N}B^{*}$ for all $\btheta\in\Theta$, $\bar{\bS}\in \bar{\calS}$ and $\bar{\bA}\in \bar{\calA}$, where $B^{*}=B_{V}+md^{1/2}B_{a}B_{b}$.
    As a consequence, we have 
    \begin{align}
        \big|e(\btheta,\tilde{\btheta};\bar{\bS}^{\prime},\bar{\bS},\bar{\bA})\big|&\leq \frac{1}{2\sigma^{2}}\Bigl(\bigl\|\bar{\bS}^{\prime}-\bF_{\btheta}(\bar{\bS},\bar{\bA})\bigr\|_{\rmF}+\bigl\|\bar{\bS}^{\prime}-\bF_{\tilde{\btheta}}(\bar{\bS},\bar{\bA})\bigr\|_{\rmF}\Bigr)\bigl\|\bF_{\btheta}(\bar{\bS},\bar{\bA})-\bF_{\tilde{\btheta}}(\bar{\bS},\bar{\bA})\bigr\|_{\rmF}\nonumber\\* 
        &\leq \frac{1}{\sigma^{2}}\big(\|\bar{\bvarepsilon}\|_{\rmF}+\sqrt{N}B^{*}\big)\bigl\|\bF_{\btheta}(\bar{\bS},\bar{\bA})-\bF_{\tilde{\btheta}}(\bar{\bS},\bar{\bA})\bigr\|_{\rmF},\label{ieq:55}
    \end{align}
    where these two inequalities follow from the triangle inequality. For two parameters $\btheta$ and $\tilde{\btheta}$, we define the upper bound of the difference between the dynamic functions induced by them as 
    \begin{align}
        \Delta(\btheta,\tilde{\btheta})= \max_{(\bar{\bS},\bar{\bA})\in\bar{\calS}\times\bar{\calA}}\bigl\|\bF_{\btheta}(\bar{\bS},\bar{\bA})-\bF_{\tilde{\btheta}}(\bar{\bS},\bar{\bA})\bigr\|_{\rmF}.\nonumber
    \end{align}
    For a fixed parameter $\btheta$, the left-hand side of inequality \eqref{ieq:52} can be lower bounded as
    \begin{align}
        &-\bbE_{Q}\big[\log\bbE_{\calD}e^{L(\tilde{\btheta};\calD)}\big]\nonumber\\
        &\quad=-\bbE_{Q}\biggl[n\log\bbE_{(\bar{\bS},\bar{\bA},\bar{\bS}^{\prime})\sim\nu\times P^{*}}\biggl[\exp\biggl(-\frac{1}{4}e(\btheta,\tilde{\btheta};\bar{\bS}^{\prime},\bar{\bS},\bar{\bA})-\frac{1}{4}\log\frac{P^{*}(\bar{\bS}^{\prime}\,|\,\bar{\bS},\bar{\bA})}{P_{\btheta}(\bar{\bS}^{\prime}\,|\,\bar{\bS},\bar{\bA})}\biggr)\biggr]\biggr]\nonumber\\
        &\quad\geq -\frac{n}{2}\log\bbE_{\nu\times P^{*}}\biggl[\exp\biggl(-\frac{1}{2}\log\frac{P^{*}(\bar{\bS}^{\prime}\,|\,\bar{\bS},\bar{\bA})}{P_{\btheta}(\bar{\bS}^{\prime}\,|\,\bar{\bS},\bar{\bA})}\biggr)\biggr]\nonumber\\
        &\quad\qquad-\bbE_{Q}\bigg[\frac{n}{2}\log\bbE_{\nu\times P^{*}}\bigg[\exp\Big(-\frac{1}{2}e(\btheta,\tilde{\btheta};\bar{\bS}^{\prime},\bar{\bS},\bar{\bA})\Big)\bigg]\bigg]\nonumber\\ 
        &\quad\geq \frac{n}{4}\bbE_{\nu}\Big[\tv\big(P^{*}(\cdot\,|\,\bar{\bS},\bar{\bA}), P_{\btheta}(\cdot\,|\,\bar{\bS},\bar{\bA})\big)^{2}\Big]-\bbE_{Q}\biggl[\frac{n}{2}\log\bbE_{\nu\times P^{*}}\biggl[\exp\biggl(-\frac{1}{2}e(\btheta,\tilde{\btheta};\bar{\bS}^{\prime},\bar{\bS},\bar{\bA})\biggr)\biggr]\biggr]\label{ieq:54},
    \end{align}
    where the first inequality follows from the Cauchy--Schwarz inequality, and the last inequality follows from Lemma~\ref{lem:tvbound}. The second term of inequality \eqref{ieq:54} can be bounded as
    \begin{align}
        &\log\bbE_{\nu\times P^{*}}\bigg[\exp\biggl(-\frac{1}{2}e(\btheta,\tilde{\btheta};\bar{\bS}^{\prime},\bar{\bS},\bar{\bA})\biggr)\bigg]\nonumber\\
        &\quad\leq \log\bbE_{\nu\times P^{*}}\bigg[\exp\biggl(\frac{1}{2}\big|e(\btheta,\tilde{\btheta};\bar{\bS}^{\prime},\bar{\bS},\bar{\bA})\big|\biggr)\bigg]\nonumber\\
        &\quad\leq \log\bbE_{\bar{\bvarepsilon}\sim\calN(0,\sigma^{2}\bI)}\bigg[\exp\biggl(\frac{1}{2\sigma^{2}}\big(\|\bar{\bvarepsilon}\|_{\rmF}+\sqrt{N}B^{*}\big)\Delta(\btheta,\tilde{\btheta})\biggr)\bigg]\nonumber\\ 
        &\quad = \sqrt{N}B^{*}\Delta(\btheta,\tilde{\btheta})+\log\bbE_{\bar{\bvarepsilon}\sim\calN(0,\sigma^{2}\bI)}\bigg[\exp\biggl(\frac{1}{2\sigma^{2}}\|\bar{\bvarepsilon}\|_{\rmF}\Delta(\btheta,\tilde{\btheta})\biggr)\bigg],\nonumber
    \end{align}
    where the second inequality follows from inequality \eqref{ieq:55}. Since Lemma \ref{lem:vecnorm} shows that $\|\bX\|_{\rmF}\leq\|\bX\|_{1,1}$, we further have
    \begin{align}
        &\log\bbE_{\nu\times P^{*}}\bigg[\exp\biggl(-\frac{1}{2}e(\btheta,\tilde{\btheta};\bar{\bS}^{\prime},\bar{\bS},\bar{\bA})\biggr)\bigg]\nonumber\\
        &\quad \leq \sqrt{N}B^{*}\Delta(\btheta,\tilde{\btheta})+\log\bbE_{\bar{\bvarepsilon}\sim\calN(0,\sigma^{2}\bI)}\biggl[\exp\biggl(\frac{1}{2\sigma^{2}}\|\bar{\bvarepsilon}\|_{1,1}\Delta(\btheta,\tilde{\btheta})\biggr)\biggr]\nonumber\\
        &\quad =\sqrt{N}B^{*}\Delta(\btheta,\tilde{\btheta})+Nd\log\bbE_{\varepsilon\sim\calN(0,\sigma^{2})}\bigg[\exp\biggl(\frac{\Delta(\btheta,\tilde{\btheta})}{2\sigma^{2}}|\varepsilon|\biggr)\bigg],\label{ieq:107}
    \end{align}
    where the inequality follows from Lemma~\ref{lem:vecnorm}. The moment generating function of the folded normal distribution is (see \cite{tsagris2014folded})
    \begin{align}
        \bbE_{\varepsilon\sim\calN(0,\sigma^{2})}\Bigl[\exp\bigl(\lambda|\zeta|\bigr)\Bigr]=2\exp(\sigma^{2}\lambda^{2}/2)\big[1-\Phi(-\sigma\lambda)\big],\label{ieq:108}
    \end{align}
    where $\Phi(\cdot)$ is the cumulative distribution function of $\calN(0,1)$. From the Taylor expansion of $\Phi(\cdot)$, we have
    \begin{align}
        2\big[1-\Phi(-\sigma\lambda)\big]\leq 1+\sqrt{\frac{3}{\pi}}\sigma\lambda\label{ieq:109}
    \end{align}
    for small enough $\lambda$. Since $\log(1+x)\leq x$ for $x>0$, substituting inequalities \eqref{ieq:107}, \eqref{ieq:108} and \eqref{ieq:109} into inequality \eqref{ieq:54}, we have  
    \begin{align}
         &-\bbE_{Q}\big[\log\bbE_{\calD}e^{L(\tilde{\btheta};\calD)}\big]\nonumber\\*
         &\quad\geq \frac{n}{4}\bbE_{\nu}\Big[\tv\big(P^{*}(\cdot\,|\,\bar{\bS},\bar{\bA}), P_{\btheta}(\cdot\,|\,\bar{\bS},\bar{\bA})\big)^{2}\Big]\nonumber\\*
         &\quad\qquad-\frac{n}{2}\bbE_{\tilde{\btheta}\sim Q}\biggl[\sqrt{N}B^{*}\Delta(\btheta,\tilde{\btheta})+Nd\biggl(\frac{\Delta^{2}(\btheta,\tilde{\btheta})}{8\sigma^{2}}+\frac{1}{2\sigma}\sqrt{\frac{3}{\pi}}\Delta(\btheta,\tilde{\btheta})\biggr) \biggr],\label{ieq:58}
    \end{align}
    for small enough $\Delta(\btheta,\tilde{\btheta})$, which is set to $O(1/n)$ later.
    
    For the scaled right-hand side of inequality \eqref{ieq:52}, we have
    \begin{align}
        &\frac{4}{n}\biggl\{-\bbE_{Q}\big[L(\btheta;\calD)]+\kl(Q\,\|\,Q_{0})+\log\frac{2}{\delta}\biggr\}\nonumber\\
        &\quad=\frac{1}{n}\sum_{i=1}^{n}\log\frac{P^{*}(\bar{\bS}_{i}^{\prime}\,|\,\bar{\bS}_{i},\bar{\bA}_{i})}{P_{\btheta}(\bar{\bS}_{i}^{\prime}\,|\,\bar{\bS}_{i},\bar{\bA}_{i})}+\bbE_{\tilde{\btheta}\sim Q}\biggl[\frac{1}{n}\sum_{i=1}^{n}e(\btheta,\tilde{\btheta};\bar{\bS}^{\prime}_{i},\bar{\bS}_{i},\bar{\bA}_{i})\biggr]+\frac{4}{n}\biggl[\kl(Q\,\|\,Q_{0})+\log\frac{2}{\delta}\biggr]\nonumber\\
        &\quad\leq \frac{1}{n}\sum_{i=1}^{n}\log\frac{P^{*}(\bar{\bS}_{i}^{\prime}\,|\,\bar{\bS}_{i},\bar{\bA}_{i})}{P_{\btheta}(\bar{\bS}_{i}^{\prime}\,|\,\bar{\bS}_{i},\bar{\bA}_{i})}+\bbE_{\tilde{\btheta}\sim Q}\biggl[\frac{1}{n}\sum_{i=1}^{n}\frac{1}{\sigma^{2}}\big(\|\bar{\bvarepsilon}_{i}\|_{\rmF}+\sqrt{N}B^{*}\big)\Delta(\btheta,\tilde{\btheta})\biggr]\nonumber\\
        &\quad\qquad+\frac{4}{n}\biggl[\kl(Q\,\|\,Q_{0})+\log\frac{2}{\delta}\biggr],\label{ieq:59}
    \end{align}
    where the last inequality follows from inequality \eqref{ieq:55} and the definition of $\Delta(\btheta,\tilde{\btheta})$. To upper bound the right-hand side of inequality \eqref{ieq:59}, we need to upper bound $\|\bar{\bvarepsilon}\|_{\rmF}$, which can be achieved by combining the upper bound of the moment generating function of $\|\bar{\bvarepsilon}\|_{\rmF}$
    \begin{align}
        \bbE\Bigl[\exp\bigl(\lambda\|\bar{\bvarepsilon}\|_{\rmF}\bigr)\Bigr]\leq \bbE\Bigl[\exp\bigl(\lambda\|\bar{\bvarepsilon}\|_{1,1}\bigr)\Bigr]=\bigg(\bbE_{\varepsilon\sim\calN(0,\sigma^{2})}\Big[\exp\big(\lambda|\varepsilon|\big)\Big]\bigg)^{Nd}\leq \big(2\exp(\sigma^{2}\lambda^{2}/2)\big)^{Nd}\nonumber
    \end{align}
    and the Chernoff inequality. Thus, with probability at least $1-\delta/2$, we have 
    \begin{align}
        \frac{1}{n}\sum_{i=1}^{n}\|\bar{\bvarepsilon}_{i}\|_{\rmF}\leq \sqrt{2N^{2}d^{2}\sigma^{2}+\frac{2Nd\sigma^{2}}{n}\log\frac{2}{\delta}}.\label{ieq:60}
    \end{align}
    Substituting inequalities \eqref{ieq:58}, \eqref{ieq:59} and \eqref{ieq:60} into inequality \eqref{ieq:52}, we have that for any $\btheta\in\Theta$ and any two distributions $Q$ and $Q_{0}$, the following inequality holds with probability at least $1-\delta$
    \begin{align}
        &\bbE_{\nu}\Big[\tv\big(P^{*}(\cdot\,|\,\bar{\bS},\bar{\bA}), P_{\btheta}(\cdot\,|\,\bar{\bS},\bar{\bA})\big)^{2}\Big]\label{ieq:61}\\
        &\quad\leq \frac{1}{n}\sum_{i=1}^{n}\log\frac{P^{*}(\bar{\bS}_{i}^{\prime}\,|\,\bar{\bS}_{i},\bar{\bA}_{i})}{P_{\btheta}(\bar{\bS}_{i}^{\prime}\,|\,\bar{\bS}_{i},\bar{\bA}_{i})}+2\bbE_{\tilde{\btheta}\sim Q}\biggl[\sqrt{N}B^{*}\Delta(\btheta,\tilde{\btheta})+Nd\biggl(\frac{\Delta^{2}(\btheta,\tilde{\btheta})}{8\sigma^{2}}+\frac{1}{2\sigma}\sqrt{\frac{3}{\pi}}\Delta(\btheta,\tilde{\btheta})\biggr) \biggr]\nonumber\\
        &\quad\qquad+\bigg(\sqrt{2N^{2}d^{2}\sigma^{2}+\frac{2Nd\sigma^{2}}{n}\log\frac{2}{\delta}}+\sqrt{N}B^{*}\bigg)\bbE_{\tilde{\btheta}\sim Q}\big[\Delta(\btheta,\tilde{\btheta})\big]+\frac{4}{n}\biggl[\kl(Q\,\|\,Q_{0})+\log\frac{2}{\delta}\biggr].\nonumber
    \end{align}
    
    For any fixed $\btheta=[\bW_{QK}^{1:L},\bW_{V}^{1:L},\ba^{1:L},\bb^{1:L}]$, we set $Q_{0}$ as the product of the uniform distribution of each parameter on the whole space and $Q$ as the product of the uniform distribution of each parameter on the neighborhood around $\btheta$, i.e.,
    \begin{align}
        Q_{0}=&\U\big(\bbB(0,B_{w},\|\cdot\|_{q})\big)\cdot\prod_{i=1}^{L}\bigg[\U\big(\bbB(0,B_{QK},\|\cdot\|_{p,q})\big)\cdot\U\big(\bbB(0,B_{V},\|\cdot\|_{p,q})\big)\nonumber\\*
        &\qquad\cdot\Big(\U\big(\bbB(0,B_{a},|\cdot|)\big)\cdot\U\big(\bbB(0,B_{b},\|\cdot\|_{q})\big)\Big)^{md}\bigg],\quad\text{ and}\nonumber\\
        Q=&\U\big(\bbB(\bw,\varepsilon_{w},\|\cdot\|_{q})\big)\cdot\prod_{i=1}^{L}\bigg[\U\big(\bbB(\bW_{QK}^{(i)\top},\varepsilon_{QK}^{(i)},\|\cdot\|_{p,q})\big)\cdot\U\big(\bbB(\bW_{V}^{(i)\top},\varepsilon_{V}^{(i)},\|\cdot\|_{p,q})\big)\nonumber\\
        &\qquad\cdot\prod_{j\in[m],k\in[d]}\Big(\U\big(\bbB(a_{kj}^{(i)},\varepsilon_{a,kj}^{(i)},|\cdot|)\big)\cdot\U\big(\bbB(\bb_{kj}^{(i)},\varepsilon_{b,kj}^{(i)},\|\cdot\|_{q})\big)\Big)\bigg],\nonumber
    \end{align}
    For a constant $C>0$, we define $\Delta= C/(4LnN^{3/2}dB^{*})$. For $i\in[L]$, $j\in[m]$, and $k\in[d]$, we set
    \begin{align}
        \varepsilon_{QK}^{(i)}&=\big(2B_{V}\big)^{-1}\big[B_{V}(1+4B_{QK})+d^{\frac{1}{2}}mB_{a}B_{b}\big]^{-L+i}\Delta,\nonumber\\
        \varepsilon_{V}^{(i)}&=\big[B_{V}(1+4B_{QK})+d^{\frac{1}{2}}mB_{a}B_{b}\big]^{-L+i}\Delta,\nonumber\\
        \varepsilon_{a,kj}^{(i)}&=d^{-\frac{1}{2}}\big(mB_{b}\big)^{-1}\big[B_{V}(1+4B_{QK})+d^{\frac{1}{2}}mB_{a}B_{b}\big]^{-L+i}\Delta ,\nonumber\\
        \varepsilon_{b,kj}^{(i)}&=d^{-\frac{1}{2}}\big(mB_{a}\big)^{-1}\big[B_{V}(1+4B_{QK})+d^{\frac{1}{2}}mB_{a}B_{b}\big]^{-L+i}\Delta.\nonumber
    \end{align}
    By Proposition~\ref{prop:nnerror}, we have
    \begin{align}
        \bbE_{\tilde{\btheta}\sim Q}\big[\Delta(\btheta,\tilde{\btheta})\big]\leq \frac{C}{nNdB^{*}}, \quad \bbE_{\tilde{\btheta}\sim Q}\big[\Delta^{2}(\btheta,\tilde{\btheta})\big]\leq \frac{C^{2}}{n^{2}N^{2}d^{2}B^{*2}}.\label{ieq:62}
    \end{align}
    Sine $Q$ and $Q_{0}$ are product  distributions, $\kl(Q\,\|\,Q_{0})$ is the sum of the KL-divergences between each constituent distribution. For the KL-divergence between the distributions of $\bW_{QK}$, 
    \begin{align}
        &\kl\Big(\U\big(\bbB(\bW_{QK}^{(i)\top},\varepsilon_{QK}^{(i)},\|\cdot\|_{p,q})\big)\,\Big\|\,\U\big(\bbB(0,B_{QK},\|\cdot\|_{p,q})\big)\Big)\leq 2(L-i)d^{2}\log\biggl(\frac{4mdB_{V}B_{QK}B_{a}B_{b}}{\Delta}\biggr),\nonumber
    \end{align}
    for $i\in[L]$. Similar bounds for the KL divergence of the distributions of parameters $\bW_{V}^{(i)}$, $a_{kj}^{(i)}$, $\bb_{kj}^{(i)}$ and $\bw$ for $i\in[L]$, $k\in[d]$ and $j\in[m]$ can be derived by replacing $d^{2}$ by the dimension of the parameter. Thus, we have
    \begin{align}
        \kl(Q\,\|\,Q_{0})\leq 2(m+1)L^{2}d^{2}\log\biggl(\frac{4mdB_{V}B_{QK}B_{a}B_{b}}{\Delta}\biggr).\label{ieq:63}
    \end{align}
    Substituting Eqn.~\eqref{ieq:62} and \eqref{ieq:63} into Eqn.~\eqref{ieq:61}, we have that for any $\btheta\in\Theta$ and any two distributions $Q$ and $Q_{0}$, the following inequality holds with probability at least $1-\delta$
    \begin{align}
        &\bbE_{\nu}\Big[\tv\big(P^{*}(\cdot\,|\,\bar{\bS},\bar{\bA}), P_{\btheta}(\cdot\,|\,\bar{\bS},\bar{\bA})\big)^{2}\Big]\nonumber\\
        &\quad\leq \frac{1}{n}\sum_{i=1}^{n}\log\frac{P^{*}(\bar{\bS}_{i}^{\prime}\,|\,\bar{\bS}_{i},\bar{\bA}_{i})}{P_{\btheta}(\bar{\bS}_{i}^{\prime}\,|\,\bar{\bS}_{i},\bar{\bA}_{i})}\nonumber\\
        &\quad\qquad+O\bigg(\frac{C}{n}+\frac{1}{n}(m+1)L^{2}d^{2}\log\biggl(\frac{4NLmdB^{*}B_{V}B_{QK}B_{a}B_{b}n}{C}\biggr)+\frac{1}{n}\log\frac{1}{\delta}\bigg).\nonumber
    \end{align}
    Take $\btheta=\hat{\btheta}_{\rm MLE}$, which is the estimate derived in Eqn.~\eqref{algo:mpolicy}. Since it is the maximum likelihood estimate, we have
    \begin{align}
        \frac{1}{n}\sum_{i=1}^{n}\log\frac{P^{*}(\bar{\bS}_{i}^{\prime}\,|\,\bar{\bS}_{i},\bar{\bA}_{i})}{P_{\hat{\btheta}_{\rm MLE}}(\bar{\bS}_{i}^{\prime}\,|\,\bar{\bS}_{i},\bar{\bA}_{i})}\leq 0,\nonumber
    \end{align}
    which proves the desired result. Therefore, this concludes the proof of Proposition \ref{prop:mlebound}.
\end{proof}

\section{Proof of Proposition~\ref{prop:tvconcentrate}}\label{app:proptvconcentrate}
\begin{proof}[Proof of Proposition~\ref{prop:tvconcentrate}]
    We adopt the PAC-Bayes framework to prove the desired result. Define $l(\btheta,\bar{\bS},\bar{\bA})=\tv(P^{*}(\cdot\,|\, \bar{\bS},\bar{\bA}),P_{\btheta}(\cdot\,|\, \bar{\bS},\bar{\bA}))^{2}$. Then we have 
    \begin{align}
        {\rm Var}\big(l(\btheta,\bar{\bS},\bar{\bA})\big)\leq\bbE_{(\bar{\bS},\bar{\bA})\sim\nu}\big[l(\btheta,\bar{\bS},\bar{\bA})^{2}\big]\leq  \bbE_{(\bar{\bS},\bar{\bA})\sim\nu}\big[l(\btheta,\bar{\bS},\bar{\bA})\big],\nonumber
    \end{align}
    which implies that $l(\btheta,\bar{\bS},\bar{\bA})$ satisfies the conditions of Proposition~\ref{prop:pacbayes} with $b=c=1$. Thus, Proposition~\ref{prop:pacbayes} shows that for any distributions $Q$ and $Q_{0}$ on $\Theta$, the following inequality holds with probability at least $1-\delta$
    \begin{align}
        \biggl|\bbE_{Q}\biggl[\bbE_{\calD}\bigl[l(\btheta,\bar{\bS},\bar{\bA})\bigr]-\frac{1}{n}\sum_{i=1}^{n}l(\btheta,\bar{\bS}_{i},\bar{\bA}_{i})\biggr]\biggr|\leq \lambda \bbE_{Q,\calD}\big[l(\btheta,\bar{\bS},\bar{\bA})\big]+\frac{1}{n\lambda}\bigg[\kl(Q\,\|\,Q_{0})+\log\frac{2}{\delta}\bigg],\label{ieq:64}
    \end{align}
    for $0<\lambda\leq 1/2$. Since we want to derive the generalization error bound for all $\btheta\in\Theta$ uniformly, we set $Q$ as the uniform distribution on the neighborhood of any fixed $\btheta$ and $Q_{0}$ as the uniform distribution on $\Theta$. To derive the uniform generalization bound for any $\btheta\in\Theta$, we need to control the fluctuation of inequality \eqref{ieq:64} induced by $Q$. 
    
    With triangle inequality, for any $\btheta,\tilde{\btheta}\in\Theta$, we have 
    \begin{align}
        &\tv\big(P^{*}(\cdot| \bar{\bS},\bar{\bA}),P_{\tilde{\btheta}}(\cdot\,|\, \bar{\bS},\bar{\bA})\big)^{2}\nonumber\\
        &\quad\leq \tv\big(P^{*}(\cdot\,|\, \bar{\bS},\bar{\bA}),P_{\btheta}(\cdot\,|\, \bar{\bS},\bar{\bA})\big)^{2}+\tv\big(P_{\tilde{\btheta}}(\cdot\,|\, \bar{\bS},\bar{\bA}),P_{\btheta}(\cdot\,|\, \bar{\bS},\bar{\bA})\big)^{2}\nonumber\\
        &\quad\qquad+2\tv\big(P^{*}(\cdot\,|\, \bar{\bS},\bar{\bA}),P_{\btheta}(\cdot\,|\, \bar{\bS},\bar{\bA})\big)\tv\big(P_{\tilde{\btheta}}(\cdot\,|\, \bar{\bS},\bar{\bA}),P_{\btheta}(\cdot\,|\, \bar{\bS},\bar{\bA})\big)\label{ieq:65}\\
        &\tv\big(P^{*}(\cdot\,|\, \bar{\bS},\bar{\bA}),P_{\tilde{\btheta}}(\cdot\,|\, \bar{\bS},\bar{\bA})\big)^{2}\nonumber\\
        &\quad\geq \tv\big(P^{*}(\cdot\,|\, \bar{\bS},\bar{\bA}),P_{\btheta}(\cdot\,|\, \bar{\bS},\bar{\bA})\big)^{2}+\tv\big(P_{\tilde{\btheta}}(\cdot\,|\, \bar{\bS},\bar{\bA}),P_{\btheta}(\cdot\,|\, \bar{\bS},\bar{\bA})\big)^{2}\nonumber\\
        &\quad\qquad-2\tv\big(P^{*}(\cdot\,|\, \bar{\bS},\bar{\bA}),P_{\btheta}(\cdot\,|\, \bar{\bS},\bar{\bA})\big)\tv\big(P_{\tilde{\btheta}}(\cdot\,|\, \bar{\bS},\bar{\bA}),P_{\btheta}(\cdot\,|\, \bar{\bS},\bar{\bA})\big).\label{ieq:66}
    \end{align}
    For two parameters $\btheta$ and $\tilde{\btheta}$, we define the upper bound of the difference between the dynamic functions induced by them as $\Delta(\btheta,\tilde{\btheta})= \max_{(\bar{\bS},\bar{\bA})\in\bar{\calS}\times\bar{\calA}}\|\bF_{\btheta}(\bar{\bS},\bar{\bA})-\bF_{\tilde{\btheta}}(\bar{\bS},\bar{\bA})\|_{\rmF}$. By Pinsker's inequality, we then have
    \begin{align}
        \max_{(\bar{\bS},\bar{\bA})\in\bar{\calS}\times\bar{\calA}}\tv\big(P_{\tilde{\btheta}}(\cdot\,|\, \bar{\bS},\bar{\bA}),(P_{\btheta}(\cdot\,|\, \bar{\bS},\bar{\bA})\big)&\leq \max_{(\bar{\bS},\bar{\bA})\in\bar{\calS}\times\bar{\calA}}\sqrt{\kl\big(P_{\tilde{\btheta}}(\cdot\,|\, \bar{\bS},\bar{\bA})\,\|\,P_{\btheta}(\cdot\,|\, \bar{\bS},\bar{\bA})\big)/2}\nonumber\\
        &=\frac{1}{2\sigma}\max_{(\bar{\bS},\bar{\bA})\in\bar{\calS}\times\bar{\calA}}\bigl\|\bF_{\btheta}(\bar{\bS},\bar{\bA})-\bF_{\tilde{\btheta}}(\bar{\bS},\bar{\bA})\bigr\|_{\rmF}\nonumber\\ 
        &=\frac{\Delta(\btheta,\tilde{\btheta})}{2\sigma},\label{ieq:67}
    \end{align}
    where the first equality follows from the expression of the KL divergence between two Gaussian random vectors. Substituting inequalities \eqref{ieq:65}, \eqref{ieq:66} and \eqref{ieq:67} into the left-hand side of inequality~\eqref{ieq:64}, for a fixed $\btheta\in\Theta$ we have
    \begin{align}
        &\biggl|\bbE_{Q}\biggl[\bbE_{\calD}\bigl[l(\tilde{\btheta},\bar{\bS},\bar{\bA})\bigr]-\frac{1}{n}\sum_{i=1}^{n}l(\tilde{\btheta},\bar{\bS}_{i},\bar{\bA}_{i})\biggr] \biggr|\nonumber\\
        &\quad\geq \bigg|\bbE_{\calD}\bigl[l(\btheta,\bar{\bS},\bar{\bA})\bigr]-\frac{1}{n}\sum_{i=1}^{n}l(\btheta,\bar{\bS}_{i},\bar{\bA}_{i}) \bigg|-\frac{5\bbE_{Q}\big[\Delta(\btheta,\tilde{\btheta})\big]}{2\sigma}.\label{ieq:110}
    \end{align}
    Similarly, for the right-hand side of inequality \eqref{ieq:64}, we have 
    \begin{align}
        &\lambda \bbE_{Q,\calD}\big[l(\tilde{\btheta},\bar{\bS},\bar{\bA})\big]+\frac{1}{n\lambda}\bigg[\kl(Q\,\|\,Q_{0})+\log\frac{2}{\delta}\bigg]\nonumber\\
        &\quad \leq \lambda \bbE_{\calD}\big[l(\btheta,\bar{\bS},\bar{\bA})\big]+\frac{1}{n\lambda}\bigg[\kl(Q\,\|\,Q_{0})+\log\frac{2}{\delta}\bigg]+\frac{3\lambda\bbE_{Q}\big[\Delta(\btheta,\tilde{\btheta})\big]}{2\sigma}.\label{ieq:111}
    \end{align}
    Substituting inequalities \eqref{ieq:110} and \eqref{ieq:111} into inequality \eqref{ieq:64}, we have that for any distributions $Q$ and $Q_{0}$ on $\Theta$
    \begin{align}
        &\biggl|\bbE_{\calD}[l(\btheta,\bar{\bS},\bar{\bA})]-\frac{1}{n}\sum_{i=1}^{n}l(\btheta,\bar{\bS}_{i},\bar{\bA}_{i}) \biggr|\nonumber\\
        &\quad\leq \lambda \bbE_{\calD}\big[l(\btheta,\bar{\bS},\bar{\bA})\big]+\frac{1}{n\lambda}\bigg[\kl(Q\,\|\,Q_{0})+\log\frac{2}{\delta}\bigg]+(3\lambda+5)\frac{\bbE_{Q}\big[\Delta(\btheta,\tilde{\btheta})\big]}{2\sigma}\label{ieq:114}
    \end{align}
    holds with probability at least $1-\delta$.
    
    For any fixed $\btheta=[\bW_{QK}^{1:L},\bW_{V}^{1:L},\ba^{1:L},\bb^{1:L}]$, we set $Q$ and $Q_{0}$ as
    \begin{align}
        Q_{0}&=\U\big(\bbB(0,B_{w},\|\cdot\|_{q})\big)\cdot\prod_{i=1}^{L}\bigg[\U\big(\bbB(0,B_{QK},\|\cdot\|_{p,q})\big)\cdot\U\big(\bbB(0,B_{V},\|\cdot\|_{p,q})\big)\nonumber\\
        &\qquad\cdot\Big(\U\big(\bbB(0,B_{a},|\cdot|)\big)\cdot\U\big(\bbB(0,B_{b},\|\cdot\|_{q})\big)\Big)^{md}\bigg]\nonumber\\
        Q&=\U\big(\bbB(\bw,\varepsilon_{w},\|\cdot\|_{q})\big)\cdot\prod_{i=1}^{L}\bigg[\U\big(\bbB(\bW_{QK}^{(i)\top},\varepsilon_{QK}^{(i)},\|\cdot\|_{p,q})\big)\cdot\U\big(\bbB(\bW_{V}^{(i)\top},\varepsilon_{V}^{(i)},\|\cdot\|_{p,q})\big)\nonumber\\
        &\qquad\cdot\prod_{j\in[m],k\in[d]}\Big(\U\big(\bbB(a_{kj}^{(i)},\varepsilon_{a,kj}^{(i)},|\cdot|)\big)\cdot\U\big(\bbB(\bb_{kj}^{(i)},\varepsilon_{b,kj}^{(i)},\|\cdot\|_{q})\big)\Big)\bigg],\nonumber
    \end{align}
     For a constant $C>0$, we define $\Delta= C/(4LnN^{1/2})$. For $i\in[L]$, $j\in[m]$, and $k\in[d]$, we set 
    \begin{align}
        \varepsilon_{QK}^{(i)}&=\big(2c_{p,q}B_{V}\big)^{-1}\big[B_{V}(1+4c_{p,q}B_{QK})+d^{\frac{1}{p}}mB_{a}B_{b}\big]^{-L+i}\Delta,\nonumber\\
        \varepsilon_{V}^{(i)}&=\big[B_{V}(1+4c_{p,q}B_{QK})+d^{\frac{1}{p}}mB_{a}B_{b}\big]^{-L+i}\Delta,\nonumber\\
        \varepsilon_{a,kj}^{(i)}&=d^{-\frac{1}{p}}\big(mB_{b}\big)^{-1}\big[B_{V}(1+4c_{p,q}B_{QK})+d^{\frac{1}{p}}mB_{a}B_{b}\big]^{-L+i}\Delta ,\nonumber\\
        \varepsilon_{b,kj}^{(i)}&=d^{-\frac{1}{p}}\big(mB_{a}\big)^{-1}\big[B_{V}(1+4c_{p,q}B_{QK})+d^{\frac{1}{p}}mB_{a}B_{b}\big]^{-L+i}\Delta .\nonumber
    \end{align}
    By Proposition~\ref{prop:nnerror}, we then have
    \begin{align}
        \bbE_{\tilde{\btheta}\sim Q}\big[\Delta(\btheta,\tilde{\btheta})\big]\leq \frac{C}{n}.\label{ieq:112}
    \end{align}
    Following the similar procedure in the proof of Proposition~\ref{prop:mlebound}, we have 
    \begin{align}
        \kl(Q\,\|\, Q_{0})\leq 2(m+1)L^{2}d^{2}\log\biggl(\frac{4mdB_{V}B_{QK}B_{a}B_{b}}{\Delta}\biggr).\label{ieq:113}
    \end{align}
    Substituting inequalities \eqref{ieq:112} and \eqref{ieq:113} into inequality \eqref{ieq:114}, we derive that for any $\btheta\in\Theta$, with probability at least $1-\delta$, the following inequality holds
    \begin{align}
        &\bigg|\bbE_{\calD}[l(\btheta,\bar{\bS},\bar{\bA})]-\frac{1}{n}\sum_{i=1}^{n}l(\btheta,\bar{\bS}_{i},\bar{\bA}_{i}) \bigg|\nonumber\\
        &\quad\leq \frac{1}{2} \bbE_{\calD}\big[l(\btheta,\bar{\bS},\bar{\bA})\big]+O\bigg(\frac{C}{n}+\frac{1}{n}(m+1)L^{2}d^{2}\log\biggl(\frac{4mdB_{V}B_{QK}B_{a}B_{b}}{\Delta}\biggr)+\frac{1}{n}\log\frac{1}{\delta}\bigg),\nonumber
    \end{align}
    where we take $\lambda=1/2$. Therefore, this concludes the proof of Proposition \ref{prop:tvconcentrate}.
\end{proof}

\section{Proof of Lemmas in Appendix~\ref{app:thmmain_mfree}}
\subsection{Proof of Lemma~\ref{lem:errcontrol1}}\label{app:lemerrcontrol1}
\begin{proof}[Proof of Lemma~\ref{lem:errcontrol1}]
    Let $g^{*}=\arginf_{f\in\calF_{\tf}}\calL(f,f_{\pi}^{*},\pi^{*};\calD)$. Then the Bellman error of the best approximation $f_{\pi^{*}}^{*}$ can be decomposed as
    \begin{align}
        \calE(f_{\pi^{*}}^{*},\pi^{*};\calD)&=\calL(f_{\pi^{*}}^{*},f_{\pi^{*}}^{*},\pi^{*};\calD)-\calL(g^{*},f_{\pi^{*}}^{*},\pi^{*};\calD)\nonumber\\
        &=\calL(f_{\pi^{*}}^{*},f_{\pi^{*}}^{*},\pi^{*};\calD)-\calL(\calT^{\pi^{*}}f_{\pi^{*}}^{*},f_{\pi^{*}}^{*},\pi^{*};\calD)\nonumber\\
        &\qquad+\calL(\calT^{\pi^{*}}f_{\pi^{*}}^{*},f_{\pi^{*}}^{*},\pi^{*};\calD)-\calL(g^{*},f_{\pi^{*}}^{*},\pi^{*};\calD).\label{ieq:95}
    \end{align}
    Note that the terms in inequality \eqref{ieq:95} can be bounded with their population version and the generalization error shown in Theorem~\ref{thm:main_concen}. With probability at least $1-\delta$, we have 
    \begin{align}
        \calL(f_{\pi^{*}}^{*},f_{\pi^{*}}^{*},\pi^{*};\calD)-\calL(\calT^{\pi^{*}}f_{\pi^{*}}^{*},f_{\pi^{*}}^{*},\pi^{*};\calD)&\leq \frac{3}{2}\varepsilon_{\calF}+\frac{e(\calF_{\tf},\pi^{*},\delta,n)}{n},\label{ieq:96}\\
        \calL(\calT^{\pi^{*}}f_{\pi^{*}}^{*},f_{\pi^{*}}^{*},\pi^{*};\calD)-\calL(g^{*},f_{\pi^{*}}^{*},\pi^{*};\calD)&\leq \frac{e(\calF_{\tf},\Pi,\delta,n)}{n}\label{ieq:97},
    \end{align}
    where inequality \eqref{ieq:96} follows from the definition of $f_{\pi^{*}}^{*}$, and inequality \eqref{ieq:97} follows from that $(g^{*}(\bar{\bS},\bar{\bA})-\calT^{\pi^{*}}f_{\pi^{*}}^{*}(\bar{\bS},\bar{\bA}))^{2}\geq 0$.
    Substituting inequalities \eqref{ieq:96} and \eqref{ieq:97} into inequality \eqref{ieq:95}, we have
    \begin{align}
        \calE(f_{\pi^{*}}^{*},\pi^{*};\calD)\leq \frac{3}{2}\varepsilon_{\calF}+\frac{2e(\calF_{\tf},\Pi,\delta,n)}{n}.\nonumber
    \end{align}
    This concludes the proof of Lemma \ref{lem:errcontrol1}.
\end{proof}

\subsection{Proof of Lemma~\ref{lem:errcontrol2}}\label{app:lemerrcontrol2}
\begin{proof}[Proof of Lemma~\ref{lem:errcontrol2}]
    Let $h_{\pi}^{*}=\arginf_{g\in\calF_{\tf}}\bbE_{\nu}[(g(\bar{\bS},\bar{\bA})-\calT^{\pi}f(\bar{\bS},\bar{\bA}))^{2}]$, which is the best approximation of $\calT^{\pi}f$. Then Assumption~\ref{assump:mfree1} implies that
    \begin{align}
        \bbE_{\nu}\Big[\big(h_{\pi}^{*}(\bar{\bS},\bar{\bA})-\calT^{\pi}f(\bar{\bS},\bar{\bA})\big)^{2}\Big]\leq \varepsilon_{\calF,\calF}.\label{ieq:98}
    \end{align}
    For any $f\in\calF(\pi,\varepsilon)$, the Bellman error of $f$ with respect to the policy $\pi$ can be decomposed as
    \begin{align}
        \calE(f,\pi;\calD)&=\calL(f,f,\pi;\calD)-\inf_{g\in\calF_{\tf}}\calL(g,f,\pi;\calD)\nonumber\\
        &\geq \calL(f,f,\pi;\calD)-\calL(h_{\pi}^{*},f,\pi;\calD)\nonumber\\
        &=\calL(f,f,\pi;\calD)-\calL(\calT^{\pi}f,f,\pi;\calD)+\calL(\calT^{\pi}f,f,\pi;\calD)-\calL(h_{\pi}^{*},f,\pi;\calD).\label{ieq:99}
    \end{align}
    
    Similar to Step 1, we bound the terms in inequality \eqref{ieq:99} with their population version and the generalization error bound in Theorem~\ref{thm:main_concen}. With probability at least $1-\delta$, we have 
    \begin{align}
        \calL(f,f,\pi;\calD)-\calL(\calT^{\pi}f,f,\pi;\calD)&\geq \frac{1}{2}\bbE_{\nu}\Big[\big(f(\bar{\bS},\bar{\bA})-\calT^{\pi}f(\bar{\bS},\bar{\bA})\big)^{2}\Big]-\frac{e(\calF_{\tf},\Pi,\delta,n)}{n},\quad \text{ and}\label{ieq:100}\\
        \calL(\calT^{\pi}f,f,\pi;\calD)-\calL(h_{\pi}^{*},f,\pi;\calD)&\geq -\frac{3}{2}\bbE_{\nu}\Big[\big(h_{\pi}^{*}(\bar{\bS},\bar{\bA})-\calT^{\pi}f(\bar{\bS},\bar{\bA})\big)^{2}\Big]-\frac{e(\calF_{\tf},\Pi,\delta,n)}{n}.\label{ieq:101}
    \end{align}
    Substituting inequalities \eqref{ieq:100} and \eqref{ieq:101} into inequality \eqref{ieq:99}, we have
    \begin{align}
        \bbE_{\nu}\Big[\big(f(\bar{\bS},\bar{\bA})-\calT^{\pi}f(\bar{\bS},\bar{\bA})\big)^{2}\Big]&\leq 2\calE(f,\pi;\calD)+4\frac{e(\calF_{\tf},\Pi,\delta,n)}{n}+3\bbE_{\nu}\Big[\big(h_{\pi}^{*}(\bar{\bS},\bar{\bA})-\calT^{\pi}f(\bar{\bS},\bar{\bA})\big)^{2}\Big]\nonumber\\
        &\leq 2\calE(f,\pi;\calD)+4\frac{e(\calF_{\tf},\Pi,\delta,n)}{n}+3\varepsilon_{\calF,\calF},\label{ieq:102}
    \end{align}
    where inequality \eqref{ieq:102} follows from inequality \eqref{ieq:98}. This concludes the proof of Lemma~\ref{lem:errcontrol2}.
\end{proof}

\section{Proofs of Supporting Propositions}
\subsection{Proof of Proposition~\ref{prop:pacbayes}}\label{app:proppacbayes}
To prove Proposition~\ref{prop:pacbayes}, we need the variational definition of the Kullback--Leibler divergence.
\begin{theorem}[Donsker--Varadhan representation \citep{belghazi2018mutual}]\label{thm:DV}
    Let $P$ and $Q$ be distributions on a common space $\calX$. Then
    \begin{align}
        \kl(P\,\|\,Q)=\sup_{g\in\calG} \bigg\{\bbE_{P}\big[g(X)\big]-\log\bbE_{Q}\Big[\exp\big(g(X)\big)\Big]\bigg\},\nonumber
    \end{align}
    where $\calG=\{g:\calX\rightarrow\bbR \ | \ \bbE_{Q}[\exp(g(X))]<\infty\}$.
\end{theorem}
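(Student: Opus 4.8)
The plan is to establish the identity by proving the two matching inequalities: that every admissible test function $g\in\calG$ produces a quantity no larger than $\kl(P\,\|\,Q)$, and that this bound is asymptotically sharp, so that the supremum equals $\kl(P\,\|\,Q)$.

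First I would prove the upper bound, $\sup_{g\in\calG}\{\bbE_P[g]-\log\bbE_Q[e^{g}]\}\le\kl(P\,\|\,Q)$. Fix any $g\in\calG$, so that $\bbE_Q[e^{g(X)}]\in(0,\infty)$, and introduce the tilted (Gibbs) probability measure $Q_g$ defined by $\mathrm{d}Q_g=e^{g}\big/\bbE_Q[e^{g}]\,\mathrm{d}Q$. Since $e^{g}>0$, the measures $Q_g$ and $Q$ are mutually absolutely continuous, so $P\ll Q$ holds iff $P\ll Q_g$. If $P\not\ll Q$ then $\kl(P\,\|\,Q)=+\infty$ and the inequality is trivial; otherwise, after disposing of the degenerate case where the left side equals $-\infty$, I would use $\log\tfrac{\mathrm{d}Q_g}{\mathrm{d}Q}=g-\log\bbE_Q[e^{g}]$ and the chain rule for relative entropy to obtain the key identity
\[
\kl(P\,\|\,Q)-\kl(P\,\|\,Q_g)=\bbE_P[g]-\log\bbE_Q[e^{g}].
\]
The non-negativity $\kl(P\,\|\,Q_g)\ge 0$ (Gibbs' inequality) then yields $\bbE_P[g]-\log\bbE_Q[e^{g}]\le\kl(P\,\|\,Q)$, and taking the supremum over $g\in\calG$ gives the first inequality.

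Next I would prove attainment, $\sup_{g\in\calG}\{\bbE_P[g]-\log\bbE_Q[e^{g}]\}\ge\kl(P\,\|\,Q)$, in two cases. If $P\not\ll Q$, choose a set $A$ with $Q(A)=0<P(A)$ and test with $g_n=n\,\mathbf{1}_A$; then $\bbE_Q[e^{g_n}]=1$ while $\bbE_P[g_n]=nP(A)\to\infty$, so the supremum is $+\infty=\kl(P\,\|\,Q)$. If $P\ll Q$, the natural maximizer is the log-likelihood ratio $g^{*}=\log\tfrac{\mathrm{d}P}{\mathrm{d}Q}$, for which $\bbE_Q[e^{g^{*}}]=\int\mathrm{d}P=1$ and $\bbE_P[g^{*}]=\kl(P\,\|\,Q)$, so the functional equals $\kl(P\,\|\,Q)$ exactly. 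To keep the test function genuinely real-valued and integrable, I would work instead with the truncations $g_n=\log\big(\min(\max(r,1/n),n)\big)$, where $r=\mathrm{d}P/\mathrm{d}Q$, which lie in $\calG$, and invoke monotone convergence (applied separately to the positive and negative parts of $\log r$) to conclude $\bbE_P[g_n]\to\kl(P\,\|\,Q)$ and $\bbE_Q[e^{g_n}]\to\bbE_Q[r]=1$, so that $\bbE_P[g_n]-\log\bbE_Q[e^{g_n}]\to\kl(P\,\|\,Q)$.

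The main obstacle is precisely this attainment step under $P\ll Q$ when $\kl(P\,\|\,Q)=+\infty$ or when $r$ attains the value $0$ on a $Q$-positive set: the raw maximizer $g^{*}=\log r$ may take the value $-\infty$ and need not be $P$-integrable, so it cannot be used directly in $\calG$. The truncation-and-limit argument above is exactly what circumvents this, and the only real care is in justifying the interchange of limit and expectation via monotone convergence on the separated tails. Everything else—the chain-rule identity, the tilting construction, and the $P\not\ll Q$ case—is routine measure-theoretic bookkeeping.
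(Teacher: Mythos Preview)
Your proof is correct and follows the standard tilting/Gibbs-measure argument for the Donsker--Varadhan variational formula. However, the paper does not actually prove this theorem: it is stated as a cited result from \cite{belghazi2018mutual} and used as a black box in the proofs of Propositions~\ref{prop:pacbayes} and~\ref{prop:pacbayes2}. There is therefore nothing to compare against; your argument supplies a self-contained proof where the paper simply invokes the literature.
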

\begin{proof}[Proof of Proposition~\ref{prop:pacbayes}]
    Since $|f(X)-\mu(f)|\leq b$ a.s., $f(X)$ is a bounded random variable. Then by \cite{wainwright2019high}, we have for $|\lambda|\leq 1/(2b)$,
    \begin{align}
        \bbE_{X}\Big[\exp\Big(\lambda \big(f(X)-\mu(f)\big)\Big)\Big]\leq \exp\big(\lambda^{2}\sigma^{2}(f)\big).\nonumber
    \end{align}
    Consequently, set $\varepsilon_{n}(f,\lambda)= \lambda\big[\mu(f)-\frac{1}{n}\sum_{i=1}^{n}f(X_{i})-\lambda \sigma^{2}(f)\big]$, then we have 
    \begin{align}
        \bbE_{X_{1:n}}\Big[\exp\big(n\varepsilon_{n}(f,\lambda)\big)\Big]=\bbE_{X}\bigg[\exp\Big(\lambda\big(\mu(f)-f(X)\big)-\lambda^{2}\sigma^{2}(f)\Big)\bigg]^{n}\leq 1\nonumber
    \end{align}
    for all $f\in\calF$ and $0<\lambda\leq \frac{1}{2b}$.
    
    By Markov's inequality, we have that for any distribution $P_{0}$ on  the function class $\calF$, the random variable $\varepsilon_{n}$ induced by random variables $\{X_{i}\}_{i=1}^{n}$ satisfies 
    \begin{align}
        \Pr\bigg(\bbE_{P_{0}}\Big[\exp\big(n\varepsilon_{n}(f,\lambda)\big)\Big]\geq \frac{2}{\delta}\bigg)\leq \frac{\delta}{2},\label{ieq:86}
    \end{align}
    where the probability is taken with respect to the distribution of $X_{i}$ for $i\in[n]$.
    
    Setting $g(f)=n\varepsilon_{n}(f,\lambda)$ in Theorem~\ref{thm:DV}, we have 
    \begin{align}
        \bbE_{Q}\big[n\varepsilon_{n}(f,\lambda)\big]\leq \kl(Q\,\|\,P_{0})+\log\bbE_{P_{0}}\Big[\exp\big(n\varepsilon_{n}(f,\lambda)\big)\Big].\label{ieq:87}
    \end{align}
    Combining inequalities \eqref{ieq:86} and \eqref{ieq:87}, with prob at least $1-\frac{\delta}{2}$, for $0<\lambda\leq \frac{1}{2b}$, we have
    \begin{align}
        \bbE_{Q}\Big[\bbE_{X}[f(X)]-\frac{1}{n}\sum_{i=1}^{n}f(X_{i})\Big]\leq \lambda \bbE_{Q}\big[\sigma^{2}(f)\big]+\frac{1}{n\lambda}\bigg[\kl(Q\,\|\,P_{0})+\log\frac{2}{\delta}\bigg],\nonumber
    \end{align}
    for all $Q$.
    Similarly, setting $\varepsilon_{n}^{\prime}(f,\lambda)= \lambda\big[\frac{1}{n}\sum_{i=1}^{n}f(X_{i})-\mu(f)-\lambda \sigma^{2}(f)\big]$, we have 
    \begin{align}
        \bbE_{Q}\Big[\frac{1}{n}\sum_{i=1}^{n}f(X_{i})-\bbE_{X}[f(X)]\Big]\leq \lambda \bbE_{Q}\big[\sigma^{2}(f)\big]+\frac{1}{n\lambda}\bigg[\kl(Q\,\|\,P_{0})+\log\frac{2}{\delta}\bigg],\label{ieq:131}
    \end{align}
    with probability at least $1-\frac{\delta}{2}$. The desired result can be proved using the union bound. When $\sigma^{2}(f)\leq c \mu(f)$ for all $f\in\calF$, the result follows from 
    substituting this condition into inequality \eqref{ieq:131}. Therefore, we conclude the proof of Proposition \ref{prop:pacbayes}.
\end{proof}
\subsection{Proof of Proposition~\ref{prop:nnerror}}\label{app:propnnerror}
\begin{proof}[Proof of Proposition~\ref{prop:nnerror}]
    To prove the desired result, we first analyze the error propagation through each layer. Then we combine the error propagation of each layer to derive the error bound of the whole network.
    
    \textbf{Step 1: Bound the difference of each layer.}
    
    For $i\in[L-1]$, we can bound the difference of the output of the $(i+1)^{\rm st}$ as
    \begin{align}
        &\Big\|\big(\bG_{\tf}^{(i+1)}(\bX;\bW_{QK}^{1:i+1},\bW_{V}^{1:i+1},\ba^{1:i+1},\bb^{1:i+1})-\bG_{\tf}^{(i+1)}(\bX;\tilde{\bW}_{QK}^{1:i+1},\tilde{\bW}_{V}^{1:i+1},\tilde{\ba}^{1:i+1},\tilde{\bb}^{1:i+1})\big)^{\top}\Big\|_{p,\infty}\nonumber\\
        &\quad=\Bigg\|\bigg( \SM\big(\bG_{\tf}^{(i)}\bW_{QK}^{(i+1)}\bG_{\tf}^{(i)\top}\big)\bG_{\tf}^{(i)}\bW_{V}^{(i+1)}+\rff\big(\bG_{\tf}^{(i)},\ba^{(i+1)},\bb^{(i+1)}\big)\nonumber\\*
        &\quad\qquad-\SM\big(\tilde{\bG}_{\tf}^{(i)}\tilde{\bW}_{QK}^{(i+1)}\tilde{\bG}_{\tf}^{(i)\top}\big)\tilde{\bG}_{\tf}^{(i)}\tilde{\bW}_{V}^{(i+1)}-\rff\big(\tilde{\bG}_{\tf}^{(i)},\tilde{\ba}^{(i+1)},\tilde{\bb}^{(i+1)}\big)\bigg)^{\top}\Bigg\|_{p,\infty}\nonumber\\
        &\quad\leq \bigg\|\Big( \SM\big(\bG_{\tf}^{(i)}\bW_{QK}^{(i+1)}\bG_{\tf}^{(i)\top}\big)\bG_{\tf}^{(i)}\bW_{V}^{(i+1)}-\SM\big(\tilde{\bG}_{\tf}^{(i)}\tilde{\bW}_{QK}^{(i+1)}\tilde{\bG}_{\tf}^{(i)\top}\big)\tilde{\bG}_{\tf}^{(i)}\tilde{\bW}_{V}^{(i+1)}\Big)^{\top}\bigg\|_{p,\infty}\nonumber\\
        &\quad\qquad +\bigg\|\Big(\rff\big(\bG_{\tf}^{(i)},\ba^{(i+1)},\bb^{(i+1)}\big)-\rff\big(\tilde{\bG}_{\tf}^{(i)},\tilde{\ba}^{(i+1)},\tilde{\bb}^{(i+1)}\big)\Big)^{\top}\bigg\|_{p,\infty},\label{ieq:20}
    \end{align}
    where $\bG_{\tf}^{(i)}$ and $\tilde{\bG}_{\tf}^{(i)}$ are shorthands for $\bG_{\tf}^{(i)}(\bX;\bW_{QK}^{1:i},\bW_{V}^{1:i},\ba^{1:i},\bb^{1:i})$ and $\bG_{\tf}^{(i)}(\bX;\tilde{\bW}_{QK}^{1:i},\tilde{\bW}_{V}^{1:i},\tilde{\ba}^{1:i},\tilde{\bb}^{1:i})$, respectively, and inequality \eqref{ieq:20} follows from the triangle inequality.
    
    Now we consider the first term in inequality \eqref{ieq:20}. For $i\in [L-1]$, with the triangle  inequality, we have 
    \begin{align}
        &\bigg\|\Big( \SM\big(\bG_{\tf}^{(i)}\bW_{QK}^{(i+1)}\bG_{\tf}^{(i)\top}\big)\bG_{\tf}^{(i)}\bW_{V}^{(i+1)}-\SM\big(\tilde{\bG}_{\tf}^{(i)}\tilde{\bW}_{QK}^{(i+1)}\tilde{\bG}_{\tf}^{(i)\top}\big)\tilde{\bG}_{\tf}^{(i)}\tilde{\bW}_{V}^{(i+1)}\Big)^{\top}\bigg\|_{p,\infty}\label{ieq:118}\\
        &\quad\leq \bigg\|\Big( \SM\big(\bG_{\tf}^{(i)}\bW_{QK}^{(i+1)}\bG_{\tf}^{(i)\top}\big)\bG_{\tf}^{(i)}\bW_{V}^{(i+1)}-\SM\big(\tilde{\bG}_{\tf}^{(i)}\bW_{QK}^{(i+1)}\tilde{\bG}_{\tf}^{(i)\top}\big)\tilde{\bG}_{\tf}^{(i)}\bW_{V}^{(i+1)}\Big)^{\top}\bigg\|_{p,\infty}\nonumber\\
        &\quad\qquad +\bigg\|\Big( \SM\big(\tilde{\bG}_{\tf}^{(i)}\bW_{QK}^{(i+1)}\tilde{\bG}_{\tf}^{(i)\top}\big)\tilde{\bG}_{\tf}^{(i)}\bW_{V}^{(i+1)}-\SM\big(\tilde{\bG}_{\tf}^{(i)}\tilde{\bW}_{QK}^{(i+1)}\tilde{\bG}_{\tf}^{(i)\top}\big)\tilde{\bG}_{\tf}^{(i)}\tilde{\bW}_{V}^{(i+1)}\Big)^{\top}\bigg\|_{p,\infty}.\nonumber
    \end{align}
    Thus, we need the upper bounds of the two terms in the right-hand side of inequality \eqref{ieq:118}, which are stated as following.
    \begin{proposition}\label{prop:translip}
    For any $\bX,\tilde{\bX}\in\bbR^{N\times d}$, any $\bW_{V},\bW_{QK},\tilde{\bW}_{V},\tilde{\bW}_{QK}\in\bbR^{d\times d}$ and two positive conjugate numbers $p,q\in\bbR$, if $\|\bX^{\top}\|_{p,\infty},\|\tilde{\bX}^{\top}\|_{p,\infty}\leq B_{X}$, $\|\bW_{QK}^{\top}\|_{p,q}\leq B_{QK}$, and $\|\bW_{V}^{\top}\|_{p,q}\leq B_{V}$, then we have 
        \begin{align}
            &\Big\|\big(\SM(\bX\bW_{QK}\bX^{\top})\bX\bW_{V}-\SM(\tilde{\bX}\bW_{QK}\tilde{\bX}^{\top})\tilde{\bX}\bW_{V}\big)^{\top}\Big\|_{p,\infty}\nonumber\\*
            &\quad\leq B_{V}\big(1+4c_{p,q}B_{X}^{2}\cdot B_{QK}\big)\|\bX^{\top}-\tilde{\bX}^{\top}\|_{p,\infty},\quad \text{ and }\nonumber\\
            &\Big\|\big(\SM(\bX\bW_{QK}\bX^{\top})\bX\bW_{V}-\SM(\bX\tilde{\bW}_{QK}\bX^{\top})\bX\tilde{\bW}_{V}\big)^{\top}\Big\|_{p,\infty}\nonumber\\
            &\quad\leq 2c_{p,q}B_{X}^{3}\cdot B_{V}\cdot\|\bW_{QK}^{\top}-\tilde{\bW}_{QK}^{\top}\|_{p,q}+B_{X}\big\|\bW_{V}^{\top}-\tilde{\bW}_{V}^{\top}\|_{p,q}.\nonumber
        \end{align}
        where $c_{p,q}=1$ if $p\leq q$, and $c_{p,q}=d^{1/q-1/p}$ otherwise.
    \end{proposition}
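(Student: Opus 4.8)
The plan is to prove both inequalities from a common template: express the difference of two triple products $\bA\bX\bW_{V}-\tilde{\bA}\tilde{\bX}\tilde{\bW}_{V}$ (with $\bA=\SM(\cdot)$) as a telescoping sum in which only one factor changes at a time, bound each summand by three elementary facts about the building blocks, and add up. I will isolate the three facts first. \emph{Fact 1 (row-stochasticity).} Since $\bA=\SM(\cdot)$ is row-stochastic, each row of $\bA\bM$ is a convex combination of the rows of $\bM$, so $\|(\bA\bM)^{\top}\|_{p,\infty}\le\|\bM^{\top}\|_{p,\infty}$, and more generally $\|((\bA-\tilde{\bA})\bM)^{\top}\|_{p,\infty}\le\|(\bA-\tilde{\bA})^{\top}\|_{1,\infty}\,\|\bM^{\top}\|_{p,\infty}$ by the triangle inequality applied row by row. \emph{Fact 2 (Lipschitzness of softmax).} Applied row-wise, $\|(\SM(\bY)-\SM(\tilde{\bY}))^{\top}\|_{1,\infty}\le 2\,\|(\bY-\tilde{\bY})^{\top}\|_{\infty,\infty}$, because the scalar softmax is $2$-Lipschitz from $\ell_{\infty}$ to $\ell_{1}$. \emph{Fact 3 (H\"older bounds for the affine pieces).} For the score matrix, $|[\bX\bW_{QK}\bX^{\top}]_{ij}|=|\bX_{i,:}\bW_{QK}\bX_{j,:}^{\top}|\le c_{p,q}\,\|\bX_{i,:}\|_{p}\,\|\bW_{QK}^{\top}\|_{p,q}\,\|\bX_{j,:}\|_{p}$ after converting an $\ell_{q}$ norm on $\bbR^{d}$ to an $\ell_{p}$ norm at cost $c_{p,q}$; and for the value projection, $\|(\bX\bW_{V})^{\top}\|_{p,\infty}\le\|\bX^{\top}\|_{p,\infty}\,\|\bW_{V}^{\top}\|_{p,q}$ (again paying $c_{p,q}$ when $p>q$). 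Replacing $\bX$ by $\bX-\tilde{\bX}$ in these estimates shows that on the ball $\|\bX^{\top}\|_{p,\infty}\le B_{X}$ the bilinear map $\bX\mapsto\bX\bW_{QK}\bX^{\top}$ is locally $2c_{p,q}B_{X}B_{QK}$-Lipschitz (the factor $2$ coming from the product rule) and the map $\bW_{QK}\mapsto\bX\bW_{QK}\bX^{\top}$ is $c_{p,q}B_{X}^{2}$-Lipschitz.

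With these in hand, for the first inequality I split $\bA\bX\bW_{V}-\tilde{\bA}\tilde{\bX}\bW_{V}=(\bA-\tilde{\bA})\bX\bW_{V}+\tilde{\bA}(\bX-\tilde{\bX})\bW_{V}$, where $\bA=\SM(\bX\bW_{QK}\bX^{\top})$ and $\tilde{\bA}=\SM(\tilde{\bX}\bW_{QK}\tilde{\bX}^{\top})$. The second term is controlled by Fact 1 and Fact 3 by $B_{V}\|(\bX-\tilde{\bX})^{\top}\|_{p,\infty}$, which accounts for the ``$1$'' in the stated constant. For the first term, Fact 1 peels off $\|(\bX\bW_{V})^{\top}\|_{p,\infty}\le B_{X}B_{V}$, Fact 2 introduces a factor $2$ and the $\ell_{\infty}$ distance between the score matrices, and the local Lipschitz bound from Fact 3 bounds that distance by $2c_{p,q}B_{X}B_{QK}\|(\bX-\tilde{\bX})^{\top}\|_{p,\infty}$; multiplying gives $4c_{p,q}B_{X}^{2}B_{QK}B_{V}\|(\bX-\tilde{\bX})^{\top}\|_{p,\infty}$. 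Summing the two contributions yields the factor $B_{V}\big(1+4c_{p,q}B_{X}^{2}B_{QK}\big)$.

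For the second inequality I split along the weights: $\SM(\bX\bW_{QK}\bX^{\top})\bX\bW_{V}-\SM(\bX\tilde{\bW}_{QK}\bX^{\top})\bX\tilde{\bW}_{V}$ equals $\big(\SM(\bX\bW_{QK}\bX^{\top})-\SM(\bX\tilde{\bW}_{QK}\bX^{\top})\big)\bX\bW_{V}+\SM(\bX\tilde{\bW}_{QK}\bX^{\top})\bX(\bW_{V}-\tilde{\bW}_{V})$. The last term is $\le B_{X}\|\bW_{V}^{\top}-\tilde{\bW}_{V}^{\top}\|_{p,q}$ by Facts 1 and 3. The first term, by Fact 1, Fact 2, and the Lipschitz bound in $\bW_{QK}$ from Fact 3, is at most $2\cdot c_{p,q}B_{X}^{2}\cdot B_{X}B_{V}\,\|\bW_{QK}^{\top}-\tilde{\bW}_{QK}^{\top}\|_{p,q}=2c_{p,q}B_{X}^{3}B_{V}\|\bW_{QK}^{\top}-\tilde{\bW}_{QK}^{\top}\|_{p,q}$. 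Adding the two bounds gives exactly the claimed estimate.

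The delicate part is Fact 3: keeping the mixed $\ell_{p,\infty}$ and $\ell_{p,q}$ norms consistent through the matrix products, and pinning down precisely when the conversion constant $c_{p,q}=d^{1/q-1/p}$ (for $p>q$) or $c_{p,q}=1$ (for $p\le q$) must be paid, so that the exponents of $B_{X}$, $B_{QK}$, $B_{V}$ and the numerical constant $4$ emerge exactly as stated. Everything else is routine triangle-inequality and convexity bookkeeping, so I would state Fact 3 as a short lemma, prove it via H\"older's inequality together with the elementary $\ell_{q}$-to-$\ell_{p}$ comparison on $\bbR^{d}$, and then assemble the two decompositions above.
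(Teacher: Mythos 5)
Your proposal is correct and follows essentially the same route as the paper's proof: a telescoping decomposition changing one factor at a time, the row-stochasticity of softmax (the paper's Lemma~\ref{lem:matvec} with $u=\infty$, $v=1$), the $2$-Lipschitz property of softmax from $\ell_{\infty}$ to $\ell_{1}$ (Lemma~\ref{lem:smlip}), and H\"older bounds with the $\ell_{q}$-to-$\ell_{p}$ conversion constant $c_{p,q}$ for the score matrix; the paper merely works row by row and telescopes in the opposite order, which changes nothing. One small imprecision: the value-projection bound $\|(\bX\bW_{V})^{\top}\|_{p,\infty}\le\|\bW_{V}^{\top}\|_{p,q}\|\bX^{\top}\|_{p,\infty}$ needs no $c_{p,q}$ factor (it is Lemma~\ref{lem:matmat} directly), and indeed your final constants correctly omit it there.
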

    \begin{proof}
        See Appendix~\ref{app:proptranslip} for a detailed proof.
    \end{proof}
    Thus, we have
    \begin{align}
        &\bigg\|\Big( \SM\big(\bG_{\tf}^{(i)}\bW_{QK}^{(i+1)}\bG_{\tf}^{(i)\top}\big)\bG_{\tf}^{(i)}\bW_{V}^{(i+1)}-\SM\big(\tilde{\bG}_{\tf}^{(i)}\tilde{\bW}_{QK}^{(i+1)}\tilde{\bG}_{\tf}^{(i)\top}\big)\tilde{\bG}_{\tf}^{(i)}\tilde{\bW}_{V}^{(i+1)}\Big)^{\top}\bigg\|_{p,\infty}\nonumber\\
        &\quad\leq B_{V}(1+4c_{p,q}B_{QK})\big\|\bG_{\tf}^{(i)\top}-\tilde{\bG}_{\tf}^{(i)\top}\big\|_{p,\infty}+2c_{p,q}B_{V}\|\bW_{QK}^{(i+1)\top}-\tilde{\bW}_{QK}^{(i+1)\top}\|_{p,q}\nonumber\\
        &\quad\qquad+\|\bW_{V}^{(i+1)\top}-\tilde{\bW}_{V}^{(i+1)\top}\|_{p,q},\label{ieq:22}
    \end{align}
    where the inequality follows from the fact that the radius of parameters are bounded and the norm of $\|\tilde{\bG}_{\tf}^{(i)\top}\big\|_{p,\infty}$ is bounded by 1 due to the normalization procedure.
    
    Now we consider the second term in inequality \eqref{ieq:20}. For $i\in [L-1]$, we have 
    \begin{align}
        &\bigg\|\Big(\rff\big(\bG_{\tf}^{(i)},\ba^{(i+1)},\bb^{(i+1)}\big)-\rff\big(\tilde{\bG}_{\tf}^{(i)},\tilde{\ba}^{(i+1)},\tilde{\bb}^{(i+1)}\big)\Big)^{\top}\bigg\|_{p,\infty}\nonumber\\
        &\quad\leq \bigg\|\Big(\rff\big(\bG_{\tf}^{(i)},\ba^{(i+1)},\bb^{(i+1)}\big)-\rff\big(\tilde{\bG}_{\tf}^{(i)},\ba^{(i+1)},\bb^{(i+1)}\big)\Big)^{\top}\bigg\|_{p,\infty}\nonumber\\
        &\quad\qquad +\bigg\|\Big(\rff\big(\tilde{\bG}_{\tf}^{(i)},\ba^{(i+1)},\bb^{(i+1)}\big)-\rff\big(\tilde{\bG}_{\tf}^{(i)},\tilde{\ba}^{(i+1)},\tilde{\bb}^{(i+1)}\big)\Big)^{\top}\bigg\|_{p,\infty}\label{ieq:119}
    \end{align}
    Thus, we need to upper bound   the two terms in the right-hand side of inequality \eqref{ieq:119}. These upper bounds  are stated as follows.
    \begin{proposition}\label{prop:rfflip}
        For any $\bX,\tilde{\bX}\in\bbR^{N\times d}$, $\ba,\tilde{\ba}\in\bbR^{dm}$, $\bb,\tilde{\bb}\in\bbR^{d\times dm}$ and two positive conjugate numbers $p,q\in\bbR$, if $\|\bX^{\top}\|_{p,\infty}\leq B_{X}$, $|a_{kj}|,|\tilde{a}_{kj}|\leq B_{a}$, and $\|\bb_{kj}\|_{q},\|\tilde{\bb}_{kj}\|_{q}\leq B_{b}$ for $k\in[d]$ and $j\in[m]$, then we have 
        \begin{align}
            &\Big\|\big(\rff(\bX,\ba,\bb)-\rff(\tilde{\bX},\ba,\bb)\big)^{\top}\Big\|_{p,\infty}\leq d^{\frac{1}{p}}mB_{a}\cdot B_{b}\cdot\big\|\bX^{\top}-\tilde{\bX}^{\top}\big\|_{p,\infty}, \text{ and }\nonumber\\
            &\Big\|\big(\rff(\bX,\ba,\bb)-\rff(\bX,\tilde{\ba},\tilde{\bb})\big)^{\top}\Big\|_{p,\infty}\nonumber\\
            &\quad\leq B_{b}\cdot B_{X}\bigg[\sum_{k=1}^{d}\bigg(\sum_{j=1}^{m}|a_{kj}-\tilde{a}_{kj}|\bigg)^{p}\bigg]^{\frac{1}{p}}+B_{a}\cdot B_{X}\bigg[\sum_{k=1}^{d}\bigg(\sum_{j=1}^{m}\|\bb_{kj}-\tilde{\bb}_{kj}\|_{q}\bigg)^{p}\bigg]^{\frac{1}{p}}.\nonumber
        \end{align}
    \end{proposition}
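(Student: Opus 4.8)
The plan is to reduce both claims to a single row of the input. Since for a matrix $\bM\in\bbR^{d\times N}$ one has $\|\bM\|_{p,\infty}=\max_{i\in[N]}\|\bM_{:,i}\|_{p}$, and $\rff$ acts row-wise, it suffices to fix a row index $i$, write $\bx=\bX_{i,:}$ and $\tilde{\bx}=\tilde{\bX}_{i,:}$, bound $\|\rff(\bx,\ba,\bb)-\rff(\tilde{\bx},\ba,\bb)\|_{p}$ and $\|\rff(\bx,\ba,\bb)-\rff(\bx,\tilde{\ba},\tilde{\bb})\|_{p}$, and then take the maximum over $i\in[N]$ (using $\|\bX_{i,:}\|_{p}=\|(\bX^{\top})_{:,i}\|_{p}\le\|\bX^{\top}\|_{p,\infty}\le B_{X}$ where needed). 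Throughout I would use that $\relu$ is $1$-Lipschitz and that Hölder's inequality gives $|\bb_{kj}^{\top}\bz|\le\|\bb_{kj}\|_{q}\|\bz\|_{p}$ for the conjugate pair $p,q$.

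For the input-Lipschitz estimate, I would compute coordinatewise: for $k\in[d]$, $[\rff(\bx,\ba,\bb)]_{k}-[\rff(\tilde{\bx},\ba,\bb)]_{k}=\sum_{j=1}^{m}a_{kj}\bigl(\relu(\bb_{kj}^{\top}\bx)-\relu(\bb_{kj}^{\top}\tilde{\bx})\bigr)$, so by $1$-Lipschitzness of $\relu$, Hölder, and $|a_{kj}|\le B_{a}$, $\|\bb_{kj}\|_{q}\le B_{b}$, this is at most $mB_{a}B_{b}\|\bx-\tilde{\bx}\|_{p}$ in absolute value. Raising to the $p$-th power, summing over the $d$ coordinates and taking the $p$-th root produces the factor $d^{1/p}$, and maximizing over $i$ converts $\|\bx-\tilde{\bx}\|_{p}$ into $\|\bX^{\top}-\tilde{\bX}^{\top}\|_{p,\infty}$, giving the first bound. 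For the parameter-Lipschitz estimate I would use the standard add-and-subtract decomposition $a_{kj}\relu(\bb_{kj}^{\top}\bx)-\tilde{a}_{kj}\relu(\tilde{\bb}_{kj}^{\top}\bx)=a_{kj}\bigl(\relu(\bb_{kj}^{\top}\bx)-\relu(\tilde{\bb}_{kj}^{\top}\bx)\bigr)+(a_{kj}-\tilde{a}_{kj})\relu(\tilde{\bb}_{kj}^{\top}\bx)$, bounding the first summand by $B_{a}B_{X}\|\bb_{kj}-\tilde{\bb}_{kj}\|_{q}$ (Lipschitzness, Hölder, $\|\bx\|_{p}\le B_{X}$) and the second by $B_{b}B_{X}|a_{kj}-\tilde{a}_{kj}|$ (since $|\relu(\tilde{\bb}_{kj}^{\top}\bx)|\le\|\tilde{\bb}_{kj}\|_{q}\|\bx\|_{p}\le B_{b}B_{X}$). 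Summing over $j$ and then applying Minkowski's inequality to the $\ell_{p}$ norm over $k$ of the sum of the two resulting nonnegative $d$-vectors yields exactly the two terms in the statement.

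All the steps are elementary; the only places that require care are keeping the $\|\cdot\|_{p,q}$/transpose conventions straight, pairing the conjugate exponents correctly in each Hölder application, and invoking Minkowski's inequality (rather than a crude union-type bound) to separate the two parameter-difference contributions so that no spurious constant appears. I would therefore present the two display chains in full with these three tools flagged at the relevant lines, and then finish each bound by the $\max_{i\in[N]}$ step that restores the matrix norms on the left-hand sides, exactly as needed for the use of Proposition~\ref{prop:rfflip} in the proof of Proposition~\ref{prop:nnerror}.
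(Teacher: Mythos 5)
Your proposal is correct and follows essentially the same route as the paper's proof: row-wise reduction, $1$-Lipschitzness of $\relu$ plus Hölder with the conjugate pair $(p,q)$ for the input bound (yielding the $d^{1/p}$ factor by summing coordinates), and an add-and-subtract decomposition followed by the triangle/Minkowski inequality over $k$ for the parameter bound. The only difference is the (symmetric) order in which you swap $\ba$ versus $\bb$ in the decomposition, which is immaterial since both sets of parameters are assumed bounded.
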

    \begin{proof}
        See Appendix~\ref{app:proprfflip} for a detailed proof.
    \end{proof}
    Thus, we have
    \begin{align}
        &\bigg\|\Big(\rff\big(\bG_{\tf}^{(i)},\ba^{(i+1)},\bb^{(i+1)}\big)-\rff\big(\tilde{\bG}_{\tf}^{(i)},\tilde{\ba}^{(i+1)},\tilde{\bb}^{(i+1)}\big)\Big)^{\top}\bigg\|_{p,\infty}\nonumber\\
        &\quad\leq d^{\frac{1}{p}}mB_{a}B_{b}\big\|\bG_{\tf}^{(i)\top}-\tilde{\bG}_{\tf}^{(i)\top}\big\|_{p,\infty}+B_{b}\bigg[\sum_{k=1}^{d}\bigg(\sum_{j=1}^{m}|a_{kj}^{(i+1)}-\tilde{a}_{kj}^{(i+1)}|\bigg)^{p}\bigg]^{\frac{1}{p}}\nonumber\\
        &\quad\qquad+B_{a}\bigg[\sum_{k=1}^{d}\bigg(\sum_{j=1}^{m}\|\bb_{kj}^{(i+1)}-\tilde{\bb}_{kj}^{(i+1)}\|_{q}\bigg)^{p}\bigg]^{\frac{1}{p}}\label{ieq:24}
    \end{align}
    where the inequality follows from the fact that the radius of parameters are bounded and the norm of $\|\tilde{\bG}_{\tf}^{(i)\top}\big\|_{p,\infty}$ is bounded by 1 due to the normalization procedure.
    
    Substituting inequalities \eqref{ieq:22} and \eqref{ieq:24} into inequality \eqref{ieq:20}, we have
    \begin{align}
        &\Big\|\big(\bG_{\tf}^{(i+1)}(\bX;\bW_{QK}^{1:i+1},\bW_{V}^{1:i+1},\ba^{1:i+1},\bb^{1:i+1})-\bG_{\tf}^{(i+1)}(\bX;\tilde{\bW}_{QK}^{1:i+1},\tilde{\bW}_{V}^{1:i+1},\tilde{\ba}^{1:i+1},\tilde{\bb}^{1:i+1})\big)^{\top}\Big\|_{p,\infty}\nonumber\\
        &\quad\leq \big[B_{V}(1+4c_{p,q}B_{QK})+d^{\frac{1}{p}}mB_{a}B_{b}\big]\big\|g_{tf}^{(i)\top}-\tilde{\bG}_{\tf}^{(i)\top}\big\|_{p,\infty}+2c_{p,q}B_{V}\|\bW_{QK}^{(i+1)\top}-\tilde{\bW}_{QK}^{(i+1)\top}\|_{p,q}\nonumber\\
        &\quad\qquad +\|\bW_{V}^{(i+1)\top}-\tilde{\bW}_{V}^{(i+1)\top}\|_{p,q}+B_{b}\bigg[\sum_{k=1}^{d}\bigg(\sum_{j=1}^{m}|a_{kj}^{(i+1)}-\tilde{a}_{kj}^{(i+1)}|\bigg)^{p}\bigg]^{\frac{1}{p}}\nonumber\\
        &\quad\qquad+B_{a}\bigg[\sum_{k=1}^{d}\bigg(\sum_{j=1}^{m}\|\bb_{kj}^{(i+1)}-\tilde{\bb}_{kj}^{(i+1)}\|_{q}\bigg)^{p}\bigg]^{\frac{1}{p}}.\label{ieq:25}
    \end{align}
    
    \textbf{Step 2: Combine the error bound of each layer in inequality \eqref{ieq:25}.}
    
    Repeating inequality \eqref{ieq:25} for $i\in[L-1]$, we derive
    \begin{align}
        &\Big\|\big(\bG_{\tf}^{(L)}(\bX;\bW_{QK}^{1:L},\bW_{V}^{1:L},\ba^{1:L},\bb^{1:L})-\bG_{\tf}^{(L)}(\bX;\tilde{\bW}_{QK}^{1:L},\tilde{\bW}_{V}^{1:L},\tilde{\ba}^{1:L},\tilde{\bb}^{1:L})\big)^{\top}\Big\|_{p,\infty}\nonumber\\
        &\quad\leq \sum_{i=1}^{L}\big[B_{V}(1+4c_{p,q}B_{QK})+d^{\frac{1}{p}}mB_{a}B_{b}\big]^{L-i}\bigg\{2c_{p,q}B_{V}\|\bW_{QK}^{(i)\top}-\tilde{\bW}_{QK}^{(i)\top}\|_{p,q}\nonumber\\
        &\quad\qquad+\|\bW_{V}^{(i)\top}-\tilde{\bW}_{V}^{(i)\top}\|_{p,q} +B_{b}\bigg[\sum_{k=1}^{d}\bigg(\sum_{j=1}^{m}|a_{kj}^{(i)}-\tilde{a}_{kj}^{(i)}|\bigg)^{p}\bigg]^{\frac{1}{p}}\nonumber\\
        &\quad\qquad+B_{a}\bigg[\sum_{k=1}^{d}\bigg(\sum_{j=1}^{m}\|\bb_{kj}^{(i)}-\tilde{\bb}_{kj}^{(i)}\|_{q}\bigg)^{p}\bigg]^{\frac{1}{p}}\bigg\}.\label{ieq:26}
    \end{align}
    For the output of the neural network, we have
    \begin{align}
        &\big|g_{\tf}(\bX;\bW_{QK}^{1:L},\bW_{V}^{1:L},\ba^{1:L},\bb^{1:L},\bw)-g_{\tf}(\bX;\tilde{\bW}_{QK}^{1:L},\tilde{\bW}_{V}^{1:L},\tilde{\ba}^{1:L},\tilde{\bb}^{1:L},\tilde{\bw})\big|\nonumber\\
        &\quad=\bigg|\Pi_{V_{\max}}\big(\frac{1}{N}\bbI_{N}\bG_{\tf}^{(L)}(\bX;\bW_{QK}^{1:L},\bW_{V}^{1:L},\ba^{1:L},\bb^{1:L})\bw\big)\nonumber\\
        &\quad\qquad-\Pi_{V_{\max}}\big(\frac{1}{N}\bbI_{N}\bG_{\tf}^{(L)}(\bX;\tilde{\bW}_{QK}^{1:L},\tilde{\bW}_{V}^{1:L},\tilde{\ba}^{1:L},\tilde{\bb}^{1:L})\tilde{\bw}\big)\bigg|\nonumber\\
        &\quad\leq \bigg|\frac{1}{N}\bbI_{N}\bG_{\tf}^{(L)}(\bX;\bW_{QK}^{1:L},\bW_{V}^{1:L},\ba^{1:L},\bb^{1:L})\bw-\frac{1}{N}\bbI_{N}\bG_{\tf}^{(L)}(\bX;\tilde{\bW}_{QK}^{1:L},\tilde{\bW}_{V}^{1:L},\tilde{\ba}^{1:L},\tilde{\bb}^{1:L})\tilde{\bw}\bigg|,\nonumber 
    \end{align}
    where the inequality follows from the contraction property of the normalization function. It can be further upper bounded as
    \begin{align}
        &\big|g_{\tf}(\bX;\bW_{QK}^{1:L},\bW_{V}^{1:L},\ba^{1:L},\bb^{1:L},\bw)-g_{\tf}(\bX;\tilde{\bW}_{QK}^{1:L},\tilde{\bW}_{V}^{1:L},\tilde{\ba}^{1:L},\tilde{\bb}^{1:L},\tilde{\bw})\big|\nonumber\\
        &\quad\leq \Big\|\bG_{\tf}^{(L)}(\bX;\bW_{QK}^{1:L},\bW_{V}^{1:L},\ba^{1:L},\bb^{1:L})\bw-\bG_{\tf}^{(L)}(\bX;\tilde{\bW}_{QK}^{1:L},\tilde{\bW}_{V}^{1:L},\tilde{\ba}^{1:L},\tilde{\bb}^{1:L})\tilde{\bw}\Big\|_{\infty}\nonumber\\
        &\quad\leq \big\|\bG_{\tf}^{(L)\top}-\tilde{\bG}_{\tf}^{(L)\top}\big\|_{p,\infty}\cdot\|\bw\|_{q}+\big\|\tilde{\bG}_{\tf}^{(L)\top}\big\|_{p,\infty}\cdot\|\bw-\tilde{\bw}\|_{q}\nonumber\\
        &\quad\leq B_{w}\big\|\bG_{\tf}^{(L)\top}-\tilde{\bG}_{\tf}^{(L)\top}\big\|_{p,\infty}+\|\bw-\tilde{\bw}\|_{q},\label{ieq:29}
    \end{align}
    where first inequality follows from H\"older's inequality, and the second inequality follows from Lemma~\ref{lem:matvec} with $u=p$, $v=q$ and $p=\infty$.
    
    Combining inequalities \eqref{ieq:26} and \eqref{ieq:29}, we have
    \begin{align}
        &\big|g_{\tf}(\bX;\bW_{QK}^{1:L},\bW_{V}^{1:L},\ba^{1:L},\bb^{1:L},\bw)-g_{\tf}(\bX;\tilde{\bW}_{QK}^{1:L},\tilde{\bW}_{V}^{1:L},\tilde{\ba}^{1:L},\tilde{\bb}^{1:L},\tilde{\bw})\big|\nonumber\\
        &\leq \|\bw-\tilde{\bw}\|_{q}+\sum_{i=1}^{L}B_{w}\big[B_{V}(1+4c_{p,q}B_{QK})+d^{\frac{1}{p}}mB_{a}B_{b}\big]^{L-i}\bigg\{2c_{p,q}B_{V}\|\bW_{QK}^{(i)\top}-\tilde{\bW}_{QK}^{(i)\top}\|_{p,q}\nonumber\\*
        &\quad+\|\bW_{V}^{(i)\top}\!-\!\tilde{\bW}_{V}^{(i)\top}\|_{p,q}\!+\!B_{b}\bigg[\sum_{k=1}^{d}\bigg(\sum_{j=1}^{m}|a_{kj}^{(i)}\!-\!\tilde{a}_{kj}^{(i)}|\bigg)^{p}\bigg]^{\frac{1}{p}}\!+\! B_{a}\bigg[\sum_{k=1}^{d}\bigg(\sum_{j=1}^{m}\|\bb_{kj}^{(i)}\!-\!\tilde{\bb}_{kj}^{(i)}\|_{q}\bigg)^{p}\bigg]^{\frac{1}{p}}\bigg\}.\nonumber
    \end{align}
    This concludes the proof.
\end{proof}
\subsection{Proof of Proposition~\ref{prop:translip}}\label{app:proptranslip}
\begin{proof}[Proof of Proposition~\ref{prop:translip}]
Let $\tau\in[N]$ and $\bx_{\tau}^{\top}$ be the $\tau^{\rm{th}}$ row of $\bX$. For the first inequality, we have 
    \begin{align}
        &\Big\|\big(\SM(\bX\bW_{QK}\bX^{\top})\bX\bW_{V}-\SM(\tilde{\bX}\bW_{QK}\tilde{\bX}^{\top})\tilde{\bX}\bW_{V}\big)^{\top}\Big\|_{p,\infty}\nonumber\\
        &\quad=\max_{\tau\in[N]}\|\SM(\bx_{\tau}^{\top}\bW_{QK}\bX^{\top})\bX\bW_{V}-\SM(\tilde{\bx}_{\tau}^{\top}\bW_{QK}\tilde{\bX}^{\top})\tilde{\bX}\bW_{V}\|_{p}\nonumber\\
        &\quad\leq \max_{\tau\in[N]}\|\SM(\bx_{\tau}^{\top}\bW_{QK}\bX^{\top})\bX\bW_{V}-\SM(\bx_{\tau}^{\top}\bW_{QK}\bX^{\top})\tilde{\bX}\bW_{V}\|_{p}\nonumber\\
        &\quad\qquad +\|\SM(\bx_{\tau}^{\top}\bW_{QK}\bX^{\top})\tilde{\bX}\bW_{V}-\SM(\tilde{\bx}_{\tau}^{\top}\bW_{QK}\tilde{\bX}^{\top})\tilde{\bX}\bW_{V}\|_{p}\nonumber\\ 
        &\quad= \max_{\tau\in[N]}\Big\|\bW_{V}^{\top}(\bX^{\top}-\tilde{\bX}^{\top})\big(\SM(\bx_{\tau}^{\top}\bW_{QK}\bX^{\top})\big)^{\top}\Big\|_{p}\nonumber\\
        &\quad\qquad +\Big\|\bW_{V}^{\top}\tilde{\bX}^{\top}\big(\SM(\bx_{\tau}^{\top}\bW_{QK}\bX^{\top})-\SM(\tilde{\bx}_{\tau}^{\top}\bW_{QK}\tilde{\bX}^{\top})\big)^{\top}\Big\|_{p},\nonumber
    \end{align}
    where the inequality follows from the triangle inequality. We further upper bounded it as
    \begin{align}
        &\Big\|\big(\SM(\bX\bW_{QK}\bX^{\top})\bX\bW_{V}-\SM(\tilde{\bX}\bW_{QK}\tilde{\bX}^{\top})\tilde{\bX}\bW_{V}\big)^{\top}\Big\|_{p,\infty}\nonumber\\
        &\quad\leq \max_{\tau\in[N]}\|\bW_{V}^{\top}(\tilde{\bX}^{\top}-\bX^{\top})\|_{p,\infty}+\|\bW_{V}^{\top}\tilde{\bX}^{\top}\|_{p,\infty}\cdot\|\SM(\bx_{\tau}^{\top}\bW_{QK}\bX^{\top})-\SM(\tilde{\bx}_{\tau}^{\top}\bW_{QK}\tilde{\bX}^{\top})\|_{1}\nonumber\\ 
        &\quad\leq \max_{\tau\in[N]}2\|\bW_{V}^{\top}\|_{p,q}\cdot\|\tilde{\bX}^{\top}\|_{p,\infty}\cdot\|\bx_{\tau}^{\top}\bW_{QK}\bX^{\top}-\tilde{\bx}_{\tau}^{\top}\bW_{QK}\tilde{\bX}^{\top}\|_{\infty}\nonumber\\
        &\quad\qquad +\|\bW_{V}^{\top}\|_{p,q}\cdot\|(\tilde{\bX}^{\top}-\bX^{\top})\|_{p,\infty},\label{ieq:6}
    \end{align}
    where the first inequality follows from Lemma~\ref{lem:matvec} with $u=\infty$ and $v=1$, and the last inequality follows from Lemma~\ref{lem:matmat} and Lemma~\ref{lem:smlip}. Now we consider the second term of inequality \eqref{ieq:6}, and we have
    \begin{align}
        &\|\bx_{\tau}^{\top}\bW_{QK}\bX^{\top}-\tilde{\bx}_{\tau}^{\top}\bW_{QK}\tilde{\bX}^{\top}\|_{\infty}\nonumber\\
        &\quad\leq\|\bx_{\tau}^{\top}\bW_{QK}\bX^{\top}-\bx_{\tau}^{\top}\bW_{QK}\tilde{\bX}^{\top}\|_{\infty}+\|\bx_{\tau}^{\top}\bW_{QK}\tilde{\bX}^{\top}-\tilde{\bx}_{\tau}^{\top}\bW_{QK}\tilde{\bX}^{\top}\|_{\infty}\nonumber\\
        &\quad=\big\|(\bX-\tilde{\bX})\bW_{QK}^{\top}\bx_{\tau}\big\|_{\infty}+\big\|\tilde{\bX}(\bW_{QK}^{\top}\bx_{\tau}-\bW_{QK}^{\top}\tilde{\bx}_{\tau})\big\|_{\infty}\nonumber\\
        &\quad \leq \|\bX^{\top}-\tilde{\bX}^{\top}\|_{p,\infty}\cdot\|\bW_{QK}^{\top}\bx_{\tau}\|_{q}+\|\tilde{\bX}^{\top}\|_{p,\infty}\cdot\|\bW_{QK}^{\top}(\bx_{\tau}-\tilde{\bx}_{\tau})\|_{q}\nonumber,
    \end{align}
    where the last inequality follows from Lemma~\ref{lem:matvec} with $u=p$, $v=q$ and $p=\infty$. We then bound the $\ell_{q}$ norm with the $\ell_{p}$ norm as
    \begin{align}
        &\|\bx_{\tau}^{\top}\bW_{QK}\bX^{\top}-\tilde{\bx}_{\tau}^{\top}\bW_{QK}\tilde{\bX}^{\top}\|_{\infty}\label{ieq:10}\\
        &\quad\leq c_{p,q}\Big[\|\bX^{\top}-\tilde{\bX}^{\top}\|_{p,\infty}\cdot\|\bW_{QK}^{\top}\bx_{\tau}\|_{p}+\|\tilde{\bX}^{\top}\|_{p,\infty}\cdot\|\bW_{QK}^{\top}(\bx_{\tau}-\tilde{\bx}_{\tau})\|_{p}\Big]\nonumber\\ 
        &\quad\leq c_{p,q}\Big[\|\bX^{\top}-\tilde{\bX}^{\top}\|_{p,\infty}\cdot\|\bW_{QK}^{\top}\|_{p,q}\cdot\|\bx_{\tau}\|_{p}+\|\tilde{\bX}^{\top}\|_{p,\infty}\cdot\|\bW_{QK}^{\top}\|_{p,q}\cdot\|\bx_{\tau}-\tilde{\bx}_{\tau}\|_{p}\Big]\nonumber\\ 
        &\quad\leq c_{p,q}\Big[\|\bX^{\top}-\tilde{\bX}^{\top}\|_{p,\infty}\cdot\|\bW_{QK}^{\top}\|_{p,q}\cdot\|\bX^{\top}\|_{p,\infty}+\|\tilde{\bX}^{\top}\|_{p,\infty}\cdot\|\bW_{QK}^{\top}\|_{p,q}\cdot\|\bX^{\top}-\tilde{\bX}^{\top}\|_{p,\infty}\Big]\nonumber,
    \end{align}
    where $c_{p,q}=1$ if $p\leq q$, and $c_{p,q}=d^{1/q-1/p}$ otherwise, the first inequality follows from Lemma~\ref{lem:vecnorm}, and the second inequality follows from Lemma~\ref{lem:matvec} with $u=q$ and $v=p$.
    
    Substituting inequality \eqref{ieq:10} into inequality \eqref{ieq:6}, we obtain
    \begin{align}
        &\|\SM(\bx_{\tau}^{\top}\bW_{QK}\bX^{\top})\bX\bW_{V}-\SM(\tilde{\bx}_{\tau}^{\top}\bW_{QK}\tilde{\bX}^{\top})\tilde{\bX}\bW_{V}\|_{p}\nonumber\\
        &\quad\leq \|\bW_{V}^{\top}\|_{p,q}\Big(1+2c_{p,q}\|\tilde{\bX}^{\top}\|_{p,\infty}\cdot\|\bW_{QK}^{\top}\|_{p,q}\big(\|\tilde{\bX}^{\top}\|_{p,\infty}+\|\bX^{\top}\|_{p,\infty}\big)\Big)\|\bX^{\top}-\tilde{\bX}^{\top}\|_{p,\infty} \nonumber
    \end{align}
as desired.

For the second inequality, we have 
    \begin{align}
        &\Big\|\big(\SM(\bX\bW_{QK}\bX^{\top})\bX\bW_{V}-\SM(\bX\tilde{\bW}_{QK}\bX^{\top})\bX\tilde{\bW}_{V}\big)^{\top}\Big\|_{p,\infty}\nonumber\\
        &\quad=\max_{\tau\in[N]}\|\SM(\bx_{\tau}^{\top}\bW_{QK}\bX^{\top})\bX\bW_{V}-\SM(\bx_{\tau}^{\top}\tilde{\bW}_{QK}\bX^{\top})\bX\tilde{\bW}_{V}\|_{p}\nonumber\\
        &\quad\leq\max_{\tau\in[N]}\|\SM(\bx_{\tau}^{\top}\bW_{QK}\bX^{\top})\bX\bW_{V}-\SM(\bx_{\tau}^{\top}\tilde{\bW}_{QK}\bX^{\top})\bX\bW_{V}\|_{p}\nonumber\\
        &\quad\qquad +\|\SM(\bx_{\tau}^{\top}\tilde{\bW}_{QK}\bX^{\top})\bX\bW_{V}-\SM(\bx_{\tau}^{\top}\tilde{\bW}_{QK}\bX^{\top})\bX\tilde{\bW}_{V}\|_{p}\nonumber\\ 
        &\quad=\max_{\tau\in[N]}\Big\|\bW_{V}^{\top}\bX^{\top}\big(\SM(\bx_{\tau}^{\top}\bW_{QK}\bX^{\top})-\SM(\bx_{\tau}^{\top}\tilde{\bW}_{QK}\bX^{\top})\big)^{\top}\Big\|_{p}\nonumber\\
        &\quad\qquad+\big\|(\bW_{V}^{\top}\bX^{\top}-\tilde{\bW}_{V}^{\top}\bX^{\top})\SM(\bx_{\tau}^{\top}\tilde{\bW}_{QK}\bX^{\top})^{\top}\big\|_{p},\nonumber
    \end{align}
    where the inequality follows from the triangle inequality. It can be further upper bounded as
    \begin{align}
        &\Big\|\big(\SM(\bX\bW_{QK}\bX^{\top})\bX\bW_{V}-\SM(\bX\tilde{\bW}_{QK}\bX^{\top})\bX\tilde{\bW}_{V}\big)^{\top}\Big\|_{p,\infty}\label{ieq:13}\\
        &\quad\leq\max_{\tau\in[N]} \|\bW_{V}^{\top}\bX^{\top}\|_{p,\infty}\cdot\|\SM(\bx_{\tau}^{\top}\bW_{QK}\bX^{\top})-\SM(\bx_{\tau}^{\top}\tilde{\bW}_{QK}\bX^{\top})\|_{1}\nonumber\\
        &\quad\qquad+\big\|(\bW_{V}^{\top}-\tilde{\bW}_{V}^{\top})\bX^{\top}\big\|_{p,\infty}\cdot\|\SM(\bx_{\tau}^{\top}\tilde{\bW}_{QK}\bX^{\top})\|_{1}\nonumber\\ 
        &\quad\leq\max_{\tau\in[N]} 2\|\bW_{V}^{\top}\bX^{\top}\|_{p,\infty}\cdot\|\bx_{\tau}^{\top}\bW_{QK}\bX^{\top}-\bx_{\tau}^{\top}\tilde{\bW}_{QK}\bX^{\top}\|_{\infty}+\big\|(\bW_{V}^{\top}-\tilde{\bW}_{V}^{\top})\bX^{\top}\big\|_{p,\infty}\nonumber\\ 
        &\quad\leq\max_{\tau\in[N]} 2\|\bW_{V}^{\top}\|_{p,q}\cdot\|\bX^{\top}\|_{p,\infty}\cdot\|\bx_{\tau}^{\top}\bW_{QK}\bX^{\top}-\bx_{\tau}^{\top}\tilde{\bW}_{QK}\bX^{\top}\|_{\infty}+\big\|\bW_{V}^{\top}-\tilde{\bW}_{V}^{\top}\|_{p,q}\cdot\|\bX^{\top}\big\|_{p,\infty}\nonumber,
    \end{align}
    where the first inequality follows from Lemma~\ref{lem:matvec} with $u=\infty$ and $v=1$, the second inequality follows from Lemma~\ref{lem:smlip}, and the last inequality follows from Lemma~\ref{lem:matmat}. Now we consider the first term of inequality \eqref{ieq:13} and have
    \begin{align}
        &\|\bx_{\tau}^{\top}\bW_{QK}\bX^{\top}-\bx_{\tau}^{\top}\tilde{\bW}_{QK}\bX^{\top}\|_{\infty}\nonumber\\
        &\quad\leq \|\bX^{\top}\|_{p,\infty}\cdot\|\bx_{\tau}^{\top}\bW_{QK}-\bx_{\tau}^{\top}\tilde{\bW}_{QK}\|_{q}\nonumber\\ 
        &\quad\leq c_{p,q}\|\bX^{\top}\|_{p,\infty}\cdot\|\bx_{\tau}^{\top}\bW_{QK}-\bx_{\tau}^{\top}\tilde{\bW}_{QK}\|_{p}\nonumber\\ 
        &\quad\leq c_{p,q}\|\bX^{\top}\|_{p,\infty}\cdot\|\bW_{QK}^{\top}-\tilde{\bW}_{QK}^{\top}\|_{p,q}\cdot\|\bx_{\tau}\|_{p}\nonumber\\ 
        &\quad\leq c_{p,q}\|\bX^{\top}\|_{p,\infty}^{2}\cdot\|\bW_{QK}^{\top}-\tilde{\bW}_{QK}^{\top}\|_{p,q},\label{ieq:17}
    \end{align}
    where $c_{p,q}=1$ if $p\leq q$, and $c_{p,q}=d^{1/q-1/p}$ otherwise, the first and third inequalities follows from Lemma~\ref{lem:matvec}, and the second inequality follows from Lemma~\ref{lem:vecnorm}.
    
    Combining Eqn.~\eqref{ieq:13} and \eqref{ieq:17}, we have
    \begin{align}
        &\|\SM(\bx_{\tau}^{\top}\bW_{QK}\bX^{\top})\bX\bW_{V}-\SM(\bx_{\tau}^{\top}\tilde{\bW}_{QK}\bX^{\top})\bX\tilde{\bW}_{V}\|_{p}\nonumber\\
        &\quad\leq 2c_{p,q}\|\bX^{\top}\|_{p,\infty}^{3}\cdot\|\bW_{V}^{\top}\|_{p,q}\cdot\|\bW_{QK}^{\top}-\tilde{\bW}_{QK}^{\top}\|_{p,q}+\big\|\bW_{V}^{\top}-\tilde{\bW}_{V}^{\top}\|_{p,q}\cdot\|\bX^{\top}\big\|_{p,\infty}.\nonumber
    \end{align}
    This concludes the proof.
\end{proof}
\subsection{Proof of Proposition~\ref{prop:rfflip}}\label{app:proprfflip}
\begin{proof}[Proof of Proposition~\ref{prop:rfflip}]
Let $\tau\in[N]$ and $\bx_{\tau}^{\top}$ be the $\tau^{\rm{th}}$ row of $\bX$. For the first inequality, we have 
    \begin{align}
        &\Big\|\big(\rff(\bX,\ba,\bb)-\rff(\tilde{\bX},\ba,\bb)\big)^{\top}\Big\|_{p,\infty}\nonumber\\
        &\quad=\max_{\tau\in[N]}\|\rff(\bx_{\tau},\ba,\bb)-\rff(\tilde{\bx}_{\tau},\ba,\bb)\|_{p}\nonumber\\
        &\quad=\max_{\tau\in[N]}\bigg[\sum_{k=1}^{d}\Big|\sum_{j=1}^{m}a_{kj}\big[\relu(\bb_{kj}^{\top}\bx_{\tau})-\relu(\bb_{kj}^{\top}\tilde{\bx}_{\tau})\big]\Big|^{p}\bigg]^{\frac{1}{p}},\nonumber
    \end{align}
    which follows from the definition of the rFF network. It can be upper bounded as
    \begin{align}
        &\Big\|\big(\rff(\bX,\ba,\bb)-\rff(\tilde{\bX},\ba,\bb)\big)^{\top}\Big\|_{p,\infty}\nonumber\\
        &\quad\leq \max_{\tau\in[N]}\bigg[\sum_{k=1}^{d}\bigg(\sum_{j=1}^{m}|a_{kj}|\cdot\big|\bb_{kj}^{\top}\bx_{\tau}-\bb_{kj}^{\top}\tilde{\bx}_{\tau}\big|\bigg)^{p}\bigg]^{\frac{1}{p}}\nonumber\\
        &\quad\leq \max_{\tau\in[N]}\bigg[\sum_{k=1}^{d}\bigg(\sum_{j=1}^{m}|a_{kj}|\cdot\|\bb_{kj}\|_{q}\cdot\|\bx_{\tau}-\tilde{\bx}_{\tau}\|_{p}\bigg)^{p}\bigg]^{\frac{1}{p}}\nonumber\\ 
        &\quad\leq\bigg[\sum_{k=1}^{d}\bigg(\sum_{j=1}^{m}|a_{kj}|\cdot\|\bb_{kj}\|_{q}\bigg)^{p}\bigg]^{\frac{1}{p}}\big\|\bX^{\top}-\tilde{\bX}^{\top}\big\|_{p,\infty},\nonumber 
    \end{align}
    where the first inequality follows from the fact that $\relu(\cdot)$ is $1$-Lipschitz, the second inequality follows from H\"older's inequality, and the last inequality follows from the definition of $\ell_{p,\infty}$ norm.

    For the second inequality, we have 
    \begin{align}
        &\Big\|\big(\rff(\bX,\ba,\bb)-\rff(\bX,\tilde{\ba},\tilde{\bb})\big)^{\top}\Big\|_{p,\infty}\nonumber\\
        &\quad=\max_{\tau\in[N]}\|\rff(\bx_{\tau},\ba,\bb)-\rff(\bx_{\tau},\tilde{\ba},\tilde{\bb})\|_{p}\nonumber\\
        &\quad=\max_{\tau\in[N]}\bigg[\sum_{k=1}^{d}\Big|\sum_{j=1}^{m}a_{kj}\relu(\bb_{kj}^{\top}\bx_{\tau})-\tilde{a}_{kj}\relu(\tilde{\bb}_{kj}^{\top}\bx_{\tau})\Big|^{p}\bigg]^{\frac{1}{p}}\nonumber\\
        &\quad\leq \max_{\tau\in[N]}\bigg[\sum_{k=1}^{d}\Big|\sum_{j=1}^{m}a_{kj}\relu(\bb_{kj}^{\top}\bx_{\tau})-\tilde{a}_{kj}\relu(\bb_{kj}^{\top}\bx_{\tau})\Big|^{p}\bigg]^{\frac{1}{p}}\nonumber\\
        &\quad\qquad+\bigg[\sum_{k=1}^{d}\Big|\sum_{j=1}^{m}\tilde{a}_{kj}\relu(\bb_{kj}^{\top}\bx_{\tau})-\tilde{a}_{kj}\relu(\tilde{\bb}_{kj}^{\top}\bx_{\tau})\Big|^{p}\bigg]^{\frac{1}{p}},\nonumber 
    \end{align}
    where the inequality follows from triangle inequality. Using the Lipschitz property of the $\relu$ function, it can be upper bounded as
    \begin{align}
        &\Big\|\big(\rff(\bX,\ba,\bb)-\rff(\bX,\tilde{\ba},\tilde{\bb})\big)^{\top}\Big\|_{p,\infty}\nonumber\\
        &\quad\leq \max_{\tau\in[N]}\bigg[\sum_{k=1}^{d}\bigg(\sum_{j=1}^{m}|a_{kj}-\tilde{a}_{kj}|\cdot|\bb_{kj}^{\top}\bx_{\tau}|\bigg)^{p}\bigg]^{\frac{1}{p}}+\bigg[\sum_{k=1}^{d}\bigg(\sum_{j=1}^{m}|\tilde{a}_{kj}|\cdot|\bb_{kj}^{\top}\bx_{\tau}-\tilde{\bb}_{kj}^{\top}\bx_{\tau}|\bigg)^{p}\bigg]^{\frac{1}{p}}\nonumber\\ 
        &\quad\leq \max_{\tau\in[N]}\bigg[\sum_{k=1}^{d}\bigg(\sum_{j=1}^{m}|a_{kj}-\tilde{a}_{kj}|\cdot\|\bb_{kj}\|_{q}\cdot\|\bx_{\tau}\|_{p}\bigg)^{p}\bigg]^{\frac{1}{p}}\nonumber\\
        &\quad\qquad+\bigg[\sum_{k=1}^{d}\bigg(\sum_{j=1}^{m}|\tilde{a}_{kj}|\cdot\|\bb_{kj}-\tilde{\bb}_{kj}\|_{q}\cdot\|\bx_{\tau}\|_{p}\bigg)^{p}\bigg]^{\frac{1}{p}}\nonumber\\ 
        &\quad\leq \bigg[\sum_{k=1}^{d}\bigg(\sum_{j=1}^{m}|a_{kj}-\tilde{a}_{kj}|\cdot\|\bb_{kj}\|_{q}\bigg)^{p}\bigg]^{\frac{1}{p}}\big\|\bX^{\top}\big\|_{p,\infty}\nonumber\\
        &\quad\qquad+\bigg[\sum_{k=1}^{d}\bigg(\sum_{j=1}^{m}|\tilde{a}_{kj}|\cdot\|\bb_{kj}-\tilde{\bb}_{kj}\|_{q}\bigg)^{p}\bigg]^{\frac{1}{p}}\big\|\bX^{\top}\big\|_{p,\infty},\nonumber 
    \end{align}
    where the first inequality follows from the fact  that $\relu(\cdot)$ is 1-Lipschitz, the second inequality follows from H\"older's inequality, and the last inequality follows from the definition of $\ell_{p,\infty}$ norm. This concludes the proof.
\end{proof}

\subsection{Proof of Proposition~\ref{prop:normbound}}\label{app:propnormbound}
\begin{proof}[Proof of Proposition~\ref{prop:normbound}]
    With triangle inequality, we have
    \begin{align}
        &\bigg\|\Big(\SM\big(\bX\bW_{QK}\bX^{\top}\big)\bX\bW_{V}+\rff\big(\bX,\ba,\bb\big)\Big)^{\top}\bigg\|_{p,\infty}\nonumber\\
        &\quad\leq \bigg\|\Big(\SM\big(\bX\bW_{QK}\bX^{\top}\big)\bX\bW_{V}\Big)^{\top}\bigg\|_{p,\infty}+\bigg\|\Big(\rff\big(\bX,\ba,\bb\big)\Big)^{\top}\bigg\|_{p,\infty}.\label{ieq:74}
    \end{align}
    Let $\tau\in[N]$ and $\bx_{\tau}^{\top}$ be the $\tau^{\rm{th}}$ row of $\bX$. Then the first term in the right-hand side of Eqn.~\eqref{ieq:74} is
    \begin{align}
        \bigg\|\Big(\SM\big(\bX\bW_{QK}\bX^{\top}\big)\bX\bW_{V}\Big)^{\top}\bigg\|_{p,\infty}&=\max_{\tau\in[N]}\bigg\|\Big(\SM\big(\bx_{\tau}^{\top}\bW_{QK}\bX^{\top}\big)\bX\bW_{V}\Big)^{\top}\bigg\|_{p}\nonumber\\
        &\leq \max_{\tau\in [N]} \|\bW_{V}^{\top}\bX^{\top}\|_{p,\infty}\cdot\|\SM\big(\bx_{\tau}^{\top}\bW_{QK}\bX^{\top}\big)\|_{1}\nonumber\\ 
        &\leq \|\bW_{V}^{\top}\|_{p,q}\cdot\|\bX^{\top}\|_{p,\infty}\label{ieq:76},
    \end{align}
    where the first inequality follows from Lemma~\ref{lem:matvec} with $u=\infty$ and $v=1$, and the last inequality follows from Lemma~\ref{lem:matmat}. The second term in the right-hand side of inequality \eqref{ieq:74} is
    \begin{align}
        \bigg\|\Big(\rff\big(\bX,\ba,\bb\big)\Big)^{\top}\bigg\|_{p,\infty}&=\max_{\tau\in[N]} \bigg\|\Big(\rff\big(\bx_{\tau},\ba,\bb\big)\Big)^{\top}\bigg\|_{p}\nonumber\\
        &=\max_{\tau\in[N]} \Big[\sum_{k=1}^{d}\bigg(\sum_{j=1}^{m}a_{kj}\relu(\bb_{kj}^{\top}\bx_{\tau})\bigg)^{p}\Big]^{1/p}\nonumber\\
        &\leq \max_{\tau\in[N]} \Big[\sum_{k=1}^{d}\bigg(\sum_{j=1}^{m}|a_{kj}|\cdot\|\bb_{kj}\|_{q}\cdot\|\bx_{\tau}\|_{p})\bigg)^{p}\Big]^{1/p}\nonumber\\ 
        &= \Big[\sum_{k=1}^{d}\bigg(\sum_{j=1}^{m}|a_{kj}|\cdot\|\bb_{kj}\|_{q}\cdot\|\bX^{\top}\|_{p,\infty})\bigg)^{p}\Big]^{1/p},\label{ieq:85}
    \end{align}
    where the inequality follows from H\"older's inequality and that $\relu(\cdot)$ is 1-Lipchitz. Combining inequalities \eqref{ieq:76} and \eqref{ieq:85}, we prove the desired result.
\end{proof}

\subsection{Technical Lemmas}

\begin{lemma}[Lemma 1 in \cite{xie2020q}]\label{lem:regretdecomp}
    For any policy $\pi\in\Pi$ and any function $f:\bar{\calS}\times\bar{\calA}\rightarrow \bbR$, we have
    \begin{align}
        f(\bar{\bS}_{0},\pi)-V_{P^{*}}^{\pi}(\bar{\bS}_{0})=\frac{\bbE_{d^{\pi}_{P^{*}}}\big[f(\bar{\bS},\bar{\bA})-r(\bar{\bS},\bar{\bA})-f(\bar{\bS},\pi)\big]}{1-\gamma}.
    \end{align}
\end{lemma}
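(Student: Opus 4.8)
This is the classical performance-difference (telescoping) identity in its Bellman-residual form; throughout I read the integrand $f(\bar{\bS},\bar{\bA})-r(\bar{\bS},\bar{\bA})-f(\bar{\bS},\pi)$ on the right-hand side as the Bellman residual $f(\bar{\bS},\bar{\bA})-\calT^{\pi}f(\bar{\bS},\bar{\bA})$, i.e.\ the subtracted term is $\calT^{\pi}f(\bar{\bS},\bar{\bA})=r(\bar{\bS},\bar{\bA})+\gamma\bbE_{\bar{\bS}^{\prime}\sim P^{*}(\cdot\,|\,\bar{\bS},\bar{\bA})}[f(\bar{\bS}^{\prime},\pi)]$, which is exactly the form in which the lemma is invoked (e.g.\ in \eqref{ieq:103}). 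The plan is to express $f(\bar{\bS}_{0},\pi)$ itself as a discounted sum of one-step residuals along a trajectory generated by $\pi$ and $P^{*}$, subtract the definition of $V_{P^{*}}^{\pi}$, and then identify the resulting sum with an expectation under $d_{P^{*}}^{\pi}$.

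First I would fix the Markov trajectory $(\bar{\bS}_{0},\bar{\bA}_{0},\bar{\bS}_{1},\bar{\bA}_{1},\dots)$ with $\bar{\bA}_{t}\sim\pi(\cdot\,|\,\bar{\bS}_{t})$ and $\bar{\bS}_{t+1}\sim P^{*}(\cdot\,|\,\bar{\bS}_{t},\bar{\bA}_{t})$, write $f_{t}=f(\bar{\bS}_{t},\bar{\bA}_{t})$, $r_{t}=r(\bar{\bS}_{t},\bar{\bA}_{t})$, and set $\delta_{t}=f_{t}-r_{t}-\gamma f_{t+1}$. The partial sums telescope: $\sum_{t=0}^{T}\gamma^{t}\delta_{t}=f_{0}-\gamma^{T+1}f_{T+1}-\sum_{t=0}^{T}\gamma^{t}r_{t}$, and letting $T\to\infty$ the middle term vanishes because $f$ is bounded (for $f\in\calF_{\tf}$ it takes values in $[-V_{\max},V_{\max}]$), giving the pathwise identity $\sum_{t=0}^{\infty}\gamma^{t}\delta_{t}=f_{0}-\sum_{t=0}^{\infty}\gamma^{t}r_{t}$. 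Taking $\bbE^{\pi}[\,\cdot\mid\bar{\bS}_{0}]$, using the definition of $V_{P^{*}}^{\pi}$ and the fact that $\bbE^{\pi}[f_{0}\mid\bar{\bS}_{0}]=f(\bar{\bS}_{0},\pi)$, I get $\bbE^{\pi}\big[\sum_{t}\gamma^{t}\delta_{t}\mid\bar{\bS}_{0}\big]=f(\bar{\bS}_{0},\pi)-V_{P^{*}}^{\pi}(\bar{\bS}_{0})$; interchanging the expectation with the infinite sum is legitimate by dominated convergence since $|\delta_{t}|\le(2+\gamma)V_{\max}$ and $\sum_{t}\gamma^{t}<\infty$.

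Next I would rewrite each term by conditioning on $(\bar{\bS}_{t},\bar{\bA}_{t})$. Since $\bar{\bS}_{t+1}\sim P^{*}(\cdot\,|\,\bar{\bS}_{t},\bar{\bA}_{t})$ and $\bar{\bA}_{t+1}\sim\pi(\cdot\,|\,\bar{\bS}_{t+1})$, the tower property gives $\bbE^{\pi}[\gamma f_{t+1}\mid\bar{\bS}_{t},\bar{\bA}_{t}]=\gamma\bbE_{\bar{\bS}_{t+1}\sim P^{*}(\cdot\,|\,\bar{\bS}_{t},\bar{\bA}_{t})}[f(\bar{\bS}_{t+1},\pi)]$, hence $\bbE^{\pi}[\delta_{t}\mid\bar{\bS}_{t},\bar{\bA}_{t}]=f(\bar{\bS}_{t},\bar{\bA}_{t})-\calT^{\pi}f(\bar{\bS}_{t},\bar{\bA}_{t})$. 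Applying the tower rule once more, $\bbE^{\pi}[\gamma^{t}\delta_{t}\mid\bar{\bS}_{0}]=\gamma^{t}\,\bbE_{(\bar{\bS},\bar{\bA})\sim d_{P^{*},t}^{\pi}}[f(\bar{\bS},\bar{\bA})-\calT^{\pi}f(\bar{\bS},\bar{\bA})]$, where $d_{P^{*},t}^{\pi}$ is the law of $(\bar{\bS}_{t},\bar{\bA}_{t})$ started from $\bar{\bS}_{0}$. Summing over $t$ and using $d_{P^{*}}^{\pi}=(1-\gamma)\sum_{t\ge0}\gamma^{t}d_{P^{*},t}^{\pi}$ yields $f(\bar{\bS}_{0},\pi)-V_{P^{*}}^{\pi}(\bar{\bS}_{0})=\frac{1}{1-\gamma}\bbE_{(\bar{\bS},\bar{\bA})\sim d_{P^{*}}^{\pi}}[f(\bar{\bS},\bar{\bA})-\calT^{\pi}f(\bar{\bS},\bar{\bA})]$, which is the claim.

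There is no genuine obstacle here—this is a standard identity (cf.\ \citet{xie2020q})—and the only points requiring care are bookkeeping ones: the pathwise telescoping limit, which needs only $\gamma^{T+1}f_{T+1}\to0$ and hence boundedness of $f$; the interchange of $\bbE$ and $\sum$, handled by the uniform bound on $|\delta_{t}|$ and $\sum_{t}\gamma^{t}<\infty$; and matching $\sum_{t}\gamma^{t}d_{P^{*},t}^{\pi}$ with $d_{P^{*}}^{\pi}/(1-\gamma)$ as defined in Section~\ref{sec:prelim}. All of these are routine given the boundedness of $f$ assumed in the uses of this lemma (all functions in $\calF_{\tf}$ are clipped to $[-V_{\max},V_{\max}]$).
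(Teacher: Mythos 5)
Your proof is correct and is the standard telescoping argument behind Lemma~1 of \citet{xie2020q}; the paper itself offers no proof (it only cites that reference), so there is nothing different to compare against. You were also right to read the integrand as the Bellman residual $f(\bar{\bS},\bar{\bA})-\calT^{\pi}f(\bar{\bS},\bar{\bA})$ — as literally written the statement would have the first and third terms cancel under $d^{\pi}_{P^{*}}$ (since actions are drawn from $\pi$), so the displayed formula is a typo for $f(\bar{\bS},\bar{\bA})-r(\bar{\bS},\bar{\bA})-\gamma f(\bar{\bS}^{\prime},\pi)$, and your interpretation matches every invocation of the lemma in the paper (e.g.\ inequality \eqref{ieq:103}).
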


\begin{lemma}[Lemma 10 in \cite{sun2019model}]\label{lem:simu}
    For any two transition kernels $P$ and $P^{\prime}$ and any policy $\pi\in\Pi$, we have
    \begin{align}
        \big|V_{P}^{\pi}(\bar{\bS}_{0})-V_{P^{\prime}}^{\pi}(\bar{\bS}_{0})\big|&\leq \frac{1}{1-\gamma}\Big|\bbE_{(\bar{\bS},\bar{\bA})\sim d_{P}^{\pi}}\big[\bbE_{\bar{\bS}^{\prime}\sim P(\cdot\,|\, \bar{\bS},\bar{\bA})}V_{P^{\prime}}^{\pi}(\bar{\bS}^{\prime})-\bbE_{\bar{\bS}^{\prime}\sim P^{\prime}(\cdot\,|\, \bar{\bS},\bar{\bA})}V_{P^{\prime}}^{\pi}(\bar{\bS}^{\prime})\big]\Big|\nonumber\\
        &\leq \frac{V_{\max}}{(1-\gamma)^{2}}\bbE_{(\bar{\bS},\bar{\bA})\sim d_{P}^{\pi}}\Big[\tv\big(\breve{P}(\cdot\,|\,\bar{\bS},\bar{\bA}), P^{\prime}(\cdot\,|\,\bar{\bS},\bar{\bA})\big)\Big].\nonumber
    \end{align}
\end{lemma}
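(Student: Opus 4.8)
This is the classical \emph{simulation lemma}, bounding the performance gap of a fixed policy $\pi$ between two transition kernels $P$ and $P^{\prime}$. The plan is to unroll the Bellman recursion for the difference $V_{P}^{\pi}-V_{P^{\prime}}^{\pi}$, using the value function of the second model, $f=V_{P^{\prime}}^{\pi}$, as a reference function. The key point is that $f$ satisfies its own Bellman equation with respect to $P^{\prime}$, so a one-step expansion of $V_{P}^{\pi}(\bar{\bS}_{0})-f(\bar{\bS}_{0})$ makes all the reward terms cancel, leaving a one-step model-mismatch term plus a recursive copy of the same difference evaluated at the next state. This is precisely the performance-difference identity (Lemma~\ref{lem:regretdecomp}) applied with the transition kernel $P$ and the test function $Q_{P^{\prime}}^{\pi}$, but it is cleanest to derive it directly.

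Concretely, I would first write, using both Bellman equations, $V_{P}^{\pi}(\bar{\bS})-f(\bar{\bS})=\gamma\,\bbE_{\bar{\bA}\sim\pi(\cdot\,|\,\bar{\bS})}\big[\bbE_{\bar{\bS}^{\prime}\sim P(\cdot\,|\,\bar{\bS},\bar{\bA})}[V_{P}^{\pi}(\bar{\bS}^{\prime})]-\bbE_{\bar{\bS}^{\prime}\sim P^{\prime}(\cdot\,|\,\bar{\bS},\bar{\bA})}[f(\bar{\bS}^{\prime})]\big]$, and then add and subtract $\gamma\,\bbE_{\bar{\bA}\sim\pi}\bbE_{\bar{\bS}^{\prime}\sim P}[f(\bar{\bS}^{\prime})]$ to split the right-hand side into the model-mismatch term $\gamma\,\bbE_{\bar{\bA}\sim\pi}\big[\bbE_{\bar{\bS}^{\prime}\sim P}[f(\bar{\bS}^{\prime})]-\bbE_{\bar{\bS}^{\prime}\sim P^{\prime}}[f(\bar{\bS}^{\prime})]\big]$ and the recursive term $\gamma\,\bbE_{\bar{\bA}\sim\pi,\bar{\bS}^{\prime}\sim P}[V_{P}^{\pi}(\bar{\bS}^{\prime})-f(\bar{\bS}^{\prime})]$. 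Iterating this identity from $\bar{\bS}_{0}$ and using $|V_{P}^{\pi}|,|f|\le V_{\max}$ together with $\gamma\in(0,1)$ to ensure absolute convergence, I obtain $V_{P}^{\pi}(\bar{\bS}_{0})-V_{P^{\prime}}^{\pi}(\bar{\bS}_{0})=\bbE^{\pi}\big[\sum_{t\ge0}\gamma^{t+1}\big(\bbE_{\bar{\bS}^{\prime}\sim P(\cdot\,|\,\bar{\bS}_{t},\bar{\bA}_{t})}[f(\bar{\bS}^{\prime})]-\bbE_{\bar{\bS}^{\prime}\sim P^{\prime}(\cdot\,|\,\bar{\bS}_{t},\bar{\bA}_{t})}[f(\bar{\bS}^{\prime})]\big)\big]$, where the trajectory $\{(\bar{\bS}_{t},\bar{\bA}_{t})\}$ follows $\pi$ and $P$. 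Rewriting $\sum_{t}\gamma^{t}(\cdot)$ through the visitation measure $d_{P}^{\pi}=(1-\gamma)\sum_{t\ge0}\gamma^{t}d_{P,t}^{\pi}$ and bounding $\gamma\le1$ gives the first inequality of the lemma. For the second inequality, I would bound pointwise $\big|\bbE_{\bar{\bS}^{\prime}\sim P}[f(\bar{\bS}^{\prime})]-\bbE_{\bar{\bS}^{\prime}\sim P^{\prime}}[f(\bar{\bS}^{\prime})]\big|\le\|f\|_{\infty}\,\|P(\cdot\,|\,\bar{\bS},\bar{\bA})-P^{\prime}(\cdot\,|\,\bar{\bS},\bar{\bA})\|_{1}\le 2V_{\max}\,\tv\big(P(\cdot\,|\,\bar{\bS},\bar{\bA}),P^{\prime}(\cdot\,|\,\bar{\bS},\bar{\bA})\big)$, move the absolute value inside the expectation by Jensen, and absorb the numerical constant and the extra $1/(1-\gamma)$ factor (recalling $V_{\max}=R_{\max}/(1-\gamma)$) into the stated constant $V_{\max}/(1-\gamma)^{2}$.

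The only real obstacle is the bookkeeping in the telescoping step: one must keep track that the occupancy measure appearing is the one induced by $P$ and not by $P^{\prime}$ — this asymmetry is exactly what the identity exploits and is easy to get backwards — and one must justify interchanging the infinite sum with the expectations, which is immediate from absolute convergence since $|f|\le V_{\max}$ and $\gamma<1$. Once the identity is in place, the remainder is a one-line total-variation estimate, so no further difficulty is expected.
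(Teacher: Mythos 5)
The paper does not actually prove this statement: it imports it verbatim as Lemma~10 of \citet{sun2019model}, so there is no in-paper argument to compare against. Your proof is the standard simulation-lemma derivation — expand $V_{P}^{\pi}-V_{P^{\prime}}^{\pi}$ via both Bellman equations with $f=V_{P^{\prime}}^{\pi}$ as the reference function, telescope along the trajectory generated by $(\pi,P)$, and convert the discounted sum into the occupancy measure $d_{P}^{\pi}$ — and it is correct, including the crucial point that the resulting occupancy is the one induced by $P$ rather than $P^{\prime}$. One bookkeeping remark: your exact identity carries a factor $\gamma$, namely $V_{P}^{\pi}(\bar{\bS}_{0})-V_{P^{\prime}}^{\pi}(\bar{\bS}_{0})=\frac{\gamma}{1-\gamma}\,\bbE_{d_{P}^{\pi}}\big[\bbE_{\bar{\bS}^{\prime}\sim P}f(\bar{\bS}^{\prime})-\bbE_{\bar{\bS}^{\prime}\sim P^{\prime}}f(\bar{\bS}^{\prime})\big]$. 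If you discard that $\gamma$ when passing to the first displayed inequality and then apply $|\bbE_{P}f-\bbE_{P^{\prime}}f|\le 2V_{\max}\,\tv$, you land on $\frac{2V_{\max}}{1-\gamma}\bbE[\tv]$, which is dominated by the stated $\frac{V_{\max}}{(1-\gamma)^{2}}\bbE[\tv]$ only when $\gamma\ge 1/2$. Retaining the $\gamma$ instead gives $\frac{2\gamma V_{\max}}{1-\gamma}\bbE[\tv]$, and since $2\gamma(1-\gamma)\le \frac{1}{2}<1$ for all $\gamma\in(0,1)$ this is always at most $\frac{V_{\max}}{(1-\gamma)^{2}}\bbE[\tv]$; so keep the $\gamma$ until the final step rather than absorbing constants informally. (Also note the $\breve{P}$ in the lemma's second line is a typo for $P$, carried over from the place where the lemma is invoked.)
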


\begin{lemma}\label{lem:vecnorm}
    For any $\bx\in\bbR^{d}$ and $0<p<q$, $\|\bx\|_{q}\leq \|\bx\|_{p}\leq d^{1/p-1/q}\|\bx\|_{q}$.
\end{lemma}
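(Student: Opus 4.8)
The plan is to prove the two inequalities separately, each by an elementary argument, after disposing of the trivial case $\bx=\mathbf{0}$ (where both inequalities hold with equality). So assume $\bx\neq\mathbf{0}$ throughout.

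For the left inequality $\|\bx\|_{q}\leq\|\bx\|_{p}$, first I would reduce to the normalized case by homogeneity: since both sides scale linearly under $\bx\mapsto t\bx$, it suffices to treat $\|\bx\|_{p}=1$. Then $\sum_{i=1}^{d}|x_{i}|^{p}=1$, so in particular $|x_{i}|\leq 1$ for every $i$. Because $q>p$, raising a number in $[0,1]$ to a larger power only decreases it, i.e.\ $|x_{i}|^{q}\leq|x_{i}|^{p}$ for each $i$. Summing over $i$ gives $\sum_{i}|x_{i}|^{q}\leq\sum_{i}|x_{i}|^{p}=1$, hence $\|\bx\|_{q}\leq 1=\|\bx\|_{p}$, and rescaling yields the claim in general.

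For the right inequality $\|\bx\|_{p}\leq d^{1/p-1/q}\|\bx\|_{q}$, I would apply H\"older's inequality to the product $|x_{i}|^{p}\cdot 1$. Using the conjugate exponents $r=q/p>1$ and $r'=q/(q-p)$, so that $1/r+1/r'=1$, we obtain
\begin{align*}
\sum_{i=1}^{d}|x_{i}|^{p}\;=\;\sum_{i=1}^{d}|x_{i}|^{p}\cdot 1\;\leq\;\Big(\sum_{i=1}^{d}|x_{i}|^{pr}\Big)^{1/r}\Big(\sum_{i=1}^{d}1\Big)^{1/r'}\;=\;\Big(\sum_{i=1}^{d}|x_{i}|^{q}\Big)^{p/q}\cdot d^{(q-p)/q}.
\end{align*}
Taking $p$-th roots gives $\|\bx\|_{p}\leq\|\bx\|_{q}\cdot d^{(q-p)/(pq)}=d^{1/p-1/q}\|\bx\|_{q}$, as desired, which completes the proof.

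There is no genuine obstacle here; the only points requiring care are handling $\bx=\mathbf{0}$ and verifying that the H\"older exponents $r=q/p$ and $r'=q/(q-p)$ are indeed conjugate and both lie in $(1,\infty)$ when $0<p<q$. Everything else is a one-line computation.
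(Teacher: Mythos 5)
Your proof is correct and follows essentially the same route as the paper: the upper bound via H\"older with conjugate exponents $q/p$ and $q/(q-p)$ is identical, and your normalization argument for $\|\bx\|_{q}\leq\|\bx\|_{p}$ is the standard elementary justification of the step the paper simply attributes to H\"older. The only omission is the case $q=\infty$ (which the paper treats separately via $\|\bx\|_{p}\leq d^{1/p}\|\bx\|_{\infty}$ and which is needed where the lemma is invoked with conjugate pair $p=1$, $q=\infty$); your H\"older exponents degenerate there, so that case should be stated as the trivial bound $\sum_{i}|x_{i}|^{p}\leq d\max_{i}|x_{i}|^{p}$.
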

\begin{proof}[Proof of Lemma~\ref{lem:vecnorm}]
    $\|\bx\|_{q}\leq \|\bx\|_{p}$ simply follows from H\"older's inequality. For the right inequality, when $q<\infty$, we have
    \begin{align}
        \|\bx\|_{p}=\bigg(\sum_{i=1}^{d}|x_{i}|^{p}\bigg)^{1/p}\leq \biggl[\bigg(\sum_{i=1}^{d}\big(|x_{i}|^{p}\big)^{q/p}\bigg)^{p/q}\bigg(\sum_{i=1}^{d}1^{q/(q-p)}\bigg)^{1-p/q}\Bigg]^{1/p}=d^{1/p-1/q}\|\bx\|_{q},\nonumber
    \end{align}
    where the inequality follows from H\"older's inequality.
    When $q=\infty$, $\|\bx\|_{p}\leq d^{1/p}\|\bx\|_{\infty}$.
\end{proof}

\begin{lemma}\label{lem:matvec}
    Given any two conjugate numbers $u,v\in [1,\infty]$, i.e., $\frac{1}{u}+\frac{1}{v}=1$, and $1\leq p\leq \infty$, for any $\bA\in\bbR^{r\times c}$ and $\bx\in \bbR^{c}$, we have
    \begin{align}
        \|\bA\bx\|_{p}\leq \|\bA\|_{p,u}\|\bx\|_{v}\quad\mbox{and}\quad  \|\bA\bx\|_{p}\leq \|\bA^{\top}\|_{u,p}\|\bx\|_{v}\nonumber
    \end{align}
\end{lemma}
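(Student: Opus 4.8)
The plan is to establish the two inequalities independently, each by combining an elementary coordinate inequality with a single application of H\"older's inequality to an outer sum; no deep ideas are needed. Throughout I write $\bA_{:,j}\in\bbR^{r}$ for the $j$-th column of $\bA$ and $\bA_{i,:}$ for its $i$-th row, and recall that $\|\bA\|_{p,u}=\bigl(\sum_{j=1}^{c}\|\bA_{:,j}\|_{p}^{u}\bigr)^{1/u}$ by the definition of the mixed norm in Section~\ref{sec:prelim}.

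For the first bound I would expand $\bA\bx=\sum_{j=1}^{c}x_{j}\bA_{:,j}$ as a linear combination of the columns of $\bA$, apply the triangle inequality for $\|\cdot\|_{p}$ to obtain $\|\bA\bx\|_{p}\le\sum_{j=1}^{c}|x_{j}|\,\|\bA_{:,j}\|_{p}$, and then bound this sum by H\"older's inequality with the conjugate pair $(v,u)$, which gives $\sum_{j=1}^{c}|x_{j}|\,\|\bA_{:,j}\|_{p}\le\|\bx\|_{v}\bigl(\sum_{j=1}^{c}\|\bA_{:,j}\|_{p}^{u}\bigr)^{1/u}=\|\bx\|_{v}\,\|\bA\|_{p,u}$. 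For the second bound I would argue row by row: since $[\bA\bx]_{i}=\bA_{i,:}^{\top}\bx$ and $\bA_{i,:}$ is the $i$-th column of $\bA^{\top}$, H\"older's inequality yields $|[\bA\bx]_{i}|\le\|\bA_{i,:}\|_{u}\,\|\bx\|_{v}$; raising to the $p$-th power, summing over $i\in[r]$, and taking the $p$-th root gives $\|\bA\bx\|_{p}\le\|\bx\|_{v}\bigl(\sum_{i=1}^{r}\|\bA_{i,:}\|_{u}^{p}\bigr)^{1/p}=\|\bx\|_{v}\,\|\bA^{\top}\|_{u,p}$, again by the definition of the mixed norm applied to $\bA^{\top}$.

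The only point requiring a brief remark is the boundary behaviour of the exponents: when $u=1$ and $v=\infty$, when $v=1$ and $u=\infty$, or when $p=\infty$, the relevant $\ell_{p}$- or $\ell_{u}$-norms and the outer summation step should be read in their limiting (maximum) forms, and I would note that H\"older's inequality and the two displays above continue to hold verbatim in each of these degenerate cases. Hence there is no genuine obstacle here; the lemma simply packages the standard mixed-norm estimates that are invoked repeatedly in the Lipschitz bounds of Propositions~\ref{prop:translip}, \ref{prop:rfflip} and~\ref{prop:normbound}.
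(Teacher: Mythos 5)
Your proof is correct and follows essentially the same route as the paper's: the first inequality via the column expansion, triangle inequality, and H\"older with the pair $(v,u)$, and the second via a row-wise H\"older bound followed by summing $p$-th powers (with the $p=\infty$ case read as a maximum). The only difference is cosmetic — your identification of the resulting quantity as $\|\bA^{\top}\|_{u,p}$ is actually cleaner than the paper's, which labels it $\|\bA\|_{p,u}$ in the final display of that case (an apparent typo, since the quantity computed is the mixed norm of $\bA^{\top}$).
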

\begin{proof}[Proof of Lemma~\ref{lem:matvec}]
    To prove the first inequality, we write $\bA=[\ba_{1}\ldots \ba_{c}]$, where $\ba_{i}\in\bbR^{r}$ for $i\in[c]$. Then we have 
    \begin{align}
        \|\bA\bx\|_{p}=\bigg\|\sum_{i=1}^{c}\ba_{i}x_{i}\bigg\|_{p}\overset{(a)}{\leq} \sum_{i=1}^{c} |x_{i}|\|\ba_{i}\|_{p}\overset{(b)}{\leq} \|\bA\|_{p,u}\|\bx\|_{v},\nonumber
    \end{align}
    where inequality (a) comes from the triangle inequality, and inequality (b) comes from H\"older's inequality.

    To prove the second inequality, we write $\bA=[\ba_{1}^{\top}\ldots \ba_{r}^{\top}]^{\top}$, where $\ba_{i}\in\bbR^{c}$ for $i\in[r]$. Then we have 
    \begin{align}
        \|\bA\bx\|_{p}^{p}=\sum_{i=1}^{r} |\ba_{i}^{\top}\bx|^{p}\overset{(c)}{\leq} \sum_{i=1}^{r} \|\ba_{i}\|_{u}^{p}\|\bx\|_{v}^{p}=\|\bA\|_{p,u}^{p}\|\bx\|_{v}^{p},\nonumber
    \end{align}
    for $1\leq p<\infty$, where inequality $(c)$ follows from H\"older's inequality. When $p=\infty$, we have
    \begin{align}
        \|\bA\bx\|_{\infty}=\max_{i\in[r]} |\ba_{i}^{\top}\bx|\leq \max_{i\in[r]} \|\ba_{i}\|_{u}\|\bx\|_{v}=\|\bA\|_{\infty,u}\|\bx\|_{v}.\nonumber
    \end{align}
\end{proof}

\begin{lemma}\label{lem:matmat}
    Given any two conjugate numbers $p,q\in [1,\infty]$, i.e., $\frac{1}{p}+\frac{1}{q}=1$, for any $\bA\in\bbR^{r\times c}$ and $\bB\in \bbR^{c\times d}$, we have
    \begin{align}
        \|\bA\bB\|_{p,\infty}\leq \|\bA\|_{p,q}\|\bB\|_{p,\infty}.\nonumber
    \end{align}
\end{lemma}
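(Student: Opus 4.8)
The plan is to unfold the definition of the $\ell_{p,\infty}$-norm column by column and thereby reduce this matrix--matrix inequality to the matrix--vector inequality of Lemma~\ref{lem:matvec}, which is already available. Recall that for a matrix $\bX$ the norm $\|\bX\|_{p,\infty}=\max_{i}\|\bX_{:,i}\|_{p}$ is the maximum over columns of the $\ell_{p}$-norm of that column, and that the $j^{\rm th}$ column of the product $\bA\bB$ is $\bA\bB_{:,j}$.

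First I would write
\begin{align}
    \|\bA\bB\|_{p,\infty}=\max_{j\in[d]}\bigl\|(\bA\bB)_{:,j}\bigr\|_{p}=\max_{j\in[d]}\bigl\|\bA\bB_{:,j}\bigr\|_{p}.\nonumber
\end{align}
Next, for each fixed $j\in[d]$, I would apply Lemma~\ref{lem:matvec} with the conjugate pair $(u,v)=(q,p)$ — this is legitimate since $1/p+1/q=1$ means $p$ is precisely the conjugate exponent of $q$ — to obtain $\|\bA\bB_{:,j}\|_{p}\leq \|\bA\|_{p,q}\,\|\bB_{:,j}\|_{p}$. Then I would bound $\|\bB_{:,j}\|_{p}\leq \max_{j'\in[d]}\|\bB_{:,j'}\|_{p}=\|\bB\|_{p,\infty}$, so that $\|\bA\bB_{:,j}\|_{p}\leq \|\bA\|_{p,q}\,\|\bB\|_{p,\infty}$ uniformly in $j$; taking the maximum over $j$ on the left-hand side yields the claim.

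There is no real obstacle here: the only point requiring care is the correct identification of the conjugate pair supplied to Lemma~\ref{lem:matvec}, and the degenerate endpoints $p=1$ (hence $q=\infty$) and $p=\infty$ (hence $q=1$) — but those cases are already subsumed in the statement of Lemma~\ref{lem:matvec}, so no separate argument is needed. The inequality is tight, as seen by taking $\bB$ to have a single nonzero column.
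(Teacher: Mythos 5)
Your proof is correct and follows essentially the same route as the paper's: decompose $\bB$ into columns, apply Lemma~\ref{lem:matvec} with the conjugate pair $(u,v)=(q,p)$ to each column product $\bA\bB_{:,j}$, and take the maximum over $j$. The only difference is that you spell out the identification of the conjugate pair and the endpoint cases explicitly, which the paper leaves implicit.
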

\begin{proof}[Proof of Lemma~\ref{lem:matmat}]
    To prove the result, we write $\bB=[\bb_{1},\ldots,\bb_{d}]$, where $\bb_{i}\in\bbR^{c}$ for $i\in[d]$.
    \begin{align}
        \|\bA\bB\|_{p,\infty}=\max_{i\in[d]} \|\bA\bb_{i}\|_{p}\leq \max_{i\in[d]} \|\bA\|_{p,q}\|\bb_{i}\|_{p}=\|\bA\|_{p,q}\|\bB\|_{p,\infty},\nonumber
    \end{align}
    where the inequality follows from Lemma~\ref{lem:matvec}.
\end{proof}
\begin{lemma}\label{lem:smlip}
    For any $\bx,\by\in\bbR^{d}$, we have
    \begin{align}
        \|\SM(\bx)-\SM(\by)\|_{1}\leq 2\|\bx-\by\|_{\infty}.\nonumber
    \end{align}
\end{lemma}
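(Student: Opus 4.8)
The plan is to obtain the Lipschitz estimate via the mean value inequality along the segment joining $\by$ to $\bx$, after bounding the $\ell_\infty\!\to\!\ell_1$ operator norm of the softmax Jacobian by $2$ uniformly. First I would recall that $\SM:\bbR^{d}\to\bbR^{d}$ is smooth with Jacobian at $\bz\in\bbR^{d}$ equal to $J(\bz)=\mathrm{diag}\big(\SM(\bz)\big)-\SM(\bz)\SM(\bz)^{\top}$, since $\partial_{j}[\SM(\bz)]_{i}=[\SM(\bz)]_{i}\big(\mathbf{1}\{i=j\}-[\SM(\bz)]_{j}\big)$. Writing $\bz_{t}=\by+t(\bx-\by)$ for $t\in[0,1]$ and applying the fundamental theorem of calculus coordinatewise gives
\[
\SM(\bx)-\SM(\by)=\int_{0}^{1}J(\bz_{t})\,(\bx-\by)\,dt ,
\]
hence $\|\SM(\bx)-\SM(\by)\|_{1}\le\sup_{t\in[0,1]}\|J(\bz_{t})(\bx-\by)\|_{1}\le\big(\sup_{\bz}\|J(\bz)\|_{\infty\to1}\big)\,\|\bx-\by\|_{\infty}$, where $\|J\|_{\infty\to1}:=\sup_{\|\bv\|_{\infty}\le1}\|J\bv\|_{1}$.

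Next I would prove the uniform bound $\|J(\bz)\|_{\infty\to1}\le2$. Fixing $\bz$, set $\bp=\SM(\bz)$, which is a probability vector, and take any $\bv$ with $\|\bv\|_{\infty}\le1$. A direct computation gives $\big(J(\bz)\bv\big)_{i}=p_{i}\big(v_{i}-\bp^{\top}\bv\big)$; since $|v_{i}|\le1$ and $|\bp^{\top}\bv|\le\|\bp\|_{1}\|\bv\|_{\infty}\le1$, we have $|v_{i}-\bp^{\top}\bv|\le2$, so $\|J(\bz)\bv\|_{1}=\sum_{i}p_{i}\,|v_{i}-\bp^{\top}\bv|\le2\sum_{i}p_{i}=2$. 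Combining this with the previous display yields $\|\SM(\bx)-\SM(\by)\|_{1}\le2\|\bx-\by\|_{\infty}$, which is the claim.

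The only subtlety worth flagging is that the bound cannot be obtained by the naive pointwise ratio estimate: with $\delta=\|\bx-\by\|_{\infty}$ one only gets $e^{-2\delta}[\SM(\bx)]_{i}\le[\SM(\by)]_{i}\le e^{2\delta}[\SM(\bx)]_{i}$, and summing gives $\|\SM(\bx)-\SM(\by)\|_{1}\le e^{2\delta}-1$, which exceeds $2\delta$ for every $\delta>0$ and is therefore too weak. The differential argument above is what produces the correct linear dependence; equivalently, one could use the total-variation characterization $\|\SM(\bx)-\SM(\by)\|_{1}=2\sup_{S\subseteq[d]}\sum_{i\in S}\big([\SM(\bx)]_{i}-[\SM(\by)]_{i}\big)$, differentiate $t\mapsto\sum_{i\in S}[\SM(\bz_{t})]_{i}$, and bound its derivative by $2\delta$ exactly as in the Jacobian estimate. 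No further difficulty arises; the lemma is short and self-contained once the operator-norm bound is in place.
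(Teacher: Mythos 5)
Your proof is correct and follows essentially the same route as the paper's: integrate the softmax Jacobian $J(\bz)=\mathrm{diag}(\SM(\bz))-\SM(\bz)\SM(\bz)^{\top}$ along the segment from $\by$ to $\bx$ and bound its action by $2\|\bx-\by\|_{\infty}$. The only cosmetic difference is that you bound the $\ell_{\infty}\!\to\!\ell_{1}$ operator norm directly via $(J\bv)_{i}=p_{i}(v_{i}-\bp^{\top}\bv)$, whereas the paper bounds the entrywise $\ell_{1,1}$ norm by $2\sum_{i}p_{i}(1-p_{i})\le 2$ and then applies its matrix--vector norm lemma; both yield the same constant.
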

\begin{proof}[Proof of Lemma~\ref{lem:smlip}]
The Jacobian matrix of the softmax function is
\begin{align}
    \frac{\rmd \SM(\bx)}{\rmd \bx}=\rm{diag}(\SM(\bx))-\SM(\bx)\SM(\bx)^{\top}.\nonumber
\end{align}
The $\ell_{1,1}$ norm of the Jacobian matrix can be bounded as
\begin{align}
    \Big\|\frac{\rmd \SM(\bx)}{\rmd \bx}\Big\|_{1,1}&=\sum_{i=1}^{d}\sum_{j=1}^{d}\bigg|\big[\SM(\bx)\big]_{i}\Big(\bbI_{i=j}-\big[\SM(\bx)\big]_{j}\Big)\bigg|\nonumber\\
    &=2\sum_{i=1}^{d}\big[\SM(\bx)\big]_{i}\Big(1-\big[\SM(\bx)\big]_{i}\Big)\nonumber\\
    &\leq 2\label{ieq:1}.
\end{align}
Then the $\ell_{1}$-norm of the difference between $\SM(\bx)$ and $\SM(\by)$ can be bounded as
\begin{align}
    \|\SM(\bx)-\SM(\by)\|_{1}&=\bigg\|\int_{0}^{1} \frac{\rmd \SM(\bz)}{\rmd \bz}|_{\bz=t\bx+(1-t)\by}(\by-\bx)\mathrm{d}t\bigg\|_{1}\nonumber\\
    &\leq \int_{0}^{1} \bigg\|\frac{\rmd \SM(\bz)}{\rmd \bz}|_{\bz=t\bx+(1-t)\by}(\by-\bx)\bigg\|_{1}\mathrm{d}t\nonumber\\ 
    &\leq \int_{0}^{1} \bigg\|\frac{\rmd \SM(\bz)}{\rmd \bz}|_{\bz=t\bx+(1-t)\by}\bigg\|_{1,1}\| \by-\bx \|_{\infty}\mathrm{d}t\nonumber\\ 
    &\leq \int_{0}^{1} 2\| \by-\bx \|_{\infty}\mathrm{d}t\nonumber\\ 
    &=2\|\by-\bx \|_{\infty},\nonumber
\end{align}
where the first inequality follows from triangle inequality, the second inequality follows from Lemma~\ref{lem:matvec} by setting $p=1$, $u=1$ and $v=\infty$, and the last inequality follows from inequality \eqref{ieq:1}. This concludes the proof.
\end{proof}

\section{Some Extensions}\label{app:ext}
\subsection{Extension to Multi-Head Attention}
Our results in Theorem~\ref{thm:main_concen} can be extended to the neural network with multi-head attention, which is defined as
\begin{align}
    f(\bX,\bW_{QK},\bW_{V})&=\SM\big(\bX\bW_{QK}\bX^{\top}\big)\bX\bW_{V},\nonumber\\
    \mha(\bX,\bW_{QK}^{1:h},\bW_{V}^{1:h},\bW_{O}^{1:h})&= \sum_{i=1}^{h}f(\bX,\bW_{QK,i},\bW_{V,i})\bW_{O,i},\nonumber
\end{align}
where $\bW_{QK,i}\in\bbR^{d\times d},\bW_{V,i}\in\bbR^{d\times\frac{d}{h}},\bW_{O,i}\in\bbR^{\frac{d}{h}\times d}$ for $i\in [h]$. Note that we only need to reprove the results in Propositions~\ref{prop:translip} and~\ref{prop:normbound} for the multi-head attention.

\begin{proposition}\label{prop:mhalip1}
    For any $\bX,\tilde{\bX}\in\bbR^{N\times d}$, and any $\bW_{QK,i}\in\bbR^{d\times d},\bW_{V,i}\in\bbR^{d\times\frac{d}{h}},\bW_{O,i}\in\bbR^{\frac{d}{h}\times d}$ for $i\in [h]$ and two positive conjugate numbers $p,q\in\bbR$, if $\|\bX^{\top}\|_{p,\infty},\|\tilde{\bX}^{\top}\|_{p,\infty}\leq B_{X}$, $\|\bW_{QK,i}^{\top}\|_{p,q}\leq B_{QK}$, $\|\bW_{V,i}^{\top}\|_{p,q}\leq B_{V}$, and $\|\bW_{O,i}^{\top}\|_{p,q}\leq B_{O}$ for $i\in[h]$, then we have 
    \begin{align}
        &\Big\|\big(\mha(\bX,\bW_{QK}^{1:h},\bW_{V}^{1:h},\bW_{O}^{1:h})-\mha(\tilde{\bX},\bW_{QK}^{1:h},\bW_{V}^{1:h},\bW_{O}^{1:h})\big)^{\top}\Big\|_{p,\infty}\nonumber\\
        &\quad\leq hB_{O}\cdot B_{V}\big(1+4c_{p,q}B_{X}^{2}\cdot B_{QK}\big)\|\bX^{\top}-\tilde{\bX}^{\top}\|_{p,\infty}.\nonumber
    \end{align}
\end{proposition}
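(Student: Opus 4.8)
The plan is to reduce the multi-head case to the single-head estimate already established in Proposition~\ref{prop:translip}. First I would observe that $\mha$ is just a sum of $h$ terms, each of the form $f(\bX,\bW_{QK,i},\bW_{V,i})\bW_{O,i}$, so by the triangle inequality for the $\ell_{p,\infty}$ norm it suffices to bound each summand
\begin{align}
\Bigl\|\bigl(f(\bX,\bW_{QK,i},\bW_{V,i})\bW_{O,i}-f(\tilde{\bX},\bW_{QK,i},\bW_{V,i})\bW_{O,i}\bigr)^{\top}\Bigr\|_{p,\infty}\nonumber
\end{align}
and then add up the $h$ contributions. This is the key structural step; everything else is propagating norms through a right-multiplication by $\bW_{O,i}$.

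Next I would peel off the $\bW_{O,i}$ factor. Writing $\bY_i = f(\bX,\bW_{QK,i},\bW_{V,i}) - f(\tilde{\bX},\bW_{QK,i},\bW_{V,i})$, we have $(\bY_i\bW_{O,i})^\top = \bW_{O,i}^\top \bY_i^\top$, and by Lemma~\ref{lem:matmat} (with the conjugate pair $p,q$), $\|\bW_{O,i}^\top\bY_i^\top\|_{p,\infty}\le \|\bW_{O,i}^\top\|_{p,q}\|\bY_i^\top\|_{p,\infty}\le B_O\|\bY_i^\top\|_{p,\infty}$. Now $\|\bY_i^\top\|_{p,\infty}$ is precisely the quantity controlled by the first inequality of Proposition~\ref{prop:translip}, which gives $\|\bY_i^\top\|_{p,\infty}\le B_V(1+4c_{p,q}B_X^2 B_{QK})\|\bX^\top-\tilde{\bX}^\top\|_{p,\infty}$. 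Combining these yields a per-head bound of $B_O B_V(1+4c_{p,q}B_X^2 B_{QK})\|\bX^\top-\tilde{\bX}^\top\|_{p,\infty}$, and summing over the $h$ heads produces the claimed factor $hB_O$. A small point to check is that the dimensions match: $\bW_{O,i}\in\bbR^{(d/h)\times d}$ so $\bW_{O,i}^\top\in\bbR^{d\times(d/h)}$, and $\bY_i^\top\in\bbR^{(d/h)\times N}$ — wait, $f(\bX,\cdot,\cdot)\in\bbR^{N\times(d/h)}$ since $\bW_{V,i}\in\bbR^{d\times(d/h)}$ — so $\bY_i^\top\in\bbR^{(d/h)\times N}$ and the product $\bW_{O,i}^\top\bY_i^\top\in\bbR^{d\times N}$ is well-defined; one should make sure the hypothesis $\|\bW_{V,i}^\top\|_{p,q}\le B_V$ is the right normalization to feed into Proposition~\ref{prop:translip}, which was stated for square $\bW_V$, but the proof of that proposition only used $\|\bW_V^\top\|_{p,q}\le B_V$ and never squareness, so it carries over verbatim.

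The main obstacle I anticipate is not the estimate itself but the bookkeeping around the rectangular weight matrices: Proposition~\ref{prop:translip} was phrased for $\bW_V,\bW_{QK}\in\bbR^{d\times d}$, so I would need to either re-verify that its proof goes through for $\bW_{V,i}\in\bbR^{d\times(d/h)}$ (it does, since the argument is purely norm-based via Lemmas~\ref{lem:matvec}, \ref{lem:matmat}, \ref{lem:vecnorm}, \ref{lem:smlip}) or state a mild generalization. A second, minor subtlety is the constant $c_{p,q}$: with the reduced inner dimension $d/h$ appearing in some norms, one could in principle get $c_{p,q}$ defined relative to $d/h$ rather than $d$, but since $c_{p,q}\le d^{|1/q-1/p|}$ regardless, absorbing everything into the stated $c_{p,q}$ for dimension $d$ is safe and loses nothing. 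After that, the proof is a two-line application of the triangle inequality plus Lemma~\ref{lem:matmat}, and I would similarly extend Proposition~\ref{prop:normbound} by the same sum-over-heads argument to complete the multi-head extension of Theorem~\ref{thm:main_concen}.
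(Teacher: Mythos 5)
Your proposal is correct and matches the paper's own proof essentially line for line: triangle inequality over the $h$ heads, Lemma~\ref{lem:matmat} to peel off $\bW_{O,i}^{\top}$, and then the first inequality of Proposition~\ref{prop:translip} applied to each head. Your additional remark that Proposition~\ref{prop:translip} carries over to the rectangular $\bW_{V,i}\in\bbR^{d\times(d/h)}$ because its proof only uses the norm bounds is a sound (and slightly more careful) observation than the paper makes explicit.
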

\begin{proof}[Proof of Proposition~\ref{prop:mhalip1}]
    For the difference between the outputs of the multi-head attention with different inputs, we have
    \begin{align}
        &\Big\|\big(\mha(\bX,\bW_{QK}^{1:h},\bW_{V}^{1:h},\bW_{O}^{1:h})-\mha(\tilde{\bX},\bW_{QK}^{1:h},\bW_{V}^{1:h},\bW_{O}^{1:h})\big)^{\top}\Big\|_{p,\infty}\nonumber\\
        &\quad\leq \sum_{i=1}^{h}\bigg\|\Big(f(\bX,\bW_{QK,i},\bW_{V,i})\bW_{O,i}-f(\tilde{\bX},\bW_{QK,i},\bW_{V,i})\bW_{O,i})\Big)^{\top}\bigg\|_{p,\infty}\nonumber\\ 
        &\quad\leq \sum_{i=1}^{h}\|\bW_{O,i}^{\top}\|_{p,q}\cdot\Big\|\Big(f(\bX,\bW_{QK,i},\bW_{V,i})-f(\tilde{\bX},\bW_{QK,i},\bW_{V,i})\Big)^{\top}\Big\|_{p,\infty}\nonumber\\ 
        &\quad\leq \sum_{i=1}^{h}\|\bW_{O,i}^{\top}\|_{p,q}\cdot\|\bW_{V,i}^{\top}\|_{p,q}\Big(1+2c_{p,q}\|\tilde{\bX}^{\top}\|_{p,\infty}\cdot\|\bW_{QK,i}^{\top}\|_{p,q}\big(\|\tilde{\bX}^{\top}\|_{p,\infty}\nonumber\\
        &\quad\qquad+\|\bX^{\top}\|_{p,\infty}\big)\Big)\|\bX^{\top}-\tilde{\bX}^{\top}\|_{p,\infty}\nonumber, 
    \end{align}
    where the first inequality follows from triangle inequality, the second inequality follows from Lemma~\ref{lem:matmat}, and the last inequality follows from Proposition~\ref{prop:translip}.
\end{proof}
\begin{proposition}\label{prop:mhalip2}
    For any $\bX\in\bbR^{N\times d}$, and any $\bW_{QK,i},\tilde{\bW}_{QK,i}\in\bbR^{d\times d},\bW_{V,i},\tilde{\bW}_{V,i}\in\bbR^{d\times\frac{d}{h}},\bW_{O,i},\tilde{\bW}_{O,i}\in\bbR^{\frac{d}{h}\times d}$ for $i\in [h]$ and two positive conjugate numbers $p,q\in\bbR$, if $\|\bX^{\top}\|_{p,\infty}\leq B_{X}$, $\|\bW_{V,i}^{\top}\|_{p,q},\|\tilde{\bW}_{V,i}^{\top}\|_{p,q}\leq B_{V}$, and $\|\bW_{O,i}^{\top}\|_{p,q},\|\tilde{\bW}_{O,i}^{\top}\|_{p,q}\leq B_{O}$ for $i\in[h]$, then we have 
    \begin{align}
        &\Big\|\big(\mha(\bX,\tilde{\bW}_{QK}^{1:h},\tilde{\bW}_{V}^{1:h},\tilde{\bW}_{O}^{1:h})-\mha(\bX,\bW_{QK}^{1:h},\bW_{V}^{1:h},\bW_{O}^{1:h})\big)^{\top}\Big\|_{p,\infty}\nonumber\\
        &\quad\leq \sum_{i=1}^{h} B_{V}\cdot B_{X}\big\|(\tilde{\bW}_{O,i}-\bW_{O,i})^{\top}\big\|_{p,q}+B_{O}\cdot B_{X}\big\|\bW_{V,i}^{\top}-\tilde{\bW}_{V,i}^{\top}\|_{p,q}\nonumber\\
        &\quad\qquad +2c_{p,q}B_{X}^{3}\cdot B_{V}\cdot B_{O}\|\bW_{QK,i}^{\top}-\tilde{\bW}_{QK,i}^{\top}\|_{p,q}\nonumber
    \end{align}
\end{proposition}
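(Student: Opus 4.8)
The plan is to reduce Proposition~\ref{prop:mhalip2} to the single-head estimates that have already been established, namely the norm bound in Proposition~\ref{prop:normbound}, the parameter-Lipschitz bound (the \emph{second} inequality) in Proposition~\ref{prop:translip}, and the sub-multiplicativity Lemma~\ref{lem:matmat}. First I would apply the triangle inequality over the $h$ heads to get $\big\|\big(\mha(\bX,\tilde{\bW}_{QK}^{1:h},\tilde{\bW}_{V}^{1:h},\tilde{\bW}_{O}^{1:h})-\mha(\bX,\bW_{QK}^{1:h},\bW_{V}^{1:h},\bW_{O}^{1:h})\big)^{\top}\big\|_{p,\infty}\leq \sum_{i=1}^{h}\big\|\big(f(\bX,\tilde{\bW}_{QK,i},\tilde{\bW}_{V,i})\tilde{\bW}_{O,i}-f(\bX,\bW_{QK,i},\bW_{V,i})\bW_{O,i}\big)^{\top}\big\|_{p,\infty}$, where $f(\bX,\bW_{QK},\bW_{V})=\SM(\bX\bW_{QK}\bX^{\top})\bX\bW_{V}$.

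Next, for each head I would insert the hybrid term $f(\bX,\tilde{\bW}_{QK,i},\tilde{\bW}_{V,i})\bW_{O,i}$ and split the per-head difference by the triangle inequality into (i) $f(\bX,\tilde{\bW}_{QK,i},\tilde{\bW}_{V,i})(\tilde{\bW}_{O,i}-\bW_{O,i})$ and (ii) $\big(f(\bX,\tilde{\bW}_{QK,i},\tilde{\bW}_{V,i})-f(\bX,\bW_{QK,i},\bW_{V,i})\big)\bW_{O,i}$. For term (i), transposing and applying Lemma~\ref{lem:matmat} factors out $\|(\tilde{\bW}_{O,i}-\bW_{O,i})^{\top}\|_{p,q}$, and the remaining factor $\|f(\bX,\tilde{\bW}_{QK,i},\tilde{\bW}_{V,i})^{\top}\|_{p,\infty}$ is bounded by $B_{V}B_{X}$ using the attention-only part of the proof of Proposition~\ref{prop:normbound} (inequality~\eqref{ieq:76}), which needs no bound on $\bW_{QK,i}$ since the softmax rows sum to one. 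For term (ii), Lemma~\ref{lem:matmat} factors out $\|\bW_{O,i}^{\top}\|_{p,q}\leq B_{O}$, and the second inequality of Proposition~\ref{prop:translip} bounds $\|(f(\bX,\tilde{\bW}_{QK,i},\tilde{\bW}_{V,i})-f(\bX,\bW_{QK,i},\bW_{V,i}))^{\top}\|_{p,\infty}$ by $2c_{p,q}B_{X}^{3}B_{V}\|\bW_{QK,i}^{\top}-\tilde{\bW}_{QK,i}^{\top}\|_{p,q}+B_{X}\|\bW_{V,i}^{\top}-\tilde{\bW}_{V,i}^{\top}\|_{p,q}$, again invoking only the hypotheses available here. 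Summing the three resulting contributions over $i\in[h]$ gives exactly the claimed bound.

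There is essentially no hard step: the statement is a mechanical consequence of the single-head lemmas, and the only point requiring a little care is verifying that Proposition~\ref{prop:translip}'s second inequality and the attention part of Proposition~\ref{prop:normbound} do not secretly use a bound on $\bW_{QK,i}$ --- they do not, because the softmax output is always a probability vector and is therefore controlled in $\ell_{1}$ irrespective of its argument --- so that the weaker hypothesis set of Proposition~\ref{prop:mhalip2} suffices. If anything is delicate, it is keeping the $\ell_{p,q}$/$\ell_{p,\infty}$ bookkeeping consistent through the transpositions so that Lemma~\ref{lem:matmat} is applied with the conjugate pair $(p,q)$ in the correct slots; this is the same pattern already used in the proofs of Propositions~\ref{prop:translip} and~\ref{prop:normbound}, so I would follow those verbatim.
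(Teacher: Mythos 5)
Your proposal is correct and follows essentially the same route as the paper's own proof: triangle inequality over the heads, insertion of the hybrid term $f(\bX,\tilde{\bW}_{QK,i},\tilde{\bW}_{V,i})\bW_{O,i}$, Lemma~\ref{lem:matmat} to peel off the $\bW_{O,i}$ factors, the softmax-row-sum bound from the proof of Proposition~\ref{prop:normbound} for the norm of each head, and the second inequality of Proposition~\ref{prop:translip} for the parameter perturbation. Your side remark that no bound on $\bW_{QK,i}$ is needed is also accurate and consistent with the proposition's hypotheses.
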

\begin{proof}[Proof of Proposition~\ref{prop:mhalip2}]
    For the difference between the outputs of the multi-head attention with different parameters, we have
    \begin{align}
        &\Big\|\big(\mha(\bX,\tilde{\bW}_{QK}^{1:h},\tilde{\bW}_{V}^{1:h},\tilde{\bW}_{O}^{1:h})-\mha(\bX,\bW_{QK}^{1:h},\bW_{V}^{1:h},\bW_{O}^{1:h})\big)^{\top}\Big\|_{p,\infty}\nonumber\\
        &\quad=\bigg\|\Big(\sum_{i=1}^{h}f(\bX,\tilde{\bW}_{QK,i},\tilde{\bW}_{V,i})\tilde{\bW}_{O,i}-\sum_{i=1}^{h}f(\bX,\bW_{QK,i},\bW_{V,i})\bW_{O,i})\Big)^{\top}\bigg\|_{p,\infty}\nonumber\\
        &\quad\leq \bigg\|\Big(\sum_{i=1}^{h}f(\bX,\tilde{\bW}_{QK,i},\tilde{\bW}_{V,i})\tilde{\bW}_{O,i}-\sum_{i=1}^{h}f(\bX,\tilde{\bW}_{QK,i},\tilde{\bW}_{V,i})\bW_{O,i})\Big)^{\top}\bigg\|_{p,\infty}\nonumber\\
        &\quad\qquad +\bigg\|\Big(\sum_{i=1}^{h}f(\bX,\tilde{\bW}_{QK,i},\tilde{\bW}_{V,i})\bW_{O,i}-\sum_{i=1}^{h}f(\bX,\bW_{QK,i},\bW_{V,i})\bW_{O,i})\Big)^{\top}\bigg\|_{p,\infty}\nonumber\\
        &\quad\leq \sum_{i=1}^{h}\Big\|\big(f(\bX,\tilde{\bW}_{QK,i},\tilde{\bW}_{V,i})\big)^{\top}\Big\|_{p,\infty}\cdot\big\|(\tilde{\bW}_{O,i}-\bW_{O,i})^{\top}\big\|_{p,q}\nonumber\\
        &\quad\qquad +\sum_{i=1}^{h}\bigg\|\Big(f(\bX,\tilde{\bW}_{QK,i},\tilde{\bW}_{V,i})-f(\bX,\bW_{QK,i},\bW_{V,i})\Big)^{\top}\bigg\|_{p,\infty}\cdot\big\|\bW_{O,i}^{\top}\big\|_{p,q}\label{ieq:45},
    \end{align}
    where the first inequality follows from triangle inequality, and the second inequality follows from Lemma~\ref{lem:matmat}.
    
    For the first term in inequality \eqref{ieq:45}, let $\tau\in[N]$ and $\bx_{\tau}^{\top}$ be the $\tau^{\rm{th}}$ row of $\bX$, then we have
    \begin{align}
        \Big\|\big(f(\bX,\tilde{\bW}_{QK,i},\tilde{\bW}_{V,i})\big)^{\top}\Big\|_{p,\infty}&=\max_{\tau\in[N]} \|\SM\big(\bx_{\tau}^{\top}\tilde{\bW}_{QK,i}\bX^{\top}\big)\bX\tilde{\bW}_{V,i}\|\nonumber\\
        &\leq \max_{\tau\in[N]} \big\|\tilde{\bW}_{V,i}^{\top}\bX^{\top}\big\|_{p,\infty}\cdot\Big\|\SM\big(\bx_{\tau}^{\top}\tilde{\bW}_{QK,i}\bX^{\top}\big)\Big\|_{1}\nonumber\\ 
        &\leq \big\|\tilde{\bW}_{V,i}^{\top}\big\|_{p,q}\cdot\|\bX^{\top}\big\|_{p,\infty}\label{ieq:47},
    \end{align}
    where the first inequality follows from Lemma~\ref{lem:matvec}. For the second term in inequality \eqref{ieq:45}, recall Proposition~\ref{prop:translip}, then we have
    \begin{align}
        &\bigg\|\Big(f(\bX,\tilde{\bW}_{QK,i},\tilde{\bW}_{V,i})-f(\bX,\bW_{QK,i},\bW_{V,i})\Big)^{\top}\bigg\|_{p,\infty}\nonumber\\
        &\quad\leq 2c_{p,q}\|\bX^{\top}\|_{p,\infty}^{3}\cdot\|\bW_{V,i}^{\top}\|_{p,q}\cdot\|\bW_{QK,i}^{\top}-\tilde{\bW}_{QK,i}^{\top}\|_{p,q}+\big\|\bW_{V,i}^{\top}-\tilde{\bW}_{V,i}^{\top}\|_{p,q}\cdot\|\bX^{\top}\big\|_{p,\infty}\label{ieq:48}.
    \end{align}
    The desired result follows by substituting inequalities \eqref{ieq:47} and \eqref{ieq:48} into inequality \eqref{ieq:45}. This concludes the proof.
\end{proof}
\begin{proposition}\label{prop:mhanormbound}
    For any $\bX\in\bbR^{N\times d}$, and any $\bW_{QK,i}\in\bbR^{d\times d},\bW_{V,i}\in\bbR^{d\times\frac{d}{h}},\bW_{O,i}\in\bbR^{\frac{d}{h}\times d}$ for $i\in [h]$ and two positive conjugate numbers $p,q\in\bbR$, we have
    \begin{align}
        \Big\|\big(\mha(\bX,\bW_{QK}^{1:h},\bW_{V}^{1:h},\bW_{O}^{1:h})\big)^{\top}\Big\|_{p,\infty}\leq \sum_{i=1}^{h}\|\bW_{O,i}^{\top}\|_{p,q}\|\bW_{V,i}^{\top}\|_{p,q}\|\bX^{\top}\|_{p,\infty}\nonumber.
    \end{align}
\end{proposition}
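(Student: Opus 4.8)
\textbf{Proof proposal for Proposition~\ref{prop:mhanormbound}.}
The plan is to mimic exactly the proof of Proposition~\ref{prop:normbound}, replacing the single-head attention block by the sum over heads. First I would invoke the triangle inequality on the definition $\mha(\bX,\bW_{QK}^{1:h},\bW_{V}^{1:h},\bW_{O}^{1:h})=\sum_{i=1}^{h}f(\bX,\bW_{QK,i},\bW_{V,i})\bW_{O,i}$ to reduce the claim to a per-head bound:
\begin{align}
    \Big\|\big(\mha(\bX,\bW_{QK}^{1:h},\bW_{V}^{1:h},\bW_{O}^{1:h})\big)^{\top}\Big\|_{p,\infty}\leq \sum_{i=1}^{h}\Big\|\big(f(\bX,\bW_{QK,i},\bW_{V,i})\bW_{O,i}\big)^{\top}\Big\|_{p,\infty}.\nonumber
\end{align}
This step is immediate since $\|\cdot\|_{p,\infty}$ is the max over rows of an $\ell_p$-norm and the $\ell_p$-norm satisfies the triangle inequality.

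Next, for each head $i$, I would peel off the output projection $\bW_{O,i}$ using Lemma~\ref{lem:matmat}, which gives $\|(f(\bX,\bW_{QK,i},\bW_{V,i})\bW_{O,i})^{\top}\|_{p,\infty}\le \|\bW_{O,i}^{\top}\|_{p,q}\,\|(f(\bX,\bW_{QK,i},\bW_{V,i}))^{\top}\|_{p,\infty}$. Then I would bound $\|(f(\bX,\bW_{QK,i},\bW_{V,i}))^{\top}\|_{p,\infty}=\max_{\tau\in[N]}\|\SM(\bx_\tau^\top\bW_{QK,i}\bX^\top)\bX\bW_{V,i}\|_p$ exactly as in the first half of the proof of Proposition~\ref{prop:normbound}: apply Lemma~\ref{lem:matvec} with $u=\infty$, $v=1$ to split off $\|\SM(\bx_\tau^\top\bW_{QK,i}\bX^\top)\|_1=1$, then use Lemma~\ref{lem:matmat} to get $\|\bW_{V,i}^\top\bX^\top\|_{p,\infty}\le \|\bW_{V,i}^\top\|_{p,q}\,\|\bX^\top\|_{p,\infty}$. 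Combining these yields $\|(f(\bX,\bW_{QK,i},\bW_{V,i})\bW_{O,i})^{\top}\|_{p,\infty}\le \|\bW_{O,i}^\top\|_{p,q}\|\bW_{V,i}^\top\|_{p,q}\|\bX^\top\|_{p,\infty}$, and summing over $i\in[h]$ gives the claim.

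There is essentially no obstacle here — this is a routine adaptation, and the only mild subtlety is that the $\SM$ block has $\ell_1$-normalized rows so its contribution is a clean factor of $1$, which is already how Proposition~\ref{prop:normbound} handled it. In writing it up I would simply cite Lemmas~\ref{lem:matvec} and~\ref{lem:matmat} at the two places they are used and reference the structure of the proof of Proposition~\ref{prop:normbound} to avoid repetition.
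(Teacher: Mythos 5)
Your proposal is correct and matches the paper's proof essentially line for line: triangle inequality over heads, Lemma~\ref{lem:matmat} to peel off $\bW_{O,i}^{\top}$, and then the single-head bound $\|(f(\bX,\bW_{QK,i},\bW_{V,i}))^{\top}\|_{p,\infty}\leq\|\bW_{V,i}^{\top}\|_{p,q}\|\bX^{\top}\|_{p,\infty}$, which the paper obtains by citing inequality~\eqref{ieq:76} from the proof of Proposition~\ref{prop:normbound} rather than re-deriving it as you do. No gaps.
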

\begin{proof}[Proof of Proposition~\ref{prop:mhanormbound}]
    For the $\ell_{p,\infty}$-norm of the multi-head attention, we have
    \begin{align}
        &\Big\|\big(\mha(\bX,\bW_{QK}^{1:h},\bW_{V}^{1:h},\bW_{O}^{1:h})\big)^{\top}\Big\|_{p,\infty}\nonumber\\
        &\quad\leq \sum_{i=1}^{h}\bigg\|\Big(f(\bX,\bW_{QK,i},\bW_{V,i})\bW_{O,i}\Big)^{\top}\bigg\|_{p,\infty}\nonumber\\ 
        &\quad\leq \sum_{i=1}^{h}\|\bW_{O,i}^{\top}\|_{p,q}\cdot\Big\|\big(f(\bX,\bW_{QK,i},\bW_{V,i})\big)^{\top}\Big\|_{p,\infty}\nonumber\\ 
        &\quad\leq \sum_{i=1}^{h}\|\bW_{O,i}^{\top}\|_{p,q}\cdot\|\bW_{V,i}^{\top}\|_{p,q}\cdot\|\bX^{\top}\|_{p,\infty},\nonumber 
    \end{align}
    where the first inequality follows from triangle inequality, the second inequality follows from Lemma~\ref{lem:matmat}, and the final inequality follows from inequality \eqref{ieq:76} in Proposition~\ref{prop:normbound}.
\end{proof}

\subsection{Extension to Non-i.i.d.\ Sampling}\label{app:extnoniid}
The dataset $\calD$ is collected in an i.i.d.\ manner in the main paper. In this this section, we extend our result to the non-i.i.d.\ case. Specifically, we collect the dataset $\calD^{\prime}=\{(\bar{\bS}_{t},\bar{\bA}_{t},r_{t})\}_{t=0}^{n}$ by 
implementing a policy $\pi_{0}$, i.e., the action is taken as $\bar{\bA}_{t}\sim\pi_{0}(\cdot\,|\, \bar{\bS}_{t})$, and the sequence of states is updated  as $\bar{\bS}_{t+1}\sim P^{*}(\cdot\, | \,\bar{\bS}_{t},\bar{\bA}_{t})$ for $t\in[n]$. We assume that 
the initial state $\bar{\bS}_{0}$ is generated according to a distribution $q_{0}$, i.e., the initial state-action pair is distributed as $(\bar{\bS}_{0},\bar{\bA}_{0})\sim q_{0}\pi_{0}$. We denote the \emph{stationary distribution} 
on the state-action pair of the Markov chain induced by the policy $\pi_{0}$ as $q_{P^{*}}^{\pi_{0}}(\bar{\bS},\bar{\bA})$. Note that the initial distribution $q_{0}\pi_{0}$ may not equal to the stationary distribution $q_{P^{*}}^{\pi_{0}}$. 
To distinguish these two different cases, we will use $P_{q_{0}\pi_{0}}$ and $P_{q_{P^{*}}^{\pi_{0}}}$ to denote the probability distributions with respect to the Markov chains with initial state distributed as $q_{0}\pi_{0}$ and $q_{P^{*}}^{\pi_{0}}$ respectively.

In such setting, we define the mismatch between two functions $f$ and $\tilde{f}$ on $\calD$ for a fixed policy $\pi$ as $\calL^{\prime}(f,\tilde{f},\pi;\calD^{\prime})= \frac{1}{n}\sum_{t=0}^{n-1}(f(\bar{\bS}_{t},\bar{\bA}_{t})-\barr_{t}-\gamma \tilde{f}(\bar{\bS}_{t+1},\pi))^{2}$, then the Bellman error of a function $f$ with respect to  the policy $\pi$ is defined as $\calE^{\prime}(f,\pi;\calD^{\prime})=\calL^{\prime}(f,f,\pi;\calD^{\prime})-\inf_{\tilde{f}\in\calF_{\tf}}\calL^{\prime}(\tilde{f},f,\pi;\calD^{\prime})$. The corresponding model-free algorithm can be written as
\begin{align}
    \hpi^{\prime}=\argmax_{\pi\in\Pi}\min_{f\in \calF^{\prime}(\pi,\varepsilon)} f(\bar{\bS}_{0},\pi),\quad \text{where}\quad \calF^{\prime}(\pi,\varepsilon)= \big\{f\in\calF_{\tf}(B) \, \big|\, \calE^{\prime}(f,\pi;\calD^{\prime})\leq \varepsilon \big\}.\label{algo:mfree2}
\end{align}
In the dataset $\calD$ collected by implementing policy $\pi_{0}$, the mismatch between the distribution induced by the optimal policy $d_{P^{*}}^{\pi^{*}}$ and the stationary distribution $q_{P^{*}}^{\pi_{0}}$ is captured by
\begin{align}
    C_{\calF_{\tf}}^{\prime}(\pi_{0})=\max_{f\in\calF_{\tf}}\bbE_{d^{\pi^{*}}_{P^{*}}}\big[\big(f(\bar{\bS},\bar{\bA})-\calT^{\pi^{*}}f(\bar{\bS},\bar{\bA})\big)^{2}\big]\big/\bbE_{q_{P^{*}}^{\pi_{0}}}\big[\big(f(\bar{\bS},\bar{\bA})-\calT^{\pi^{*}}f(\bar{\bS},\bar{\bA})\big)^{2}\big],\label{eqn:nCFtf}
\end{align}
where $\calF_{\tf}$ is the transformer function class defined in Section~\ref{sec:mfreeRL}.

To analyze the concentration behavior of the action-value function estimate under such sampling method, we need to define additional quantities to describe how fast the Markov chain approximates its stationary distribution. For a Markov chain with finite state space $\Omega$ and transition probability matrix $P$, we label the eigenvalues of $P$ in decreasing order: $1=\lambda_{1}\geq\ldots\geq \lambda_{|\Omega|}\geq -1$. Define $\lambda^{*}=\max \{|\lambda|\, : \, \lambda \text{ is an eigenvalue of }P\text{ and }\lambda\neq1\}.$ The \emph{absolute spectral gap} of $P$ is defined as $1-\lambda^{*}$. The notion of the absolute spectral gap and our following results can also be generalized to the Markov chain with infinite state space by treating of transition kernel $P$ as an operator of a Hilbert space. For two distributions $p$ and $q$ on $\Omega$, we define 
\begin{align*}
    N(p,q)=\int_{\Omega} \frac{\rmd p}{\rmd q}(x) \, p(\rmd x).
\end{align*}
Inspired by the ubiquitous change-of-measure technique, we will use $N(q_{0},q)$ to capture the difference between the non-stationary Markov chain with initial distribution $q_{0}$ and the stationary Markov chain with stationary distribution $q$.

To analyze the algorithm in Eqn.~\eqref{algo:mfree2}, we first derive a generalization error bound of the estimate of the Bellman error using the PAC-Bayesian framework.
\begin{restatable}{proposition}{mainconcennoniid}\label{thm:main_concen2}
    Consider the dataset $\calD^{\prime}$ collected by implementing a policy $\pi_{0}$. Let $\bar{B}=B_{V}B_{QK}B_{a}B_{b}B_{\bw}$.  For all $f,\tilde{f}\in\calF_{\tf}(\bB)$ and all policies $\pi\in\Pi$, with probability at least $1-\delta$, we have 
    \begin{align}
        &\Big|\bbE_{q_{P^{*}}^{\pi_{0}}}\Big[\big(f(\bar{\bS},\bar{\bA})-\calT^{\pi}\tilde{f}(\bar{\bS},\bar{\bA})\big)^{2}\Big]-\calL^{\prime}(f,\tilde{f},\pi;\calD^{\prime})+\calL^{\prime}(\calT^{\pi}\tilde{f},\tilde{f},\pi;\calD^{\prime})\Big|\nonumber\\
        &\quad\leq \!\frac{C+(2-C)\lambda}{2}\bbE_{q_{P^{*}}^{\pi_{0}}}\!\Big[\big(f(\bar{\bS},\bar{\bA})\!-\!\calT^{\pi}\tilde{f}(\bar{\bS},\bar{\bA})\big)^{2}\Big]\!\nonumber\\
        &\quad\qquad+\!O\bigg(\frac{V_{\max}^{2}}{(1-\lambda)n}\biggl[mL^{2}d^{2}\log\frac{mdL\bar{B} n}{V_{\max}} +\log\frac{N(q_{0}\pi_{0},q_{P^{*}}^{\pi_{0}})\calN(\Pi,1/n,d_{\infty})}{\delta}\biggr]\bigg),\label{eqn:gen_appendix}
    \end{align}
    where $1-\lambda$ is the absolute spectral gap of the Markov chain $\{(\bar{\bS}_{t},\bar{\bA}_{t})\}_{t=0}^{\infty}$ induced by the policy $\pi_{0}$, and $0<C<e^{1/10}$ is an absolute constant.
\end{restatable}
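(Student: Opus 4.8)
The plan is to mirror the i.i.d.\ proof of Theorem~\ref{thm:main_concen} (Appendix~\ref{app:propmain_concern}) and replace the only two places where independence of the samples is used: the exponential moment bound inside the PAC-Bayesian inequality (Proposition~\ref{prop:pacbayes}) and the change-of-measure from the initial distribution to the stationary distribution. First I would set $\bX_t=(\bar{\bS}_t,\bar{\bA}_t,\bar{\bS}_{t+1})$ and keep the same loss $l(f,\tilde f,\pi;\bX)=(f(\bar{\bS},\bar{\bA})-\barr-\gamma\tilde f(\bar{\bS}^\prime,\pi))^2-(\calT^\pi\tilde f(\bar{\bS},\bar{\bA})-\barr-\gamma\tilde f(\bar{\bS}^\prime,\pi))^2$, which is bounded by $4V_{\max}^2$ and, under the stationary distribution $q_{P^*}^{\pi_0}$, has expectation $\bbE_{q_{P^*}^{\pi_0}}[(f-\calT^\pi\tilde f)^2]$ and variance controlled by $16V_{\max}^2$ times that expectation (the computation in Step~1 of Appendix~\ref{app:propmain_concern} carries over verbatim, since it only uses the definition of the Bellman operator pointwise in $(\bar{\bS},\bar{\bA})$ and the boundedness of $f,\tilde f$).

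The core technical substitution is a concentration inequality for $\frac1n\sum_{t=0}^{n-1} g(\bX_t)$ where $\{(\bar{\bS}_t,\bar{\bA}_t)\}$ is a (non-stationary) Markov chain with absolute spectral gap $1-\lambda$ and initial law $q_0\pi_0$, and $g$ is bounded with small stationary variance. I would invoke a Bernstein-type inequality for uniformly ergodic / spectral-gap Markov chains (e.g.\ of Paulin or Lezaud type), which gives an exponential moment bound of the form $\bbE_{q_{P^*}^{\pi_0}}[\exp(\lambda_0 \sum_t (g(\bX_t)-\mu))]\le N(q_0\pi_0,q_{P^*}^{\pi_0})\exp(c\,n\lambda_0^2\sigma^2/(1-\lambda))$ for $|\lambda_0|$ small; the factor $N(q_0\pi_0,q_{P^*}^{\pi_0})$ absorbs the discrepancy between the chain's actual initial distribution and its stationary distribution via the change-of-measure $\rmd(q_0\pi_0)/\rmd q_{P^*}^{\pi_0}$, and $(1-\lambda)^{-1}$ accounts for the temporal correlations (effective sample size $n(1-\lambda)$). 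Plugging this bound into the Donsker--Varadhan step of the proof of Proposition~\ref{prop:pacbayes} in place of the product bound $\bbE[\exp(n\varepsilon_n(f,\lambda))]\le 1$ yields exactly the same PAC-Bayesian statement but with $n$ replaced by $n(1-\lambda)$ inside the complexity term and an extra additive $\log N(q_0\pi_0,q_{P^*}^{\pi_0})$. From there, Steps~2--4 of Appendix~\ref{app:propmain_concern} — controlling the fluctuation of both sides over a neighborhood of $(\btheta,\btheta^\prime)$ using Propositions~\ref{prop:nnerror}, the explicit choice of $P_0$ and $Q$ as products of uniform distributions, the KL computation \eqref{ieq:40}, and the $d_\infty$-covering of $\Pi$ via \eqref{ieq:89} — go through unchanged, since none of them touch the sampling mechanism. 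The appearance of the constant $C$ and the coefficient $\tfrac{C+(2-C)\lambda}{2}$ on the right-hand side of \eqref{eqn:gen_appendix} comes from the Markov-chain Bernstein bound having a slightly worse constant in front of the variance term than the i.i.d.\ case (the factor $\tfrac12$ becomes $\tfrac C2$ after optimizing over $\lambda_0$), plus a correction proportional to $\lambda$ that vanishes as the mixing becomes instantaneous.

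The main obstacle I anticipate is making the Markov-chain exponential-moment inequality quantitatively compatible with the PAC-Bayesian machinery: the standard spectral-gap Bernstein inequalities are stated for a fixed bounded function, whereas the PAC-Bayesian argument needs the bound to hold after integrating against an arbitrary posterior $Q$ on the parameter space and then exponentiating, so I must check that the inequality holds uniformly with the \emph{same} constants for every $g$ in the relevant family (here all $g=n\varepsilon_n(f_\btheta,f_{\btheta^\prime},\lambda_0)$), which is fine because the bound depends on $g$ only through its sup-norm and its stationary variance, both of which are uniformly controlled. A secondary subtlety is that the chain on $(\bar{\bS}_t,\bar{\bA}_t)$ with transitions $\bar{\bA}_t\sim\pi_0(\cdot\,|\,\bar{\bS}_t)$, $\bar{\bS}_{t+1}\sim P^*(\cdot\,|\,\bar{\bS}_t,\bar{\bA}_t)$ must genuinely possess a nonzero absolute spectral gap and a stationary distribution $q_{P^*}^{\pi_0}$ — this is an assumption rather than something to prove — and that $l(\cdot)$ depends on the \emph{pair} $(\bX_t,\bX_{t+1})$-style triple $(\bar{\bS}_t,\bar{\bA}_t,\bar{\bS}_{t+1})$; since $\bar{\bS}_{t+1}$ is the first coordinate of the next state, the sequence $\{\bX_t\}$ is itself a function of a Markov chain (a ``snake'' of the original chain), so its mixing is governed by the same spectral gap up to a constant, and I would note this reduction explicitly rather than re-deriving mixing bounds.
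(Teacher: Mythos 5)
Your proposal matches the paper's proof essentially step for step: the paper also proves a Markov-chain PAC-Bayesian lemma by combining a spectral-gap Bernstein moment-generating-function bound for stationary chains (Theorem~1 of \cite{jiang2018bernstein}) with the Donsker--Varadhan representation, handles the non-stationary start via a change-of-measure inequality of Paulin that introduces the factor $N(q_{0}\pi_{0},q_{P^{*}}^{\pi_{0}})$, works with the augmented chain $\{(\bar{\bS}_{t},\bar{\bA}_{t},\bar{\bS}_{t+1},\bar{\bA}_{t+1})\}$ exactly as in your ``snake'' remark, and then reuses Steps~2--4 of the i.i.d.\ proof unchanged. The only cosmetic difference is that the paper applies the change of measure at the level of tail probabilities (yielding $\log(2N/\delta^{2})$) rather than folding $N$ into the exponential moment bound, which does not affect the result.
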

For ease of notation, we define $\tilde{e}(\calF_{\tf},\Pi,\pi_{0},\delta,n)$ to be $(1-\lambda)n$ times the second term of the generalization error bound in \eqref{eqn:gen_appendix}. We note that Proposition~\ref{thm:main_concen2} is a generalization  of Theorem~\ref{thm:main_concen}. When the dataset $\calD$ consists of i.i.d.\ samples drawn according to $\mu$, the dataset $\calD$ can be treated as a Markov chain with $\lambda=0$, and $N(\mu,\mu)=1$. In this case, our result in Proposition~\ref{thm:main_concen2} particularizes to the result in  Theorem~\ref{thm:main_concen} up to a constant.

Before stating the suboptimality bound, we require two additional assumptions on the function class and the policy $\pi_{0}$. We first state the standard regularity assumption of the transformer function class. We assume that the collected dataset $\calD^{\prime}$ provides a good coverage of the optimal policy.
\begin{assumption}\label{assump:mfree3}
    For the policy $\pi_{0}$, the coefficient $C_{\calF_{\tf}}^{\prime}(\pi_{0})$ defined in Eqn.~\eqref{eqn:nCFtf} is finite. 
\end{assumption}
Correspondingly, we slightly adjust the approximate realizability and complete assumption as follows:
\begin{assumption}\label{assump:mfree4}
    For any $\pi\in\Pi$, we have $\inf_{f\in\calF_{\tf}}\sup_{\mu\in q_{\Pi}} \bbE_{\mu}[(f(\bar{\bS},\bar{\bA})-\calT^{\pi}f(\bar{\bS},\bar{\bA}))^{2}]\leq \varepsilon_{\calF}^{\prime}$ and $\sup_{f\in\calF_{\tf}}\inf_{\tilde{f}\in\calF_{\tf}} \bbE_{q_{P^{*}}^{\pi_{0}}}[(\tilde{f}(\bar{\bS},\bar{\bA})-\calT^{\pi}f(\bar{\bS},\bar{\bA}))^{2}]\leq \varepsilon_{\calF,\calF}^{\prime}$, where $q_{\Pi}=\{\mu\ | \ \exists\,  \pi\in \Pi \text{ s.t. }\mu=q^{\pi}_{P^{*}}\}$ is the set of stationary distributions of the state and the action pair induced by any policy $\pi\in\Pi$.
\end{assumption}
Then the suboptimality gap of the learned policy can be upper bounded as follows.
\begin{restatable}{theorem}{mfreethmnoniid}\label{thm:main_mfree2}
    If Assumptions~\ref{assump:mfree3} and~\ref{assump:mfree4} hold, and we take $\varepsilon=[2+C+(2-C)\lambda]\varepsilon_{\calF}^{\prime}/2 +2\tilde{e}(\calF_{\tf},\Pi,\pi_{0},\delta,n)/[(1-\lambda)n]$, then with probability at least $1-\delta$, the suboptimality gap of the policy derived in the algorithm shown in Eqn.~\eqref{algo:mfree2} is upper bounded as 
    \begin{align}
        &\!V_{P^{*}}^{\pi^{*}}(\bar{\bS}_{0})\!-\!V_{P^{*}}^{\hpi}(\bar{\bS}_{0})\! \leq \! O\Bigg(\sqrt{\frac{C_{\calF_{\tf}}^{\prime}(\pi_{0})\tilde{\varepsilon}}{(1-\gamma)^{2}(1-\lambda)}}\!\nonumber\\
        &\qquad\quad \!+\frac{V_{\max}\sqrt{ C_{\calF_{\tf}}^{\prime}(\pi_{0})  }}{(1-\gamma)(1-\lambda)\sqrt{n} }\sqrt{mL^{2}d^{2}\log\frac{mdL\bar{B}n}{V_{\max}}\!+\!\log \frac{2N(q_{0}\pi_{0},q_{P^{*}}^{\pi_{0}})\calN(\Pi,1/n,d_{\infty})}{\delta} }\Bigg),\nonumber
    \end{align}
    where $d=d_{\calS}+d_{\calA}$, $\tilde{\varepsilon} = \varepsilon_{\calF}^{\prime} + \varepsilon_{\calF,\calF}^{\prime}$, $\bar{B}$ is defined in Proposition \ref{thm:main_concen2}, $0<C<e^{1/10}$ is an absolute constant, and $1-\lambda$ is the absolute spectral gap of the Markov chain $\{(\bar{\bS}_{t},\bar{\bA}_{t})\}_{t=0}^{\infty}$ induced by the policy $\pi_{0}$.
\end{restatable}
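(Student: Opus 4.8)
The plan is to run the same three-step argument used to prove Theorem~\ref{thm:main_mfree}, but with the i.i.d.\ sampling distribution $\nu$ replaced throughout by the stationary distribution $q_{P^{*}}^{\pi_{0}}$, with the concentration bound of Theorem~\ref{thm:main_concen} replaced by its Markov-chain analogue Proposition~\ref{thm:main_concen2}, and with $C_{\calF_{\tf}}$ replaced by $C_{\calF_{\tf}}^{\prime}(\pi_{0})$. Concretely, set $f_{\pi^{*}}^{*}=\arginf_{f\in\calF_{\tf}}\sup_{\mu\in q_{\Pi}}\bbE_{\mu}[(f(\bar{\bS},\bar{\bA})-\calT^{\pi^{*}}f(\bar{\bS},\bar{\bA}))^{2}]$. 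The first step is to show that $f_{\pi^{*}}^{*}\in\calF^{\prime}(\pi^{*},\varepsilon)$ with probability at least $1-\delta$: one decomposes the empirical Bellman error $\calE^{\prime}(f_{\pi^{*}}^{*},\pi^{*};\calD^{\prime})$ into its population counterpart $\bbE_{q_{P^{*}}^{\pi_{0}}}[(f_{\pi^{*}}^{*}-\calT^{\pi^{*}}f_{\pi^{*}}^{*})^{2}]$, which is at most $\varepsilon_{\calF}^{\prime}$ by Assumption~\ref{assump:mfree4}, plus a generalization error controlled by Proposition~\ref{thm:main_concen2}. This is the analogue of Lemma~\ref{lem:errcontrol1}: the population term contributes a coefficient $1$ and the absorbed fluctuation a coefficient $[C+(2-C)\lambda]/2$, so the bound is $\calE^{\prime}(f_{\pi^{*}}^{*},\pi^{*};\calD^{\prime})\le[2+C+(2-C)\lambda]\varepsilon_{\calF}^{\prime}/2+2\tilde{e}(\calF_{\tf},\Pi,\pi_{0},\delta,n)/[(1-\lambda)n]=\varepsilon$, which is exactly the threshold in the statement.

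The second step is the analogue of Lemma~\ref{lem:errcontrol2}: for any $\pi\in\Pi$ and any $f\in\calF^{\prime}(\pi,\varepsilon)$, one relates the population Bellman error $\bbE_{q_{P^{*}}^{\pi_{0}}}[(f-\calT^{\pi}f)^{2}]$ to $\calE^{\prime}(f,\pi;\calD^{\prime})\le\varepsilon$ by invoking Proposition~\ref{thm:main_concen2} twice, once with $\tilde{f}=f$ and once applied to the best in-class approximation of $\calT^{\pi}f$, whose error is at most $\varepsilon_{\calF,\calF}^{\prime}$ by Assumption~\ref{assump:mfree4}. The one genuinely new wrinkle is that the leading coefficient in Proposition~\ref{thm:main_concen2} is $[C+(2-C)\lambda]/2$ rather than the $1/2$ of the i.i.d.\ case; since $0<C<e^{1/10}<2$ we have $[C+(2-C)\lambda]/2<1$ for all $\lambda\in[0,1)$, so this term can still be absorbed into the left-hand side, but the resulting prefactor becomes $1/(1-[C+(2-C)\lambda]/2)=2/[(2-C)(1-\lambda)]$, which is precisely the source of the extra $(1-\lambda)^{-1}$ multiplying $C_{\calF_{\tf}}^{\prime}(\pi_{0})$ in the final bound (on top of the $(1-\lambda)^{-1}$ already inside $\tilde{e}$). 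The outcome is $\bbE_{q_{P^{*}}^{\pi_{0}}}[(f-\calT^{\pi}f)^{2}]=O\big((\varepsilon+\varepsilon_{\calF,\calF}^{\prime}+\tilde{e}(\calF_{\tf},\Pi,\pi_{0},\delta,n)/[(1-\lambda)n])/[(2-C)(1-\lambda)]\big)$.

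The third step reuses the pessimism argument from Step~3 of Theorem~\ref{thm:main_mfree}: since $f_{\pi^{*}}^{*}\in\calF^{\prime}(\pi^{*},\varepsilon)$, the suboptimality-gap decomposition of Lemma~\ref{lem:regretdecomp} bounds $V_{P^{*}}^{\pi^{*}}(\bar{\bS}_{0})-V_{P^{*}}^{\hpi}(\bar{\bS}_{0})$ by the sum of the $d_{P^{*}}^{\pi^{*}}$-Bellman errors of the maximizing and minimizing functions in the confidence region, plus an $O(\sqrt{\varepsilon_{\calF}^{\prime}}/(1-\gamma))$ slack; Jensen's inequality together with the definition of $C_{\calF_{\tf}}^{\prime}(\pi_{0})$ converts these $d_{P^{*}}^{\pi^{*}}$-expectations into $q_{P^{*}}^{\pi_{0}}$-expectations, and Step~2 then closes the bound, producing the displayed $\sqrt{C_{\calF_{\tf}}^{\prime}(\pi_{0})/[(1-\gamma)^{2}(1-\lambda)]}$ scaling. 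The main obstacle is not conceptual but careful bookkeeping: one must check that $q_{P^{*}}^{\pi_{0}}$ falls within the class of distributions covered by Assumption~\ref{assump:mfree4}, track how the change-of-measure factor $N(q_{0}\pi_{0},q_{P^{*}}^{\pi_{0}})$ and the absolute-spectral-gap factor $(1-\lambda)^{-1}$ flow through both applications of Proposition~\ref{thm:main_concen2} (the former into the logarithmic term, the latter into both the logarithmic and the leading term), and verify that the absorption step of the second paragraph remains valid uniformly in $\lambda$ given $C<e^{1/10}$. Once these constants are pinned down, the remaining algebra is identical to the i.i.d.\ case, with the factor $2$ there replaced by $2/[(2-C)(1-\lambda)]$ here.
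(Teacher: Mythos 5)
Your proposal follows the paper's own proof essentially verbatim: the same three-step structure (membership of $f_{\pi^{*}}^{*}$ in the confidence region via the non-i.i.d.\ concentration bound, population Bellman error control with the absorption prefactor $2/[(2-C)(1-\lambda)]$, and the pessimism/change-of-measure argument through $C_{\calF_{\tf}}^{\prime}(\pi_{0})$), and you correctly identify where the extra $(1-\lambda)^{-1}$ and the $N(q_{0}\pi_{0},q_{P^{*}}^{\pi_{0}})$ factor enter. The argument is correct and matches the paper's Lemmas~\ref{lem:errcontrol3} and~\ref{lem:errcontrol4} and Step~3 of Appendix~\ref{app:extnoniid}.
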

We note that Theorem~\ref{thm:main_mfree2} is a generalization of Theorem~\ref{thm:main_mfree}. Sampling in an i.i.d.\  manner according to $\mu$ can be regarded as a Markov chain with $\lambda=0$, and $N(\mu,\mu)=1$. In this case, our result in Theorem~\ref{thm:main_mfree2} particularizes to the result in  Theorem~\ref{thm:main_mfree}.

\begin{proof}[Proof of Theorem~\ref{thm:main_mfree2}]
    The proof follows along similar lines as that of Theorem~\ref{thm:main_mfree}. Recall the definition below Proposition~\ref{thm:main_concen2}, i.e., 
    \begin{align}
        \tilde{e}(\calF_{\tf},\Pi,\pi_{0},\delta,n)&=C^{\prime}V_{\max}^{2}\biggl[mL^{2}d^{2}\log\frac{mdL\bar{B} n}{V_{\max}} +\log\frac{N(q_{0}\pi_{0},q_{P^{*}}^{\pi_{0}})\calN(\Pi,1/n,d_{\infty})}{\delta}\biggr],\nonumber
    \end{align}
    where $C^{\prime}>0$ is an absolute constant.
    To simplify the proof, we define 
    \begin{align}
        f_{\pi^{*}}^{*}&=\arginf_{f\in\calF_{\tf}}\sup_{\mu\in q_{\Pi}} \bbE_{\mu}\Bigl[(f(\bar{\bS},\bar{\bA})-\calT^{\pi^{*}}f\bigl(\bar{\bS},\bar{\bA})\bigr)^{2}\Bigr],\nonumber\\
        \varepsilon&=\frac{2+C+(2-C)\lambda}{2} \varepsilon_{\calF}^{\prime}+\frac{2\tilde{e}(\calF_{\tf},\Pi,\pi_{0},\delta,n)}{(1-\lambda)n},\nonumber
    \end{align}
    where $0<C<e^{1/10}$ is an absolute constant.
    
    Our proof can be decomposed into three main parts.
    \begin{itemize}
        \item Since $f_{\pi^{*}}^{*}$ is the best approximation of action-value function of the optimal policy $\pi^{*}$, we expect that it should belong to the confidence region of the action-value functions $\calF^{\prime}(\pi^{*},\varepsilon)$ with high probability. We show this in Step 1. 
        \item For any $\pi\in\Pi$ and any $f\in\calF^{\prime}(\pi,\varepsilon)$, since the empirical Bellman error is bounded $\calE^{\prime}(f,\pi;\calD^{\prime})\leq \varepsilon$, we expect that the population Bellman error $\bbE_{q_{P^{*}}^{\pi_{0}}}[(f(\bar{\bS},\bar{\bA})-\calT^{\pi}f(\bar{\bS},\bar{\bA}))^{2}]$ can be controlled with high probability, which implies that $f$ is a reliable estimate of the action-value function of $\pi$. We show this in Step 2.
        \item The suboptimality gap of the learned policy according to the reliable action-value function estimate can be bounded using the estimation error bound. We do this in Step 3. 
    \end{itemize}
    
    We lay out the proof by the three steps as stated in the above proof sketch.

    \textbf{Step 1: Show that $f_{\pi^{*}}^{*}\in\calF^{\prime}(\pi^{*},\varepsilon)$ with high probability.}
    
    From the definition of $f_{\pi^{*}}^{*}$ and Assumption~\ref{assump:mfree4}, we note that the population Bellman error of $f_{\pi^{*}}^{*}$ with respect to $\pi^{*}$ is bounded by $\varepsilon_{\calF}^{\prime}$. To bound the empirical Bellman error $\calE^{\prime}(f_{\pi^{*}}^{*},\pi^{*};\calD^{\prime})$ of $f_{\pi^{*}}^{*}$, we utilize the generalization error bound of the action-value function with the transformer function class.
    \mainconcennoniid*
    \begin{proof}
        See Appendix~\ref{app:proofmain_concen2} for a detailed proof.
    \end{proof}
    We can decompose the empirical Bellman error $\calE^{\prime}(f_{\pi^{*}}^{*},\pi^{*};\calD^{\prime})$ as the sum of the population Bellman error and the generalization error, where the population Bellman error can be controlled with $\varepsilon_{\calF}^{\prime}$ according to Assumption~\ref{assump:mfree4}, and the generalization error can be controlled with Proposition~\ref{thm:main_concen2}. Thus, we have the following lemma.
    \begin{lemma}\label{lem:errcontrol3}
        For any $\pi\in\Pi$, let $f_{\pi}^{*}=\arginf_{f\in\calF_{\tf}}\sup_{\mu\in q_{\Pi}} \bbE_{\mu}[(f(\bar{\bS},\bar{\bA})-\calT^{\pi}f(\bar{\bS},\bar{\bA}))^{2}]$. If Assumption~\ref{assump:mfree4} holds, the following inequality holds with probability at least $1-\delta$,
        \begin{align}
            \calE^{\prime}(f_{\pi}^{*},\pi;\calD^{\prime})\leq \frac{2+C+(2-C)\lambda}{2} \varepsilon_{\calF}^{\prime}+\frac{2\tilde{e}(\calF_{\tf},\Pi,\pi_{0},\delta,n)}{(1-\lambda)n}.\nonumber
        \end{align}
    \end{lemma}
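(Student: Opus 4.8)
The plan is to follow the proof of Lemma~\ref{lem:errcontrol1} in structure, substituting the i.i.d.\ concentration inequality of Theorem~\ref{thm:main_concen} by its Markov-chain counterpart, Proposition~\ref{thm:main_concen2}. Write $g^{*}=\arginf_{f\in\calF_{\tf}}\calL^{\prime}(f,f_{\pi}^{*},\pi;\calD^{\prime})$ for the empirical minimizer of the one-step squared residual at $f_{\pi}^{*}$. Since $f_{\pi}^{*}\in\calF_{\tf}$, the empirical Bellman error splits telescopically:
\begin{align}
    \calE^{\prime}(f_{\pi}^{*},\pi;\calD^{\prime})
    &=\bigl[\calL^{\prime}(f_{\pi}^{*},f_{\pi}^{*},\pi;\calD^{\prime})-\calL^{\prime}(\calT^{\pi}f_{\pi}^{*},f_{\pi}^{*},\pi;\calD^{\prime})\bigr]\nonumber\\
    &\quad+\bigl[\calL^{\prime}(\calT^{\pi}f_{\pi}^{*},f_{\pi}^{*},\pi;\calD^{\prime})-\calL^{\prime}(g^{*},f_{\pi}^{*},\pi;\calD^{\prime})\bigr].\nonumber
\end{align}
I would bound both brackets on the single event of probability at least $1-\delta$ on which Proposition~\ref{thm:main_concen2} holds simultaneously for every $f,\tilde f\in\calF_{\tf}$ and every $\pi\in\Pi$.

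For the first bracket, apply Proposition~\ref{thm:main_concen2} with $f=\tilde f=f_{\pi}^{*}$: rearranging its inequality, the empirical quantity is at most $\frac{2+C+(2-C)\lambda}{2}\,\bbE_{q_{P^{*}}^{\pi_{0}}}\bigl[(f_{\pi}^{*}-\calT^{\pi}f_{\pi}^{*})^{2}\bigr]+\tilde e(\calF_{\tf},\Pi,\pi_{0},\delta,n)/[(1-\lambda)n]$. The key reduction is that the stationary distribution $q_{P^{*}}^{\pi_{0}}$ lies in $q_{\Pi}$, so the realizability half of Assumption~\ref{assump:mfree4}, together with the definition of $f_{\pi}^{*}$ as the minimizer of $\sup_{\mu\in q_{\Pi}}\bbE_{\mu}[(f-\calT^{\pi}f)^{2}]$, gives $\bbE_{q_{P^{*}}^{\pi_{0}}}[(f_{\pi}^{*}-\calT^{\pi}f_{\pi}^{*})^{2}]\le\varepsilon_{\calF}^{\prime}$; hence the first bracket is at most $\frac{2+C+(2-C)\lambda}{2}\varepsilon_{\calF}^{\prime}+\tilde e/[(1-\lambda)n]$. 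For the second bracket, apply Proposition~\ref{thm:main_concen2} with $f=g^{*}$ and $\tilde f=f_{\pi}^{*}$; rearranging shows it is at most $-\frac{(2-C)(1-\lambda)}{2}\,\bbE_{q_{P^{*}}^{\pi_{0}}}[(g^{*}-\calT^{\pi}f_{\pi}^{*})^{2}]+\tilde e/[(1-\lambda)n]$, and since $C<e^{1/10}<2$ and $\lambda\in[0,1)$ this leading coefficient is nonnegative, so the (nonnegative) expectation drops out and the second bracket is at most $\tilde e/[(1-\lambda)n]$. Adding the two bounds reproduces exactly $\frac{2+C+(2-C)\lambda}{2}\varepsilon_{\calF}^{\prime}+2\tilde e(\calF_{\tf},\Pi,\pi_{0},\delta,n)/[(1-\lambda)n]$.

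The computation is routine once Proposition~\ref{thm:main_concen2} is granted; all the genuinely new material — the spectral-gap factor $1-\lambda$ and the change-of-measure constant $N(q_{0}\pi_{0},q_{P^{*}}^{\pi_{0}})$ that pays for starting off the stationary distribution — is confined to that proposition and never resurfaces here. The only step that warrants care is the realizability reduction: one must measure the population Bellman residual of $f_{\pi}^{*}$ under the \emph{stationary} distribution $q_{P^{*}}^{\pi_{0}}$, which is legitimate precisely because $q_{P^{*}}^{\pi_{0}}\in q_{\Pi}$ (the Markov-chain analog of the condition $\nu\in d_{\Pi}$ used implicitly in the i.i.d.\ proof of Lemma~\ref{lem:errcontrol1}).
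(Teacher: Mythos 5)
Your proof is correct and is exactly the argument the paper intends: the paper's own proof of this lemma is a one-line remark that it is identical to the proof of Lemma~\ref{lem:errcontrol1} with Theorem~\ref{thm:main_concen} replaced by Proposition~\ref{thm:main_concen2}, and you have carried out precisely that substitution, including the correct identification of $1-\tfrac{C+(2-C)\lambda}{2}=\tfrac{(2-C)(1-\lambda)}{2}\ge 0$ so the second bracket's population term can be dropped. Your aside that the realizability step requires $q_{P^{*}}^{\pi_{0}}\in q_{\Pi}$ is a fair reading of what Assumption~\ref{assump:mfree4} must supply.
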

    \begin{proof}
        The proof is same as the proof of Lemma~\ref{lem:errcontrol1} except using the concentration inequality in Proposition~\ref{thm:main_concen2}.
    \end{proof}

    \textbf{Step 2: For any policy $\pi\in\Pi$ and $f\in\calF^{\prime}(\pi,\varepsilon)$, show $\bbE_{q_{P^{*}}^{\pi_{0}}}[(f(\bar{\bS},\bar{\bA})-\calT^{\pi}f(\bar{\bS},\bar{\bA}))^{2}]$ is small with high probability.}
    
    To prove the desired result, we relate the population Bellman error $\bbE_{q_{P^{*}}^{\pi_{0}}}[(f(\bar{\bS},\bar{\bA})-\calT^{\pi}f(\bar{\bS},\bar{\bA}))^{2}]$ with $\calE^{\prime}(f,\pi;\calD^{\prime})$ using Proposition~\ref{thm:main_concen2}, where we bound the population Bellman error as the difference between the empirical Bellman error and the generalization error. Thus, we have the following lemma.
    \begin{lemma}\label{lem:errcontrol4}
        For any $\pi\in\Pi$ and $f\in\calF_{\tf}$, if $\calE^{\prime}(f,\pi;\calD^{\prime})\leq \varepsilon$ for some $\varepsilon>0$, and Assumption~\ref{assump:mfree4} holds, the following inequality holds with probability at least $1-\delta$,
        \begin{align}
            &\bbE_{q_{P^{*}}^{\pi_{0}}}\Big[\big(f(\bar{\bS},\bar{\bA})-\calT^{\pi}f(\bar{\bS},\bar{\bA})\big)^{2}\Big]\nonumber\\
            &\quad\leq \frac{2}{(2-C)(1-\lambda)}\varepsilon+\frac{2+C+(2-C)\lambda}{(2-C)(1-\lambda)}\varepsilon_{\calF,\calF}^{\prime}+\frac{4\tilde{e}(\calF_{\tf},\Pi,\pi_{0},\delta,n)}{(2-C)(1-\lambda)^{2}n}.\nonumber
        \end{align}
    \end{lemma}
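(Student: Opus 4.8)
The plan is to follow the proof of Lemma~\ref{lem:errcontrol2} almost verbatim, swapping the i.i.d.\ concentration inequality of Theorem~\ref{thm:main_concen} for its mixing-time counterpart Proposition~\ref{thm:main_concen2} and then propagating the new fractional coefficient through the algebra. First I would set $h_{\pi}^{*}=\arginf_{g\in\calF_{\tf}}\bbE_{q_{P^{*}}^{\pi_{0}}}[(g(\bar{\bS},\bar{\bA})-\calT^{\pi}f(\bar{\bS},\bar{\bA}))^{2}]$, the best in-class approximation of $\calT^{\pi}f$ under the \emph{stationary} distribution; Assumption~\ref{assump:mfree4} then gives $\bbE_{q_{P^{*}}^{\pi_{0}}}[(h_{\pi}^{*}-\calT^{\pi}f)^{2}]\le\varepsilon_{\calF,\calF}^{\prime}$. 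Since $h_{\pi}^{*}\in\calF_{\tf}$, the infimum defining the empirical Bellman error $\calE^{\prime}(f,\pi;\calD^{\prime})$ is dominated by $\calL^{\prime}(h_{\pi}^{*},f,\pi;\calD^{\prime})$, yielding the decomposition
\begin{align}
\calE^{\prime}(f,\pi;\calD^{\prime})&\geq \calL^{\prime}(f,f,\pi;\calD^{\prime})-\calL^{\prime}(h_{\pi}^{*},f,\pi;\calD^{\prime})\nonumber\\
&=\bigl[\calL^{\prime}(f,f,\pi;\calD^{\prime})-\calL^{\prime}(\calT^{\pi}f,f,\pi;\calD^{\prime})\bigr]+\bigl[\calL^{\prime}(\calT^{\pi}f,f,\pi;\calD^{\prime})-\calL^{\prime}(h_{\pi}^{*},f,\pi;\calD^{\prime})\bigr].\nonumber
\end{align}

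Next I would bound the two bracketed terms on the single good event of Proposition~\ref{thm:main_concen2} (the statement there is uniform over $f,\tilde f,\pi$, so no union bound is needed). Write $\alpha=\tfrac12\bigl(C+(2-C)\lambda\bigr)$ for the multiplicative coefficient appearing there, and note $\alpha<1$ since $C<e^{1/10}<2$ and $\lambda<1$, and moreover $1-\alpha=\tfrac12(2-C)(1-\lambda)$. Applying Proposition~\ref{thm:main_concen2} with $\tilde f=f$ lower-bounds the first bracket by $(1-\alpha)\,\bbE_{q_{P^{*}}^{\pi_{0}}}[(f-\calT^{\pi}f)^{2}]-\tilde e/((1-\lambda)n)$, where $\tilde e=\tilde e(\calF_{\tf},\Pi,\pi_0,\delta,n)$; applying it instead at the pair $(h_{\pi}^{*},f)$ lower-bounds the second bracket by $-(1+\alpha)\,\bbE_{q_{P^{*}}^{\pi_{0}}}[(h_{\pi}^{*}-\calT^{\pi}f)^{2}]-\tilde e/((1-\lambda)n)\ge -(1+\alpha)\varepsilon_{\calF,\calF}^{\prime}-\tilde e/((1-\lambda)n)$, using the bound on $h_\pi^*$ from the previous paragraph. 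The only care needed is invoking the inequality at the correct pair of functions inside each bracket — a matter of matching the algebraic identity $(f-\calT^\pi\tilde f)^2$ — which is routine.

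Finally, combining the decomposition with the hypothesis $\calE^{\prime}(f,\pi;\calD^{\prime})\le\varepsilon$ and rearranging gives $(1-\alpha)\,\bbE_{q_{P^{*}}^{\pi_{0}}}[(f-\calT^{\pi}f)^{2}]\le\varepsilon+(1+\alpha)\varepsilon_{\calF,\calF}^{\prime}+2\tilde e/((1-\lambda)n)$; dividing by $1-\alpha$ and substituting $1-\alpha=\tfrac12(2-C)(1-\lambda)$ and $1+\alpha=\tfrac12\bigl(2+C+(2-C)\lambda\bigr)$ turns the three terms into exactly $\tfrac{2\varepsilon}{(2-C)(1-\lambda)}$, $\tfrac{(2+C+(2-C)\lambda)\varepsilon_{\calF,\calF}^{\prime}}{(2-C)(1-\lambda)}$, and $\tfrac{4\tilde e}{(2-C)(1-\lambda)^{2}n}$, which is the claimed bound. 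There is no genuine obstacle in this lemma: all the difficulty of the non-i.i.d.\ extension is hidden inside Proposition~\ref{thm:main_concen2} (the spectral-gap/change-of-measure concentration argument), which is used here purely as a black box, and the remaining work is the constant bookkeeping just described.
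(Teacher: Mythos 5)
Your proposal is correct and follows exactly the route the paper takes: the paper's proof of this lemma is literally "same as Lemma~\ref{lem:errcontrol2} except using Proposition~\ref{thm:main_concen2}," which is precisely your decomposition via $h_{\pi}^{*}$, the two applications of the uniform concentration bound, and the constant bookkeeping with $1-\alpha=\tfrac12(2-C)(1-\lambda)$. The algebra checks out and reproduces the stated coefficients.
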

    \begin{proof}
        The proof is same as the proof of Lemma~\ref{lem:errcontrol2} except using the cencentration inequality in Proposition~\ref{thm:main_concen2}.
    \end{proof}

    \textbf{Step 3: Bound the suboptimality gap of the learned policy with the population Bellman error bound in Step 2.}
    
    We define 
    \begin{align}
        \hat{f}_{\pi^{*}}&=\argmax_{f\in\calF^{\prime}(\pi^{*},\varepsilon)}f(\bar{\bS}_{0},\pi^{*}),\nonumber\\
        \breve{f}_{\pi^{*}}&=\argmin_{f\in\calF^{\prime}(\pi^{*},\varepsilon)}f(\bar{\bS}_{0},\pi^{*}),\nonumber
    \end{align}
    Following the same procedures in step 3 of the proof of Theorem~\ref{thm:main_mfree}, we can show that
    \begin{align}
        V_{P^{*}}^{\pi^{*}}(\bar{\bS}_{0})-V_{P^{*}}^{\hpi}(\bar{\bS}_{0})\leq\hat{f}_{\pi^{*}}(\bar{\bS}_{0},\pi^{*})-V_{P^{*}}^{\pi^{*}}(\bar{\bS}_{0})+V_{P^{*}}^{\pi^{*}}(\bar{\bS}_{0})-\breve{f}_{\pi^{*}}(\bar{\bS}_{0},\pi^{*})+\frac{2\sqrt{\varepsilon_{\calF}^{\prime}}}{1-\gamma}.\label{ieq:141}
    \end{align}
    Applying the suboptimality gap decomposition in Lemma~\ref{lem:regretdecomp} to inequality \eqref{ieq:141}, we have 
    \begin{align}
        &V_{P^{*}}^{\pi^{*}}(\bar{\bS}_{0})-V_{P^{*}}^{\hpi}(\bar{\bS}_{0})\nonumber\\
        &\quad\leq\frac{1}{1-\gamma}\Big\{\bbE_{d^{\pi^{*}}_{P^{*}}}\big[\hat{f}_{\pi^{*}}(\bar{\bS},\bar{\bA})-\calT^{\pi^{*}}\hat{f}_{\pi^{*}}(\bar{\bS},\bar{\bA})\big]\nonumber\\
        &\quad\qquad-\bbE_{d^{\pi^{*}}_{P^{*}}}\big[\breve{f}_{\pi^{*}}(\bar{\bS},\bar{\bA})-\calT^{\pi^{*}}\breve{f}_{\pi^{*}}(\bar{\bS},\bar{\bA})\big]\Big\}+\frac{2\sqrt{\varepsilon_{\calF}^{\prime}}}{1-\gamma}\nonumber\\
        &\quad\leq \frac{1}{1-\gamma}\bigg\{\sqrt{C_{\calF_{\tf}}^{\prime}(\pi_{0})\bbE_{q_{P^{*}}^{\pi_{0}}}\Big[\big(\hat{f}_{\pi^{*}}(\bar{\bS},\bar{\bA})-\calT^{\pi^{*}}\hat{f}_{\pi^{*}}(\bar{\bS},\bar{\bA})\big)^{2}\Big]}\nonumber\\
        &\quad\qquad+\sqrt{C_{\calF_{\tf}}^{\prime}(\pi_{0})\bbE_{q_{P^{*}}^{\pi_{0}}}\Big[\big(\breve{f}_{\pi^{*}}(\bar{\bS},\bar{\bA})-\calT^{\pi^{*}}\breve{f}_{\pi^{*}}(\bar{\bS},\bar{\bA})\big)^{2}\Big]}\bigg\}+\frac{2\sqrt{\varepsilon_{\calF}^{\prime}}}{1-\gamma}\nonumber,
    \end{align}
    where the first inequality follows from Lemma~\ref{lem:regretdecomp}, and the second inequality follows from Jensen's inequality and the definition of $C_{\calF_{\tf}}^{\prime}(\pi_{0})$. Combined with the result in Step 2, we have 
    \begin{align}
        &V_{P^{*}}^{\pi^{*}}(\bar{\bS}_{0})-V_{P^{*}}^{\hpi}(\bar{\bS}_{0})\nonumber\\
        &\quad\leq \frac{\sqrt{C_{\calF_{\tf}}^{\prime}(\pi_{0})}}{1-\gamma}\sqrt{\frac{2}{(2-C)(1-\lambda)}\varepsilon+\frac{2+C+(2-C)\lambda}{(2-C)(1-\lambda)}\varepsilon_{\calF,\calF}^{\prime}+\frac{4\tilde{e}(\calF_{\tf},\Pi,\pi_{0},\delta,n)}{(2-C)(1-\lambda)^{2}n}}+\frac{2\sqrt{\varepsilon_{\calF}}}{1-\gamma}\nonumber\\ 
        &\quad\leq O\biggl(\sqrt{\frac{C_{\calF_{\tf}}^{\prime}(\pi_{0})(\varepsilon_{\calF}^{\prime}+\varepsilon_{\calF,\calF}^{\prime})}{(1-\gamma)^{2}(1-\lambda)}}+\frac{\sqrt{C_{\calF_{\tf}}^{\prime}(\pi_{0})}}{(1-\gamma)(1-\lambda)}\sqrt{\frac{\tilde{e}(\calF_{\tf},\Pi,\pi_{0},\delta,n)}{n}}\biggr).\nonumber
    \end{align}
    Therefore, we conclude the proof of Theorem \ref{thm:main_mfree2}.
\end{proof}

\subsection{Proofs of Supporting Propositions in Section~\ref{app:extnoniid}}\label{app:extproof}
\subsubsection{Proof of Proposition~\ref{thm:main_concen2}}\label{app:proofmain_concen2}
\begin{proof}[Proof of Proposition~\ref{thm:main_concen2}]
    Similar to the proof of Theorem~\ref{thm:main_concen}, we adopt a PAC-Bayesian framework to derive our desired generalization error bound. We first state a preliminary result.
    \begin{proposition}\label{prop:pacbayes2}
        Let $\{X_{i}\}_{i\geq 1}$ be a Markov chain with state space $\Omega$, stationary distribution $q$, initial distribution $X_{1}\sim q_{0}$, and absolute spectral gap $1-\lambda$. Set $\calF$ be the collection of functions of 
        $f:\Omega\rightarrow\bbR$. For any $f\in\calF$, we define
        \begin{align*}
            q(f)=\bbE_{q}\big[f(X)\big],\quad \sigma^{2}(f)={\rm Var}_{q}\big(f(X)\big),
        \end{align*}
        where the expectation is taken with respect to the stationary distribution $q$. Let $Q$ be the distribution of the random function $f$. Assume that $|f(X))-q(f)|\leq c$ almost surely with respect to $Q$ for some constant $c>0$. Then we have that with probability at least $1-\delta$, the following inequality holds.
        \begin{align}
            \bigg|\bbE_{Q}\bigg[q(f)-\frac{1}{n}\sum_{i=1}^{n}f(X_{i})\bigg]\bigg|\leq \frac{C+(2-C)\lambda}{10c}\bbE_{Q}[\sigma^{2}(f)]+\frac{10c}{(1-\lambda)n}\bigg[\kl(Q\| P_{0})+\log \frac{2N(q_{0},q)}{\delta^{2}}\bigg],\label{ieq:137}
        \end{align}
        where $C$ is an absolute constant such that $0<C<e^{1/10}$.
    \end{proposition}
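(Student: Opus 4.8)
\textbf{Proof proposal for Proposition~\ref{prop:pacbayes2}.}
The plan is to mimic the i.i.d.\ PAC-Bayesian argument of Proposition~\ref{prop:pacbayes}, but replace the scalar moment-generating-function bound $\bbE_X[\exp(\lambda(f(X)-\mu(f)))]\le\exp(\lambda^2\sigma^2(f))$ by a Markov-chain analogue. Concretely, the first step is to fix $f\in\calF$ with $|f(X)-q(f)|\le c$ and control the exponential moment $\bbE_{q_0}\bigl[\exp\bigl(\lambda\sum_{i=1}^n (f(X_i)-q(f))\bigr)\bigr]$ along the non-stationary chain. I would invoke a standard concentration inequality for Markov chains with a spectral gap (e.g.\ the Bernstein-type bound of Lezaud or Paulin, or Fan--Jiang--Sun), which gives, for $|\lambda|$ small enough (of order $(1-\lambda)/c$), a bound of the form $\bbE_{q}\bigl[\exp\bigl(\lambda\sum_{i=1}^n(f(X_i)-q(f))\bigr)\bigr]\le \exp\bigl(\tfrac{c_0 n\lambda^2\sigma^2(f)}{1-\lambda}\bigr)$ when the chain starts from stationarity, for an absolute constant $c_0$; the spectral-gap factor $(1-\lambda)$ in the denominator is exactly where the asymptotic variance enlargement shows up. Passing from the stationary start to the initial distribution $q_0$ costs a multiplicative factor $N(q_0,q)$ by the change-of-measure inequality $\bbE_{q_0}[g]\le N(q_0,q)^{1/2}\,(\bbE_q[g^2])^{1/2}$ (Cauchy--Schwarz), at the price of squaring inside the exponent — this is why the bound carries $N(q_0,q)/\delta^2$ rather than $N(q_0,q)/\delta$, and why the constant $C$ in front of $\sigma^2$ cannot be taken arbitrarily small but only below $e^{1/10}$.

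The second step is the PAC-Bayesian lift: defining $\varepsilon_n(f,\lambda)=\lambda\bigl[q(f)-\tfrac1n\sum_i f(X_i)\bigr]-\tfrac{c_0\lambda^2\sigma^2(f)}{1-\lambda}$, the exponential-moment bound plus the change of measure yields $\bbE_{q_0}[\exp(n\varepsilon_n(f,\lambda))]\le N(q_0,q)$ for the prescribed range of $\lambda$. Then Markov's inequality gives that with probability at least $1-\delta/2$, $\bbE_{P_0}[\exp(n\varepsilon_n(f,\lambda))]\le 2N(q_0,q)/\delta$, and the Donsker--Varadhan variational formula (Theorem~\ref{thm:DV}) with $g(f)=n\varepsilon_n(f,\lambda)$ converts this into $\bbE_Q[n\varepsilon_n(f,\lambda)]\le \kl(Q\|P_0)+\log\bigl(2N(q_0,q)/\delta\bigr)$ for every $Q$. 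Rearranging and applying the analogous bound to $-\varepsilon_n$ (with a union bound over the two events, which is where the $\delta^2$ surfaces after combining the two $2N(q_0,q)/\delta$ terms), then optimizing the free parameter $\lambda$ at its maximal admissible value $\lambda\asymp(1-\lambda)/(10c)$, produces the two terms in \eqref{ieq:137}: the variance term with coefficient $\bigl(C+(2-C)\lambda\bigr)/(10c)$ and the complexity term with coefficient $10c/\bigl((1-\lambda)n\bigr)$.

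From Proposition~\ref{prop:pacbayes2}, the proof of Proposition~\ref{thm:main_concen2} itself then follows the four-step template of the proof of Theorem~\ref{thm:main_concen} almost verbatim. In Step 1 one checks that the loss $l(f,\tilde f,\pi;\bX)=(f(\bar\bS,\bar\bA)-\barr-\gamma\tilde f(\bar\bS',\pi))^2-(\calT^\pi\tilde f(\bar\bS,\bar\bA)-\barr-\gamma\tilde f(\bar\bS',\pi))^2$, evaluated on the chain $\{(\bar\bS_t,\bar\bA_t,\bar\bS_{t+1})\}$, is bounded by $4V_{\max}^2$ and has stationary mean $\bbE_{q_{P^*}^{\pi_0}}[(f-\calT^\pi\tilde f)^2]$ with variance at most $16V_{\max}^2$ times that mean (the computation is identical to inequality \eqref{ieq:90}, using only that $f,\tilde f\in[-V_{\max},V_{\max}]$ and the tower property), so Proposition~\ref{prop:pacbayes2} applies with $c=4V_{\max}^2$. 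Steps 2--4 (controlling the fluctuation of both sides of \eqref{ieq:137} under $Q$ via Proposition~\ref{prop:nnerror}; specifying $P_0,Q$ as uniform distributions on the parameter balls and their neighborhoods and bounding $\kl(Q\|P_0)$ by $2(m+1)L^2d^2\log(\cdots)$ as in \eqref{ieq:40}; and covering $\Pi$ in $d_\infty$ via \eqref{ieq:89}) carry over unchanged, since none of them used independence — only the pointwise loss bounds. The $N(q_0\pi_0,q_{P^*}^{\pi_0})$ factor and the $(1-\lambda)$ in the denominator simply ride along from \eqref{ieq:137} into the final bound. The main obstacle is Step~1 of the present proof, i.e.\ locating and correctly applying a Markov-chain exponential concentration inequality with the right dependence on the spectral gap and the initial distribution, and tracking the constants carefully enough to land on the stated coefficient $\bigl(C+(2-C)\lambda\bigr)/2$ (after the $c=4V_{\max}^2$ substitution and the choice $\lambda\asymp(1-\lambda)/(10c)$) with an absolute constant $0<C<e^{1/10}$; everything downstream is bookkeeping.
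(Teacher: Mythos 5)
Your plan is essentially the paper's proof: a Bernstein-type moment-generating-function bound for stationary Markov chains with spectral gap (the paper uses \citet[Theorem~1]{jiang2018bernstein}, stated as Proposition~\ref{prop:stat_concen}), followed by Markov's inequality and the Donsker--Varadhan formula (Theorem~\ref{thm:DV}) to get the PAC-Bayesian bound under a stationary start, the choice $t/n=(1-\lambda)/(10c)$ together with $e^{x}-1-x\leq ax^{2}$ producing exactly the coefficient $(C+(2-C)\lambda)/(10c)$, and finally a change of measure to handle the initial distribution $q_{0}$.

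The one place where you diverge is the order of the change of measure, and it matters for the bookkeeping. You propose applying Cauchy--Schwarz at the MGF level to conclude $\bbE_{q_{0}}[\exp(n\varepsilon_{n}(f,\lambda))]\leq N(q_{0},q)$. As written this does not follow: Cauchy--Schwarz gives $\bbE_{q_{0}}[\exp(n\varepsilon_{n})]\leq N(q_{0},q)^{1/2}\,(\bbE_{q}[\exp(2n\varepsilon_{n})])^{1/2}$, and since the compensator in $\varepsilon_{n}$ is superlinear in $\lambda$, controlling the doubled exponent requires redefining $\varepsilon_{n}$ with the compensator evaluated at $2\lambda$, which halves the admissible range of $\lambda$ and shifts all the constants. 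The paper avoids this entirely by running the whole stationary argument first (obtaining inequality \eqref{ieq:136} with failure probability $\delta'$ under $P_{q}$) and only then invoking \citet[Proposition~3.15]{paulin2015concentration}, $P_{q_{0}}(\text{bad})\leq N(q_{0},q)^{1/2}P_{q}(\text{bad})^{1/2}$, so that setting $\delta'=\delta^{2}/N(q_{0},q)$ yields the stated bound. This is also the correct account of the $\delta^{2}$: it comes from the square root in the tail-probability change of measure, as you say in your first paragraph, and \emph{not} from the union bound over the two one-sided events, as you claim in your second paragraph --- two events of probability $\delta/2$ each simply give total failure probability $\delta$ with $\log(2/\delta)$ inside, never a $\delta^{2}$. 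So the route is right, but to make Step~1 airtight you should either adopt the paper's ordering (change of measure on the final event) or explicitly rework the compensator for the doubled exponent. Your description of how Proposition~\ref{prop:pacbayes2} then feeds into Proposition~\ref{thm:main_concen2} (verifying $c=4V_{\max}^{2}$ and the variance-to-mean bound for the Markov chain $\{(\bar{\bS}_{t},\bar{\bA}_{t},\bar{\bS}_{t+1},\bar{\bA}_{t+1})\}$, then reusing Steps 2--4 of Theorem~\ref{thm:main_concen}) matches the paper.
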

    \begin{proof}
        See Appendix~\ref{app:proofpacbayes2}.
    \end{proof}
    Our proof can be decomposed into two main parts.
    \begin{itemize}
        \item We verify that the Bellman error satisfies the conditions in Proposition~\ref{prop:pacbayes2} and apply it to the Bellman error.
        \item We adopt the similar procedure in the proof of Theorem~\ref{thm:main_concen} to control the fluctuation of both sides in inequality~\eqref{ieq:137} and calculate $\kl(Q\| P_{0})$.
    \end{itemize}
    \textbf{Step 1: Verify the conditions in Proposition~\ref{thm:main_concen2}}
    
    We consider the Markov chain formed by $\{(\bar{\bS}_{t},\bar{\bA}_{t},\bar{\bS}_{t+1},\bar{\bA}_{t+1})\}_{t=0}^{\infty}$. Note that this Markov chain shares the same absolute spectral gap with the Markov chain $\{(\bar{\bS}_{t},\bar{\bA}_{t})\}_{t=0}^{\infty}$ when $\calS$ and $\calA$ are finite.
    
    Let $\bX_{t}=(\bar{\bS}_{t},\bar{\bA}_{t},\bar{\bS}_{t+1},\bar{\bA}_{t+1})$ for all $f,\tilde{f}\in\calF_{\tf}(B_{a},B_{b},B_{QK},B_{V},B_{w})$. We define
    \begin{align}
        l^{\prime}(f,\tilde{f},\pi;\bX_{t}) &=\big(f(\bar{\bS}_{t},\bar{\bA}_{t})-\barr(\bar{\bS}_{t},\bar{\bA}_{t})-\gamma \tilde{f}(\bar{\bS}_{t+1},\pi)\big)^{2} \nonumber\\*
        &\qquad\qquad-\big(\calT^{\pi}\tilde{f}(\bar{\bS}_{t},\bar{\bA}_{t})-\barr(\bar{\bS}_{t},\bar{\bA}_{t})-\gamma \tilde{f}(\bar{\bS}_{t+1},\pi)\big)^{2}.\nonumber
    \end{align}
    Then the term we consider in Theorem~\ref{thm:main_concen} can be expressed as
    \begin{align}
        \calL^{\prime}(f,\tilde{f},\pi;\calD^{\prime})-\calL^{\prime}(\calT^{\pi}\tilde{f},\tilde{f},\pi;\calD^{\prime})=\frac{1}{n}\sum_{i=1}^{n}l^{\prime}(f,\tilde{f},\pi;\bX_{i})\text{ and }\big|l^{\prime}(f,\tilde{f},\pi;\bX)\big|\leq 4V_{\max}^{2}.\nonumber
    \end{align}
    Then the expectation of $l(f,\tilde{f},\pi;\bX)$ with respect to the stationary distribution $(\bar{\bS}_{t},\bar{\bA}_{t},\bar{\bS}_{t+1},\bar{\bA}_{t+1})\sim q_{P^{*}}^{\pi_{0}}\times P^{*}\times \pi_{0}$ is
    \begin{align}
        &\bbE_{q_{P^{*}}^{\pi_{0}}\times P^{*}\times \pi_{0}}\big[l^{\prime}(f,\tilde{f},\pi;\bX_{t})\big]\nonumber\\
        &\quad=\bbE_{q_{P^{*}}^{\pi_{0}}\times \bar{P^{*}}}\Big[\big(f(\bar{\bS}_{t},\bar{\bA}_{t})-\calT^{\pi}\tilde{f}(\bar{\bS}_{t},\bar{\bA}_{t})\big)\big(f(\bar{\bS}_{t},\bar{\bA}_{t})+\calT^{\pi}\tilde{f}(\bar{\bS}_{t},\bar{\bA}_{t})-2\barr-2\gamma \tilde{f}(\bar{\bS}_{t+1},\pi)\big)\Big]\nonumber\\
        &\quad=\bbE_{q_{P^{*}}^{\pi_{0}}}\bigg[\bbE_{P^{*}}\Big[\big(f(\bar{\bS}_{t},\bar{\bA}_{t})-\calT^{\pi}\tilde{f}(\bar{\bS}_{t},\bar{\bA}_{t})\big)\big(f(\bar{\bS}_{t},\bar{\bA}_{t})+\calT^{\pi}\tilde{f}(\bar{\bS}_{t},\bar{\bA}_{t}) \nonumber\\*
        &\qquad \qquad \qquad \qquad \qquad -2\barr-2\gamma \tilde{f}(\bar{\bS}_{t+1},\pi)\big)\,\Big|\,\bar{\bS}_{t},\bar{\bA}_{t}\Big]\bigg]\nonumber\\
        &\quad=\bbE_{q_{P^{*}}^{\pi_{0}}}\Big[\big(f(\bar{\bS}_{t},\bar{\bA}_{t})-\calT^{\pi}\tilde{f}(\bar{\bS}_{t},\bar{\bA}_{t})\big)^{2}\Big],\label{ieq:138}
    \end{align}
    where the last equality follows from the definition of the Bellman operator. As a consequence, the variance of $l^{\prime}(f,\tilde{f},\pi;\bX)$ can be bounded by its expectation as
    \begin{align}
        &{\rm Var}_{q_{P^{*}}^{\pi_{0}}\times P^{*}\times \pi_{0}}\big(l^{\prime}(f,\tilde{f},\pi;\bX_{t})\big)\nonumber\\
        &\quad\leq \bbE_{q_{P^{*}}^{\pi_{0}}\times P^{*}\times \pi_{0}}\Big[\big(l^{\prime}(f,\tilde{f},\pi;\bX_{t})\big)^{2}\Big]\nonumber\\
        &\quad=\bbE_{q_{P^{*}}^{\pi_{0}}}\bigg[\bbE_{P^{*}}\Big[\big(f(\bar{\bS}_{t},\bar{\bA}_{t})-\calT^{\pi}\tilde{f}(\bar{\bS}_{t},\bar{\bA}_{t})\big)^{2}\big(f(\bar{\bS}_{t},\bar{\bA}_{t})+\calT^{\pi}\tilde{f}(\bar{\bS}_{t},\bar{\bA}_{t}) \nonumber\\*
        &\qquad \qquad \qquad \qquad \qquad -2\barr-2\gamma \tilde{f}(\bar{\bS}_{t+1},\pi)\big)^{2}\,\Big|\,\bar{\bS}_{t},\bar{\bA}_{t}\Big]\bigg]\nonumber\\
        &\quad\leq 16V_{\max}^{2}\bbE_{q_{P^{*}}^{\pi_{0}}}\Big[\big(f(\bar{\bS}_{t},\bar{\bA}_{t})-\calT^{\pi}\tilde{f}(\bar{\bS}_{t},\bar{\bA}_{t})\big)^{2}\Big]\label{ieq:139}
    \end{align}
    where the last inequality follows from the fact that $f$ and $\tilde{f}$ is bounded by $V_{\max}$. Eq.~\eqref{ieq:138} shows that $l^{\prime}(f,\tilde{f},\pi;\bX_{t})$ satisfies the condition in Proposition~\ref{prop:pacbayes2} with $c=4V_{\max}^{2}$. Applying Proposition~\ref{prop:pacbayes} and inequality~\eqref{ieq:139} to $l^{\prime}(f,\tilde{f},\pi;\bX_{t})$, we have with probability at least $1-\delta$,
    \begin{align}
        &\bigg|\bbE_{Q}\bigg[\bbE_{q_{P^{*}}^{\pi_{0}}}\Big[\big(f(\bar{\bS}_{t},\bar{\bA}_{t})-\calT^{\pi}\tilde{f}(\bar{\bS}_{t},\bar{\bA}_{t})\big)^{2}\Big]-\frac{1}{n}\sum_{t=0}^{n-1}l^{\prime}(f,\tilde{f},\pi;\bX_{t})\bigg]\bigg|\nonumber\\
        &\quad\leq \frac{C+(2-C)\lambda}{2}\bbE_{Q,q_{P^{*}}^{\pi_{0}}}\Big[\big(f(\bar{\bS}_{t},\bar{\bA}_{t})-\calT^{\pi}\tilde{f}(\bar{\bS}_{t},\bar{\bA}_{t})\big)^{2}\Big]\nonumber\\*
        &\quad\qquad+\frac{40V_{\max}^{2}}{(1-\lambda)n}\bigg[\kl(Q\| P_{0})+\log \frac{2N(q_{0}\pi_{0},q_{P^{*}}^{\pi_{0}})}{\delta^{2}}\bigg],\label{ieq:140}
    \end{align}
    where $0<C<e^{1/10}$ is an absolute constant.
    
    \textbf{Step 2: Control the fluctuation of both sides in inequality~\eqref{ieq:140} and calculate $\kl(Q\| P_{0})$}
    
    To control the fluctuation of both sides in inequality~\eqref{ieq:140} and calculate $\kl(Q\| P_{0})$, we take the same procedure in the steps 2, 3 and 4 in the proof of Theorem~\ref{thm:main_concen}. We derive the uniform convergence result that for all $f,\tilde{f}\in\calF_{\tf}(\bB)$ and all policies $\pi\in\Pi$, with probability at least $1-\delta$, we have 
    \begin{align}
        &\Big|\bbE_{q_{P^{*}}^{\pi_{0}}}\Big[\big(f(\bar{\bS},\bar{\bA})-\calT^{\pi}\tilde{f}(\bar{\bS},\bar{\bA})\big)^{2}\Big]-\calL^{\prime}(f,\tilde{f},\pi;\calD^{\prime})+\calL^{\prime}(\calT^{\pi}\tilde{f},\tilde{f},\pi;\calD^{\prime})\Big|\nonumber\\
        &\quad\leq \!\frac{C+(2-C)\lambda}{2}\bbE_{q_{P^{*}}^{\pi_{0}}}\!\Big[\big(f(\bar{\bS},\bar{\bA})\!-\!\calT^{\pi}\tilde{f}(\bar{\bS},\bar{\bA})\big)^{2}\Big]\!\nonumber\\
        &\quad\qquad+\!O\bigg(\frac{V_{\max}^{2}}{(1-\lambda)n}\biggl[mL^{2}d^{2}\log\frac{mdL\bar{B} n}{V_{\max}} +\log\frac{N(q_{0}\pi_{0},q_{P^{*}}^{\pi_{0}})\calN(\Pi,1/n,d_{\infty})}{\delta}\biggr]\bigg).\nonumber
    \end{align}
    
    Therefore, we conclude the proof of Proposition~\ref{thm:main_concen2}
\end{proof}

\subsubsection{Proof of Proposition~\ref{prop:pacbayes2}}\label{app:proofpacbayes2}
\begin{proof}[Proof of Proposition~\ref{prop:pacbayes2}]
    The proof consists of two main steps. First, we assume that the initial state is distributed as the stationary distribution $q$ and derive the results under this stationary setting. Second, we extend the result to the non-stationary Markov chain, i.e., the initial 
    state is not distributed as $q$ but $q_{0}$.

    
    \textbf{Step 1: Derive a  concentration bound when the initial state's distribution is the stationary distribution}
    
    Under the stationary setting, we make use of the following concentration results in \cite{jiang2018bernstein}.
    \begin{proposition}[Theorem 1 in~\cite{jiang2018bernstein}]\label{prop:stat_concen}
        Suppose $\{X_{i}\}_{i\geq 1}$ is a stationary Markov chain with invariant distribution $q$ and non-zero absolute spectral gap $1-\lambda>0$, and $f_{i}:x\rightarrow[-c,+c]$ is a sequence of functions with $q(f_{i})=0$. Let 
        $\sigma^{2}=1/n\sum_{i=1}^{n}q(f_{i}^{2})$. Then for any $0\leq t <(1-\lambda)/5c$, we have
        \begin{align*}
            \bbE_{q}\bigg[\exp\bigg(t\sum_{i=1}^{n}f_{i}(X_{i})\bigg)\bigg]\leq \exp\bigg(\frac{n\sigma^{2}}{c^{2}}(e^{tc}-1-tc)+\frac{n\sigma^{2}\lambda t^{2}}{1-\lambda-5ct}\bigg).
        \end{align*} 
    \end{proposition}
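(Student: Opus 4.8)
The statement is a Bernstein-type bound on the moment generating function (MGF) of an additive functional of a stationary Markov chain, and the natural route is the \emph{spectral/operator method}. The plan is to regard the transition kernel $P$ as a bounded operator on $L^{2}(q)$, rewrite the MGF as a bilinear form built from $P$ interleaved with the multiplication operators $M_{i}\colon h\mapsto e^{tf_{i}}h$, and reduce the whole estimate to a single \emph{tilted-operator norm} bound. The absolute spectral gap enters through the splitting $P=\Pi+E$, where $\Pi h=q(h)\mathbf{1}$ projects onto constants and $\|E\|_{L^{2}(q)}=\lambda$.

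\textbf{Key steps.} First I would establish the operator representation
\begin{align}
    \bbE_{q}\bigg[\exp\bigg(t\sum_{i=1}^{n}f_{i}(X_{i})\bigg)\bigg]=\big\langle \mathbf{1},\,M_{1}PM_{2}P\cdots M_{n-1}PM_{n}\mathbf{1}\big\rangle_{q},\nonumber
\end{align}
which follows by expanding the joint law of the chain and using stationarity of $X_{1}\sim q$. Second, for reversible $P$ I would insert $M_{i}=M_{i}^{1/2}M_{i}^{1/2}$ and regroup into symmetric factors $M_{i}^{1/2}PM_{i+1}^{1/2}$, bounding the bilinear form by a product of their operator norms; the problem then collapses to controlling $\|M_{g}^{1/2}PM_{g}^{1/2}\|_{L^{2}(q)}$ for $g=e^{tf}$ with $q(f)=0$ and $|f|\le c$. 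Third, I would prove the crucial perturbation estimate from $P=\Pi+E$: the piece $M_{g}^{1/2}\Pi M_{g}^{1/2}$ is the rank-one operator $g^{1/2}\otimes g^{1/2}$ of norm $q(g)$, while for $M_{g}^{1/2}EM_{g}^{1/2}$ I would write $g^{1/2}=q(g^{1/2})\mathbf{1}+h$ with $h\perp\mathbf{1}$ and use $E\mathbf{1}=E^{*}\mathbf{1}=0$ to project the tilt onto its fluctuation $h$, whose $L^{2}(q)$-norm is of order $t\sigma$. This is what turns a crude factor $\lambda e^{tc}$ into the variance-weighted correction of order $\lambda\sigma^{2}t^{2}$. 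Combining these with the elementary scalar bound $q(e^{tf})\le\exp\!\big(\tfrac{q(f^{2})}{c^{2}}(e^{tc}-1-tc)\big)$ yields the Bennett first term after taking products over $i$, and summing the off-diagonal $E$-contributions as a geometric series recovers the second term together with its denominator $1-\lambda-5ct$.

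\textbf{Main obstacle.} The delicate part is the third step: extracting the \emph{sharp} correction, namely the numerator $n\sigma^{2}\lambda t^{2}$ and the precise convergence threshold $t<(1-\lambda)/5c$. One must track how the fluctuation component $h$ of each tilt interacts across consecutive applications of $E$, so that the cross terms organize into a convergent geometric series whose common ratio is bounded by $\lambda$ inflated by a factor $1+O(ct)$; the constant $5$ and the radius condition $5ct<1-\lambda$ emerge from bounding these inflation factors uniformly. For non-reversible chains the symmetric regrouping is unavailable, so I would instead work directly with the operator-norm definition $\lambda=\|P-\Pi\|$, bound the bilinear form submultiplicatively via estimates on $\|M_{g}PM_{g}\|$, and push the same $\Pi+E$ decomposition through; checking that the variance weighting survives this less symmetric bookkeeping is the main technical verification.
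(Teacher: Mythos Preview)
The paper does not prove this proposition; it is quoted verbatim as Theorem~1 of \cite{jiang2018bernstein} and used as a black box inside the proof of Proposition~\ref{prop:pacbayes2}. So there is no ``paper's own proof'' to compare against.

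That said, your plan is the right one and mirrors the argument in the cited reference: the operator representation of the MGF, the decomposition $P=\Pi+E$ with $\|E\|_{L^{2}(q)}=\lambda$, the scalar Bennett bound $q(e^{tf})\le\exp\!\big(\tfrac{q(f^{2})}{c^{2}}(e^{tc}-1-tc)\big)$ for the $\Pi$ part, and the variance-weighted perturbative control of the $E$ part via the fluctuation $h=g^{1/2}-q(g^{1/2})\mathbf{1}$ are exactly the ingredients. One caveat: the symmetric regrouping $M_{i}^{1/2}PM_{i+1}^{1/2}$ is only available when $P$ is self-adjoint on $L^{2}(q)$; the statement here is phrased for the \emph{absolute} spectral gap (an operator-norm quantity), not necessarily for reversible chains, so you do need to carry out the non-reversible bookkeeping you allude to at the end rather than treat it as an afterthought. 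The constant $5$ and the radius $t<(1-\lambda)/(5c)$ come out of bounding $\|h\|_{2}$ and the cross terms uniformly, as you say; getting those exact constants will require tracking the inequalities carefully rather than relying on $O(\cdot)$ estimates.
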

    
    Set $f_{i}(X_{i})=f(X_{i})-q(f)=g(X_{i})$. Proposition~\ref{prop:stat_concen} shows that for $0\leq t <(1-\lambda)n/(5c)$,
    \begin{align}
        \bbE_{q}\bigg[\exp \bigg(\frac{t}{n}\sum_{i=1}^{n}g(X_{i})\bigg)\bigg]\leq \exp \bigg[\frac{n\sigma^{2}}{c^{2}}\Big(e^{ct/n}-1-\frac{ct}{n}\Big)+\frac{\lambda\sigma^{2}t^{2}}{n(1-\lambda-5ct/n)}\bigg],\label{ieq:132}
    \end{align}
    where $\sigma^{2}=\sigma^{2}(f)$. We define
    \begin{align*}
        \varepsilon_{n}(f,X_{1}^{n})=\frac{t}{n}\sum_{i=1}^{n}g(X_{i})-\bigg[\frac{n\sigma^{2}}{c^{2}}\Big(e^{ct/n}-1-\frac{ct}{n}\Big)+\frac{\lambda\sigma^{2}t^{2}}{n(1-\lambda-5ct/n)}\bigg],
    \end{align*}
    By inequality~\eqref{ieq:132} and Markov's inequality, we have that for any distribution $P_{0}$ on the function class $\calF$, the random variable $\varepsilon_{n}(f,X_{1}^{n})$ induced by the Markov chain $\{X_{i}\}_{i=1}^{n}$ satisfies
    \begin{align}
        P_{q}\bigg(\bbE_{f\sim P_{0}}\Big[\exp\big( \varepsilon_{n}(f,X_{1}^{n})\big)\geq \frac{2}{\delta}\Big]\bigg)\leq \frac{\delta}{2},\label{ieq:133}
    \end{align}
    where the probability is taken with respect to the Markov chain with initial distribution $q$.
    
    Setting $g(f)=\varepsilon_{n}(f,X_{1}^{n})$ in Theorem~\ref{thm:DV}, we have
    \begin{align}
        \bbE_{Q}[\varepsilon_{n}(f,X_{1}^{n})]\leq \kl(Q\| P_{0})+\log \bbE_{P_{0}}\Big[\exp \big(\varepsilon_{n}(f,X_{1}^{n})\big)\Big].\label{ieq:134}
    \end{align}
    Substituting inequality~\eqref{ieq:133} into inequality~\eqref{ieq:134}, we have that with probability at least $1-\delta/2$
    \begin{align}
        \bbE_{Q}\bigg[\frac{t}{n}\sum_{i=1}^{n}g(X_{i})-\bigg[\frac{n\sigma^{2}}{c^{2}}\Big(e^{ct/n}-1-\frac{ct}{n}\Big)+\frac{\lambda\sigma^{2}t^{2}}{n(1-\lambda-5ct/n)}\bigg]\bigg]\leq \kl(Q\| P_{0})+\log \frac{2}{\delta}.\label{ieq:135}
    \end{align}
    Set $t/n=(1-\lambda)/(10c)$. Since $e^{x}-1-x\leq ax^{2}$ for all $x\in [0,\log 2a]$, the left-hand side of inequality~\eqref{ieq:135} can be upper bounded as
    \begin{align}
        &\bbE_{Q}\bigg[\frac{1}{n}\sum_{i=1}^{n}g(X_{i})\bigg]\nonumber\\
        &\quad\leq  \bigg[\frac{n}{c^{2}t}\Big(e^{ct/n}-1-\frac{ct}{n}\Big)+\frac{\lambda t}{n(1-\lambda-5ct/n)}\bigg]\bbE_{Q}[\sigma^{2}(f)]+\frac{1}{t}\kl(Q\| P_{0})+\frac{1}{t}\log \frac{2}{\delta}\nonumber\\
        &\quad \leq  \bigg[\frac{n}{c^{2}t}\cdot C \frac{t^{2}}{n^{2}}+\frac{\lambda t^{2}}{n(1-\lambda-5ct/n)}\bigg]\bbE_{Q}[\sigma^{2}(f)]+\frac{1}{t}\kl(Q\| P_{0})+\frac{1}{t}\log \frac{2}{\delta}\nonumber\\
        &\quad =\frac{C+(2-C)\lambda}{10c}\bbE_{Q}[\sigma^{2}(f)]+\frac{10c}{(1-\lambda)n}\bigg[\kl(Q\| P_{0})+\log \frac{2}{\delta}\bigg],\nonumber
    \end{align}
    where the $C$ in the second inequality is a constant that $C\leq e^{(1-\lambda)/10}< e^{1/10}$, the equality follows from substituting the value of $t$ into the second inequality, and the expectation in $\bbE_{Q}[\sigma^{2}(f)]$ is taken with respect to the distribution $Q$ on the set of function class $\calF$. From symmetry, we can show that the  with probability (taken with respect to the  Markov  chain initialized with the stationary distribution) at least $1-\delta$
    \begin{align}
        \bigg|\bbE_{Q}\bigg[\frac{1}{n}\sum_{i=1}^{n}g(X_{i})\bigg]\bigg|\leq \frac{C+(2-C)\lambda}{10c}\bbE_{Q}[\sigma^{2}(f)]+\frac{10c}{(1-\lambda)n}\bigg[\kl(Q\| P_{0})+\log \frac{2}{\delta}\bigg],\label{ieq:136}
    \end{align}
    where $0<C<e^{1/10}$ is an absolute constant.
    
    
    \textbf{Step 2: Extend inequality~\eqref{ieq:136} to an arbitrarily initialized Markov chain.}
    
    To extend the results to an arbitrarily initialized Markov chain, we make use of the following result in~\cite{paulin2015concentration}.
    \begin{proposition}[Proposition 3.15 in~\cite{paulin2015concentration}]\label{prop:nonstat}
        Let $\{X_{i}\}_{i=1}^{\infty}$ be a time homogeneous Markov chain with state space $\Omega$, and stationary distribution $q$. Suppose that $g : \Omega^n \to\bbR$ is a real-valued measurable function. Then
        \begin{align*}
            P_{q_{0}}\big(g(X_{1},\cdots,X_{n})\geq t\big)\leq N(q_{0},q)^{1/2} \cdot \Big[P_{q}\big(g(X_{1},\cdots,X_{n})\geq t\big)\Big]^{1/2},
        \end{align*}
        where $q_{0}$ is any distribution on $\Omega$, and $P_{q_{0}}$ and $P_{q}$ are the probability measures with respect to the Markov chains with initial state $X_{1}\sim q_{0}$ and $X_{1}\sim q$ respectively.
    \end{proposition}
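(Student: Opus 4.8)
The plan is to prove the inequality by a direct change-of-measure argument on the path space $\Omega^{n}$, followed by a single application of the Cauchy--Schwarz inequality. First I would dispose of the degenerate case: if $q_{0}$ is not absolutely continuous with respect to $q$, then $N(q_{0},q)=\int_{\Omega}(\rmd q_{0}/\rmd q)\,\rmd q_{0}=+\infty$, so the right-hand side of the claimed bound is infinite and there is nothing to prove. Hence I may assume $q_{0}\ll q$ and set $h=\rmd q_{0}/\rmd q\in L^{1}(q)$, which satisfies $\int_{\Omega}h\,\rmd q=1$.

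Next I would write down the laws of the sample path $(X_{1},\ldots,X_{n})$ under the two initializations. Let $P(x,\cdot)$ denote the common transition kernel. Under $X_{1}\sim q_{0}$ the path has law
\[
    \mu_{q_{0}}(\rmd x_{1},\ldots,\rmd x_{n})=q_{0}(\rmd x_{1})\,P(x_{1},\rmd x_{2})\cdots P(x_{n-1},\rmd x_{n}),
\]
and under $X_{1}\sim q$ it has law $\mu_{q}$, obtained by replacing the factor $q_{0}(\rmd x_{1})$ by $q(\rmd x_{1})$. Because the chain is time-homogeneous, the transition factors appearing in $\mu_{q_{0}}$ and $\mu_{q}$ coincide, so $\mu_{q_{0}}\ll\mu_{q}$ with Radon--Nikodym derivative
\[
    \frac{\rmd\mu_{q_{0}}}{\rmd\mu_{q}}(x_{1},\ldots,x_{n})=h(x_{1}),
\]
a function of the first coordinate alone. (Making this step fully rigorous requires a standard monotone-class / disintegration argument to justify the factorization of $\mu_{q_{0}}$ and $\mu_{q}$ on the product $\sigma$-algebra of $\Omega^{n}$.)

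Finally, writing $A=\{(x_{1},\ldots,x_{n})\in\Omega^{n}:g(x_{1},\ldots,x_{n})\geq t\}$, I would compute
\[
    P_{q_{0}}\bigl(g(X_{1},\ldots,X_{n})\geq t\bigr)=\mu_{q_{0}}(A)=\bbE_{\mu_{q}}\bigl[h(X_{1})\mathbf{1}_{A}\bigr],
\]
and then apply the Cauchy--Schwarz inequality in $L^{2}(\mu_{q})$ to obtain $\mu_{q_{0}}(A)\leq\bigl(\bbE_{\mu_{q}}[h(X_{1})^{2}]\bigr)^{1/2}\bigl(\bbE_{\mu_{q}}[\mathbf{1}_{A}]\bigr)^{1/2}$. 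Since $h(X_{1})$ depends only on $x_{1}$, $\bbE_{\mu_{q}}[h(X_{1})^{2}]=\int_{\Omega}h^{2}\,\rmd q=\int_{\Omega}h\,\rmd q_{0}=N(q_{0},q)$, while $\bbE_{\mu_{q}}[\mathbf{1}_{A}]=\mu_{q}(A)=P_{q}(g(X_{1},\ldots,X_{n})\geq t)$. Substituting these two identities yields exactly $P_{q_{0}}(g\geq t)\leq N(q_{0},q)^{1/2}\,[P_{q}(g\geq t)]^{1/2}$.

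There is essentially no hard step here: the only real content is the observation that the path-space likelihood ratio collapses to $h(x_{1})$ because the transition kernels cancel, and the rest is one line of Cauchy--Schwarz together with the identity $\int_{\Omega}h^{2}\,\rmd q=\int_{\Omega}h\,\rmd q_{0}$ that rewrites $N(q_{0},q)$ as the squared $L^{2}(q)$-norm of the likelihood ratio. The only care required is the measure-theoretic bookkeeping behind the factorization of the two path measures and the handling of the degenerate case $N(q_{0},q)=+\infty$; I expect that bookkeeping, rather than any inequality, to be where a careful write-up spends its effort.
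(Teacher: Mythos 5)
Your proof is correct: the paper itself does not prove this proposition but imports it verbatim from \cite{paulin2015concentration}, and the argument you give (cancellation of the transition kernels so that the path-space likelihood ratio reduces to $h(X_{1})=\rmd q_{0}/\rmd q$ evaluated at the first coordinate, followed by Cauchy--Schwarz in $L^{2}(\mu_{q})$ and the identity $\int h^{2}\,\rmd q=\int h\,\rmd q_{0}=N(q_{0},q)$) is exactly the standard proof given in that reference. Your handling of the degenerate case $q_{0}\not\ll q$ and the remark about the monotone-class justification of the factorization are appropriate and complete the write-up.
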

    Combining Proposition~\ref{prop:nonstat} and inequality~\eqref{ieq:136}, we have that with probability (taken with respect to the arbitrarily initialized Markov chain) at least $1-\delta$
    \begin{align}
        \bigg|\bbE_{Q}\bigg[\frac{1}{n}\sum_{i=1}^{n}g(X_{i})\bigg]\bigg|\leq \frac{C+(2-C)\lambda}{10c}\bbE_{Q}[\sigma^{2}(f)]+\frac{10c}{(1-\lambda)n}\bigg[\kl(Q\| P_{0})+\log \frac{2N(q_{0},q)}{\delta^{2}}\bigg].
    \end{align}
    This concludes the proof of Proposition~\ref{prop:pacbayes2}.
\end{proof}

\section{Experiments}\label{app:exp}
Although the main aim of this paper is primarily theoretical, we provide some experiments of the model-free algorithms to illustrate the superiority of the transformer in homogeneous \ac{marl}.

\subsection{Simulation Environment}
In the experiments, we evaluate the performance of the algorithms on the \ac{mpe}~\citep{lowe2017multi,mordatch2018emergence}. We focus on the \emph{cooperative navigation} task, where $N$ agents move cooperatively to cover $L$ landmarks in the environment. Given $N$ agent positions $\bx_{i}\in\bbR^{2}$ for $i\in[N]$ and $L$ landmark positions $\by_{j}\in\bbR^{2}$ for $j\in[L]$, the agents receive the reward $$r=-\sum_{j=1}^{L}\min_{i\in[N]}\|\by_{j}-\bx_{i}\|_{2}.$$ This reward encourages the agents to move closer to the landmarks. We set the number of agents as $N=3,6,15,30$ and the number of landmarks as $L=N$. To collect an offline dataset, we learn a policy in the online setting, and the dataset is collected from the induced stationary distribution of such policy. 

In the training process, we use the Titan RTX and Intel(R) Core(TM) i7-6900K CPU @ 3.20GHz to train the neural networks. The size of the offline dataset is $60000\times 25$, where we simulate $60000$ episodes and implement $25$ steps in each episode. The learning rate is set to $10^{-3}$. The batch size is $1024$. The discount factor is $\gamma=0.95$.

\begin{figure}[t]
    \centering
    \subfigure[$N=3$]{
    \begin{minipage}[t]{0.5\linewidth}
    \centering
    \includegraphics[width=1\columnwidth]{n3.eps}
    \label{fig:n3}
    \end{minipage}%
    }%
    \subfigure[$N=6$]{
    \begin{minipage}[t]{0.5\linewidth}
    \centering
    \includegraphics[width=1\columnwidth]{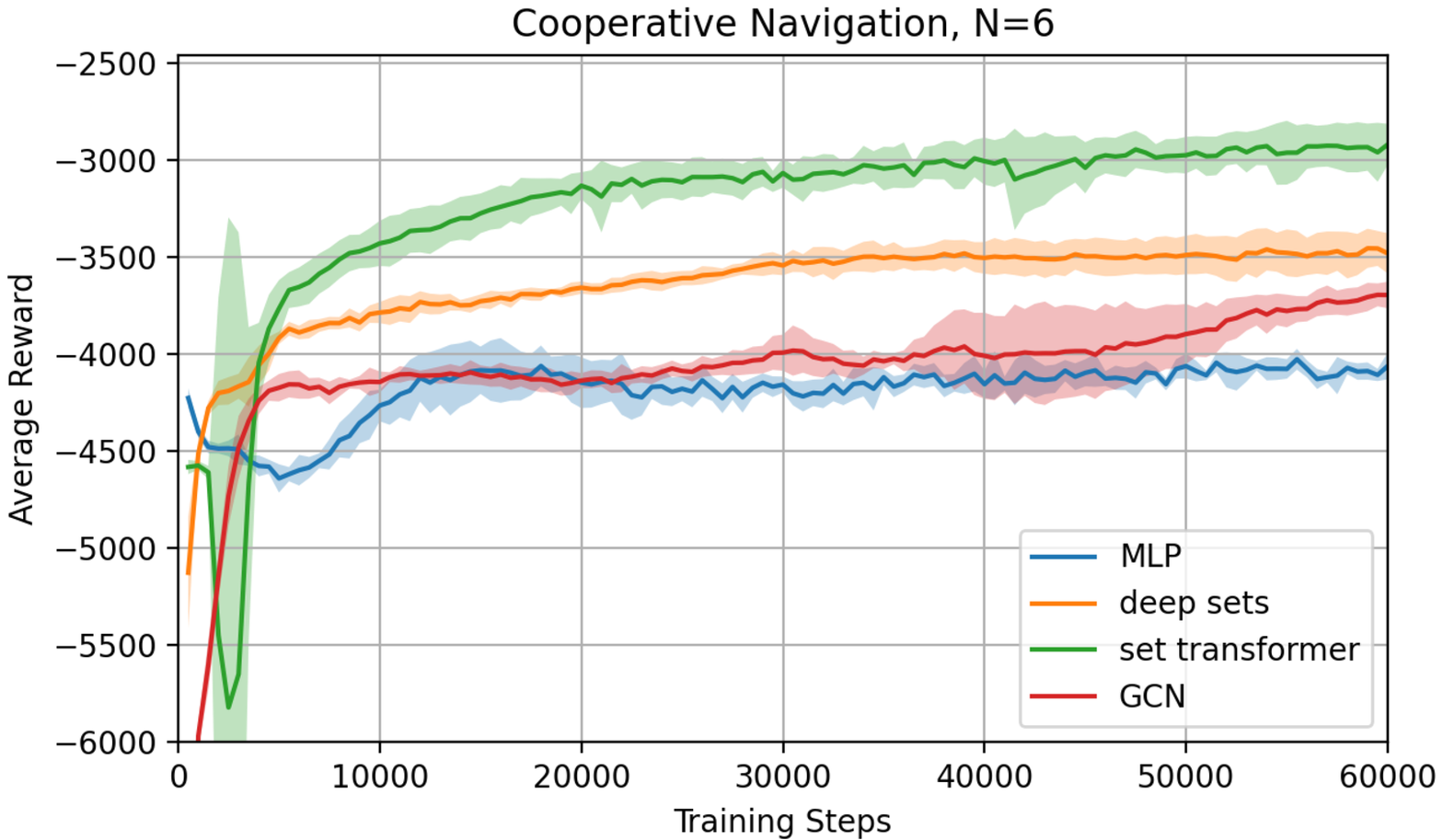}
    \label{fig:n6}
    \end{minipage}%
    }%
    
    \subfigure[$N=15$]{
    \begin{minipage}[t]{0.5\linewidth}
    \centering
    \includegraphics[width=1\columnwidth]{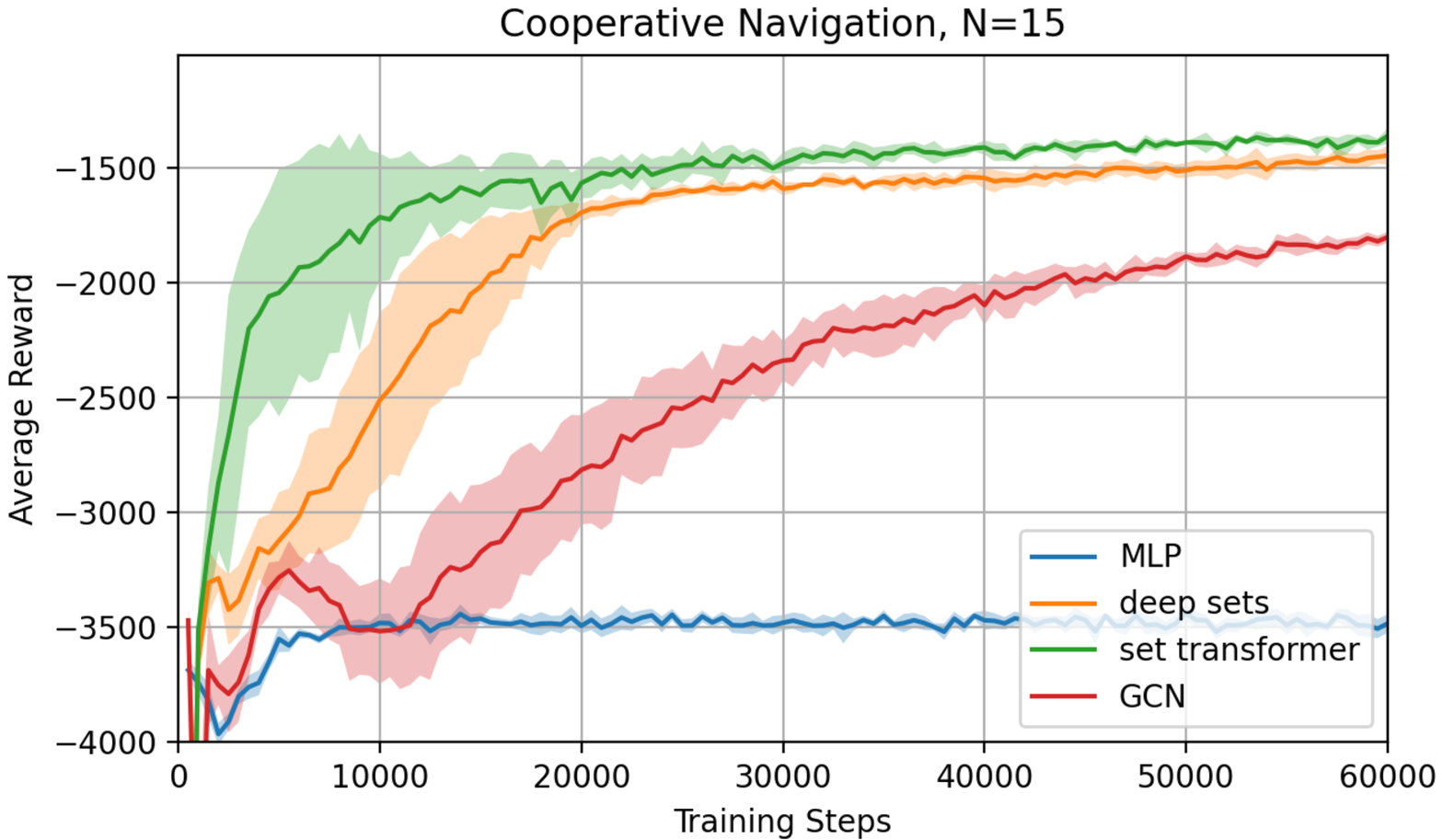}
    \label{fig:n15}
    \end{minipage}
    }%
    \subfigure[$N=30$]{
    \begin{minipage}[t]{0.5\linewidth}
    \centering
    \includegraphics[width=1\columnwidth]{n30.eps}
    \label{fig:n30}
    \end{minipage}
    }%
    \centering
    \caption{The average rewards of the model-free \ac{rl} algorithms with their standard deviations.}
    \label{fig:mfree}
\end{figure}
\subsection{Simulation Results}

We respectively adopt the \ac{mlp}, deep sets, \ac{gcn}~\citep{liu2020pic} and set transformer to estimate the value function. We note that the deep sets, \ac{gcn}, and set transformer are   permutation invariant functions. We use the code in~\cite{zaheer2017deep} for the  implementation of the deep sets and set transformer. To implement the model-free algorithm specified in Eqn.~\eqref{algo:mfree}, we optimize the policy and the action-value function in an alternating fashion. In addition, instead of imposing the hard constraint on the Bellman error $\calE(f,\pi;\calD)$, we added a Lagrangian multiplier to account for this inequality constraint.

In  Figure~\ref{fig:mfree}, we plot the performances of the model-free \ac{rl} algorithms that adopt different neural networks to estimate the action-value function. When the number of agents are small, as shown in Figure~\ref{fig:n3}, the performances of different neural networks are similar. As shown in Theorem~\ref{thm:approx}, relational reasoning abilities of the deep sets and the \ac{mlp} are worse than that of the set transformer. As a consequence, when the number of agents increases, as shown in Figures~\ref{fig:n6} to ~\ref{fig:n30}, the superiority of the algorithm that adopts the set transformer to estimate the action-value function becomes obvious. This strongly corroborates our theoretical results in Theorems~\ref{thm:approx} and \ref{thm:main_mfree}.

\end{document}